\newcommand{\system}[1]{{\rm CCN}(#1)}
\newcommand{\problem}{$\mathcal{P}$}
\newcommand{\classes}{\mathcal{A}}
\newcommand{\tv}{\theta}
\newcommand{\module}{\text{\rm CM}}
\newcommand{\loss}{\text{\rm CLoss}}
\newcommand{\lss}{\mathcal{L}}
\newcommand{\auprc}{$AU(\ov{PRC}) $}
\newcommand{\lmlp}{{HMC-LMLP}}
\newcommand{\ens}{{Clus-Ens}}
\newcommand{\hmcnf}{{HMCN-F}}
\newcommand{\hmcnr}{{HMCN-R}}
\newcommand{\br}{BR}
\newcommand{\ecc}{ECC}
\newcommand{\camel}{CAMEL}
\newcommand{\rakel}{RAKEL}
\newcommand{\clus}{{Clus-HMC}}
\newcommand{\hmcsys}[1]{{\rm C-HMCNN}(#1)}
\newcommand{\ov}[1]{\overline{#1}}
\newcommand{\bck}{\backslash}
\newcommand{\cmc}{LCMC}
\definecolor{orange}{rgb}{1.0, 0.6, 0.2}
\definecolor{darkblue}{rgb}{0.0, 0.0, 1.0}
\definecolor{darkgreen}{rgb}{0.0, 0.42, 0.24}
\definecolor{darkred}{rgb}{0.8, 0.0, 0.0}
\newtheorem{theorem}{Theorem}[section]
\newtheorem{corollary}[theorem]{Corollary}
\newtheorem{example}[theorem]{Example}
\theoremstyle{definition} 
\newtheorem{definition}[theorem]{Definition}
\begin{document}

\title{Multi-Label Classification Neural Networks with\\  Hard Logical Constraints}
\author{\name Eleonora Giunchiglia \email eleonora.giunchiglia@cs.ox.ac.uk \\
       \name Thomas Lukasiewicz \email thomas.lukasiewicz@cs.ox.ac.uk \\
       \addr University of Oxford, Department of Computer Science, \\
       Wolfson Building, Oxford, Oxfordshire, UK
}

\maketitle

\begin{abstract}
Multi-label classification 
(MC) is a standard machine learning problem in which a data point can be associated with a set of classes. A more challenging scenario is given by hierarchical multi-label classification (HMC) problems, in which every prediction must satisfy a given set of hard constraints expressing 
subclass relationships between classes. In this paper, we propose \hmcsys{$h$}, a novel approach for solving HMC problems, which, given a network $h$ for the underlying MC problem, exploits the hierarchy information in order to produce predictions coherent with the
constraints and to improve performance. 
Furthermore, we extend 
the logic used to express 
HMC constraints in order to be 
able to specify more complex relations among the classes and
propose a new model \system{$h$}, which extends  \hmcsys{$h$} and is 
again able to 
satisfy and exploit the 
constraints to improve performance.
We conduct an extensive experimental analysis showing the superior performance of both \hmcsys{$h$} and \system{$h$} when compared to state-of-the-art models in both 
the HMC and the general MC setting
with hard logical constraints.
\end{abstract}

\section{Introduction}

Multi-label classification (MC) is a standard machine learning problem in which a data point can be associated with 
a set of 
classes. A more challenging scenario is given by hierarchical multi-label classification (HMC) problems, in which every prediction must satisfy a given set of hard
hierarchy constraints of the form 
\begin{equation}\label{eq:basic_constr}
A_1 \to A,
\end{equation}
expressing that 
$A_1$ is a subclass of $A$, i.e., that 
if a data point is associated with the class~$A_1$, then it is also associated with the class $A$.
HMC problems naturally arise in many domains, such as image classification~\cite{imagenet,dimitrovski2008,dimitrovski2011}, text categorization~\cite{klimt2004,lewis04,rouso2006}, and functional genomics~\cite{barutcuoglu2006,clare2003,vens2008}. They are very challenging for two main reasons: (i) they are normally characterized by a great class imbalance, because the number of data points per class is usually much smaller at deeper levels of the hierarchy, and (ii) the predictions must be coherent
with (i.e., satisfy) the hierarchy constraints. 
Consider, e.g., the task proposed in~\cite{dimitrovski2008}, where a radiological image has to be annotated with an IRMA code specifying,  among others, the biological system examined. In this setting, we expect to have many more {\sl abdomen} images  
 than {\sl stomach} images, making the class {\sl stomach} harder to predict. Furthermore, the  prediction \{{\sl stomach}\} alone should not be possible given the constraint
 \begin{equation}\label{eq:exhm}
 \text{\sl stomach} \to \text{\sl gastrointestinalSystem}, 
 \end{equation}
  stating that the stomach is part of the gastrointestinal system, i.e., that whenever {\sl stomach} is predicted, also {\sl gastrointestinalSystem} should be. 
  Many models have been specifically developed for HMC problems, and we can
 distinguish those that 
 directly output predictions that are coherent with the hierarchy constraints (see, e.g., \cite{kwok2011,masera2018}) from those that allow incoherent predictions and, at inference time,  require an additional post-processing step to ensure their satisfaction (see, e.g., \cite{cerri2014,obozinski2008,valentini2011}). Most of the state-of-the-art HMC models based on neural networks belong to the second category (see, e.g.,~\cite{cerri2014,cerri2016,cerri2018}),
and different post-processing techniques can be applied in order to guarantee the coherency of their outputs with the constraints (see, e.g., \cite{obozinski2008}).

In this paper, we first focus on 
HMC problems, and we propose a novel approach for solving them, called {\sl coherent hierarchical multi-label classification neural network} (\hmcsys{$h$}), which, given a network $h$ for the underlying MC problem, exploits the hierarchy information to produce  predictions coherent with the hierarchy 
constraints and improve performance. 
\hmcsys{$h$} is
based on two basic elements: 
\begin{enumerate}
    \item a constraint layer built on top of $h$,  which extends 
    to the upper classes
    the predictions made by $h$ 
    on the lower classes in the hierarchy,
    in order to ensure that the final outputs are coherent by construction with the 
    hierarchy constraints,
 and 
 \item a loss function teaching \hmcsys{$h$}
 when to exploit the hierarchy constraints, i.e., when the prediction on the lower classes in the hierarchy can be exploited to make predictions also for the upper ones.
\end{enumerate}
\hmcsys{$h$} significantly differs from previous approaches for HMC problems based on neural networks. Indeed, the constraint layer is not a simple post-processing meant to guarantee the satisfaction of the hierarchy constraints, decoupled from the rest of the system. In \hmcsys{$h$}, the constraint layer and the underlying neural network $h$ are tightly integrated, and it does not make sense to modify the constraint layer without 
modifying the way in which $h$ is trained.
\hmcsys{$h$} has the following four features: (i) its predictions are coherent without any post-processing, (ii) differently from other state-of-the-art models (see, e.g.,~\cite{cerri2018}), its number of parameters is independent from the number of hierarchical levels, (iii)  it can be easily implemented on GPUs using standard libraries, and (iv) it outperforms the state-of-the-art models {\ens}~\cite{schietgat2010}, {\lmlp}~\cite{cerri2016}, {\hmcnf}, and {\hmcnr} \cite{cerri2018} on 20 commonly used real-world HMC benchmarks. 

Secondly, we extend the language used to express the hierarchy constraints (\ref{eq:basic_constr}) to allow for the specification of more complex 
logical
relations among classes. 
Indeed, the language for expressing hierarchy constraints is very limited, and it is not expressive enough to model, e.g.,  the fact that if a medical image contains the abdomen but neither the middle nor the upper abdomen, then it contains the lower abdomen. 
Thus, borrowing concepts from the area of logic programming, we consider general constraints expressed as normal rules \cite{lloyd}, i.e., expressions of the form: 
\begin{equation} \label{eq:rule-}
A_1, \ldots, A_k, \neg A_{k+1}, \ldots, \neg A_n \to A, \qquad (0 \leq k \leq n),
\end{equation}
which imposes that whenever the classes  $A_1, \ldots, A_k$ are predicted, while $A_{k+1}, \ldots, A_n$ are not, then also the class $A$ should be predicted.
With such an extension, we can now write: 
$$
\text{\it abdomen}, \neg \text{\it middleAbdomen}, \neg \text{\it upperAbdomen} \to \text{\it lowerAbdomen},
$$
capturing the above informally stated constraint.
We call MC problems with a set of constraints in such an extended syntax {\sl logically constrained multi-label classification} ({\cmc}) problems.
By restricting to constraints with stratified negation \cite{aptBW88}, given a set $\mathcal{H}$  of initial predictions made by an underlying model $h$, we show how at inference time 
it is possible to compute in linear time in the number of constraints the unique minimal set of classes $\mathcal{M}$ that
\begin{enumerate}
    \item extends $\mathcal{H}$, i.e., such that $\mathcal{H} \subseteq \mathcal{M}$, and
    \item is coherent with (satisfies) the constraints, i.e., such that, given (\ref{eq:rule-}), $A \in \mathcal{M}$ whenever 
    $\{A_1, \ldots, A_k\} \subseteq \mathcal{M}$ and $\{A_{k+1}, \ldots, A_n\} \ \cap \mathcal{M} = \emptyset$.
\end{enumerate}
Indeed, for a non-stratified set of constraints expressed as normal rules, there can be no or more than one 
minimal
set of classes having the above two properties, and determining the 
non-existence or computing one of them can take exponential time. 
We thus propose a novel model called {\sl coherent-by-construction network} {\system{$h$}}, which is the first model able to deal with MC problems with such expressive constraints on the classes. {\system{$h$}} has the same two basic ingredients of {\hmcsys{$h$}}: 
\begin{enumerate}
    \item a constraint layer built on top of $h$, which extends the predictions made by $h$ in order to ensure that the predictions are coherent by construction with the constraints, and 
 \item a loss function, teaching \system{$h$} 
 when to exploit the constraints, i.e., in the presence of (\ref{eq:rule}), when to exploit the prediction on $\{A_1,\ldots,A_n\}$ to make predictions on $A$.
\end{enumerate}
In {\system{$h$}}, like in \hmcsys{$h$}, the constraint layer and $h$ are tightly integrated, and the result is a system that significantly differs from what we consider the standard approach to {\cmc} problems, consisting in applying the constraint layer as a simple post-processor to a state-of-the-art MC system.
{\system{$h$}} has four distinguishing features:
(i) its predictions are always coherent with the constraints,
(ii) it can be implemented on GPUs using standard libraries,
(iii) it extends {\hmcsys{$h$}}, and thus outperforms the state-of-the-art HMC models on HMC problems, and 
(iv) it outperforms standard approaches based on the state-of-the-art MC systems BR~\cite{boutell2004}, ECC~\cite{read2009},  RAKEL~\cite{tsoumakas2009CorrelationBasedPO}, and CAMEL~\cite{feng2019}
on 16 {\cmc} problems, each corresponding to a commonly used MC benchmark.

From a higher perspective, the core idea behind 
our approach is  (i) to build models based on neural networks 
in order to leverage
their learning abilities, (ii) to incorporate the constraints in the models themselves in order to guarantee their coherency with the constraints by construction, and (iii)  to exploit the background knowledge expressed by the constraints by suitably modifying the loss function in order to improve performance. 
As such, our approach represents a valid alternative to the currently deployed techniques for certifying that a neural network model behaves correctly with respect to a given set of requirements. Such certification process --- see the survey by \citeauthor{huang2020survey} \citeyear{huang2020survey} --- is mandatory especially in safety-critical applications, and is currently based on (i) verification techniques (see, e.g.,~\citeauthor{tacchella2010,lomuscio2017}), which suffer from a scalability problem, or (ii) testing techniques (see, e.g.,~\cite{pei2019,ma2018}), which cannot give any guarantee that the model does always satisfy the constraints. Our approach, on the contrary, presents neither of the above limitations.

The main contributions of this paper can thus be briefly summarized as follows:
\begin{itemize}
\item We propose a 
novel model for hierarchical multi-label classification (HMC) problems, denoted \hmcsys{$h$}, which is built upon two tightly integrated components: a constraint layer  ensuring the coherency with the hierarchy constraints and a loss function teaching \hmcsys{$h$} when to exploit the hierarchy constraints.
\item We prove that \hmcsys{$h$}'s predictions are   guaranteed to be coherent with the hierarchy constraints, and that its number of parameters is independent from the number of hierarchical levels.
\item We show that \hmcsys{$h$} can be  implemented on GPUs using standard libraries, and, through an extensive experimental analysis, that it outperforms the state-of-the-art models on HMC problems on 20 commonly used real-world HMC benchmarks.
\item We extend HMC problems by allowing for  constraints written as normal rules, which are able to capture complex relations among labels. We call such problems  logically constrained multi-label classification ({\cmc}) problems.
\item We propose a novel model for {\cmc} problems, denoted \system{$h$}, whose predictions are always guaranteed to be coherent with the constraints.
\item We demonstrate that \system{$h$}
is an extension of \hmcsys{$h$}:  \system{$h$} is thus based on the same two tightly integrated components (a constraint layer and a loss function) and, given an HMC problem, \system{$h$}  outperforms the state-of-the-art models on HMC problems as well.

\item We show that \system{$h$} can be implemented on GPUs using standard libraries,  and, through an extensive experimental analysis, that it outperforms state-of-the-art 
multi-label classification (MC) models with post-processing on {\cmc} problems on 16 commonly used real-world MC benchmarks.
\end{itemize}

The rest of this paper is organized as follows.  
In Section~\ref{sec:hmc}, we first focus on HMC problems, and we propose our  model \hmcsys{$h$}. In Section~\ref{sec:mc_with_constr}, we consider more expressive constraints and present our model \system{$h$}, which extends \hmcsys{$h$} 
to handle {\cmc} problems. The implementation of both \hmcsys{$h$} and \system{$h$}  on GPUs is presented in Section~\ref{sec:gpu}. The experimental analysis, demonstrating the superiority of our 
approach, is reported in Section~\ref{sec:exp}. We end the paper with the relevant related work in Section~\ref{sec:rel_work} and the conclusion %
in Section~\ref{sec:concl}.

\section{Hierarchical Multi-Label Classification}\label{sec:hmc}

In this section, we first introduce some basic definitions in 
hierarchical multi-label classification (HMC). 
We then describe the main intuitions underlying 
our model \hmcsys{$h$} to solve HMC problems along a simple HMC problem with just two classes, %
and we finally present our general approach to 
solve HMC problems. %

\subsection{Preliminaries}\label{sec:prelim}

We assume that every {\sl multi-label classification} ({\sl MC}) problem {\problem}$\,=(\classes,{\mathcal X})$ consists of a finite set {$\classes$} of 
{\sl classes} (also called {\sl class labels} or simply {\sl labels}), denoted by $A, A_1, A_2, \ldots$, and a finite set ${\mathcal X}$ of %
pairs $(x,y)$ where $x \in \mathbb{R}^D (D \geq 1)$ is a {\sl data point}, and $y \subseteq \classes$ is the {\sl ground truth} of $x$, i.e., the set of classes associated with $x$.
A {\sl model} $m$ for {\problem} is a function $m(\,\cdot\,,\,\cdot\,)$ mapping every class~$A$ and every data point $x\in \mathbb{R}^D$ to $[0,1]$. For every class $A$, the function $m_A\colon \mathbb{R}^D \to [0,1]$ is defined by $x\mapsto m(A,x)$, for every data point $x\in \mathbb{R}^D$.
A~data point $x \in \mathbb{R}^D$ is 
{\sl predicted} by $m$ to belong to class $A$ whenever 
$m_A(x)$ is greater than a user-defined {\sl threshold} $\tv \in [0,1]$.

A {\sl hierarchical multi-label classification} ({\sl HMC}) problem $($\problem$,\Pi)$ consists of an MC problem {\problem} and a finite set $\Pi$ of {\sl (hierarchy) constraints} of the form %
\begin{equation}\label{eq:basic_constr-2}
A_1 \to A,
\end{equation}
where $A_1$ and $A$ are classes, such that the graph with an edge from $A_1$ to $A$ for each such constraint 
in $\Pi$ is acyclic. 
Informally, given an HMC problem $($\problem$,\Pi)$, a model $m$ for $($\problem$,\Pi)$ has to be coherent with the hierarchy constraints $\Pi$ in {\problem}, i.e., $m$ has to predict $A$ whenever it predicts~$A_1$, for each constraint (\ref{eq:basic_constr-2}) in $\Pi$. This is formally defined as follows. 

\begin{definition}
 Let  $($\problem$,\Pi)$ be an HMC problem. Let $m$ be a model for {\problem}. 
If for a data point and for a constraint $A_1 \to A$ in $\Pi$, $m$ predicts $A_1$ but not $A$, then $m$ commits a {\sl logical violation}. If $m$ commits no logical violations, then $m$ is  {\sl coherent with respect to $\Pi$}.
\end{definition}

Given the above, whenever a model $m$ is not guaranteed to satisfy a constraint (\ref{eq:basic_constr-2}), $m$ is extended with 
a post-processing step  
to enforce $m_{A}(x) > \tv$ whenever $m_{A_1}(x) > \tv$  \cite{cerri2014,obozinski2008,valentini2011}. However,  it is often common practice to require the stronger condition $m_{A_1}(x) \leq m_{A}(x)$, and the falsification of this condition is referred to as hierarchy violation  \cite{vens2008,cerri2018}. 

\begin{definition}
Let  $($\problem$,\Pi)$ be an HMC problem.  Let $m$ be a model for {\problem}.
If for a data point $x$ and a constraint $A_1 \to A$ in $\Pi$, $m_{A_1}(x) > {m_{A}}(x)$, then $m$ commits a {\sl hierarchy violation}.
\end{definition}

If a model 
commits no hierarchy violations, then it also  commits no logical violations (and so is coherent relative to the constraints), while the converse does not necessarily~hold.

For ease of presentation, we often omit the dependency from %
data points, %
and simply write, e.g., %
$m_A$ instead of $m_A(x)$. %

\begin{figure*}[t]
\centering
\begin{tabular}{c@{\ \ \,}c@{\ \ \ \ }|@{\ \ \ \ }c@{\ \ \,}c@{\ \ \ \ }|@{\ \ \ \ }c@{\ \ \,}c}
\multicolumn{2}{c}{Neural Network $f^+$} & \multicolumn{2}{c}{Neural Network $g^+$} &
\multicolumn{2}{c}{\hmcsys{$h$}} \\
Class $A_1$ & Class $A$ & Class $A_1$ & Class $A$ & Class $A_1$ & Class $A$ \\
\begin{minipage}{.11\textwidth}
    \includegraphics[width=\textwidth,trim={1.1cm 0 3.7cm 1cm},clip]{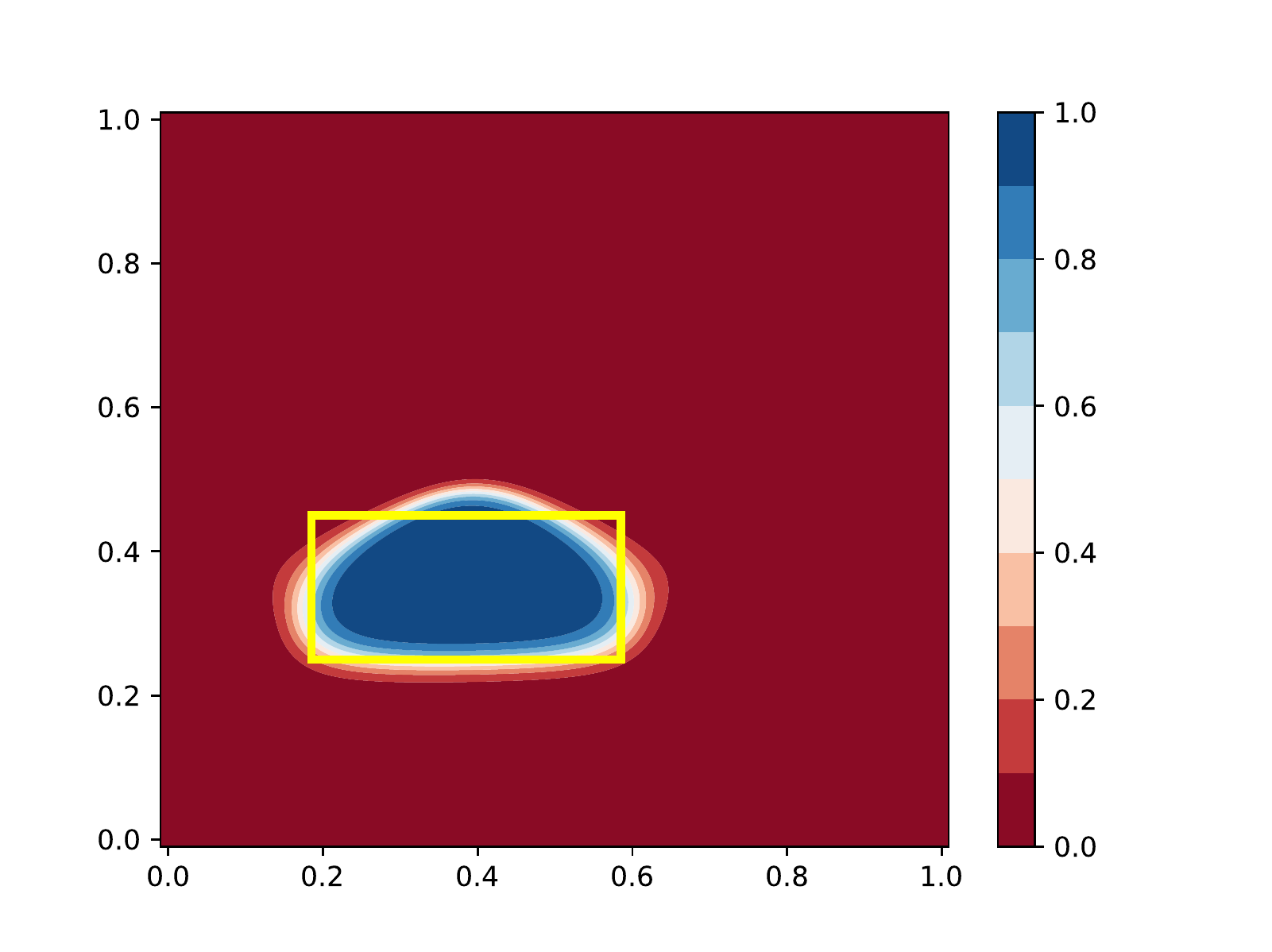}
\end{minipage} &
\begin{minipage}{.11\textwidth}
    \includegraphics[width=\textwidth,trim={1.1cm 0 3.7cm 1cm},clip]{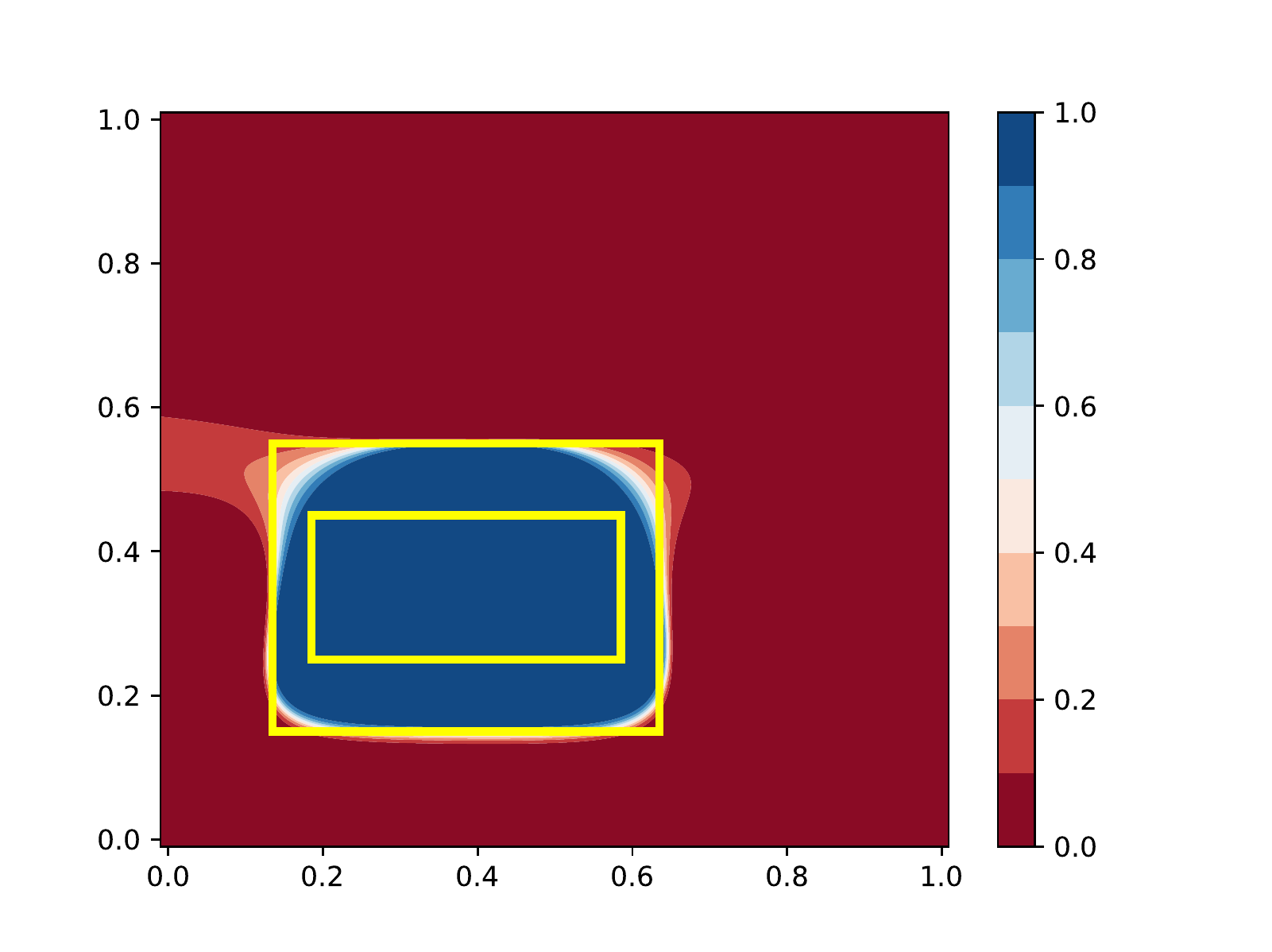} 
\end{minipage} &
\begin{minipage}{.11\textwidth}
    \includegraphics[width=\textwidth,trim={1.1cm 0 3.7cm 1cm},clip]{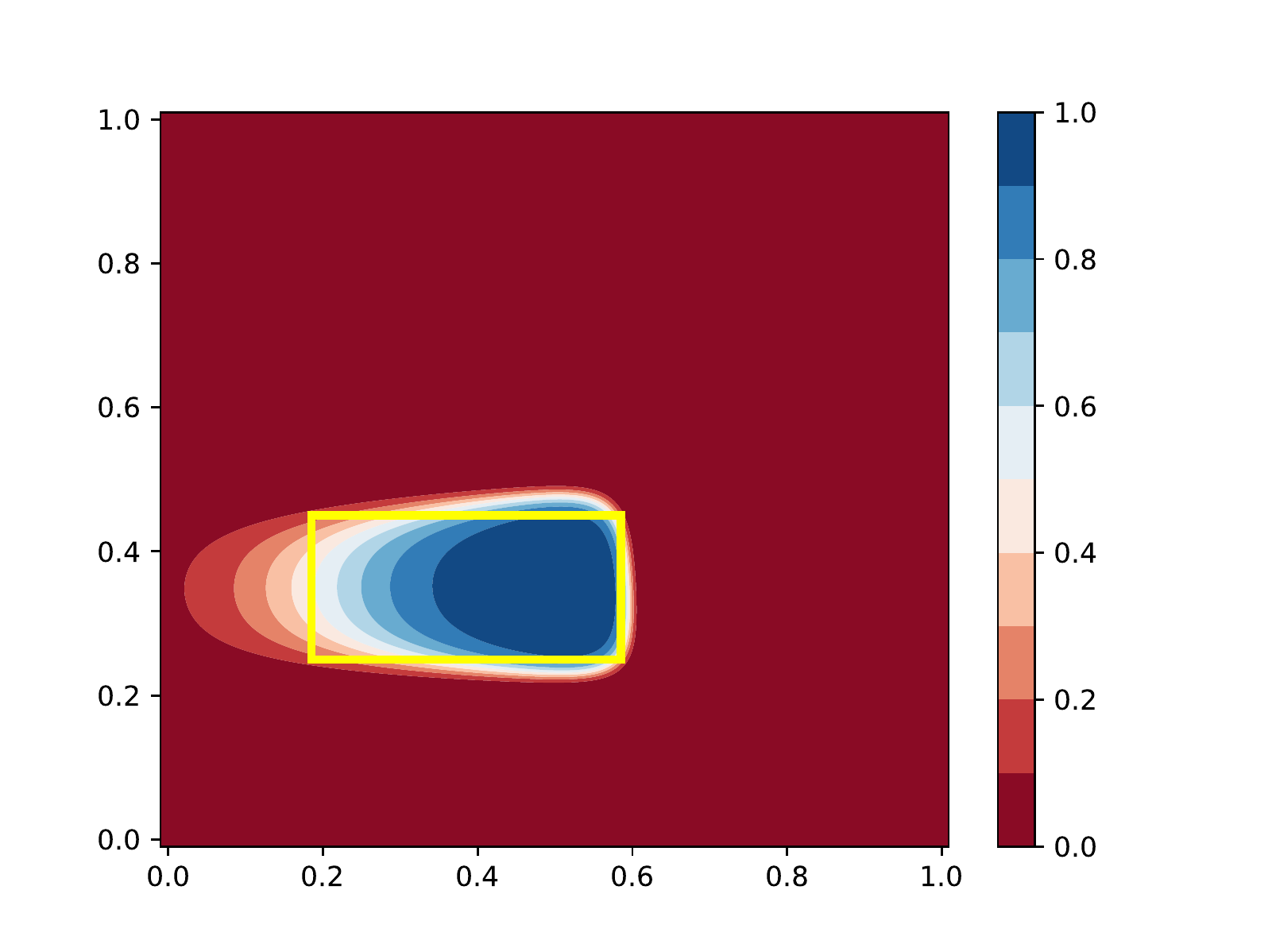} 
\end{minipage} &
\begin{minipage}{.11\textwidth}
    \includegraphics[width=\textwidth,trim={1.1cm 0 3.7cm 1cm},clip]{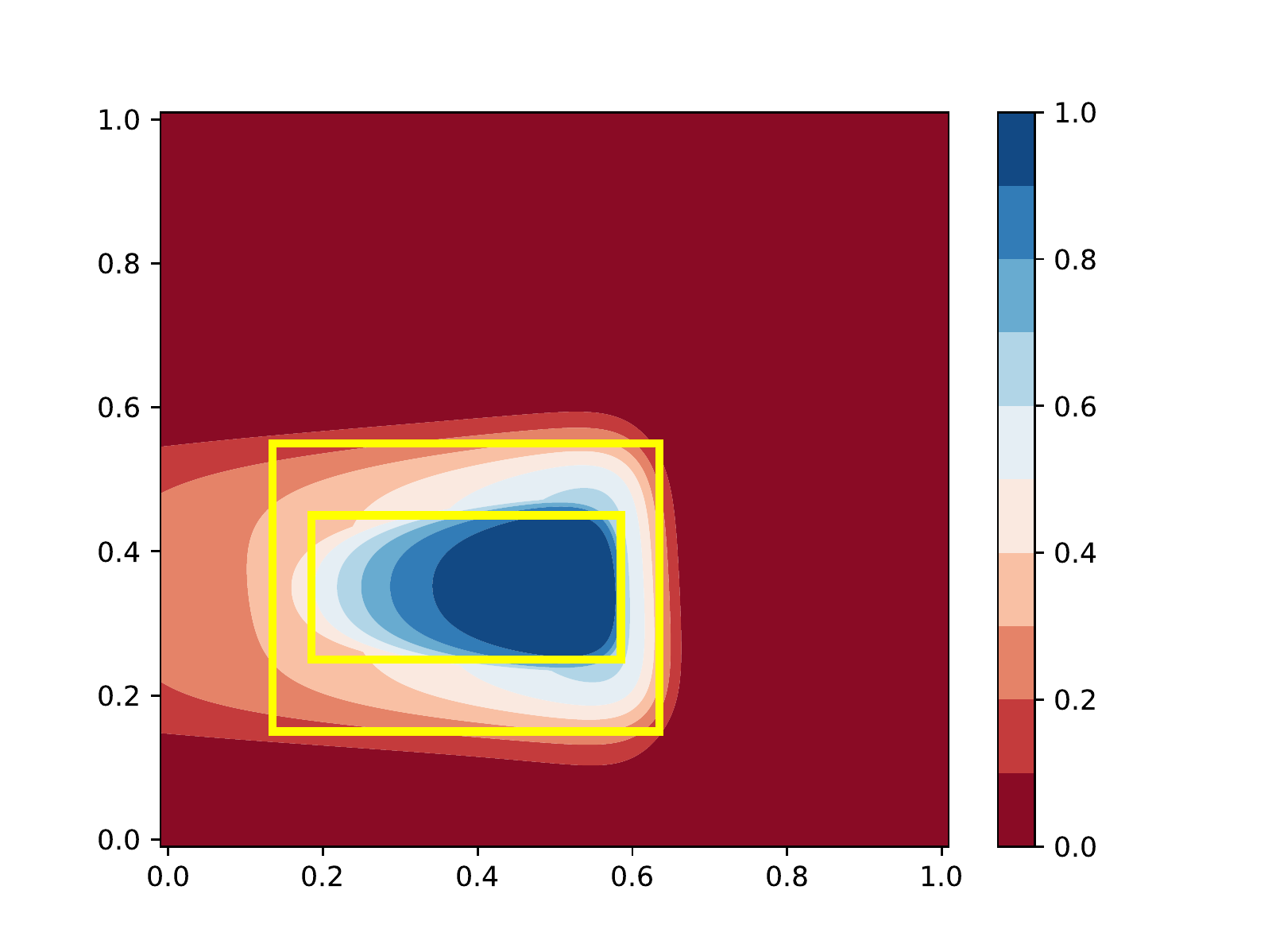} 
    \end{minipage} &
\begin{minipage}{.11\textwidth}
    \includegraphics[width=\linewidth,trim={1.1cm 0 3.7cm 1cm},clip]{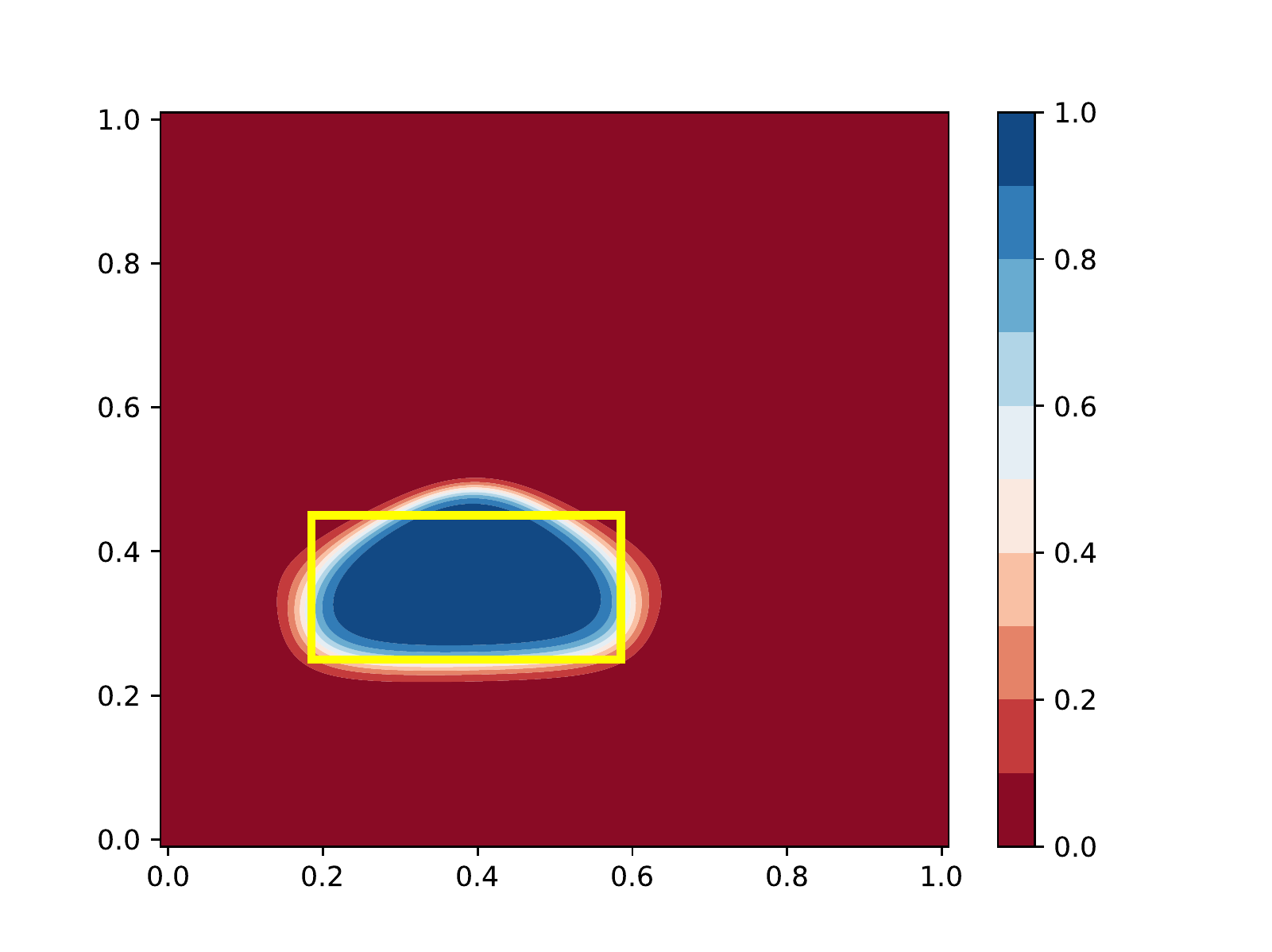}
\end{minipage} &
\begin{minipage}{.135\textwidth} 
    \includegraphics[width=\linewidth,trim={1.1cm 0 1cm 1cm},clip]{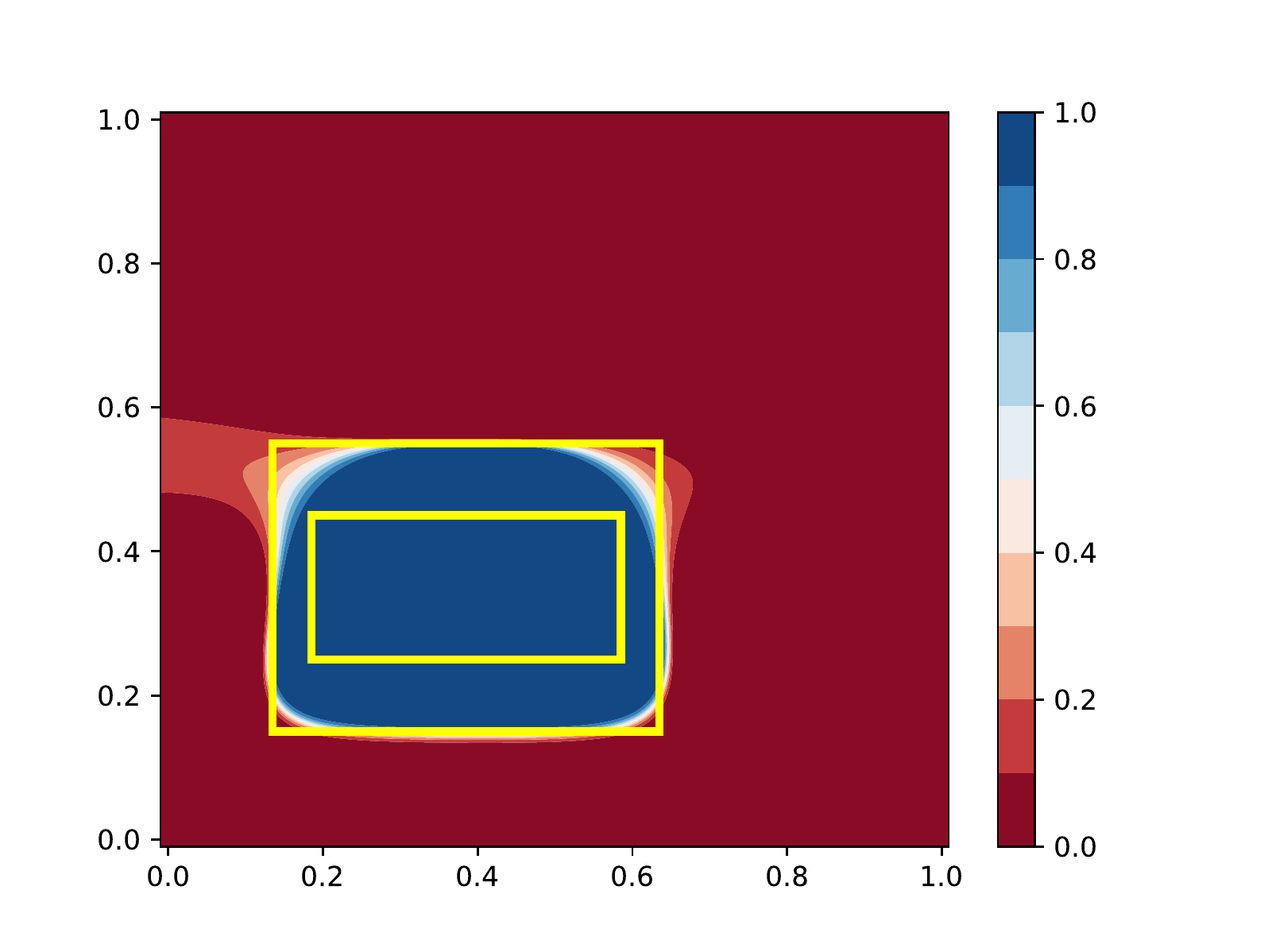}
\end{minipage} \\
\begin{minipage}{.11\textwidth}
    \includegraphics[width=\textwidth,trim={1.1cm 0 3.7cm 1cm},clip]{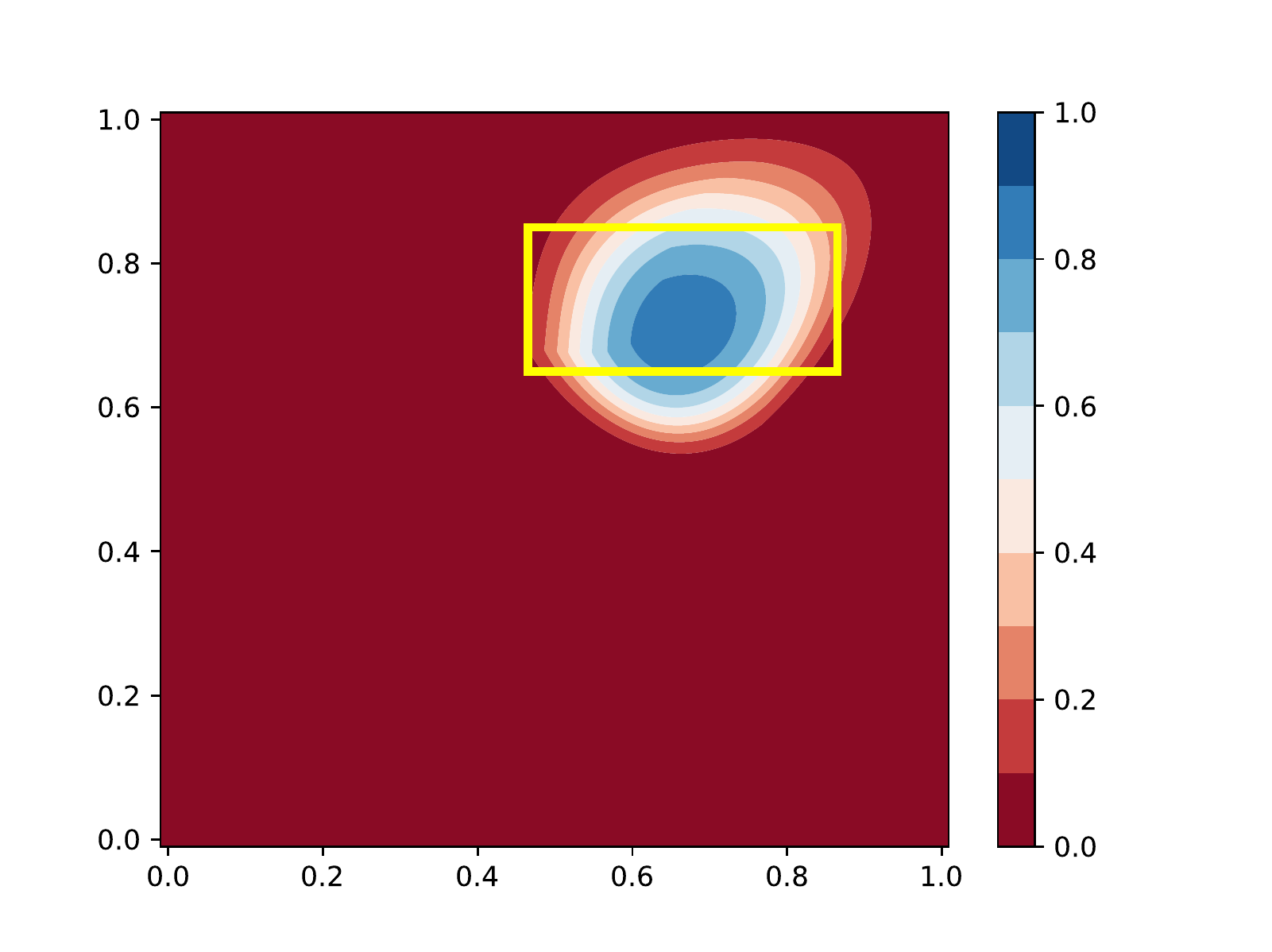}
\end{minipage} &
\begin{minipage}{.11\textwidth}
    \includegraphics[width=\textwidth,trim={1.1cm 0 3.7cm 1cm},clip]{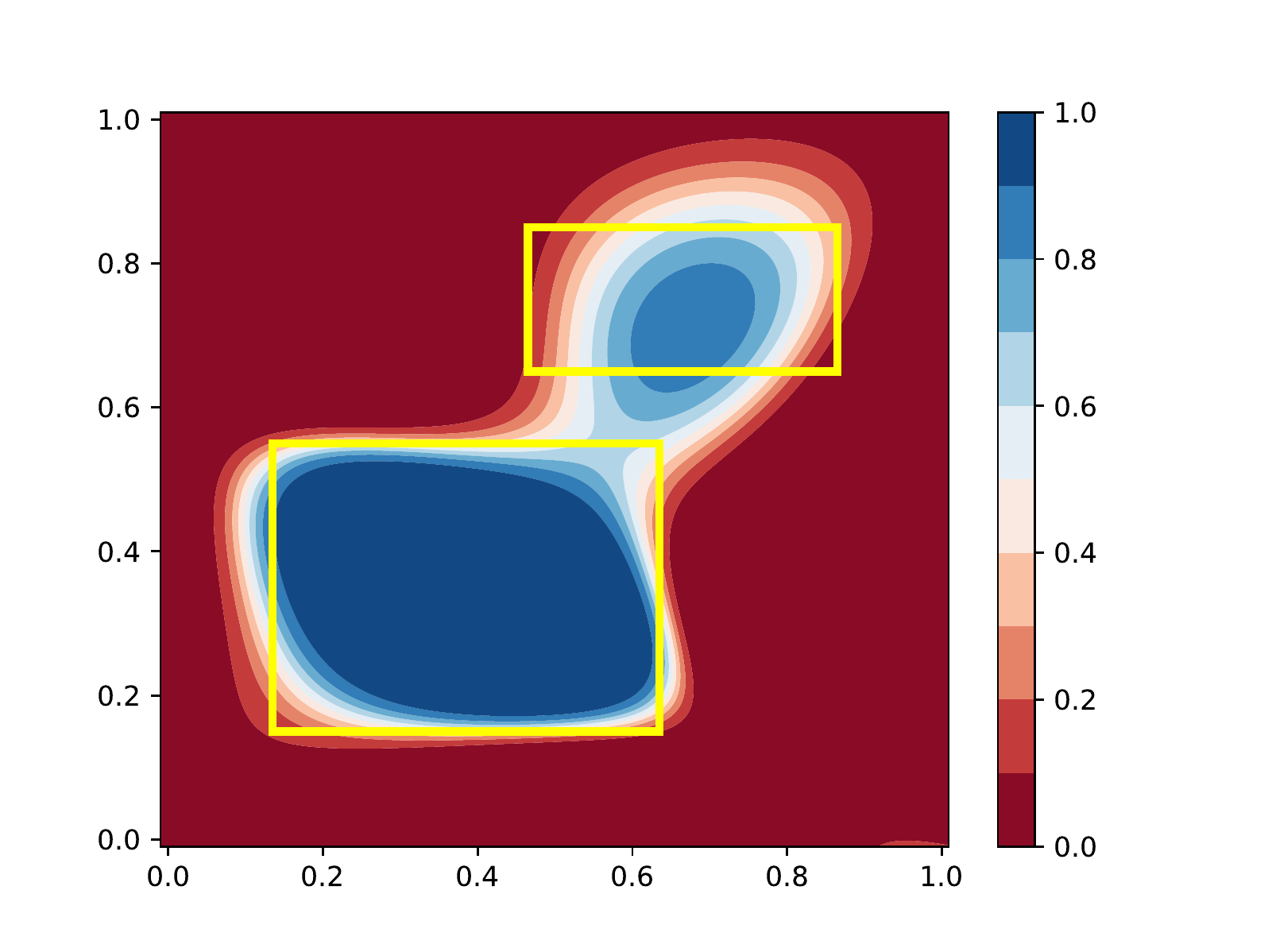}
\end{minipage} &
\begin{minipage}{.11\textwidth}
    \includegraphics[width=\textwidth,trim={1.1cm 0 3.7cm 1cm},clip]{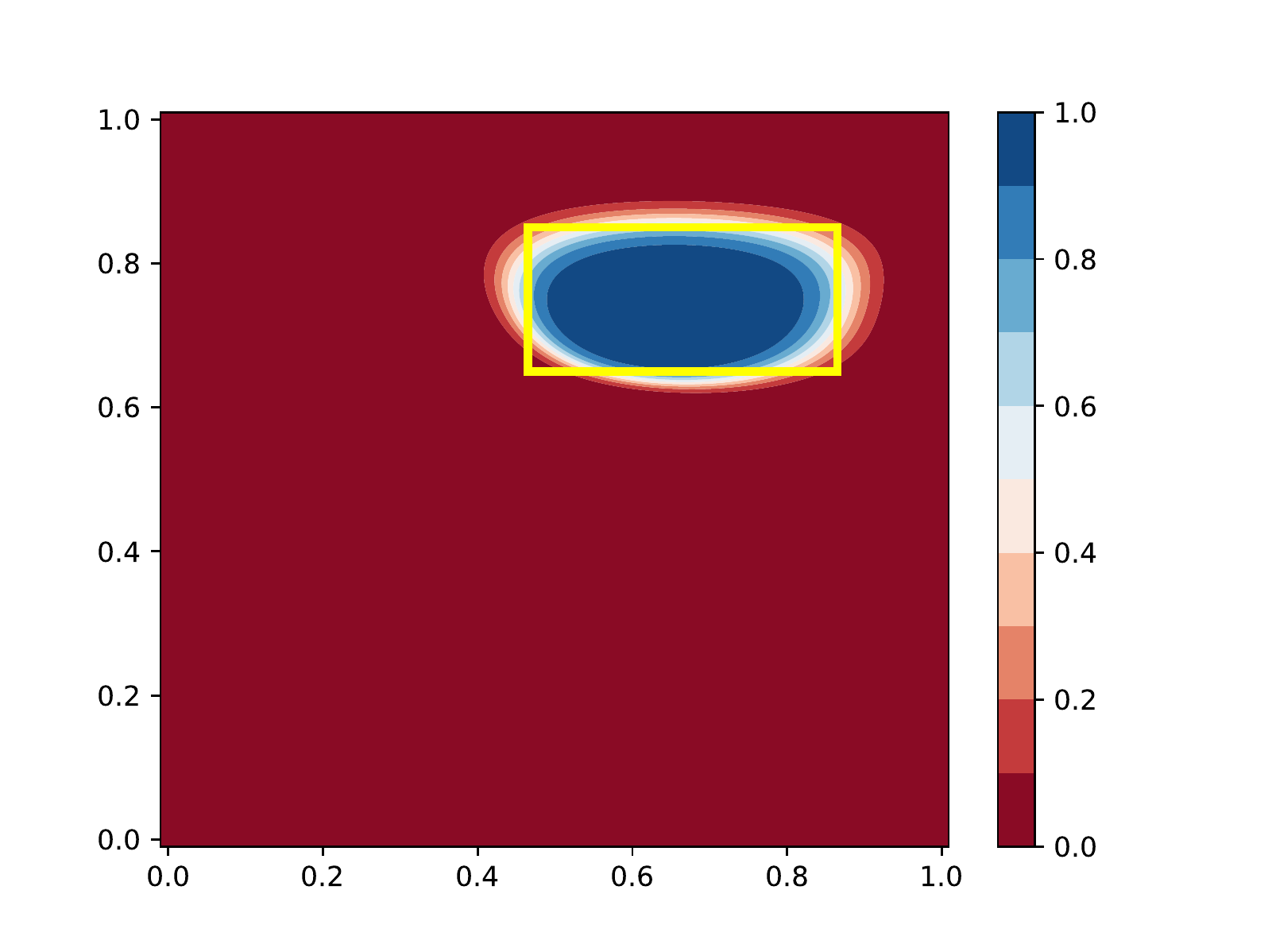}
\end{minipage} &
\begin{minipage}{.11\textwidth}
    \includegraphics[width=\textwidth,trim={1.1cm 0 3.7cm 1cm},clip]{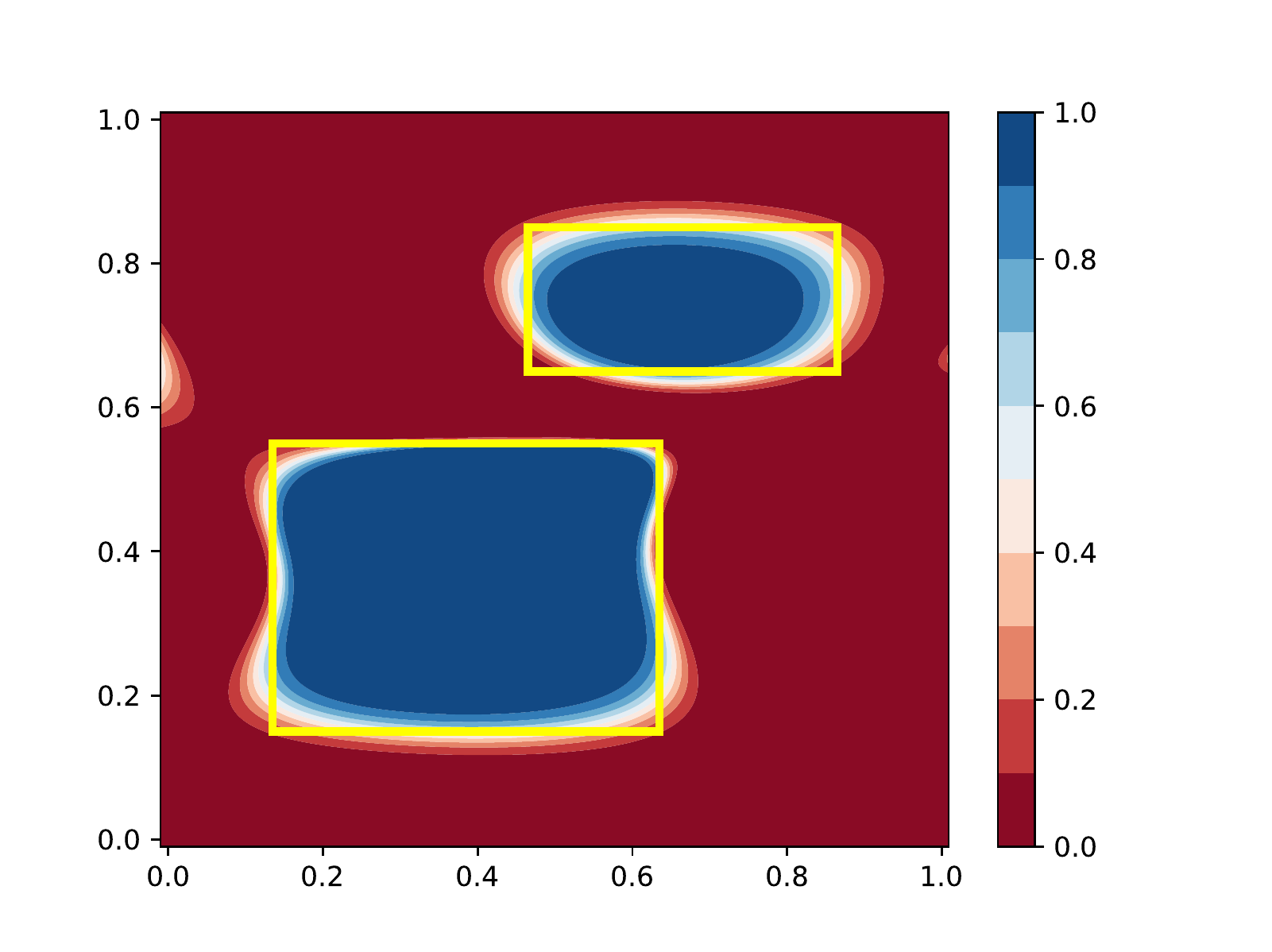}
\end{minipage} &
\begin{minipage}{.11\textwidth}
    \includegraphics[width=\linewidth,trim={1.1cm 0 3.7cm 1cm},clip]{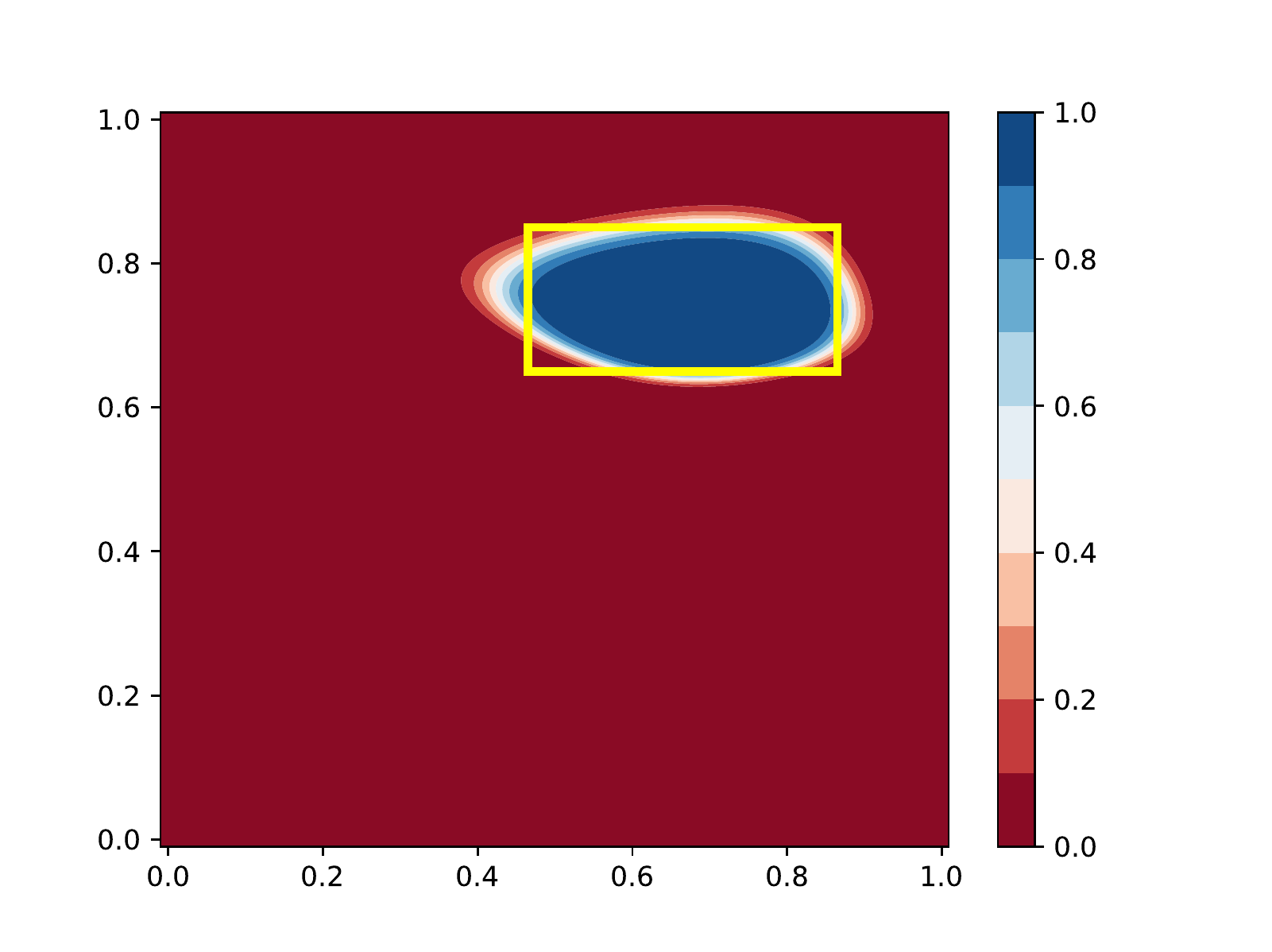}
\end{minipage} &
\begin{minipage}{.135\textwidth}
    \includegraphics[width=\linewidth,trim={1.1cm 0 1cm 1cm},clip]{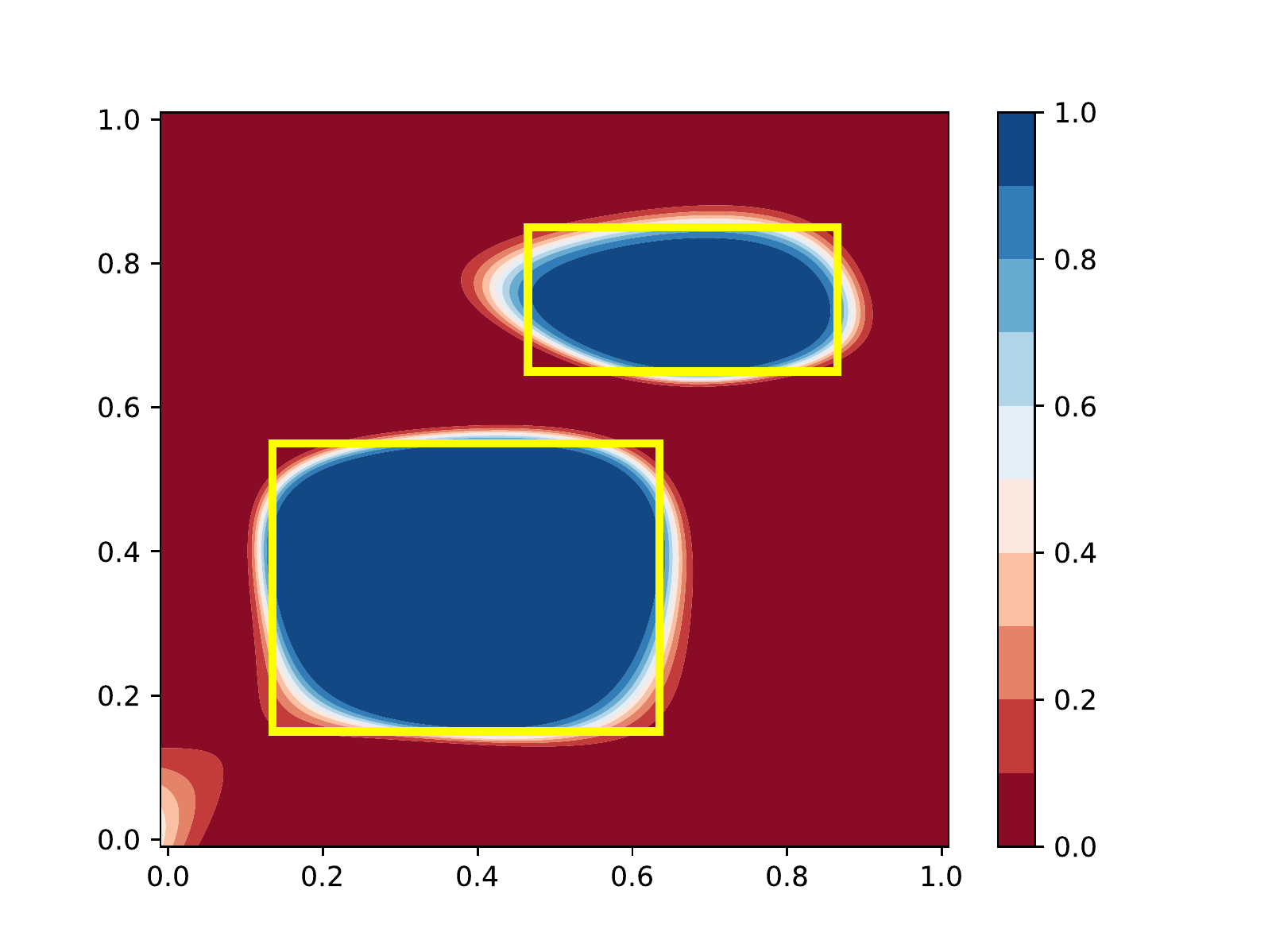}
\end{minipage} \\
\begin{minipage}{.11\textwidth}
    \includegraphics[width=\textwidth,trim={1.1cm 0 3.7cm 1cm},clip]{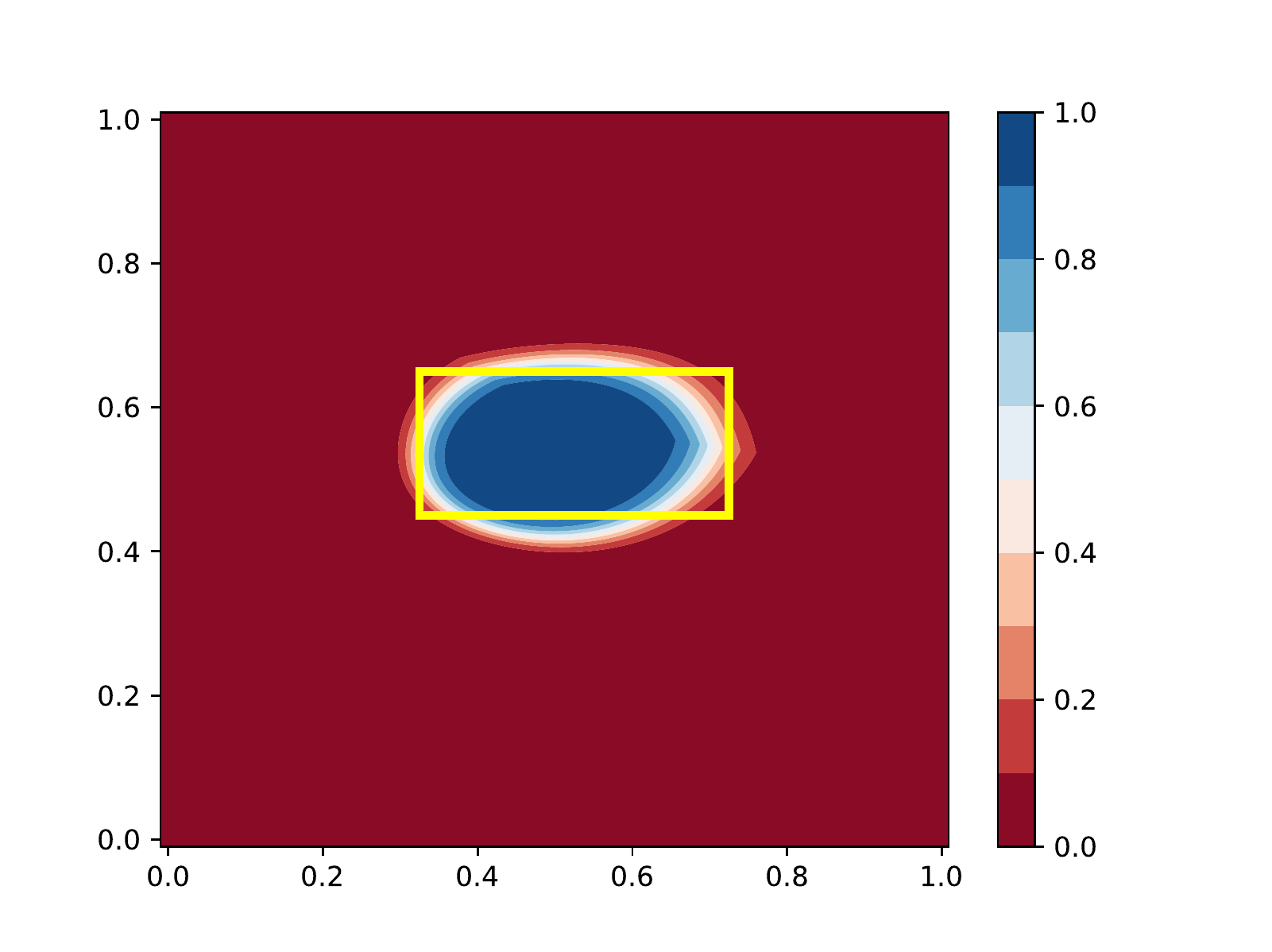}
\end{minipage} &
\begin{minipage}{.11\textwidth}
    \includegraphics[width=\textwidth,trim={1.1cm 0 3.7cm 1cm},clip]{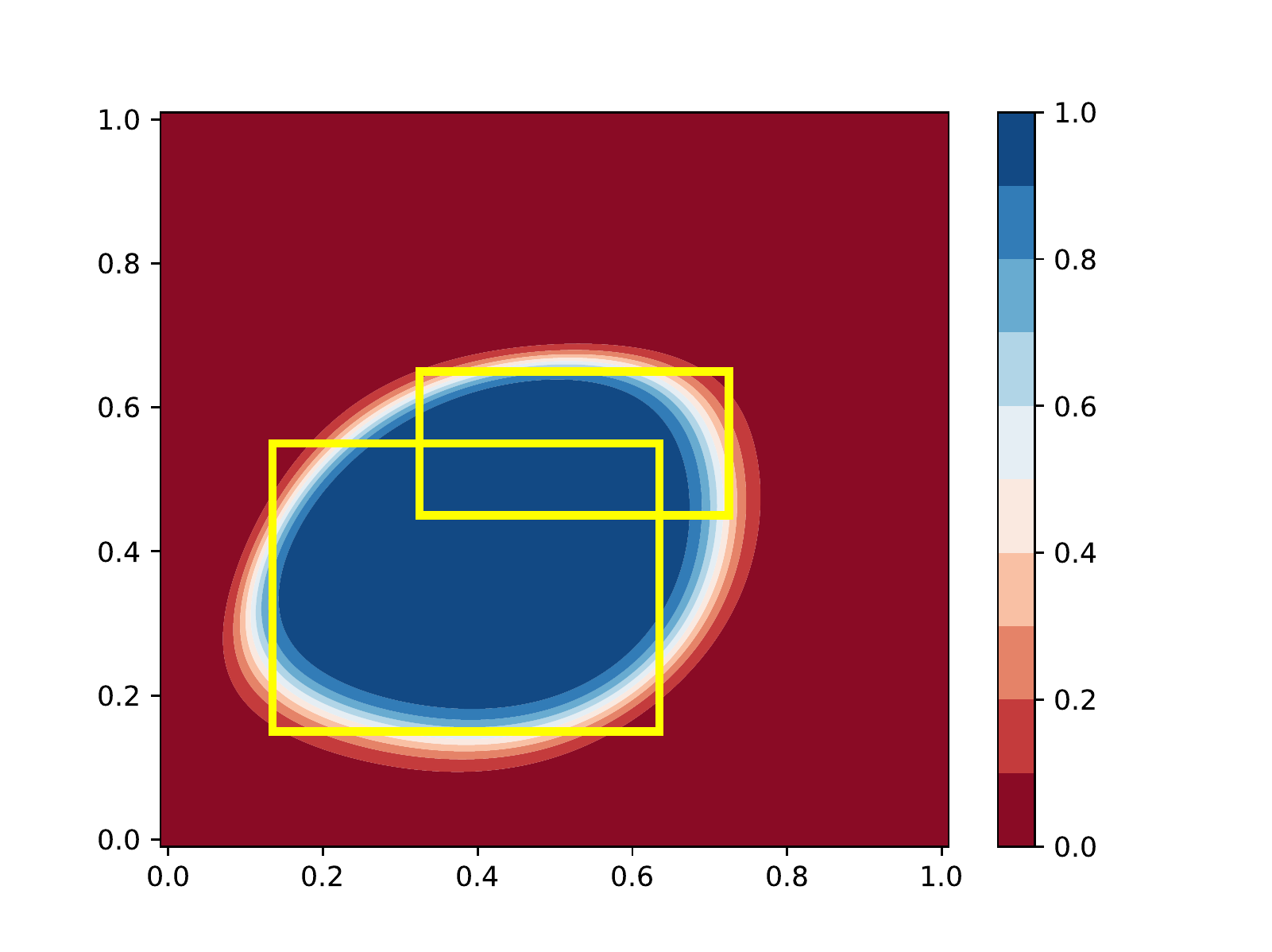}
\end{minipage} &
\begin{minipage}{.11\textwidth}
    \includegraphics[width=\textwidth,trim={1.1cm 0 3.7cm 1cm},clip]{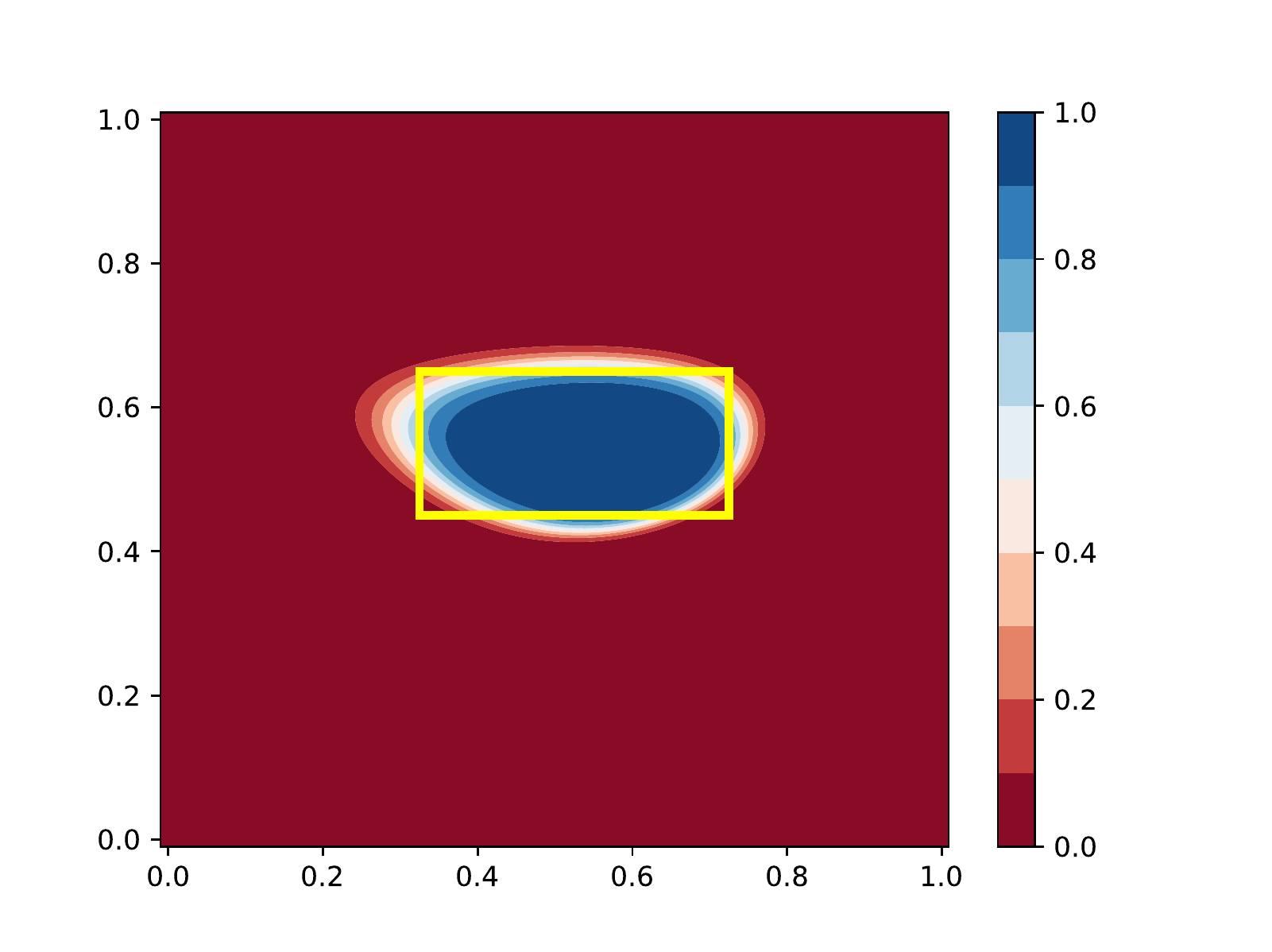}
\end{minipage} &
\begin{minipage}{.11\textwidth}
    \includegraphics[width=\textwidth,trim={1.1cm 0 3.7cm 1cm},clip]{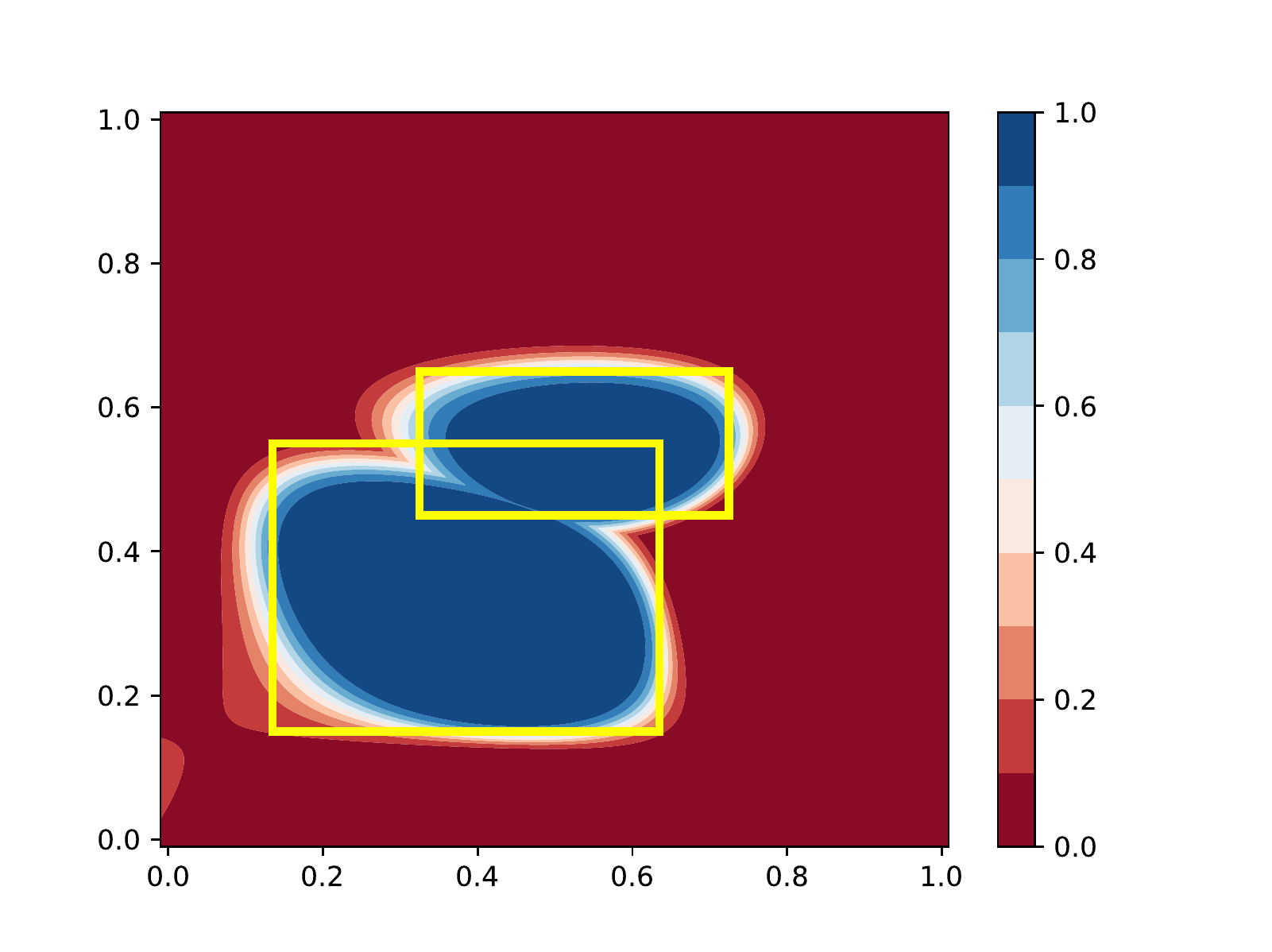}
\end{minipage} &
\begin{minipage}{.11\textwidth}
    \includegraphics[width=\linewidth,trim={1.1cm 0 3.7cm 1cm},clip]{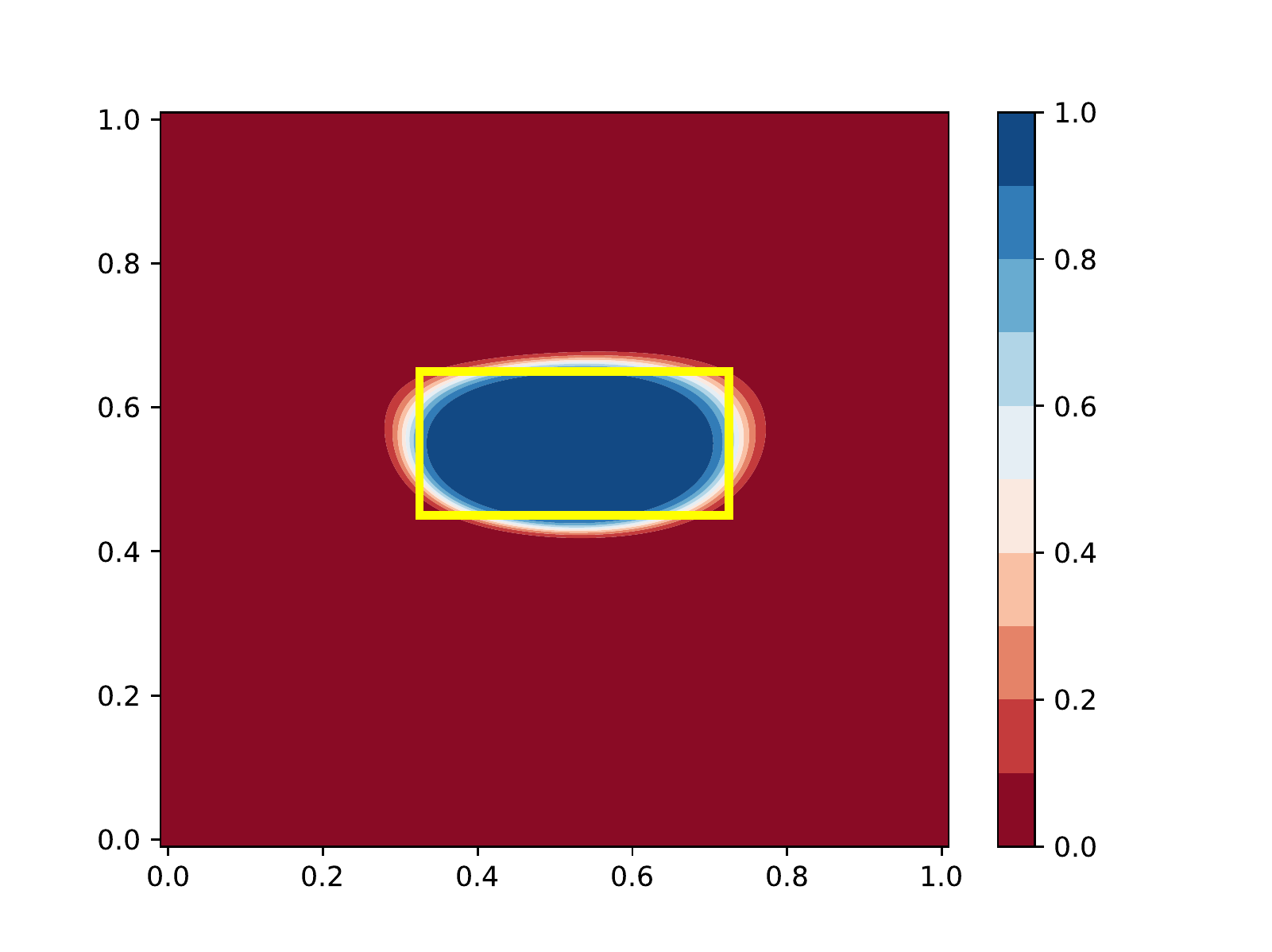}
\end{minipage} &
\begin{minipage}{.145\textwidth}
    \includegraphics[width=\linewidth,trim={0.4cm 0 1cm 1cm},clip]{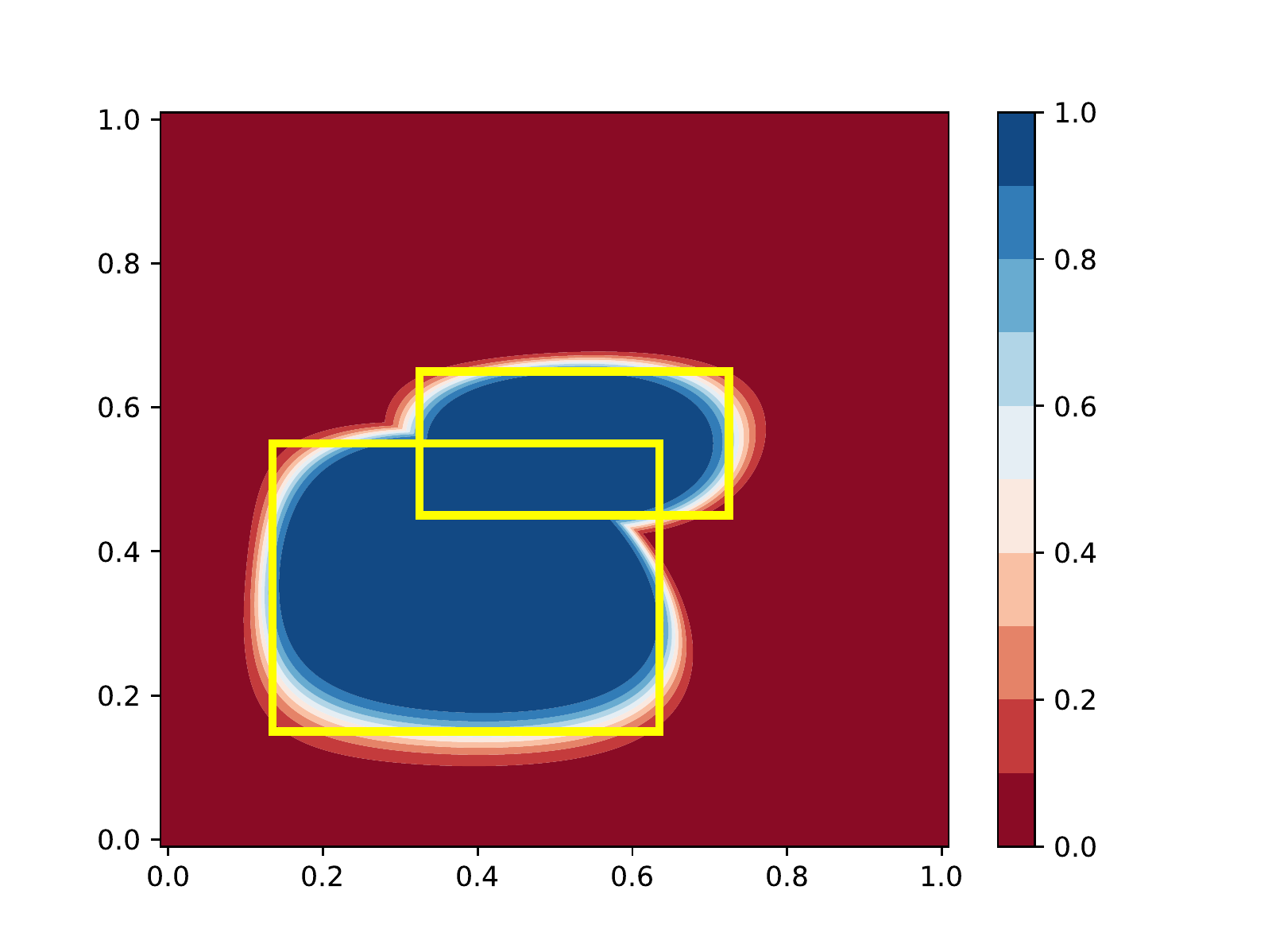}
\end{minipage} \vspace*{-1ex}
\end{tabular}
\caption{In all figures, the smaller yellow rectangle corresponds to $R_1$, while the bigger yellow one corresponds to~$R_2$. The first row of figures corresponds to $R_1 \cap R_2 = R_1$, the second corresponds to $R_1 \cap R_2 =\emptyset$, and the third corresponds to $R_1 \cap R_2 \not\in\{R_1,\emptyset\}$. 
 First four columns: decision boundaries of $f^+$ and $g^+$ for the classes $A_1$ and $A$. Last two columns: decision boundaries of \hmcsys{$h$} for the classes $A_1$ and $A$. In each figure, the darker the blue (resp., red), the more confident a model is that the data points in the region belong (do not belong) to the class (see the scale at the end of each row).
}
\label{fig:dec_bound_figs}
\end{figure*}

\begin{figure*}[t]
\centering
\begin{tabular}{c@{\ \ \,}c@{\ \ \ \ }|@{\ \ \ \ }c@{\ \ \,}c@{\ \ \ \ }|@{\ \ \ \ }c@{\ \ \,}c}
\multicolumn{2}{c}{Neural Network $f$} & \multicolumn{2}{c}{Neural Network $g$} &
\multicolumn{2}{c}{Neural Network $h$} \\
Class $A_1$ & Class $A$ & Class $A_1$ & Class $A\setminus A_1$ & Class $A_1$ & Class $A$ \\
\begin{minipage}{.11\textwidth}
    \includegraphics[width=\textwidth,trim={1.1cm 0 3.7cm 1cm},clip]{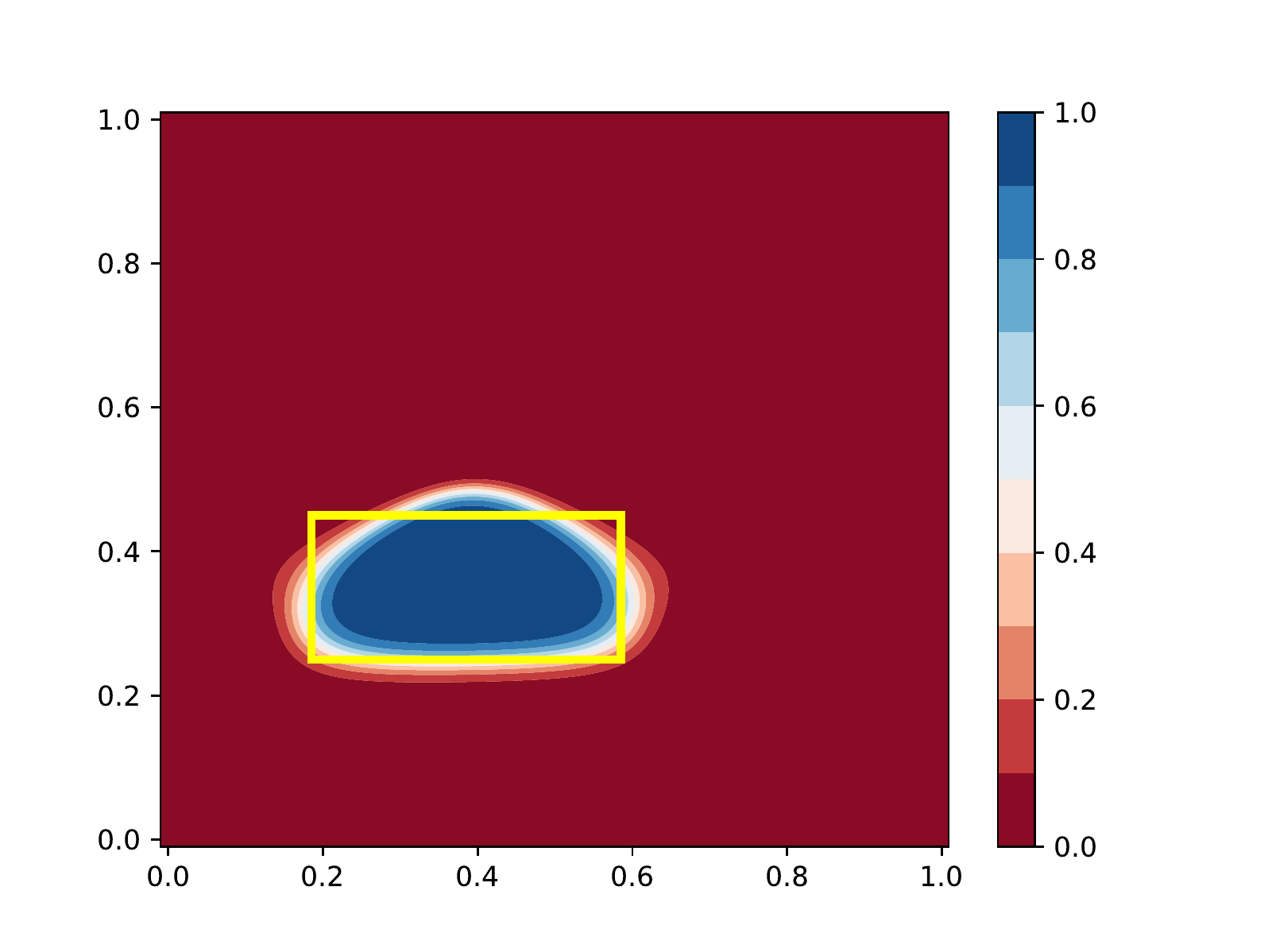}
\end{minipage} &
\begin{minipage}{.11\textwidth}
    \includegraphics[width=\textwidth,trim={1.1cm 0 3.7cm 1cm},clip]{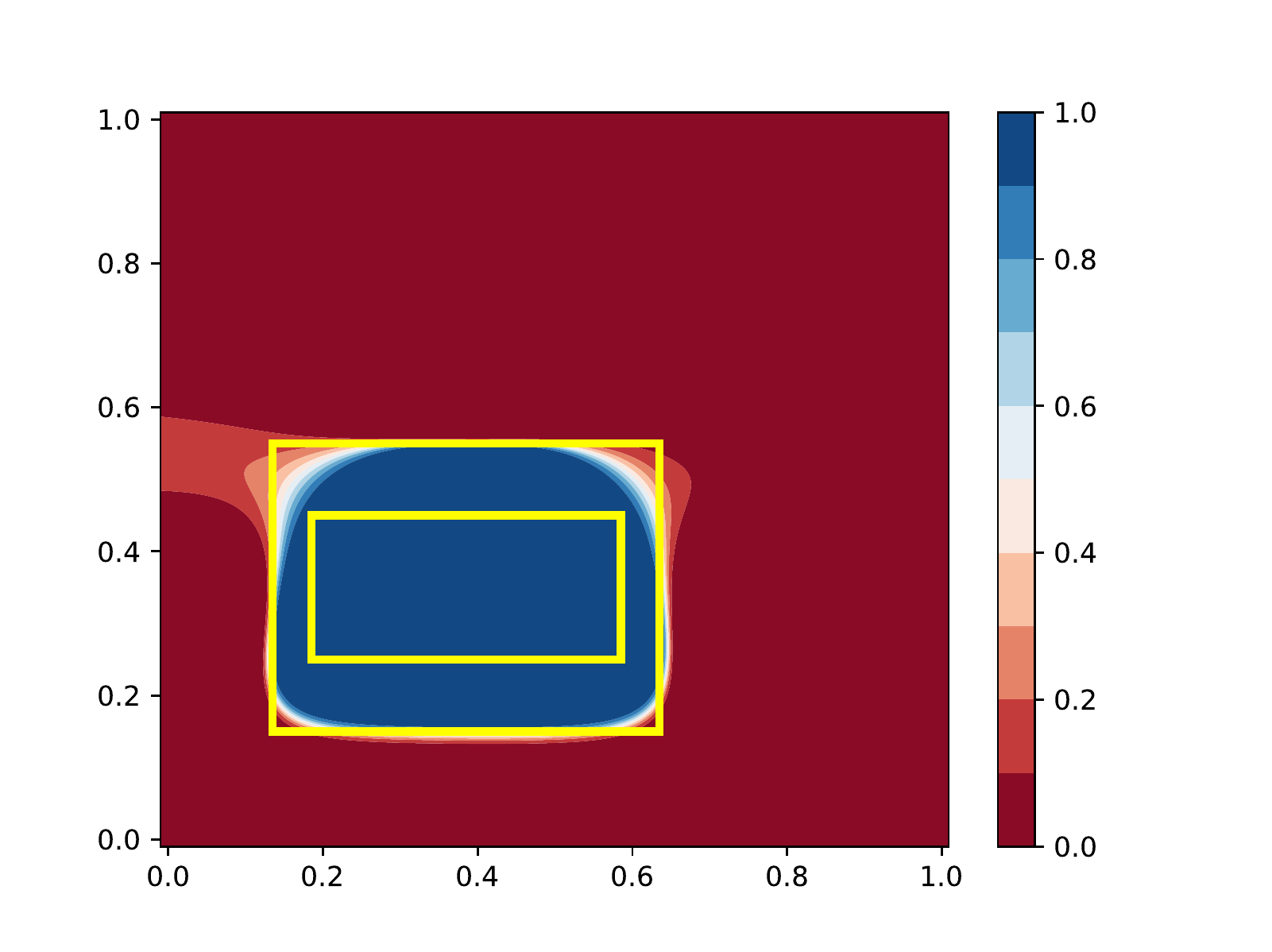} 
\end{minipage} &
\begin{minipage}{.11\textwidth}
    \includegraphics[width=\textwidth,trim={1.1cm 0 3.7cm 1cm},clip]{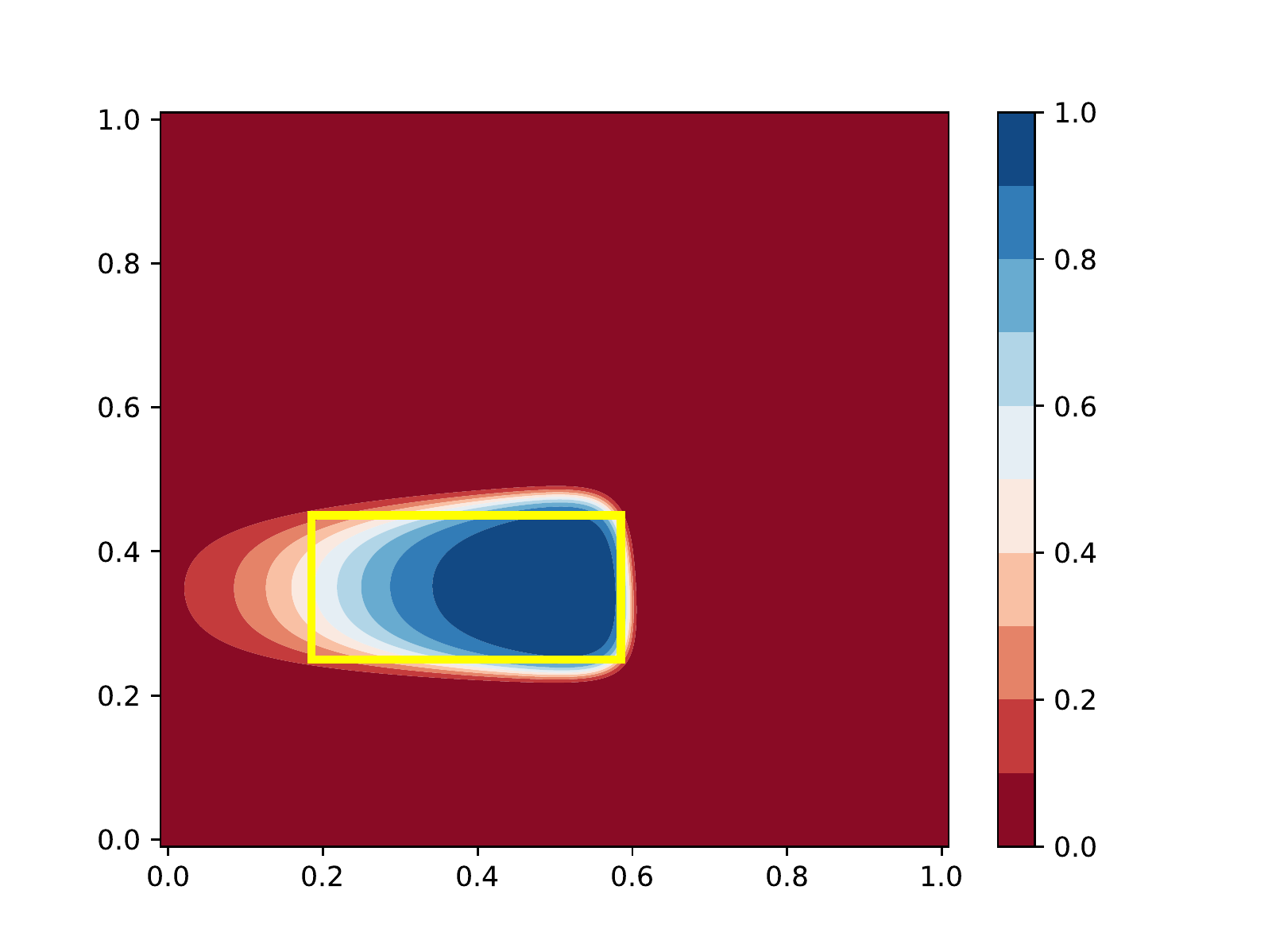} 
\end{minipage} &
\begin{minipage}{.11\textwidth}
    \includegraphics[width=\textwidth,trim={1.1cm 0 3.7cm 1cm},clip]{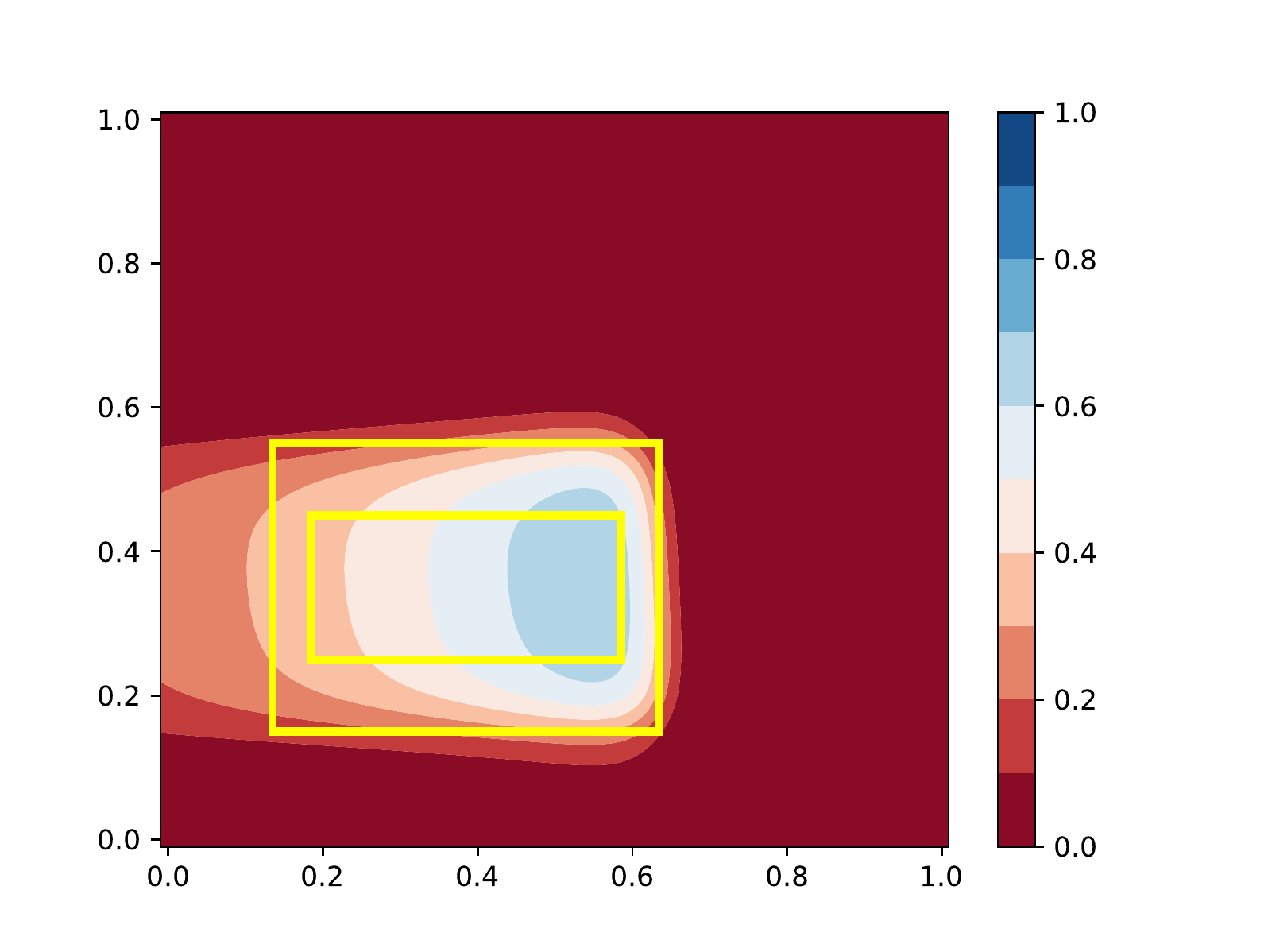} 
    \end{minipage} &
\begin{minipage}{.11\textwidth}
    \includegraphics[width=\linewidth,trim={1.1cm 0 3.7cm 1cm},clip]{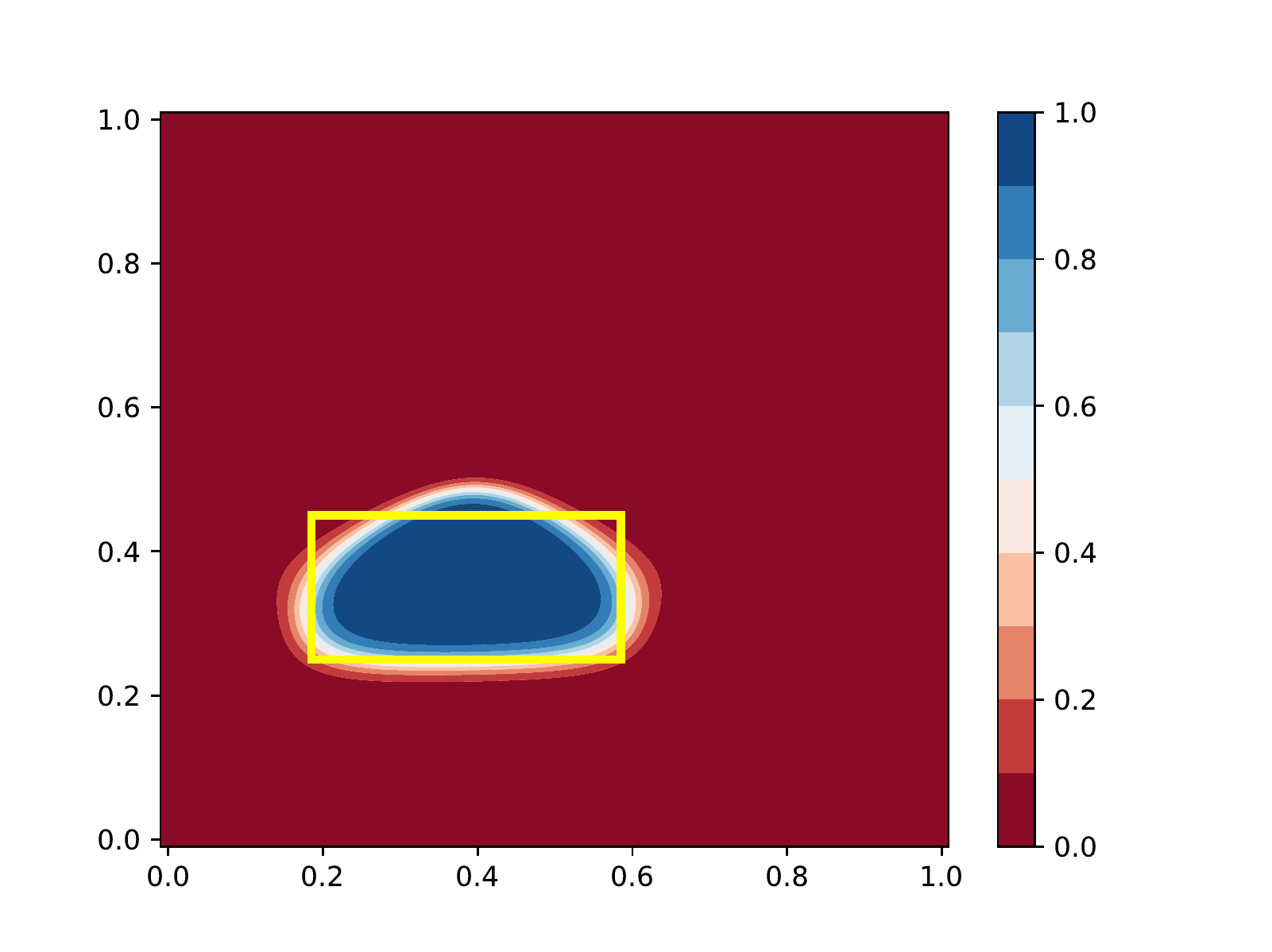}
\end{minipage} &
\begin{minipage}{.135\textwidth} 
    \includegraphics[width=\linewidth,trim={1.1cm 0 1cm 1cm},clip]{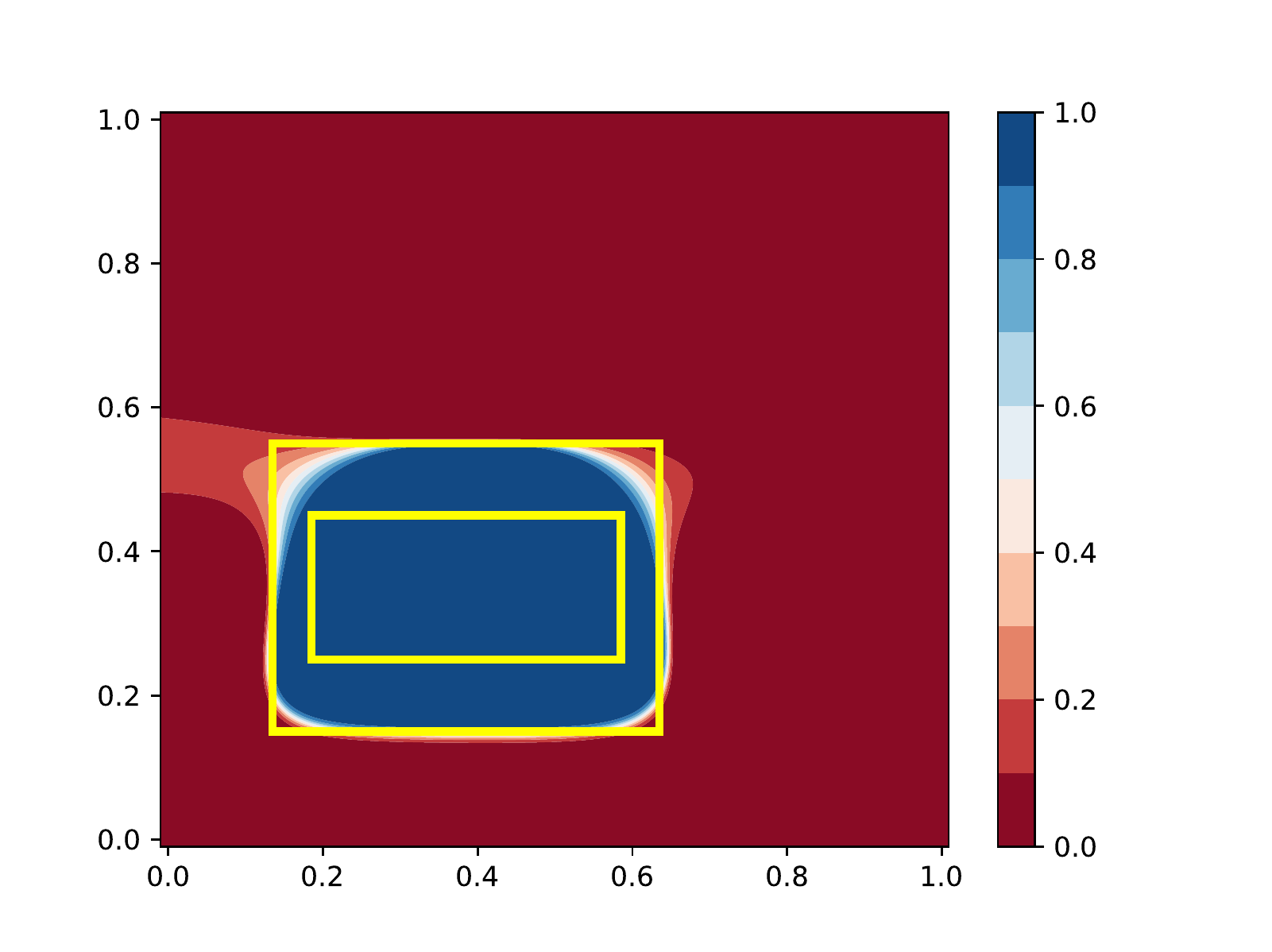}
\end{minipage} \\
\begin{minipage}{.11\textwidth}
    \includegraphics[width=\textwidth,trim={1.1cm 0 3.7cm 1cm},clip]{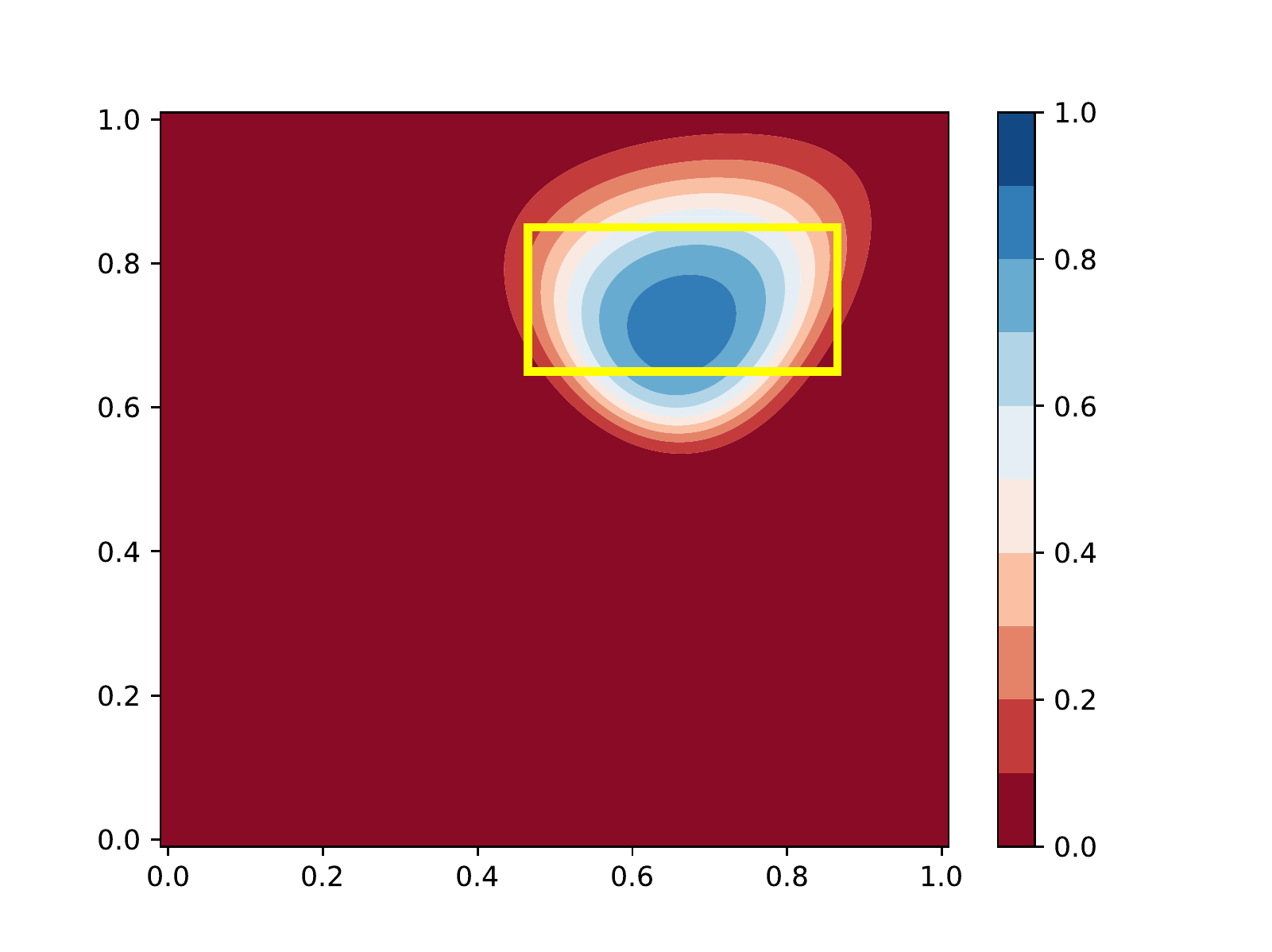}
\end{minipage} &
\begin{minipage}{.11\textwidth}
    \includegraphics[width=\textwidth,trim={1.1cm 0 3.7cm 1cm},clip]{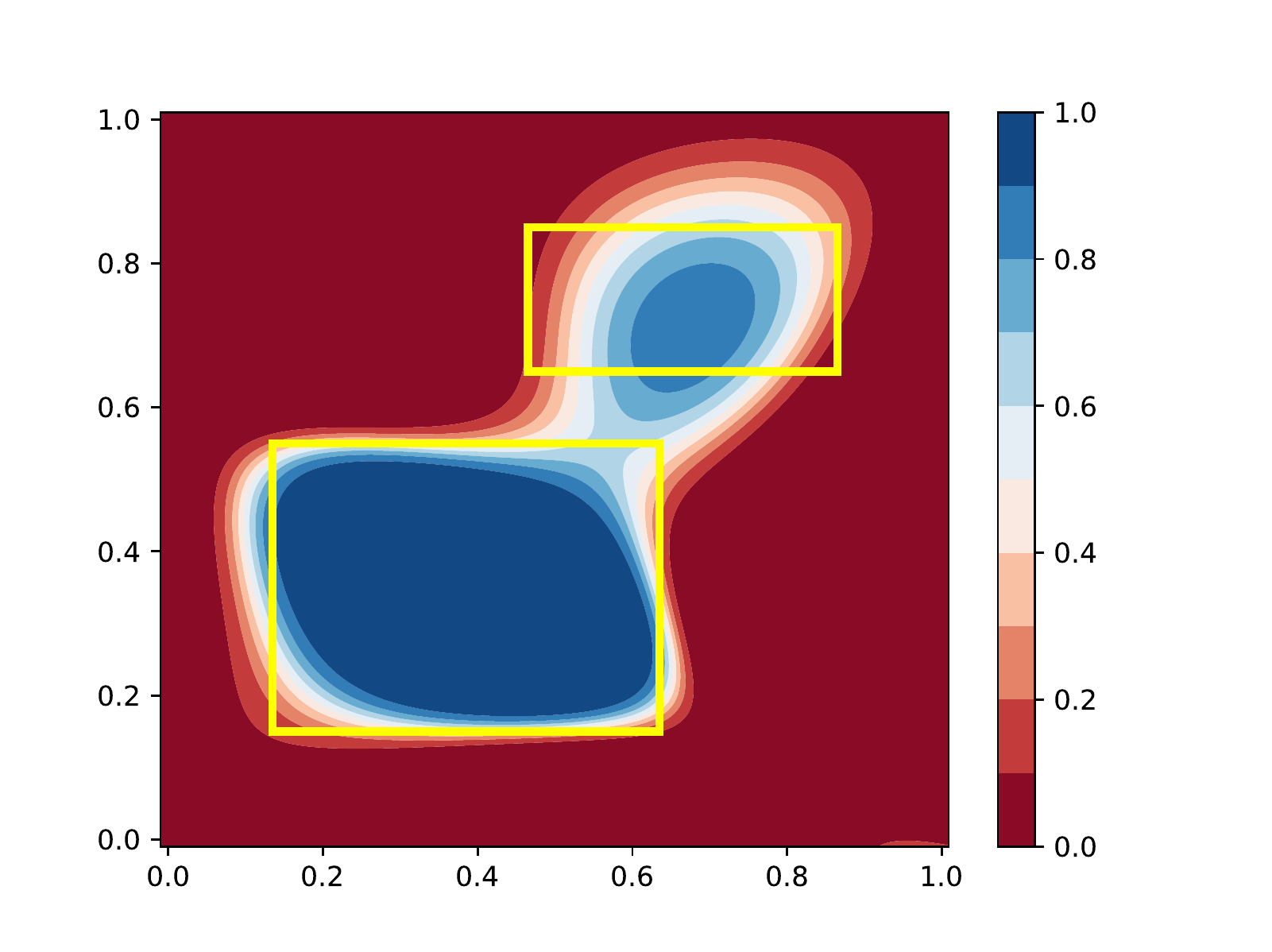}
\end{minipage} &
\begin{minipage}{.11\textwidth}
    \includegraphics[width=\textwidth,trim={1.1cm 0 3.7cm 1cm},clip]{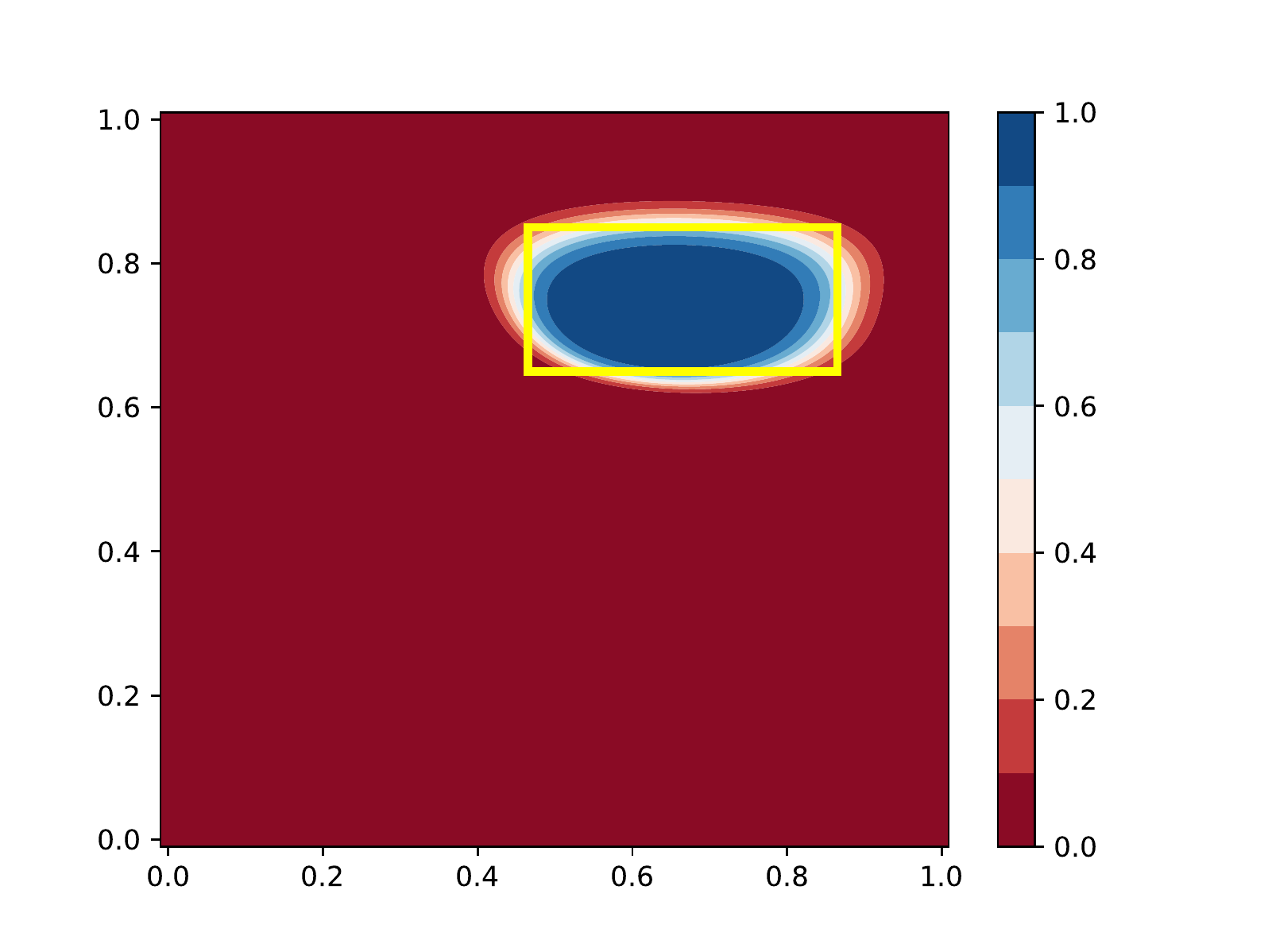}
\end{minipage} &
\begin{minipage}{.11\textwidth}
    \includegraphics[width=\textwidth,trim={1.1cm 0 3.7cm 1cm},clip]{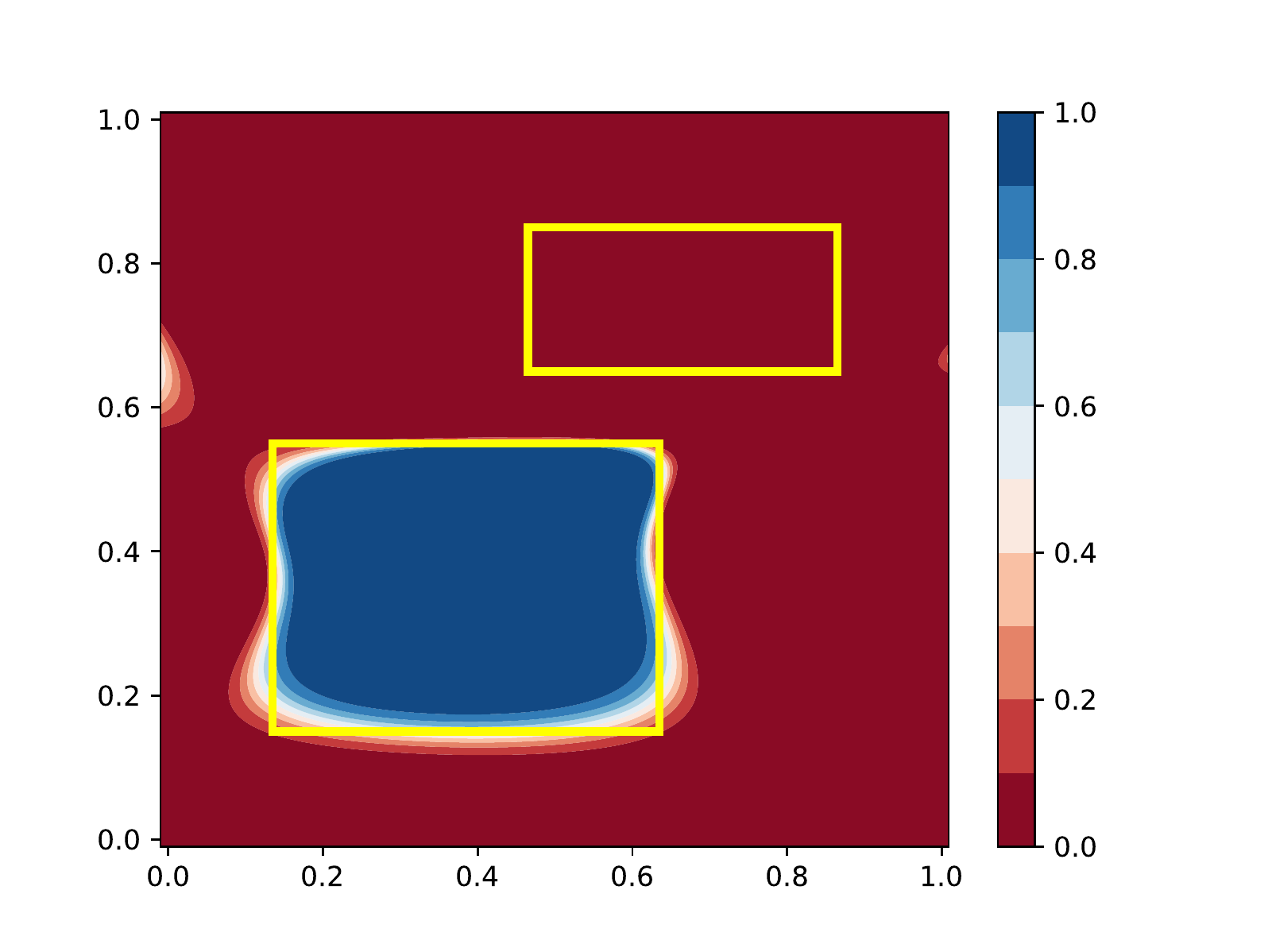}
\end{minipage} &
\begin{minipage}{.11\textwidth}
    \includegraphics[width=\linewidth,trim={1.1cm 0 3.7cm 1cm},clip]{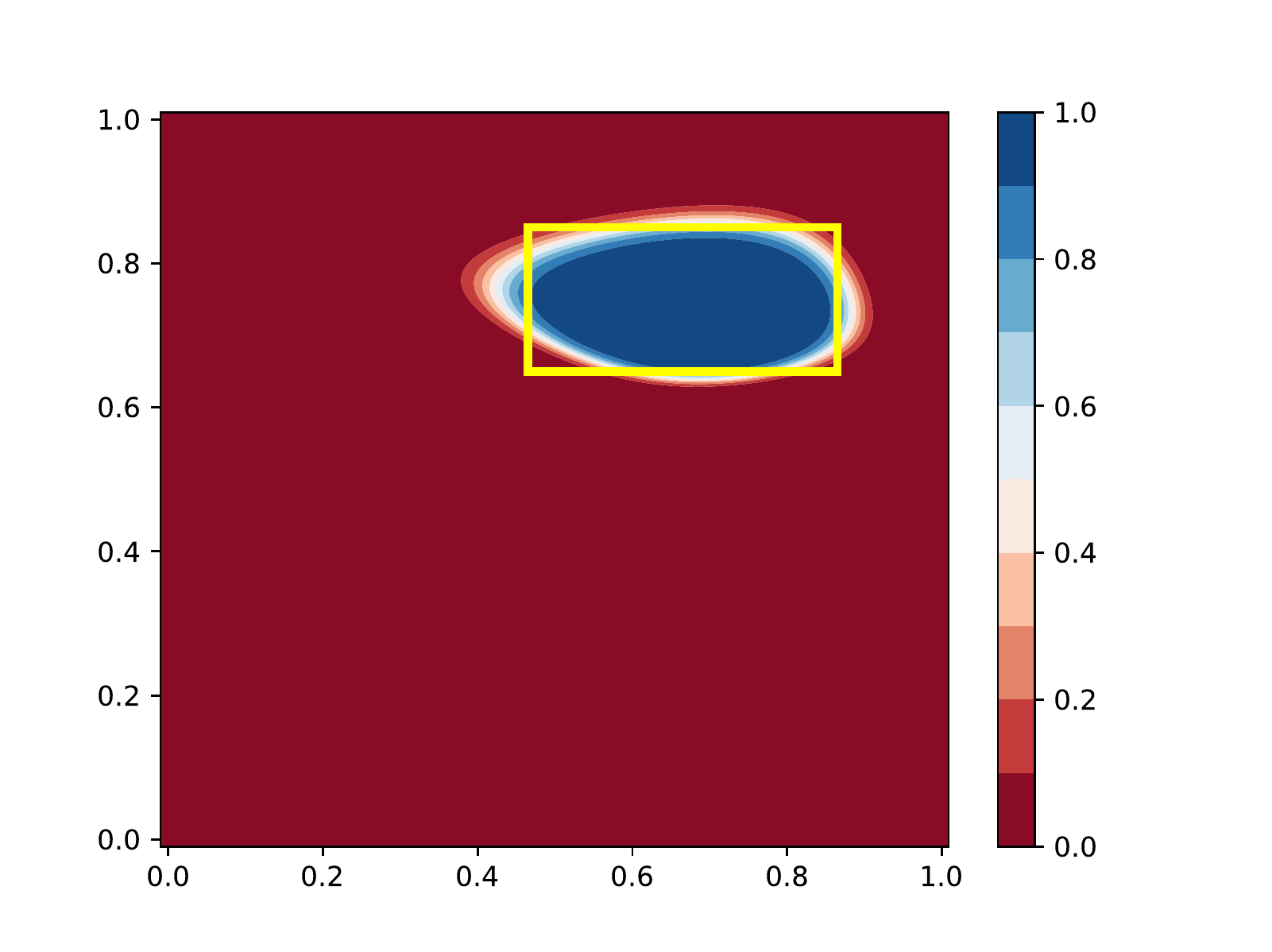}
\end{minipage} &
\begin{minipage}{.135\textwidth}
    \includegraphics[width=\linewidth,trim={1.1cm 0 1cm 1cm},clip]{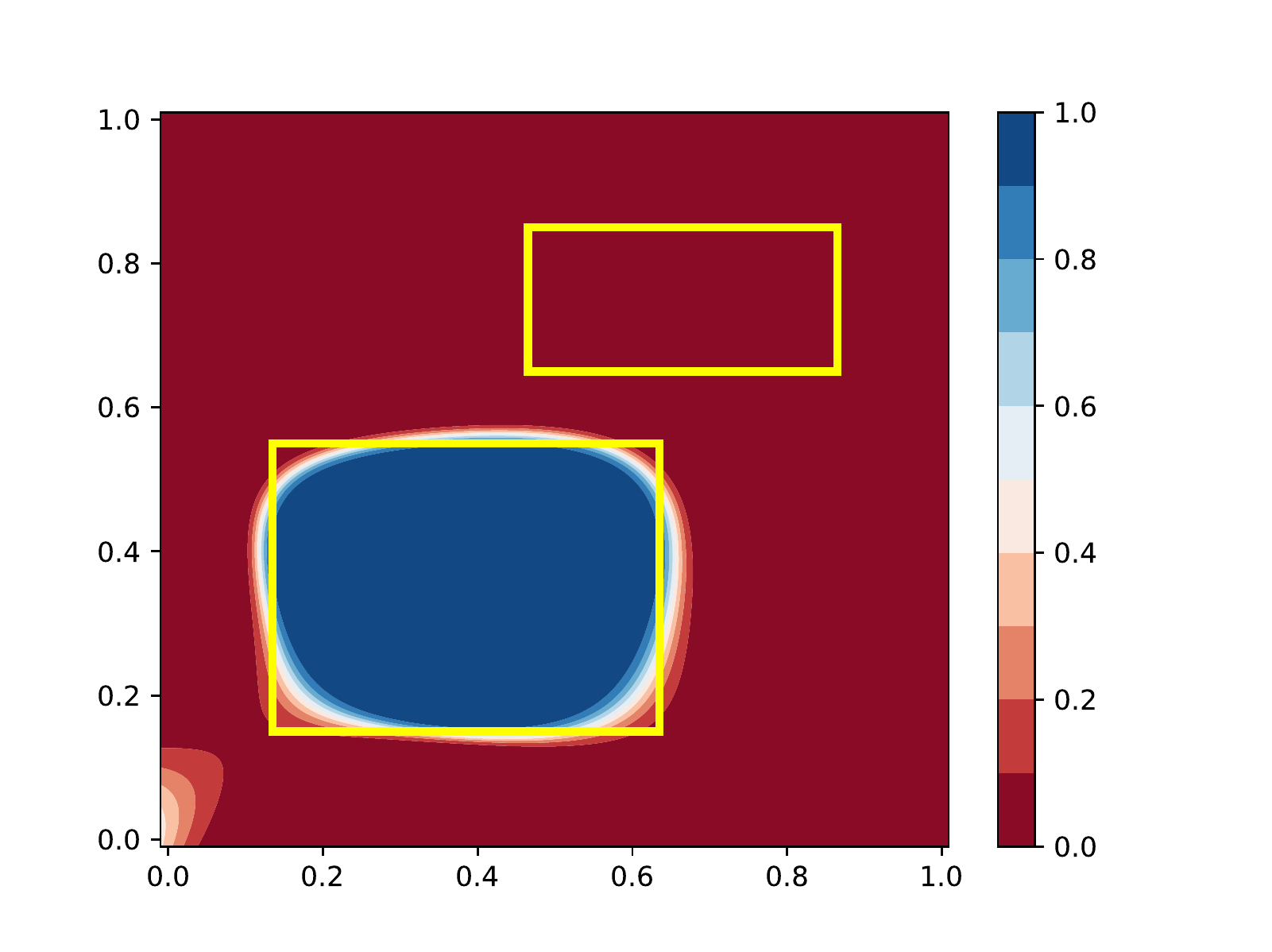}
\end{minipage} \\
\begin{minipage}{.11\textwidth}
    \includegraphics[width=\textwidth,trim={1.1cm 0 3.7cm 1cm},clip]{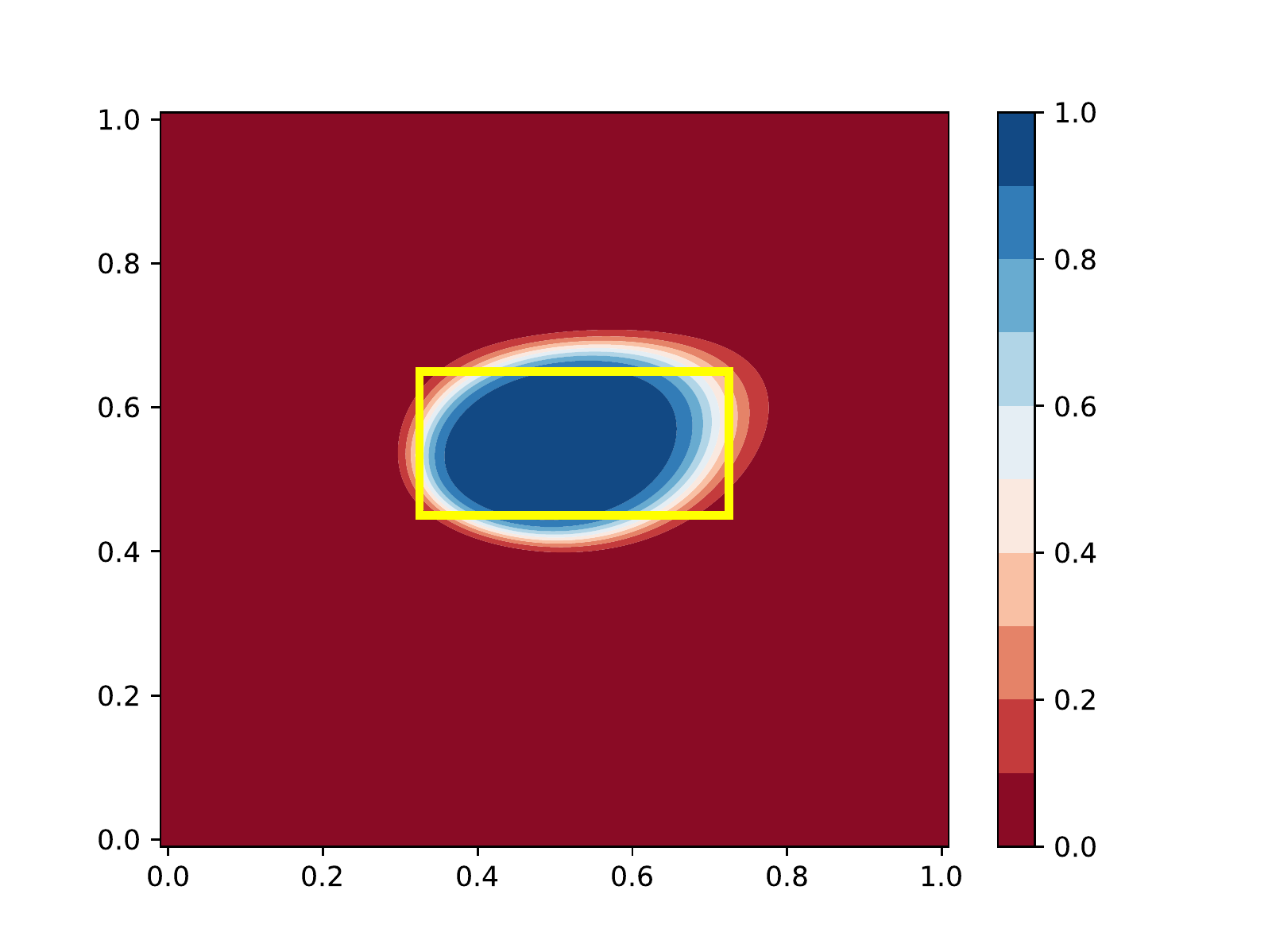}
\end{minipage} &
\begin{minipage}{.11\textwidth}
    \includegraphics[width=\textwidth,trim={1.1cm 0 3.7cm 1cm},clip]{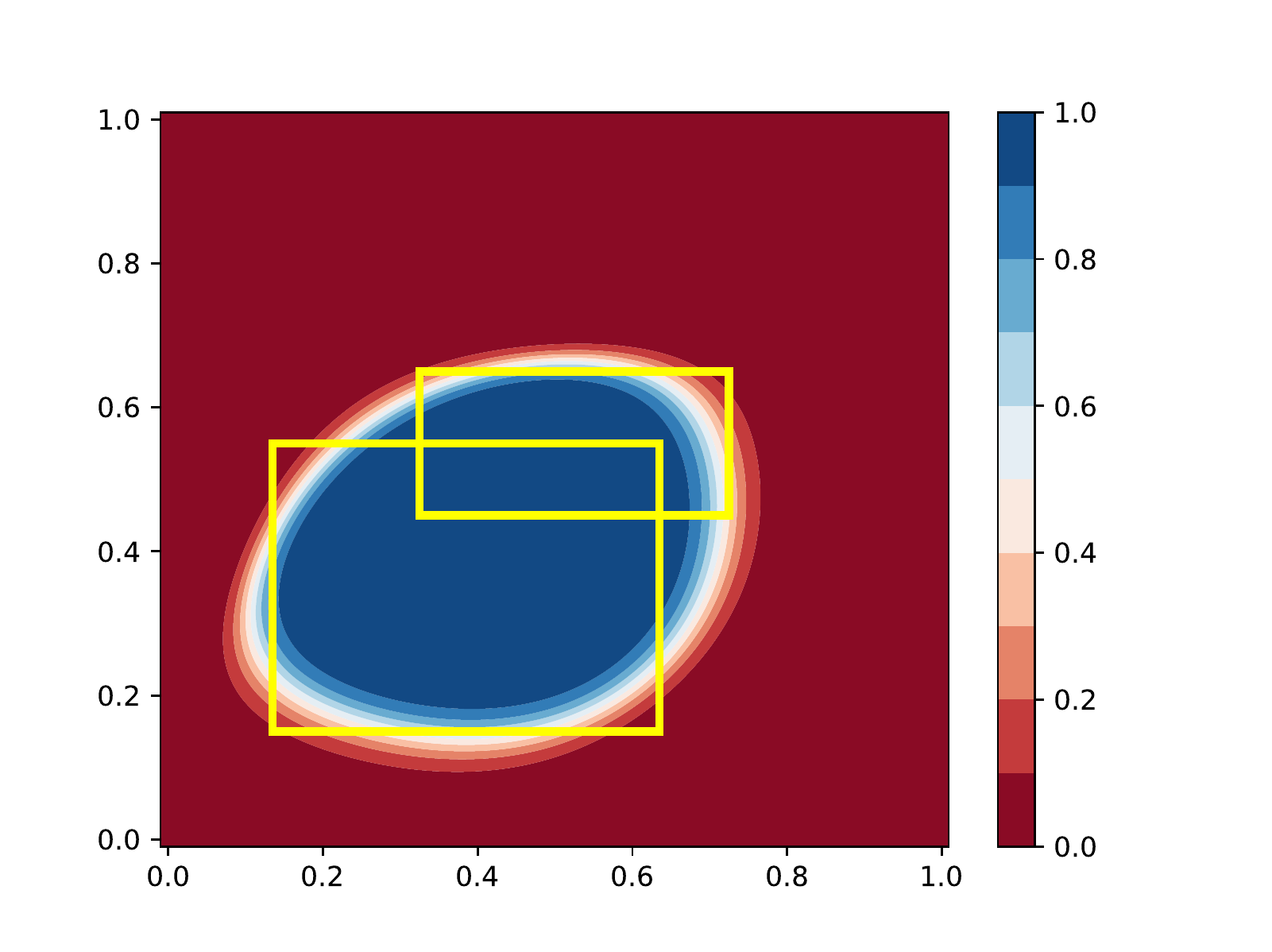}
\end{minipage} &
\begin{minipage}{.11\textwidth}
    \includegraphics[width=\textwidth,trim={1.1cm 0 3.7cm 1cm},clip]{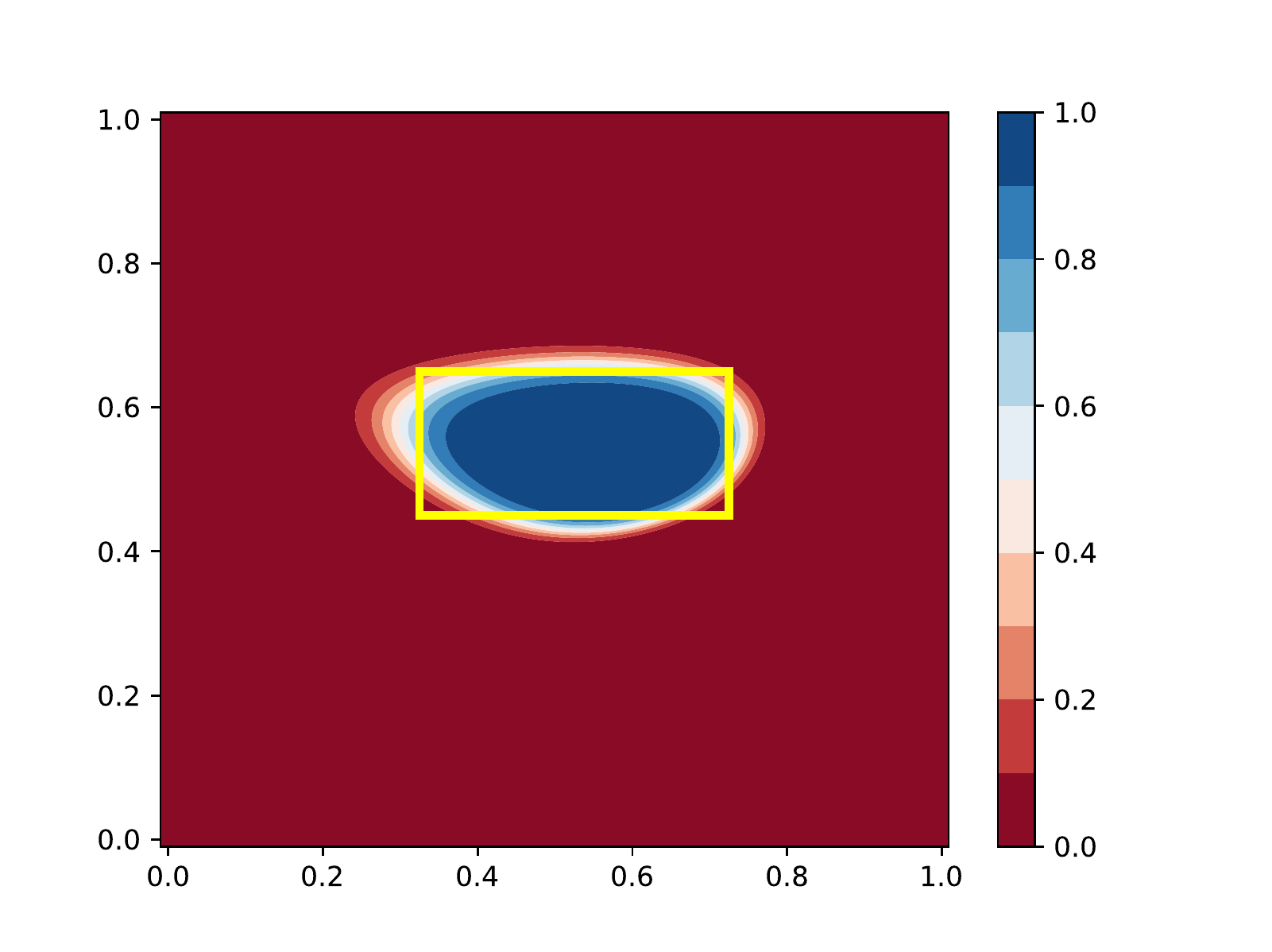}
\end{minipage} &
\begin{minipage}{.11\textwidth}
    \includegraphics[width=\textwidth,trim={1.1cm 0 3.7cm 1cm},clip]{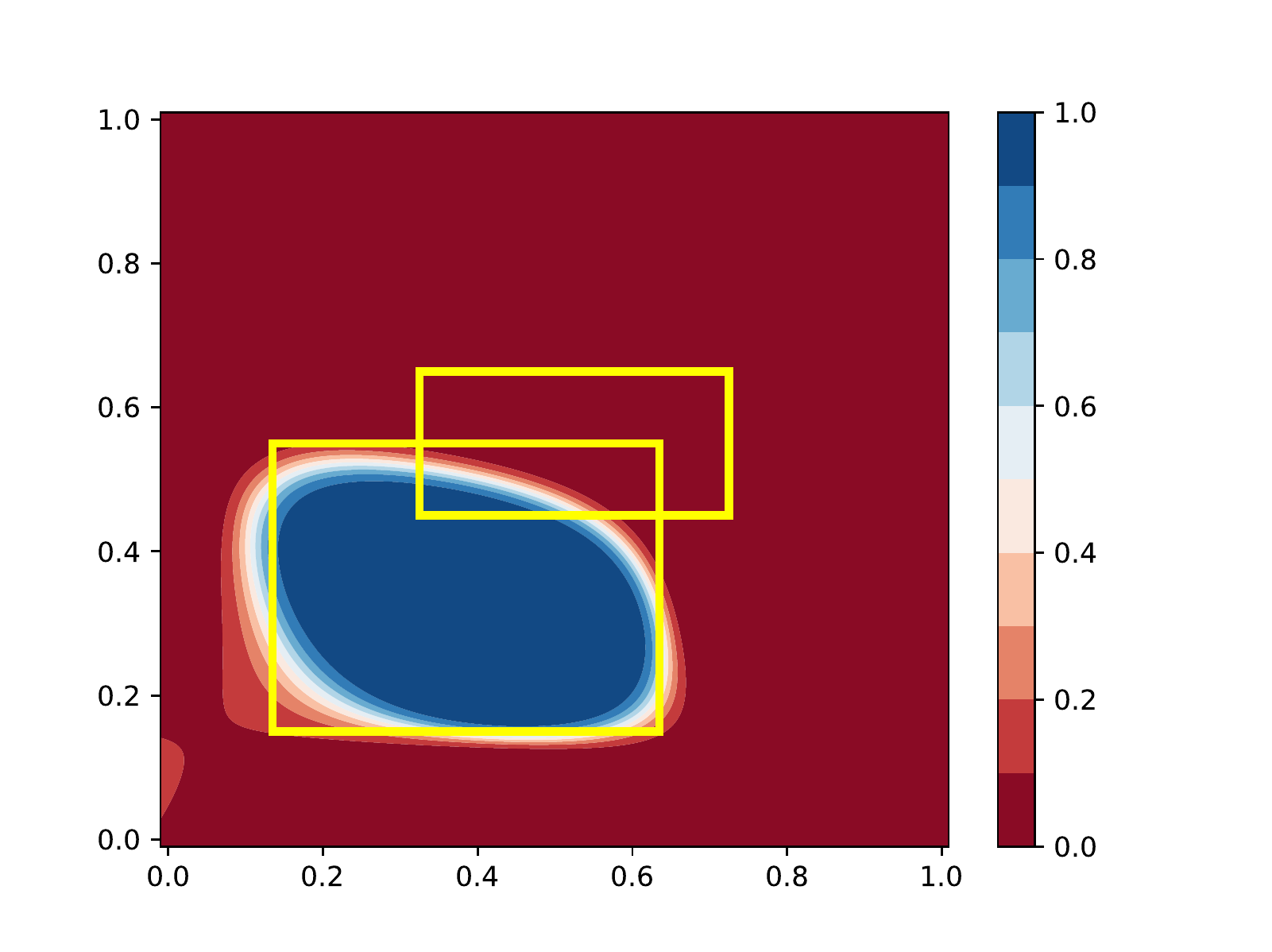}
\end{minipage} &
\begin{minipage}{.11\textwidth}
    \includegraphics[width=\linewidth,trim={1.1cm 0 3.7cm 1cm},clip]{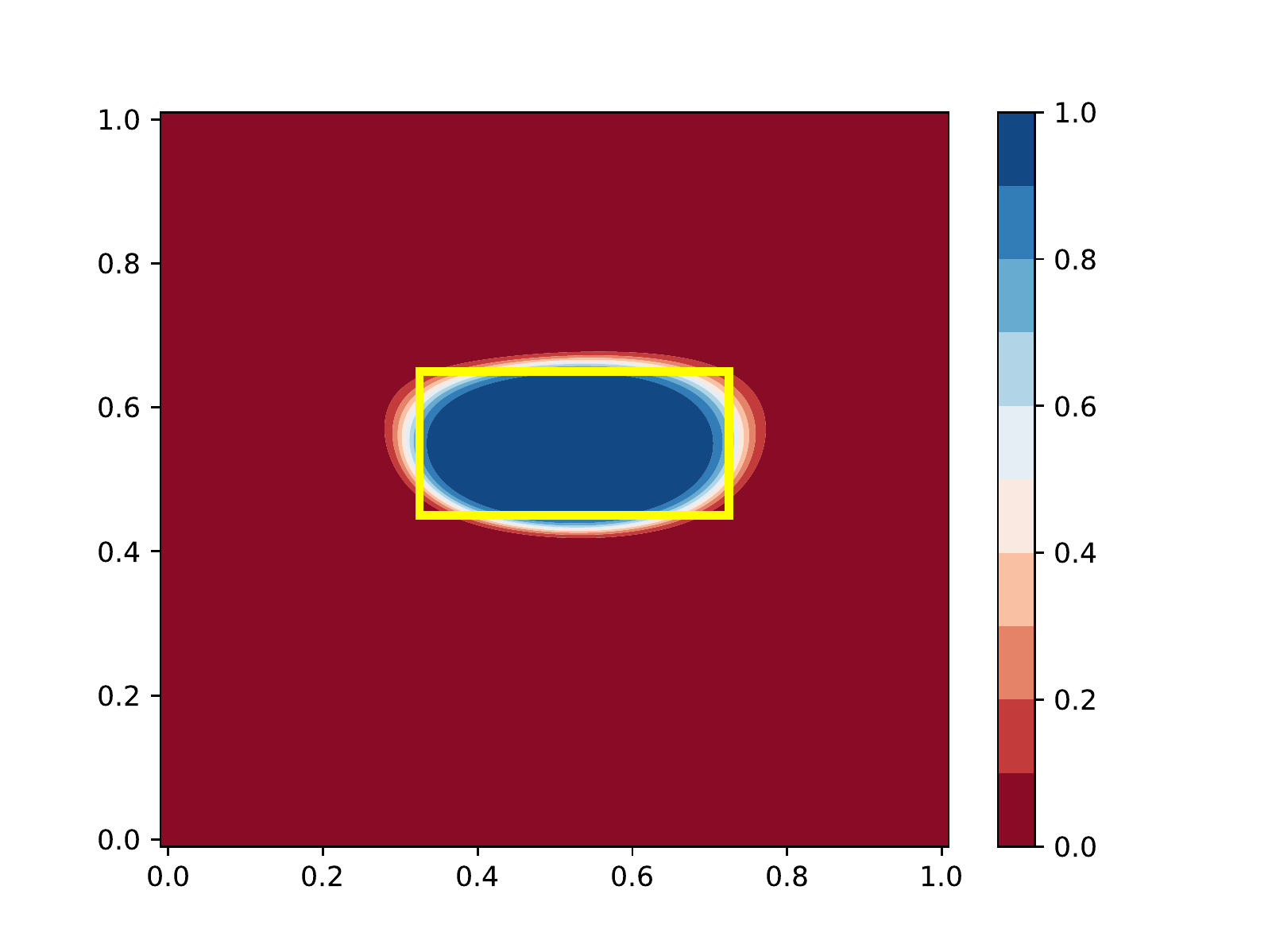}
\end{minipage} &
\begin{minipage}{.145\textwidth}
    \includegraphics[width=\linewidth,trim={0.4cm 0 1cm 1cm},clip]{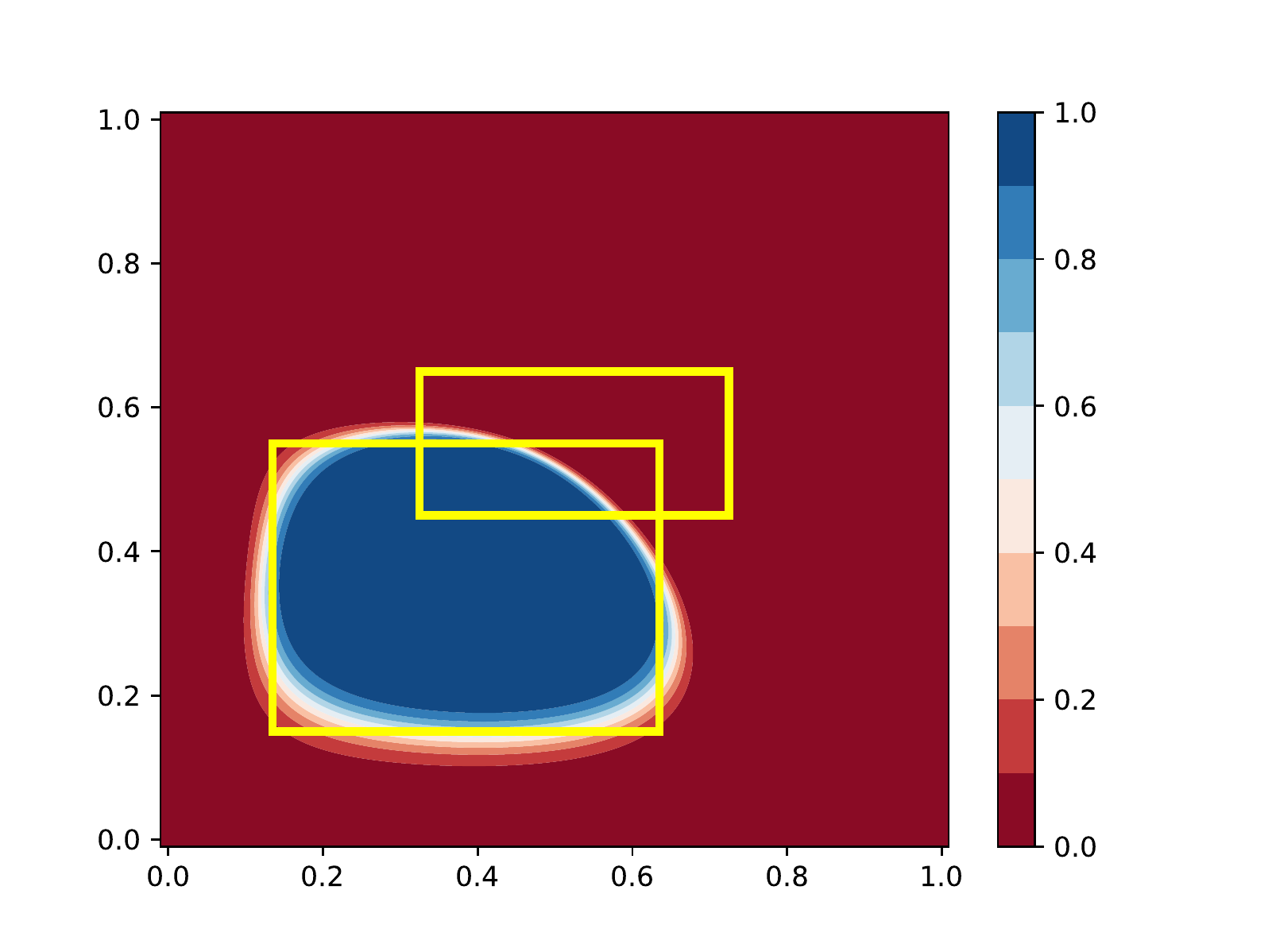}
\end{minipage} \vspace*{-1ex}
\end{tabular}
\caption{First four columns: decision boundaries of $f$ (resp., $g$) for the classes $A_1$ and $A$ (resp., $A_1$ and $A \setminus A_1$). Last two columns: decision boundaries of $h$ for the classes $A_1$ and~$A$. 
}
\label{fig:dec_bound_figs_before}
\end{figure*}

\subsection{Basic Case}\label{sec:hmc_basic}

Our goal is to leverage standard neural network approaches for MC problems and then exploit the hierarchy constraints in order to produce coherent predictions and improve performance.
Given our goal,
we first present two basic approaches, 
exemplifying their respective strengths and weaknesses. These are useful to then introduce our solution, which
is shown to present their advantages without exhibiting their weaknesses. In this section, we assume to have just two classes~$A_1 , A$ and the constraint~(\ref{eq:basic_constr-2}).

In the first approach, we treat the problem as a standard multi-label classification problem and simply set up a neural network $f$ with one output per class to be learned: to ensure that no hierarchy violation happens, we need an additional post-processing step. In this simple case, the post-processing could set the output for $A_1$ to be $\min(f_{A_1}, f_{A})$ or the output for $A$ to be $\max(f_{A}, f_{A_1})$. In this way, all predictions are always coherent with the hierarchy constraint.
A second approach is to build a network $g$ with two outputs, one for $A_1$ and one for $A\setminus A_1$. To meaningfully ensure that no hierarchy violation happens, we need an additional post-processing step in which 
each prediction for the class $A$ is given by $\max(g_{A\setminus A_1}, g_{A_1})$.
Considering the two above approaches, depending on the specific distribution of the 
data points, one solution may be significantly better than the other, and a priori we may not know which one it is.

To visualize the problem, assume that $D\,{=}\,2$, and consider two rectangles $R_1$ and $R_2$ with $R_1$~smaller than $R_2$, like the two yellow rectangles in the subfigures of Figure~\ref{fig:dec_bound_figs}. Assume $A_1 \,{=}\, R_1$ and $A = R_1 \cup R_2$.
Let $f^+$ be the model obtained by adding a post-processing step to $f$ setting $f^+_{A_1}=\min(f_{A_1}, f_A)$ and $f^+_A \,{=}\, f_A$, as in~\cite{cerri2014,cerri2016,feng2018} (analogous considerations hold, if we set $f^+_{A_1}\,{ =}\, f_{A_1}$ and $f^+_A = \max(f_A, f_{A_1})$ instead). Intuitively, we expect $f^+$ to perform well even with a very limited number of neurons when $R_1 \cap R_2\,{ =}\, R_1$, as in the first row of Figure~\ref{fig:dec_bound_figs}.
However, if $R_1 \cap R_2 \,{=}\, \emptyset$, as in the second row of Figure~\ref{fig:dec_bound_figs}, we expect $f^+$ to need more neurons to obtain 
a similar
performance.
Consider the alternative network $g$, and
let $g^+$ be the system obtained by setting $g^+_{A_1} \,{=}\, g_{A_1}$ and $g^+_A \,{=}\, \max(g_{A\setminus A_1}, g_{A_1})$.
Then, we expect $g^+$ to perform well when $R_1 \cap R_2 \,{=}\, \emptyset$.
However, if $R_1 \cap R_2\,{=}\, R_1$, we expect $g^+$ to need  more neurons to obtain 
a similar performance.
(We do not consider the model with one output for $A\setminus A_1$ and one for~$A$, since it performs poorly in both~cases.)
To test our hypothesis, we implemented $f$ and $g$ as feedforward neural networks with one hidden layer with four neurons and {\it tanh} nonlinearity. We used the {\it sigmoid} non-linearity for the output layer (from here on, we always assume that the last layer of each neural network presents {\it sigmoid} non-linearity). 
$f$ and $g$ were trained with binary cross-entropy loss using Adam optimization~\cite{kingma2014} for 20k epochs with learning rate $10^{-2}$ ($\beta_1 = 0.9, \beta_2 = 0.999$). The datasets consisted of $5000$ (50/50 train/test split) data points sampled from a uniform distribution over $[0,1]^2$. 
The first four columns of Figure~\ref{fig:dec_bound_figs} show the decision boundaries of $f^+$ and $g^+$, while the decision boundaries of $f$ and $g$ are reported in Figure~\ref{fig:dec_bound_figs_before}. These figures highlight that $f^+$ (resp., $g^+$) approximates the two rectangles better than $g^+$ (resp., $f^+$)  when $R_1 \cap R_2 = R_1$ (resp., $R_1 \cap R_2 = \emptyset$).
In general, when $R_1 \cap R_2 \not\in \{R_1,\emptyset\}$, we expect that the behavior of $f^+$ and $g^+$ depends on the relative position of $R_1$ and $R_2$. 

Ideally, we would like to build a neural network that is able to have roughly the same performance of $f^+$ when $R_1 \cap R_2 = R_1$, of $g^+$ when $R_1 \cap R_2 = \emptyset$, and better than both in any other case. We can achieve this behavior in two steps. 

In the first step, we build a new neural network consisting of two modules: (i) a bottom module $h$ with two outputs in $[0,1]$ for $A_1$ and $A$, and (ii) an upper module, called {\sl max constraint module} ($\module$),  consisting of a single layer that takes as input the output of the bottom module and imposes the hierarchy constraint. 
We call the obtained neural network the \emph{coherent hierarchical multi-label classification neural network of $h$}, denoted \hmcsys{$h$}. 
Consider a data point $x$. Let $h_{A_1}$ and $h_A$ be the outputs of $h$ for the classes $A_1$ and $A$, respectively, and let $y_{A_1}$ and $y_A$ be the ground truth for the classes $A_1$ and $A$, respectively. 
The outputs of {\module} (which are also the output of \hmcsys{$h$}) are: 
\begin{equation}
    \begin{aligned}
        & \module_{A_1}  =  h_{A_1}, \\
        & \module_{A}  = \max(h_A, h_{A_1}).
    \end{aligned}
\end{equation}
Notice that the output of \hmcsys{$h$} ensures that no hierarchy violation happens, i.e., that for any threshold, it cannot be the case that $\module$ predicts that a data point belongs to~$A_1$ but  not to~$A$. 

In the second step, to exploit the hierarchy constraint during training, 
\hmcsys{$h$} is trained with a novel loss function, called {\sl max constraint loss ($\loss$)},
defined as $\loss = \loss_{A_1} + \loss_A$, where:
\begin{equation}
    \begin{aligned}
    & \loss_{A_1}  = -y_{A_1} \ln(\module_{A_1}) - (1-y_{A_1})\ln(1-\module_{A_1}), \\
    & \loss_A  = -y_A\ln(\max(h_A, h_{A_1}y_{A_1})) - (1-y_A)\ln(1-\module_A)). \\
    \end{aligned}
\end{equation}
{\loss} differs from the standard binary cross-entropy loss $\lss$:
$$
\lss = - y_{A_1} \ln{(\module_{A_1})} - (1- y_{A_1}) \ln{(1- \module_{A_1})} - y_A \ln{(\module_A)} - (1- y_A) \ln{(1- \module_A)},
$$
iff $x\not\in A_1$ ($y_{A_1} = 0$), $x \in A$ ($y_A = 1$), and $h_{A_1} > h_A$.

The following example highlights the different behavior of {\loss} compared to $\lss$.

\begin{example}\label{ex:bgrad}{\rm 
Assume $h_{A_1} = 0.3$, $h_A = 0.1$, $y_{A_1} = 0$, and $y_A = 1$. 
Then, 
\begin{align*}
\loss = \loss_{A_1} + \loss_A = - \ln(1- h_{A_1}) - ln(h_A)\,,
\end{align*}
and the partial derivatives of  $\loss$ with respect to $h_{A_1}$ and $h_A$ are
\begin{align*}
\frac{\partial \loss}{\partial h_{A_1}} = -\frac{1}{h_{A_1}-1} \sim 1.4\,  \qquad\mbox{ and } \qquad \frac{\partial \loss}{\partial h_A} = -\frac{1}{h_A} = - 10\,,
\end{align*}
and \hmcsys{$h$} rightly learns that it needs to decrease $h_{A_1}$ and increase $h_A$.

On the other hand, if we use the standard binary cross-entropy $\lss$ after $\module$, we obtain:
\begin{align*}
\lss & = - \ln(1-\module_{A_1}) - \ln(\module_A)  = -\ln(1-h_{A_1}) - \ln(h_{A_1})\,, 
\end{align*}
and then
$$
\frac{\partial \lss}{\partial h_{A_1}} = -\frac{1}{h_{A_1}-1} - \frac{1}{h_{A_1}} \sim -1.9\qquad \mbox{ and } \qquad  \frac{\partial \lss}{\partial h_A} = 0\,.
$$
Hence, if \hmcsys{$h$} is trained with $\lss$, then it wrongly learns that it needs to increase~$h_{A_1}$ and keep $h_A$. \hfill$\lhd$
}\end{example}

Consider the example in Figure~\ref{fig:dec_bound_figs}. To check that our model behaves as expected, we implemented $h$ as $f$, and trained \hmcsys{$h$} with $\loss$ on the same datasets and in the same way as $f$ and~$g$. 
The last two columns of Figure~\ref{fig:dec_bound_figs} show the decision boundaries of \hmcsys{$h$}, while those of $h$ can be seen in Figure~\ref{fig:dec_bound_figs_before}. \hmcsys{$h$}'s decision boundaries mirror those of $f^+$ (resp., $g^+$) when 
$R_1 \cap R_2 = R_1$ (resp., $R_1 \cap R_2 = \emptyset$). Intuitively, as highlighted by Figure~\ref{fig:dec_bound_figs_before},
\hmcsys{$h$} is able to decide whether to learn $A$: 
\begin{enumerate}
    \item as a whole (top figure), 
    \item as the union of $A\setminus A_1$ and $A_1$ (middle figure), and 
    \item as the union of a subset of $A$ and a subset of $A_1$ (bottom~figure). 
\end{enumerate}
\hmcsys{$h$} has thus learned when to exploit the prediction on the lower class $A_1$ to make predictions on the upper class $A$.
    
\subsection{General Case}\label{sec:hmc_gen_case}

We now consider an arbitrary HMC problem $(${\problem}$,\Pi)$ 
with {\problem}$\,=(\classes,{\mathcal X})$. Given a class $A \in {\classes}$, we denote by $\mathcal{D}_A$ the set of subclasses of $A$ as given by $\Pi$, i.e., the set of classes $B$ such that there is a path of length $\geq 0$ from $B$ to $A$ in the  graph with an edge from $A_1$ to $A$ for each constraint~(\ref{eq:basic_constr-2}) 
in $\Pi$. 

Consider a data point $x \in \mathbb{R}^D$
and a model $h$ for {\problem}.
The output $\module_A$ of \hmcsys{$h$} for a class $A$ is: 
\begin{equation}\label{eq:module_class}
   \module_A = \max_{B \in \mathcal{D}_A}(h_{B}).
\end{equation}
For each class $A$, the number of operations performed by $\module_A$ is independent from the depth of the hierarchy, making \hmcsys{$h$} a scalable model. Thanks to $\module$, \hmcsys{$h$} is guaranteed to always output predictions satisfying the hierarchy constraints, as stated by the following theorem, which follows immediately from Eq.~(\ref{eq:module_class}). 

\begin{theorem}\label{th:hmc}
Let $(${\problem}$,\Pi)$ be an HMC problem. For any model $h$ for {\problem}, 
\hmcsys{$h$} does not 
commit any hierarchy violations.
\end{theorem}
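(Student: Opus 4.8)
The plan is to show directly that for every constraint $A_1 \to A$ in $\Pi$, the output of \hmcsys{$h$} satisfies $\module_{A_1} \leq \module_A$, which by definition means no hierarchy violation is committed. Everything reduces to a containment between the index sets of the two maxima defining $\module_{A_1}$ and $\module_A$ through Eq.~(\ref{eq:module_class}).

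First I would fix an arbitrary data point $x$ and an arbitrary constraint $A_1 \to A$ in $\Pi$. The crucial observation is that $\mathcal{D}_{A_1} \subseteq \mathcal{D}_A$. To see this, recall that $\mathcal{D}_{A_1}$ is the set of classes $B$ for which there is a path of length $\geq 0$ from $B$ to $A_1$ in the constraint graph. Since the constraint $A_1 \to A$ contributes an edge from $A_1$ to $A$, any such path from $B$ to $A_1$ can be extended by this edge into a path from $B$ to $A$, so $B \in \mathcal{D}_A$. This establishes the containment.

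Then I would conclude by the monotonicity of the maximum over a larger index set: since $\mathcal{D}_{A_1} \subseteq \mathcal{D}_A$,
$$
\module_{A_1} = \max_{B \in \mathcal{D}_{A_1}}(h_{B}) \leq \max_{B \in \mathcal{D}_{A}}(h_{B}) = \module_A.
$$
As $x$ and the constraint were arbitrary, \hmcsys{$h$} never commits a hierarchy violation.

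Honestly, I expect no real obstacle here: the statement follows almost immediately from Eq.~(\ref{eq:module_class}), as the theorem itself anticipates. The only point deserving any care is the subset argument, which hinges on the transitivity (path composition) in the constraint graph; the acyclicity of that graph, while guaranteeing that each $\mathcal{D}_A$ is a well-defined finite set and that the maxima are taken over nonempty collections, plays no further essential role in the inequality.
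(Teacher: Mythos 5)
Your proof is correct and is exactly the argument the paper has in mind: the paper states that the theorem ``follows immediately from Eq.~(\ref{eq:module_class})'', and your containment $\mathcal{D}_{A_1} \subseteq \mathcal{D}_A$ together with monotonicity of the maximum is precisely the spelled-out version of that one-line justification. Nothing is missing.
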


As an immediate consequence, \hmcsys{$h$} also does not commit any logical violations and is coherent relative to the hierarchy constraints. 
\begin{corollary}
Let $(${\problem}$,\Pi)$ be an HMC problem. For any model $h$ for {\problem}, 
\hmcsys{$h$} does not commit any logical violations and is coherent with respect to $\Pi$.
\end{corollary}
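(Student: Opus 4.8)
The plan is to unfold the definition of a hierarchy violation and reduce the statement to a single inequality between the outputs of $\module$. By definition, \hmcsys{$h$} commits a hierarchy violation precisely when there exist a data point $x$ and a constraint $A_1 \to A$ in $\Pi$ with $\module_{A_1}(x) > \module_A(x)$. Hence it suffices to prove that for every such constraint and every $x$ we have $\module_{A_1}(x) \le \module_A(x)$. As is done throughout the paper, I fix $x$ and suppress the dependence on it, working directly with the closed form in Eq.~(\ref{eq:module_class}).

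First I would isolate the purely combinatorial ingredient: the inclusion $\mathcal{D}_{A_1} \subseteq \mathcal{D}_A$ whenever $A_1 \to A \in \Pi$. Recall that $\mathcal{D}_C$ is the set of classes $B$ from which there is a path of length $\ge 0$ to $C$ in the constraint graph. The constraint $A_1 \to A$ contributes an edge from $A_1$ to $A$, so any path witnessing $B \in \mathcal{D}_{A_1}$ extends, by appending this edge, to a path from $B$ to $A$; thus $B \in \mathcal{D}_A$, which gives the claimed inclusion.

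Given this inclusion, the conclusion follows from the monotonicity of the maximum under set inclusion. Since $\module_C = \max_{B \in \mathcal{D}_C} h_B$ and both index sets are finite (each is a subset of the finite class set $\classes$, so the maxima are well defined), we obtain
\begin{equation*}
\module_{A_1} = \max_{B \in \mathcal{D}_{A_1}} h_B \;\le\; \max_{B \in \mathcal{D}_A} h_B = \module_A,
\end{equation*}
which is exactly the absence of a hierarchy violation for the constraint $A_1 \to A$. Since $x$ and the constraint were arbitrary, \hmcsys{$h$} commits no hierarchy violations. The only step that is not pure bookkeeping is the inclusion $\mathcal{D}_{A_1} \subseteq \mathcal{D}_A$, and even this is immediate once the path-concatenation argument is written down; acyclicity of the constraint graph is not essential here, serving only, together with the finiteness of $\classes$, to guarantee that the sets $\mathcal{D}_C$ and their associated maxima are well defined. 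This is consistent with the paper's remark that the result follows immediately from Eq.~(\ref{eq:module_class}).
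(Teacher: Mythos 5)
Your argument is mathematically sound as far as it goes, and it usefully fills in the detail the paper waves away: the inclusion $\mathcal{D}_{A_1} \subseteq \mathcal{D}_A$ for each constraint $A_1 \to A$ (by appending the edge $A_1 \to A$ to any path witnessing $B \in \mathcal{D}_{A_1}$), combined with monotonicity of the maximum under set inclusion applied to Eq.~(\ref{eq:module_class}), does give $\module_{A_1}(x) \le \module_A(x)$ for every data point and every constraint. However, what this establishes is precisely Theorem~\ref{th:hmc} --- the absence of \emph{hierarchy} violations --- whereas the statement you were asked to prove is the Corollary, which asserts the absence of \emph{logical} violations and coherence with respect to $\Pi$. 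Your proof ends with ``\hmcsys{$h$} commits no hierarchy violations'' and never converts this into the claimed conclusion, so as written it proves a neighbouring (in fact stronger) statement and leaves the actual claim unaddressed.

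The missing step is the one-line bridge stated in Section~2.1 of the paper: a logical violation for $A_1 \to A$ means $m$ predicts $A_1$ but not $A$, i.e., $m_{A_1}(x) > \tv$ and $m_A(x) \le \tv$, which forces $m_{A_1}(x) > m_A(x)$, a hierarchy violation; contrapositively, no hierarchy violations implies no logical violations, hence coherence with respect to $\Pi$. The paper's own proof of the Corollary is exactly this deduction applied to the already-proved Theorem~\ref{th:hmc}, with no recomputation of $\module$ at all. To repair your proposal, either cite Theorem~\ref{th:hmc} and supply that implication, or keep your derivation of $\module_{A_1} \le \module_A$ and append the observation that this inequality rules out the configuration $\module_{A_1}(x) > \tv \ge \module_A(x)$ defining a logical violation.
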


The next step is to improve performance by modifying the loss function in order to exploit the constraints. For each class $A$, $\loss_A$ is defined as:
\begin{align*}\label{eq:loss_class}
    \loss_A & = -y_A\ln(\max_{B \in \mathcal{D}_A}(y_{B} h_{B})) - (1-y_A)\ln(1-\module_A)\, ,%
\end{align*}
where $y_A$ is the ground truth class for $A$. 
The final $\loss$ is then given by:
\begin{equation}
    \loss = \sum_{A \in {\classes}} \loss_A.
\end{equation}
$\loss$ has the fundamental property that the negative gradient descent algorithm behaves as expected, i.e., that for each class, it moves in the ``right'' direction as given by the ground truth. This is formally expressed by the following theorem. 

\begin{theorem}
Let $(${\problem}$,\Pi)$ be an HMC problem. For any model $h$ for {\problem} and class $A$, let $\frac{\partial \text{\rm\loss}}{\partial h_A}$ be the partial derivative of  $\text{\rm\loss}$ with respect to $h_A$. For each data point, 
if $y_A = 0$, then $\frac{\partial \text{\rm\loss}}{\partial h_A} \geq 0$, and 
if $y_A = 1$, then $\frac{\partial \text{\rm\loss}}{\partial h_A} \leq 0$.
\end{theorem}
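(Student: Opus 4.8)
The plan is to compute $\frac{\partial \loss}{\partial h_A}$ by first isolating exactly which summands of $\loss = \sum_{C \in \classes}\loss_C$ actually depend on $h_A$. Since $\loss_C$ is built only from the outputs $h_B$ with $B \in \mathcal{D}_C$, the variable $h_A$ occurs in $\loss_C$ precisely when $A \in \mathcal{D}_C$, i.e.\ when $C$ is a superclass of $A$. Hence $\frac{\partial \loss}{\partial h_A} = \sum_{C\,:\,A \in \mathcal{D}_C} \frac{\partial \loss_C}{\partial h_A}$, and it suffices to prove that every summand carries the sign claimed by the theorem. Throughout I will use that, because the last layer of $h$ is a sigmoid, all outputs satisfy $h_B \in (0,1)$ and therefore $\module_C \in (0,1)$, so every logarithm and every denominator below is well defined and strictly positive. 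I will also assume that the ground truth is coherent with $\Pi$ (if $y_A = 1$ and $A \in \mathcal{D}_C$, then $y_C = 1$), which is the standing assumption on HMC data.

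First I would treat the case $y_A = 0$. Here $h_A$ enters the first term of each relevant $\loss_C$ only through the product $y_A h_A = 0$, which is constant in $h_A$; so $h_A$ survives only in the second term $-(1-y_C)\ln(1-\module_C)$. Differentiating gives $\frac{\partial \loss_C}{\partial h_A} = (1-y_C)\,\frac{\partial \module_C/\partial h_A}{1-\module_C}$. Since $\module_C = \max_{B\in\mathcal{D}_C} h_B$ is a maximum, its (sub)derivative with respect to one of its arguments lies in $[0,1]$ and is in particular nonnegative; together with $1-y_C \geq 0$ and $1-\module_C > 0$ this makes each summand nonnegative, whence $\frac{\partial\loss}{\partial h_A}\geq 0$.

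Next I would treat $y_A = 1$. By coherency, every superclass $C$ with $A\in\mathcal{D}_C$ has $y_C = 1$, so the second term $(1-y_C)\ln(1-\module_C)$ vanishes and $\loss_C = -\ln\big(\max_{B\in\mathcal{D}_C} y_B h_B\big)$. Now $h_A$ contributes to the inner maximum through $y_A h_A = h_A > 0$, so the maximum is strictly positive, and differentiating yields $\frac{\partial\loss_C}{\partial h_A} = -\frac{\partial \big(\max_{B} y_B h_B\big)/\partial h_A}{\max_B y_B h_B}$. The numerator is again a (sub)derivative of a maximum, hence in $[0,1]$ and nonnegative, and the denominator is positive; thus each summand is nonpositive and $\frac{\partial\loss}{\partial h_A}\leq 0$.

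The only genuine subtlety is that the maximum operators make $\loss$ merely piecewise differentiable, so ``the partial derivative'' must be read as the standard (sub)gradient selected at points of non-differentiability (at ties any convex combination in $[0,1]$ is a valid subgradient, and the argument is unaffected). The other point requiring care is the coherency hypothesis on the ground truth: it is exactly what lets me discard the $(1-y_C)$-term for every superclass in the $y_A=1$ case, and without it the statement can fail. Once these two observations are in place, the result reduces to the elementary termwise sign computations above, carried out over the superclasses of $A$.
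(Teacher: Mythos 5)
Your proof is correct and follows essentially the same route as the paper's: a termwise sign analysis of $\sum_{C}\partial\loss_C/\partial h_A$, using that the factor $y_A h_A$ kills the dependence of the first term when $y_A=0$ and that ground-truth coherency forces $y_C=1$ for every superclass $C$ of a positive class $A$. You are in fact somewhat more careful than the paper, in making the coherency hypothesis on the labels explicit and in addressing the subgradient issue at ties of the $\max$.
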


\begin{proof}
Consider a data point and a class $A$. 
$$
\frac{\partial \loss}{\partial h_A} = \sum_{B \in {\classes}} \frac{\partial \loss_{B}}{\partial h_A}.
$$

Assume $y_A = 1$. For each class $B$ such that $y_{B} = 0$:
$$
\loss_{B} = -\ln{(1-\module_{B})}, \qquad \qquad \frac{\partial \loss_{B}}{\partial h_A} = \frac{1}{1-\module_{B}}\frac{\partial \module_{B}}{\partial h_A} = 0, 
$$
because $\module_B$ is not a function of $h_A$
(since $y_A =1$ and $y_B = 0$, $A \not \in \mathcal{D}_B$), and hence $\frac{\partial \module_{B}}{\partial h_A} = 0$.
For each class $B$ such that $y_{B} = 1$,
$$
\loss_{B} = - \ln(\max_{C \in \mathcal{D}_B}(y_{C} h_{C})), \qquad  \frac{\partial \loss_{B}}{\partial h_A} =
- \frac{1}{\max_{C \in \mathcal{D}_B}(y_{C} h_{C})}\frac{\partial \max_{C \in \mathcal{D}_B}(y_{C} h_{C})}{\partial h_A} \le 0, 
$$
because if $\max_{C \in \mathcal{D}_B}(y_{C} h_{C}) = h_A$, then $\frac{\partial \loss_{B}}{\partial h_A} = -\frac{1}{h_A} \le 0$, otherwise $\frac{\partial \loss_{B}}{\partial h_A} = 0$.
Since $\frac{\partial \loss}{\partial h_A}$ is given by the sum of quantities that are smaller or equal zero, then $\frac{\partial \loss}{\partial h_A} \le 0$.

Assume $y_A = 0$. For each class $B$ such that $y_{B} = 0$:
$$
\loss_{B} = -\ln{(1-\module_{B})}, \qquad \qquad \frac{\partial \loss_{B}}{\partial h_A} = \frac{1}{1-\module_{B}}\frac{\partial \module_{B}}{\partial h_A} \ge 0, 
$$
because if $\module_B = h_A$, then $\frac{\partial 
\module_{B}}{\partial h_A} = 1$, otherwise $\frac{\partial \module_{B}}{\partial h_A} = 0$.
For each class $B$ such that $y_{B} = 1$:
$$
\loss_{B} = - \ln(\max_{C \in \mathcal{D}_B}(y_{C} h_{C})), \qquad  \frac{\partial \loss_{B}}{\partial h_A} =
- \frac{1}{\max_{C \in \mathcal{D}_B}(y_{C} h_{C})}\frac{\partial \max_{C \in \mathcal{D}_B}(y_{C} h_{C})}{\partial h_A} = 0, 
$$
because $\max_{C \in \mathcal{D}_B}(y_{C} h_{C}) \not = h_A$, since $y_A = 0$.
Since $\frac{\partial \loss}{\partial h_A}$ is given by the sum of quantities that are 
greater than or equal to zero, then $\frac{\partial \loss}{\partial h_A} \ge 0$.
\end{proof}
Example \ref{ex:bgrad} already pointed out that the standard loss function may not behave as expected. This
becomes even more apparent in the general case. Indeed, as highlighted by the following example, the more superclasses a class has, the more likely it is that \hmcsys{$h$} trained with the standard binary cross-entropy loss $\lss$ will 
not behave correctly.
 \begin{example}\label{ex:grad}{\rm 
 Consider an  HMC problem with $n+1$ classes $A, A_1, \ldots, A_n$. Assume
 \begin{enumerate}
     \item $A \in  \cap_{i=1}^n \mathcal{D}_{A_i}$,
     \item $h_A > \max(h_{A_1}, \ldots, h_{A_n})$, and
     \item $y_A=0$ while $y_{A_1} = \cdots = y_{A_n}=1$.
 \end{enumerate}
Then, for the standard binary cross-entropy loss $\lss$, we obtain:
$$
\lss = \lss_{A} + \sum_{i=1}^n \lss_{A_i}, \qquad 
\lss = - \ln(1-h_A) - n\ln(h_A), \qquad
\frac{\partial \lss}{\partial h_A} = \frac{1}{1-h_A} - \frac{n}{h_A}.
$$
Since $y_A = 0$, we would like to get  $\frac{\partial \lss_A}{\partial h_A} \geq 0$. 
However, this is possible only if $h_A \geq \frac{n}{n+1}$: if $n = 1$, then we need $h_A \geq 0.5$, while if $n = 10$, then we need $h_A \geq 10/11 \sim 0.91$. 
On the other hand, for {\rm \loss{}}, we obtain:
$$
{
\begin{aligned}
{\text{\rm \loss}} = {\text{\rm \loss}}_A +\! \sum_{i=1}^n\!{\text{\rm \loss}}_{A_i}, \quad
{\text{\rm \loss}} = - \ln(1 - h_A)  + \! \sum_{i=1}^n \!{\text{\rm \loss}}_{A_i}, \quad
\frac{\partial {\text{\rm \loss}}}{\partial h_A} = \frac{1}{1- h_A}.
\end{aligned}}
$$
No matter the value of $h_A$, we get $\frac{\partial \text{\rm\loss}_A}{\partial h_A} > 0$.\hfill$\lhd$
}\end{example}
 
Finally, due to  both $\module$ and $\loss$, \hmcsys{$h$} has the ability of delegating the prediction on a class~$A$ to one of its subclasses in $\mathcal{D}_A$.

\begin{definition}[Delegate]
Let $(${\problem}$,\Pi)$ be an HMC problem. Let $h$ be a model for {\problem}. Let $A$ and $A_1$ be two classes with $A_1 \in {\mathcal D}_{A}$. \hmcsys{$h$} {\sl delegates} the prediction on  $A$ to $A_1$ for a data point, if~\hmcsys{$h$}$_{A} = h_{A_1}$ and $h_{A_1} >  h_{A}$.
\end{definition}

Consider the basic case in Section~\ref{sec:basic_case} and the figures in the last column of Figure~\ref{fig:dec_bound_figs_before}. \hmcsys{$h$} 
delegates the prediction on $A$ to $A_1$ for 
\begin{enumerate}
    \item 0\% of the points in $A_1$ when $R_1 \cap R_2 = R_1$ as in the top figure, 
    \item 100\% of the points in $A_1$ when $R_1 \cap R_2 = \emptyset$ as in the middle figure, and 
    \item 85\% of the points in $A_1$ when $R_1$ and $R_2$ are as in the bottom figure.
\end{enumerate}

\section{Multi-Label Classification with Hard Logical  Constraints}\label{sec:mc_with_constr}

In this section, we first introduce  logically constrained  multi-label classification ({\cmc}) problems, and then (analogously to what we did in the HMC case)
we  present the intuitions at the basis
of our model \system{$h$} through a simple {\cmc} problem.  %
Thereafter, we finally provide the general solution. %
We keep the same notation and terminology as introduced in the HMC case.

\subsection{Preliminaries}

Borrowing notation and concepts from the area of logic programming, we consider {\sl logically constrained  multi-label classification} (\cmc) problems, defined as MC problems with a finite set $\Pi$ of {\sl constraints} or {\sl (normal) rules}   
$r$ having the form~(\ref{eq:rule}): 
\begin{equation} \label{eq:rule}
A_1, \ldots, A_k, \neg A_{k+1}, \ldots, \neg A_n \to A, \qquad (0 \leq k \leq n),
\end{equation}
where $A,A_1,\ldots,A_n$ are classes. We also assume, w.l.o.g., that $A_i \neq A_j$ for $1 \leq i < j \leq k$ and for $k+1 \leq i < j \leq n$. 
We call $head(r)\,{ = }\,A$ the {\sl head} of $r$,
and $body(r)$ $=$ $body^+(r) \, \cup \, body^-(r)$ the {\sl body} of $r$, 
where 
 $body^+(r) =  \{A_1, \ldots,A_k\}$ and 
 $body^-(r)= \{\neg A_{k+1}, \ldots,\neg A_n \}$. 
 We say that $r$ is {\sl definite} if $n=k$.

Constraint (\ref{eq:rule}) imposes that for each data point $x$ and model $m$, 
if $m$ predicts the classes $A_1, \ldots, A_k$ and not $A_{k+1},\ldots,A_n$, then $m$ must also predict $A$. Given this logical interpretation, we can thus define the concepts of logical violation and coherency, 
which generalize the corresponding definitions given for the hierarchical case.
\begin{definition} %
Let $(${\problem}$,\Pi)$ be an {\cmc} problem. Let $m$ be a model for {\problem}. If for a data point and a constraint $r \in \Pi$ of the form (\ref{eq:rule}), $m$ predicts $A_1, \ldots, A_k$ and not $A_{k+1}, \ldots, A_n, A$, then $m$ commits a {\sl logical violation} with respect to $r$.  If $m$ commits no logical violations, then $m$ is  {\sl coherent} with respect to $\Pi$.
\end{definition}

The above definition allows us to determine whether any model $m$ is coherent with respect to the given constraints. However, 
we want to go beyond coherency and generalize what we did in the HMC setting: whenever convenient, exploit the constraints 
to compute a value for the classes in the head to ensure coherency and improve performance.

For ease of presentation, assume that we have a single constraint $r$ of the form (\ref{eq:rule}).

In the special case where $r$ is definite ($n=k$), we can associate with the head $A$ a value that is at least  the  smallest value associated with the classes in the body, i.e., we can
set
$$
m_A = \min(m_{A_1}, \ldots, m_{A_k}).
$$
In this case, the constraint $r$ is always satisfied for any threshold $\tv$. 
This corresponds to interpreting (\ref{eq:rule}) according to the G\"odel t-norm  $T_G$~\cite{metcalfe}, which is the only function~$T: [0, 1]^2 \to [0, 1]$ that, for every $a,b,c \in [0,1]$, satisfies the following properties (common to all t-norms):
$$
\begin{aligned}
T(a,b) &= T(b,a), &T(a,1) &= a, \\
T(a,T(b,c)) &= T(T(a,b),c), &T(a,0) &= 0, \\
& a \le b \to  T(a,c) \le T(b,c), 
\end{aligned}
$$
and also the following ({\sl idempotency}, characterizing $T_G$):
$$
T(a,a) = a.
$$

If $r$ is not definite ($n > k$), given $m_{A_{k+1}}, \ldots, m_{A_n}$, we need to 
compute values $\ov{m}_{A_{k+1}}, \ldots,$ $\ov{m}_{A_n}$ such that, for each class $A_i \in \{A_{k+1}, \ldots, A_n\}$ and threshold $\theta$,
\begin{enumerate}
    \item $\ov{m}_{A_i} = 1$ when $m_{A_i}= 0$, and  $\ov{m}_{A_i} = 0$ when $m_{A_i} = 1$,
    \item $\ov{m}_{A_i}$ is strictly decreasing and continuous (small changes to the value of $m_{A_i}$ should correspond to small changes in the value of $\ov{m}_{A_i}$), and
    \item
    $\ov{m}_{A_i} = \theta$ when $m_{A_i} = \theta$.
\end{enumerate}
The first two conditions say that the function $\ov{v}$ of $v \in [0,1]$ is a
 {\sl strict negation} \cite{metcalfe},%
\footnote{A negation is non-strict if it is either non-strictly decreasing or non-continuous. An example of a non-strict negation is the residual negation in the G\"odel t-norm according to which we would have $\ov{m}_A = 1$ if $m_A = 0$, and $\ov{m}_{A} = 0$, otherwise.} and, together with the third entail
\begin{enumerate}
    \item if $m_{A_i} > \theta$, then $\ov{m}_{A_i} < \theta$,
    \item if $m_{A_i} < \theta$, then $\ov{m}_{A_i} > \theta$.
\end{enumerate}
For any threshold $\theta$ there are infinitely many functions $\ov{v}$ satisfying such requirements. A simple solution is to require $\ov{v}$ to be piecewise linear with two segments joining when~$\ov{v} = v = \theta$, in which case,
\begin{enumerate}
    \item $\ov{v}$ is a {\sl strong negation} \cite{metcalfe}, since $\ov{\ov{v}} = v$, and
    \item if $\theta = 0.5$, we obtain $\ov{v} = 1 - v$, i.e., the {\sl standard negation} in fuzzy logics.
\end{enumerate}
For simplicity, from here on, we assume to have the standard negation, i.e., to fix the threshold $\theta$ to $0.5$ and $\ov{v} = 1-v$, for each $v \in [0,1]$. All the definitions and results generalize to the case in which we have an arbitrary strict negation with $\ov{\theta} = \theta$.

Given the above,
we can now introduce the concept of constraint violation, generalizing the corresponding definition of hierarchy violation.
\begin{definition}
Let $(${\problem}$,\Pi)$  be an {\cmc} problem.  Let $m$ be a model for {\problem}. 
If for a data point and for a constraint (\ref{eq:rule}) in $\Pi$, $m$ does not satisfy
\begin{equation}\label{eq:safer}
     \min(m_{A_1}, \ldots, m_{A_k}, \ov{m}_{A_{k+1}}, \ldots, \ov{m}_{A_n}) \le m_A, 
\end{equation}
then $m$ commits a {\sl constraint violation}. 
\end{definition}

The following theorem easily follows from the previous two definitions.

\begin{theorem}
Let $(${\problem}$,\Pi)$ be an {\cmc} problem.  Let $m$ be a model for {\problem}. If $m$ does not commit constraint violations, then $m$ is coherent with respect to $\Pi$.
\end{theorem}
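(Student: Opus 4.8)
The plan is to establish the implication directly, reducing it to a comparison with the threshold $\theta$ and mirroring the implication ``no hierarchy violation $\Rightarrow$ coherent'' already recorded for the HMC case. First I would unfold the hypothesis ``$m$ does not commit constraint violations'': by the preceding definition this means that inequality~(\ref{eq:safer}),
$$\min(m_{A_1}, \ldots, m_{A_k}, \ov{m}_{A_{k+1}}, \ldots, \ov{m}_{A_n}) \le m_A,$$
holds for every data point and every rule $r \in \Pi$ of the form~(\ref{eq:rule}). The goal is then to show coherency, i.e., that no logical violation occurs: whenever the body of $r$ is active --- the positive classes $A_1, \ldots, A_k$ are predicted and the negated classes $A_{k+1}, \ldots, A_n$ are not --- the head $A$ is predicted as well.

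The core of the argument is the translation of the discrete ``predict / do not predict'' conditions into numerical inequalities, which are then fed into~(\ref{eq:safer}). Fix a data point and a rule whose body is active. ``$A_i$ is predicted'' gives $m_{A_i} > \theta$ for each $i \le k$, so every positive argument of the minimum strictly exceeds $\theta$. ``$A_j$ is not predicted'' gives $m_{A_j} \le \theta$ for each $j > k$; applying the strict, continuous negation $\ov{\,\cdot\,}$, which is decreasing and fixes $\theta$, yields $\ov{m}_{A_j} \ge \theta$. Hence every argument of the minimum in~(\ref{eq:safer}) is at least $\theta$, so $\min(\ldots) \ge \theta$, and combining with~(\ref{eq:safer}) gives $m_A \ge \theta$. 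Upgrading this to the strict $m_A > \theta$ required for ``$A$ is predicted'' is the one delicate point, addressed next; once it is settled, the rule is satisfied, and since $r$ and the data point were arbitrary, $m$ is coherent with respect to $\Pi$.

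The step I expect to require the most care is matching the two notions at the boundary where an output equals $\theta$: ``predicted'' is the strict test $m_A > \theta$, whereas~(\ref{eq:safer}) is an inequality on the continuous outputs, so one must ensure the chain above delivers $m_A > \theta$ and not merely $m_A \ge \theta$. This is exactly where the assumption of a strict negation is used: when a negated class is not predicted with $m_{A_j} < \theta$, it is mapped strictly above the threshold, $\ov{m}_{A_j} > \theta$, which together with the positive literals forces $\min(\ldots) > \theta$ and hence $m_A > \theta$. The only configuration escaping this reasoning --- a not-predicted negated class sitting exactly at $m_{A_j} = \theta$ together with a head sitting exactly at $m_A = \theta$ --- is a measure-zero degeneracy that does not arise for the continuous-output models considered here. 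The remainder is a routine monotonicity computation, and it specializes to the HMC corollary when each rule is a single hierarchy constraint $A_1 \to A$ (so that $k = n = 1$ and the minimum collapses to $m_{A_1}$).
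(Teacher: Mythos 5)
Your proof is essentially the paper's: the paper states only that the theorem ``easily follows from the previous two definitions,'' and your argument is precisely that unfolding --- an active body gives $m_{A_i} > \theta$ for the positive literals and $\ov{m}_{A_j} \ge \theta$ for the negated ones (a strict negation is decreasing and fixes $\theta$), so the minimum in~(\ref{eq:safer}) is at least $\theta$ and, by the no-constraint-violation hypothesis, $m_A \ge \theta$. The one point worth sharpening is your treatment of the threshold boundary. You are right that this is the delicate step, but ``measure-zero degeneracy'' is not an argument about an arbitrary model $m$: with the paper's conventions ($\ov{v}=1-v$, ``predicted'' meaning $>\theta$, hence ``not predicted'' meaning $\le\theta$), the configuration in which some negated body class has $m_{A_j}=\theta$ and the head has $m_A=\theta$ genuinely satisfies~(\ref{eq:safer}) while constituting a logical violation, so no proof can deliver the strict inequality there --- the implication itself fails on that corner, an artifact of mixing strict and non-strict comparisons at $\theta$ rather than a missing idea in your argument. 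Away from that corner (every negated literal strictly below $\theta$, or a definite rule, where the minimum ranges only over positive literals all strictly above $\theta$) your chain does yield $\min(\cdots)>\theta$ and hence $m_A>\theta$, which is all the paper's one-line justification can be claiming; the clean repair is either to assume no output sits exactly at $\theta$ or to read ``not predicted'' as the strict $m<\theta$.
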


\subsection{Basic Case}\label{sec:basic_case}
We now present the main ideas behind our model \system{$h$} through a simple \cmc{} problem.
Assume that we have an MC problem with three classes $A$, $A_1$, and $A_2$, and we know that $A_1$ and $A_2$ are subsets of $A$, and that $A_2$ 
includes the set of data points belonging to $A$ and not to $A_1$. Then, 
$A_1 \cup A_2 \subseteq A$ can be imposed with the constraints
\begin{equation}\label{eq:constr_a}
    A_1 \to A; \qquad A_2 \to A, 
\end{equation}
having the form  %
(\ref{eq:basic_constr-2}),
while $A \setminus A_1 \subseteq A_2$ can be expressed as
\begin{equation}\label{eq:constr_a2}
A, \neg A_1 \to A_2,
\end{equation}
which imposes, to any model $m$ that, for each $x \in \mathbb{R}^D$,  if $m$ predicts $A$ and not $A_1$, then $m$ must also predict $A_2$.

Our goal is to develop a method that is able to leverage standard neural network approaches for MC problems, while exploiting all the above constraints in order to produce predictions that are guaranteed to satisfy the constraints while improving performance and extending the method presented for HMC problems.

To understand how the three constraints can be exploited to improve performance, assume that $D\,{=}\,2$, and consider the yellow ($R_1$) and green ($R_2$) rectangles in Figure~\ref{fig:dec_bound_figs_gen}.
Assume that $A = R_1 \cup R_2$, $A_1 = R_1$, and $A_2 = R_2 \setminus R_1$. 
Let $f$ be a neural network with one output for each class to be learned. Intuitively, when $R_1$ and $R_2$ are  as in the first row of Figure~\ref{fig:dec_bound_figs_gen}, we can expect to be more difficult for $f$ to learn $A$ than to learn $A_1$ and~$A_2$. Hence, we would like to exploit the information coming from (\ref{eq:constr_a}) to learn $A$, given~$A_1$ and~$A_2$. 
On the other hand, when the two rectangles are arranged as in the second row, we can expect to be more difficult for $f$ to learn $A_2$ than $A$ and $A_1$. In this case, we would like to exploit the information coming from (\ref{eq:constr_a2}) to learn $A_2$, given $A$ and $A_1$. 
Finally, when~$R_1$ and $R_2$ are arranged as in the third row of the figure, learning both $A$ and $A_2$ will be difficult, and hence we would like to be able to exploit all the constraints in~(\ref{eq:constr_a}) and~(\ref{eq:constr_a2}) to improve performance.

As for \hmcsys{$h$}, we can achieve our goal in two steps. In the first step, we build a new neural network consisting of two modules: (i)~a bottom module $h$, which can be any neural network with one output 
for $A$, $A_1$, and $A_2$, respectively, and (ii) an upper {\sl constraint module} (\module), 
that takes as input the output of the bottom module and imposes the constraints. We call the obtained neural network  \emph{coherent-by-construction network}~({\system{$h$}}). 

Consider a data point $x$. Let $h_A$, $h_{A_1}$, and $h_{A_2}$ be the outputs of $h$ for the classes $A$, $A_1$, and $A_2$ respectively. Let $y_A$, $y_{A_1}$, and $y_{A_2}$ be the ground truth for the classes $A$, $A_1$, and $A_2$, respectively. Let $\module_A$, $\module_{A_1}$, and $\module_{A_2}$ be the outputs of {\module} (which are the outputs of \system{$h$}).

We want \system{$h$} to extend the set of classes associated with $x$ by the bottom module $h$, exploiting, and thus satisfying, the constraints. This is obtained by defining $\module_A$, $\module_{A_1}$, and $\module_{A_2}$ to be the smallest values such that
\begin{equation}\label{eq:basic_cm}
    \begin{aligned}
    &\module_A  \,\, =  \max(h_A, \module_{A_1}, \module_{A_2}), \\
    &\module_{A_1}  =  h_{A_1},\\
    &\module_{A_2}  =  \max(h_{A_2}, \min(\module_{A}, \ov{\module}_{A_1})).
    \end{aligned}
\end{equation}
Indeed, the first equation ensures that (i) $x$ will be associated with the class $A$ whenever $h$ already predicts it, and that (ii) $\module_{A_1}$, $\module_{A_2} \leq \module_A$; thus guaranteeing that (\ref{eq:constr_a}) is satisfied. The other equations have a similar reading. Depending on the values of $h_A$, $h_{A_1}$, and $h_{A_2}$, (\ref{eq:basic_cm}) may admit more than one solution,
but we will show (see Example \ref{ex:stratum7} and Theorem \ref{thm:8}) that none of them has a value for $\module_A$, $\module_{A_1}$, and $\module_{A_2}$ smaller than that defined by
$$
\begin{aligned}
    &\module_A  \,\, =  \max(h_A, h_{A_1}, h_{A_2}), \\
    &\module_{A_1}  =  h_{A_1},\\
    &\module_{A_2}  =  \max(h_{A_2}, \min(h_{A}, \ov{h}_{A_1}), \min(h_{A_1},\ov{h}_{A_1})),
    \end{aligned}
$$
which we define to be the outputs of $\module$.

\begin{figure*}[t]
\centering
\begin{tabular}{c@{\ \ \,}c@{\ \ \ \,}c@{\ \ \ \ }|@{\ \ \,}c@{\ \ \ \,}c@{\ \ \ \,}c@{\ \ \ }}
\multicolumn{3}{c}{Neural Network $f^+$} & 
\multicolumn{3}{c}{\system{$h$}} \\
Class $A$ & Class $A_1$ & Class $A_2$ & Class $A$ & Class $A_1$ & Class $A_2$ \\
\begin{minipage}{.135\textwidth}
    \includegraphics[width=\textwidth,trim={1.1cm 0 3.7cm 1cm},clip]{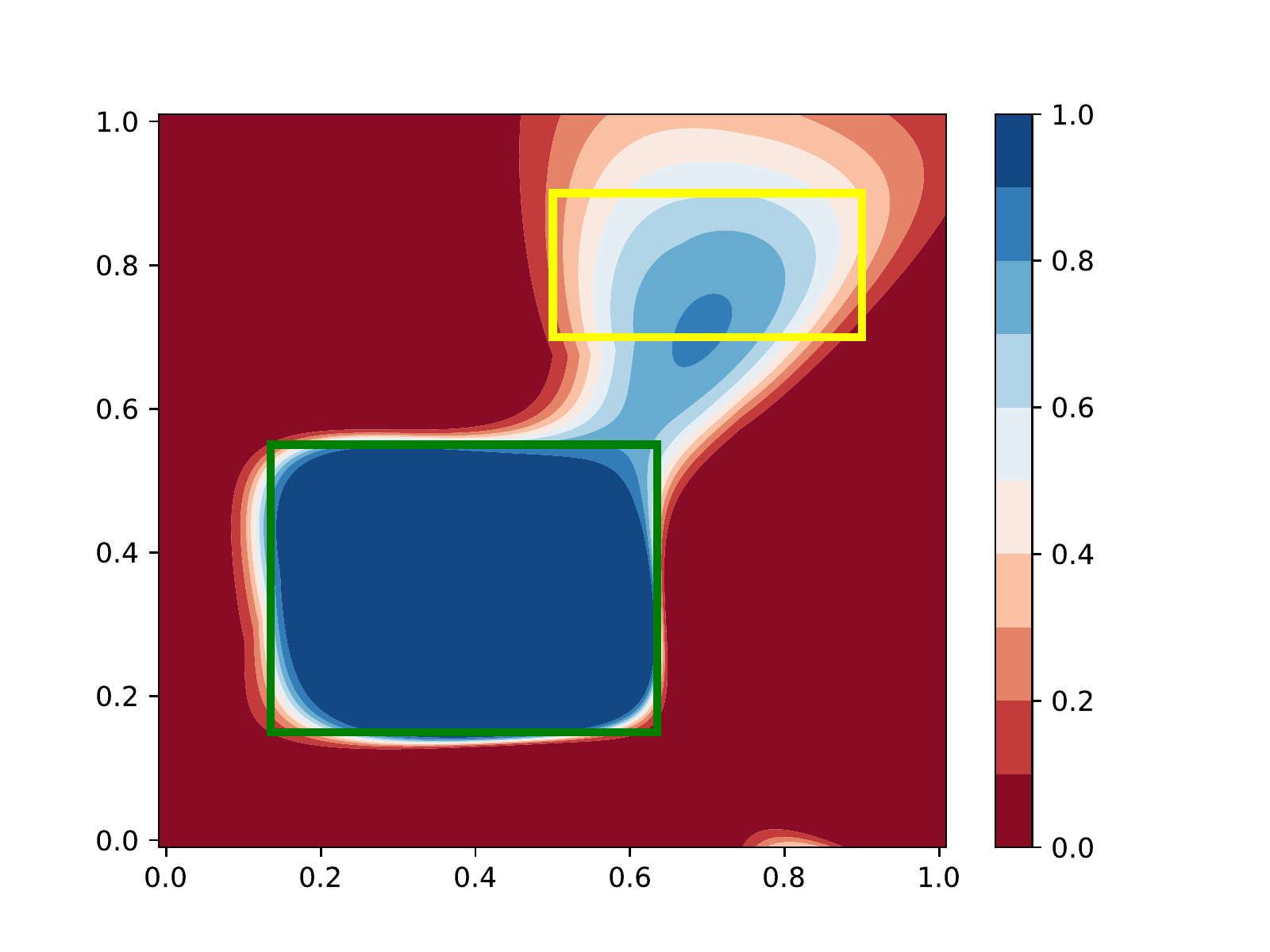}
\end{minipage} &
\begin{minipage}{.135\textwidth}
    \includegraphics[width=\textwidth,trim={1.1cm 0 3.7cm 1cm},clip]{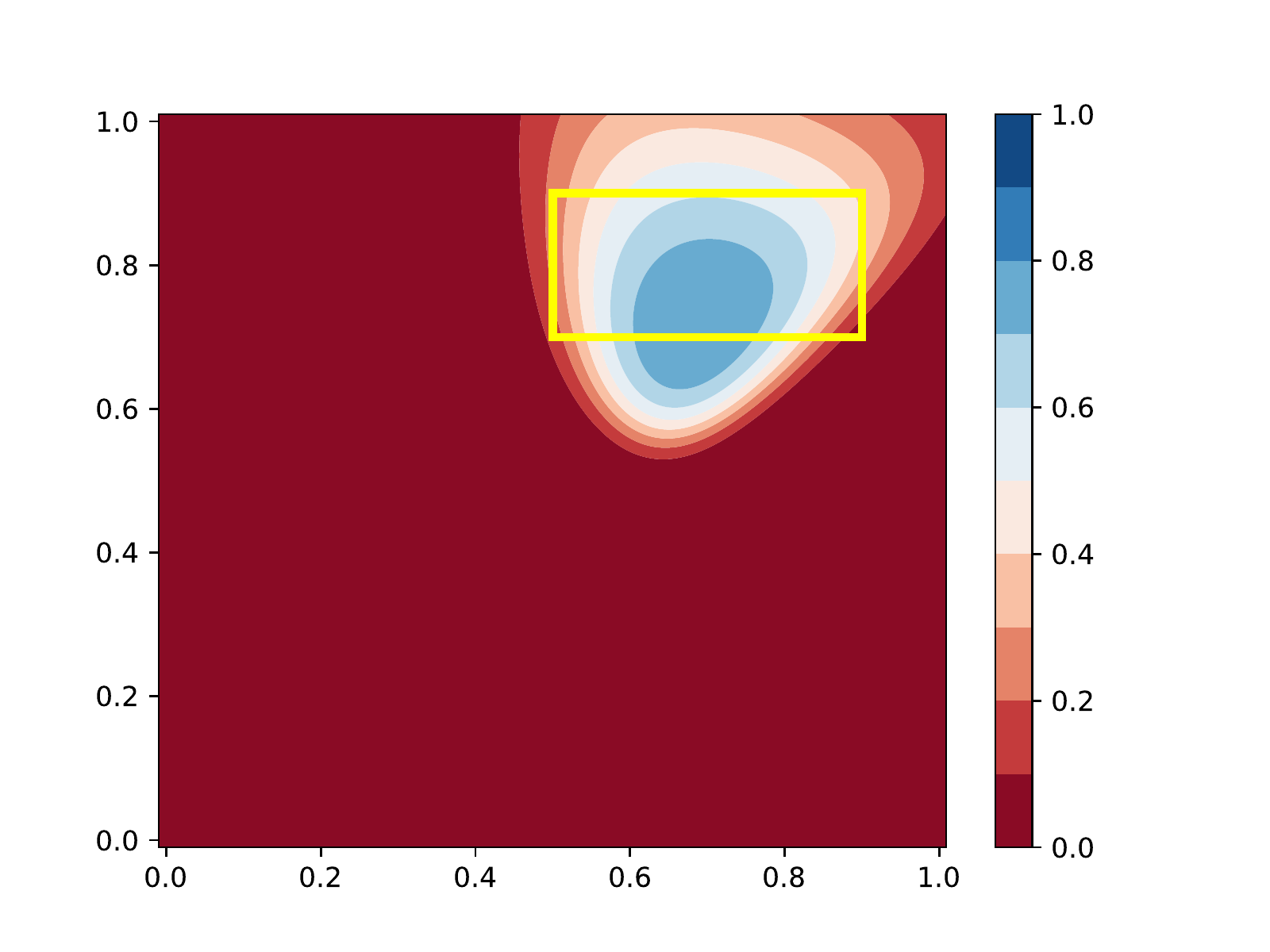}
\end{minipage} &
\begin{minipage}{.135\textwidth}
    \includegraphics[width=\textwidth,trim={1.1cm 0 3.7cm 1cm},clip]{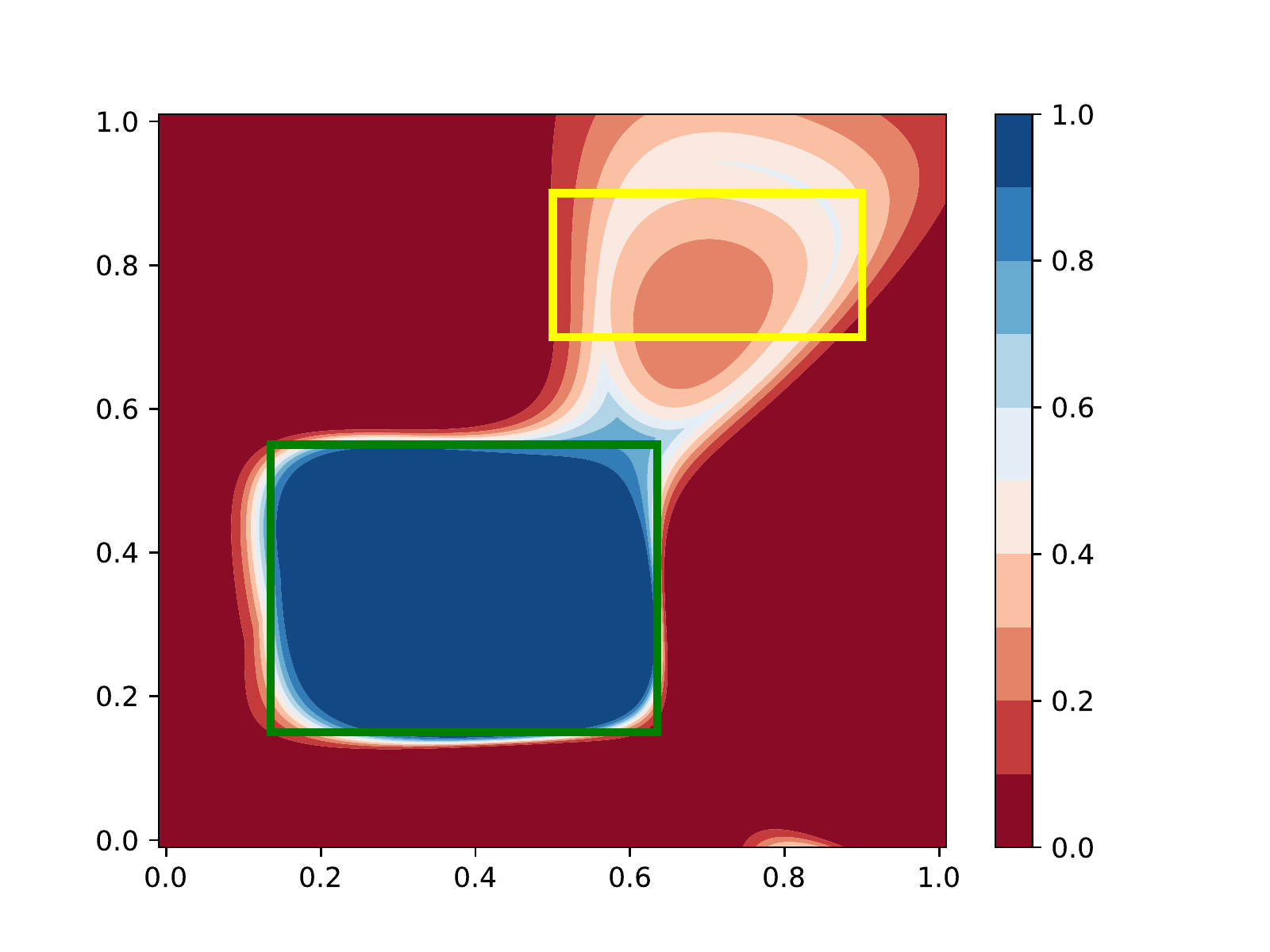}
\end{minipage} &
\begin{minipage}{.135\textwidth}
    \includegraphics[width=\textwidth,trim={1.1cm 0 3.7cm 1cm},clip]{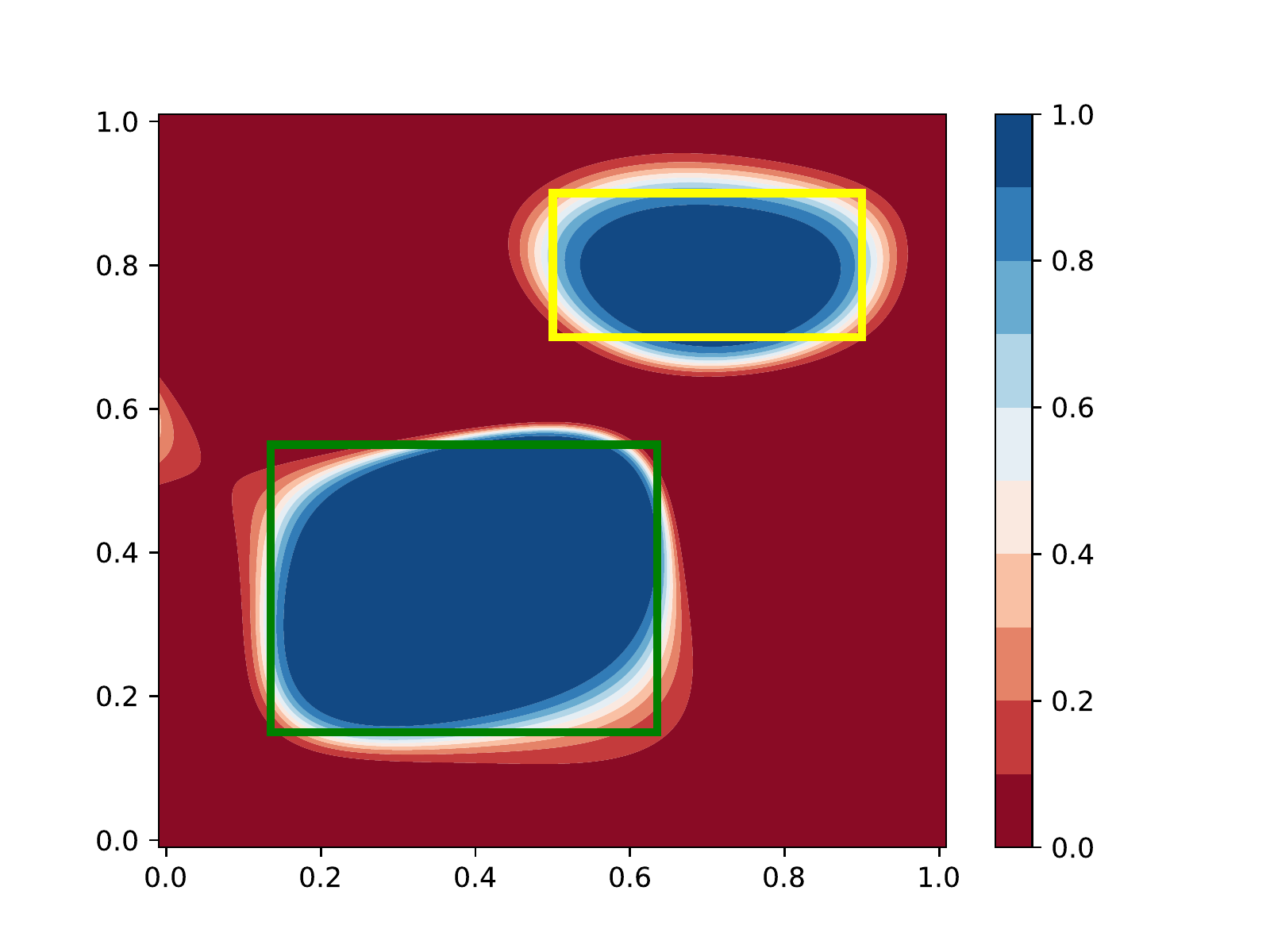}
\end{minipage} &
\begin{minipage}{.135\textwidth}
    \includegraphics[width=\linewidth,trim={1.1cm 0 3.7cm 1cm},clip]{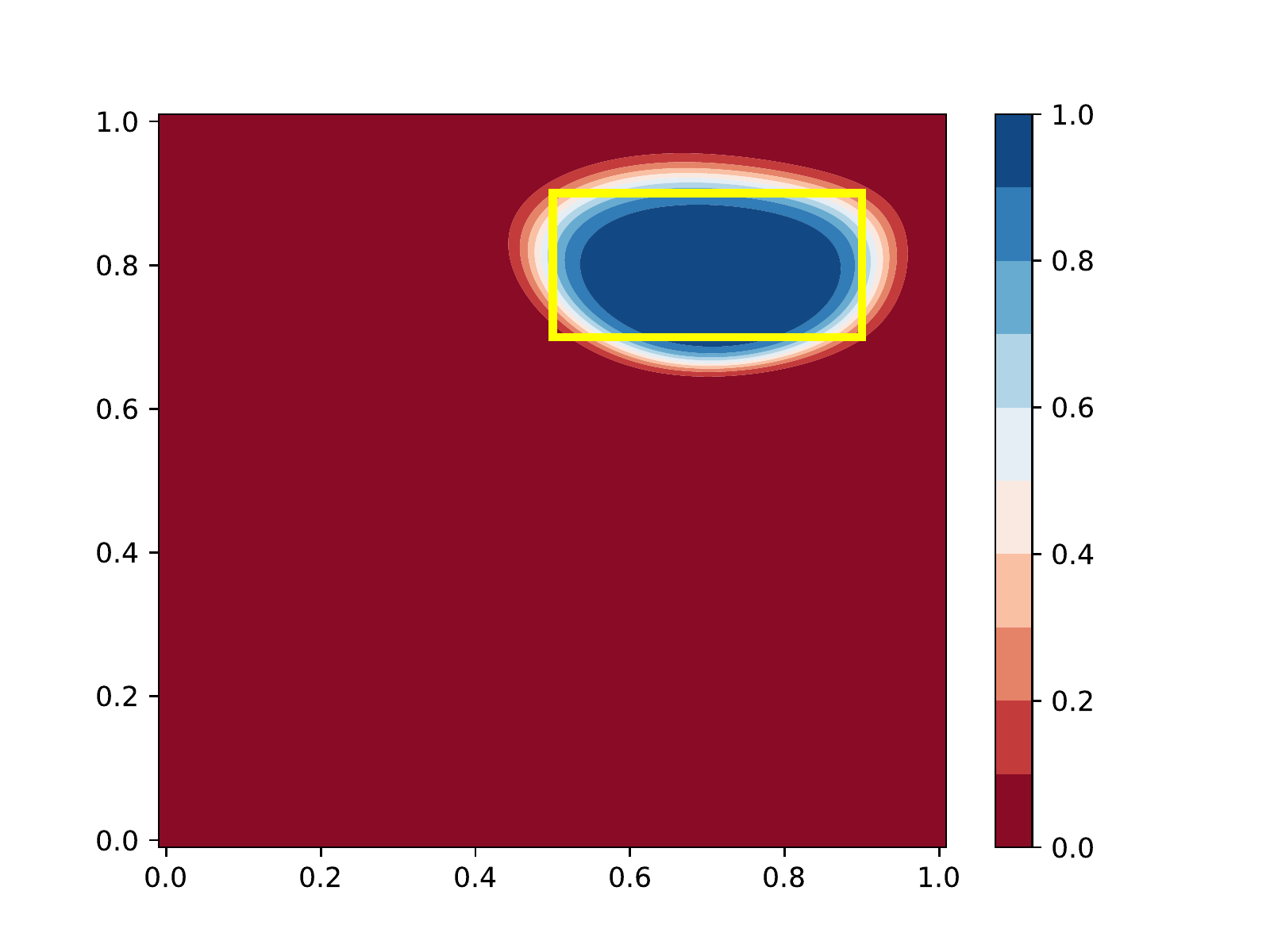}
\end{minipage} &
\begin{minipage}{.165\textwidth}
    \includegraphics[width=\linewidth,trim={1.1cm 0 1cm 1cm},clip]{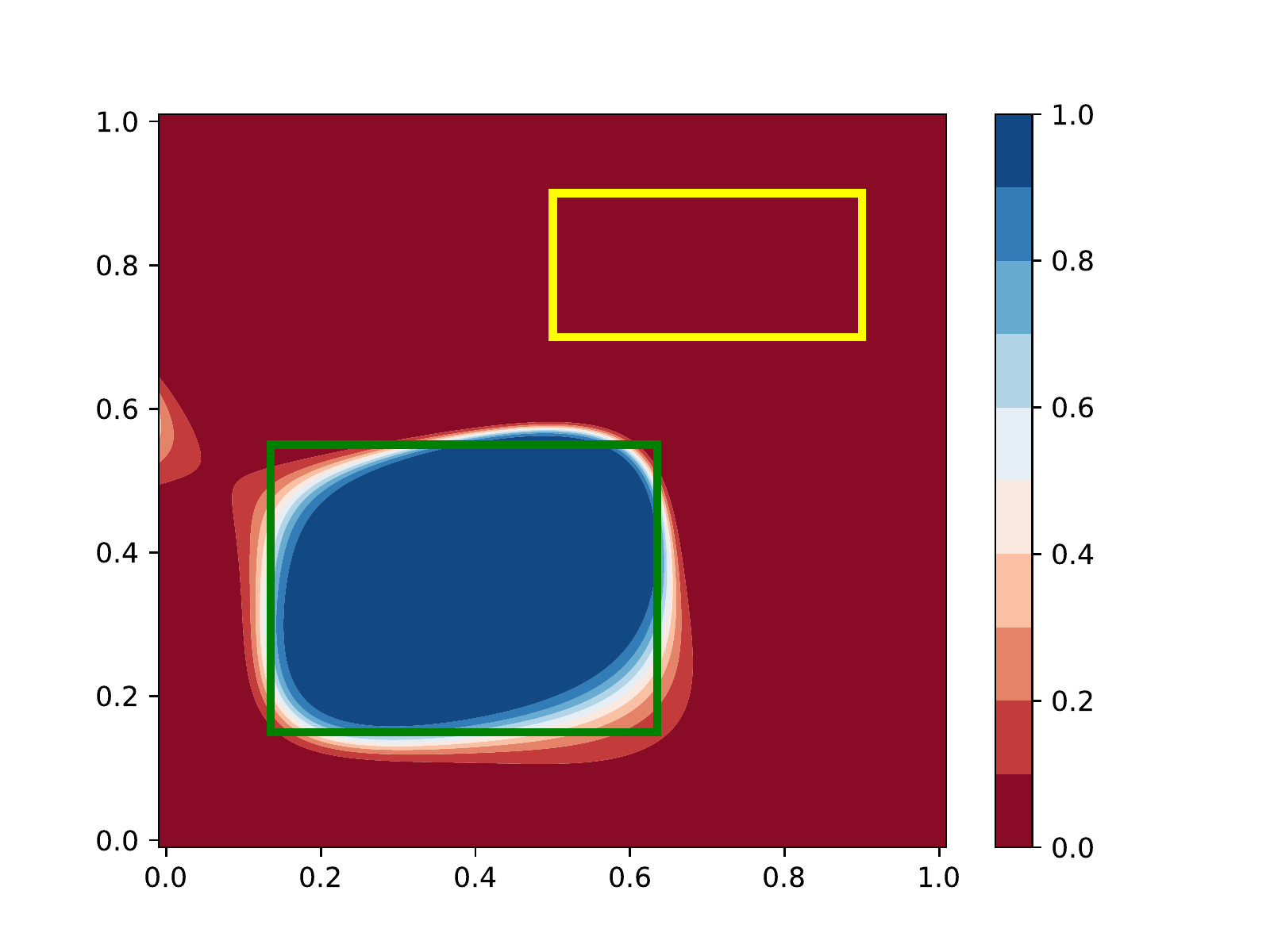}
\end{minipage} \\
\begin{minipage}{.135\textwidth}
    \includegraphics[width=\textwidth,trim={1.1cm 0 3.7cm 1cm},clip]{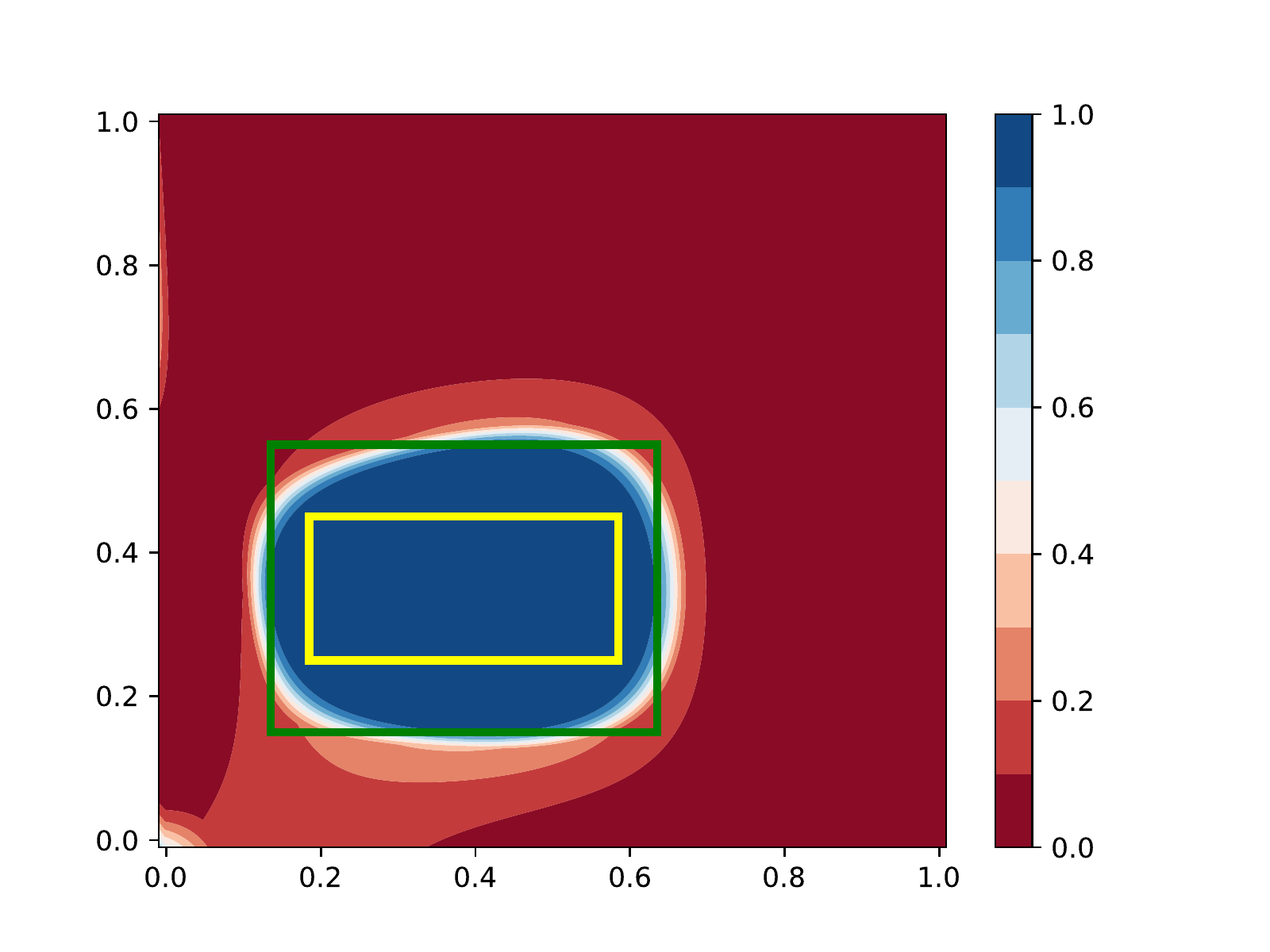}
\end{minipage} &
\begin{minipage}{.135\textwidth}
    \includegraphics[width=\textwidth,trim={1.1cm 0 3.7cm 1cm},clip]{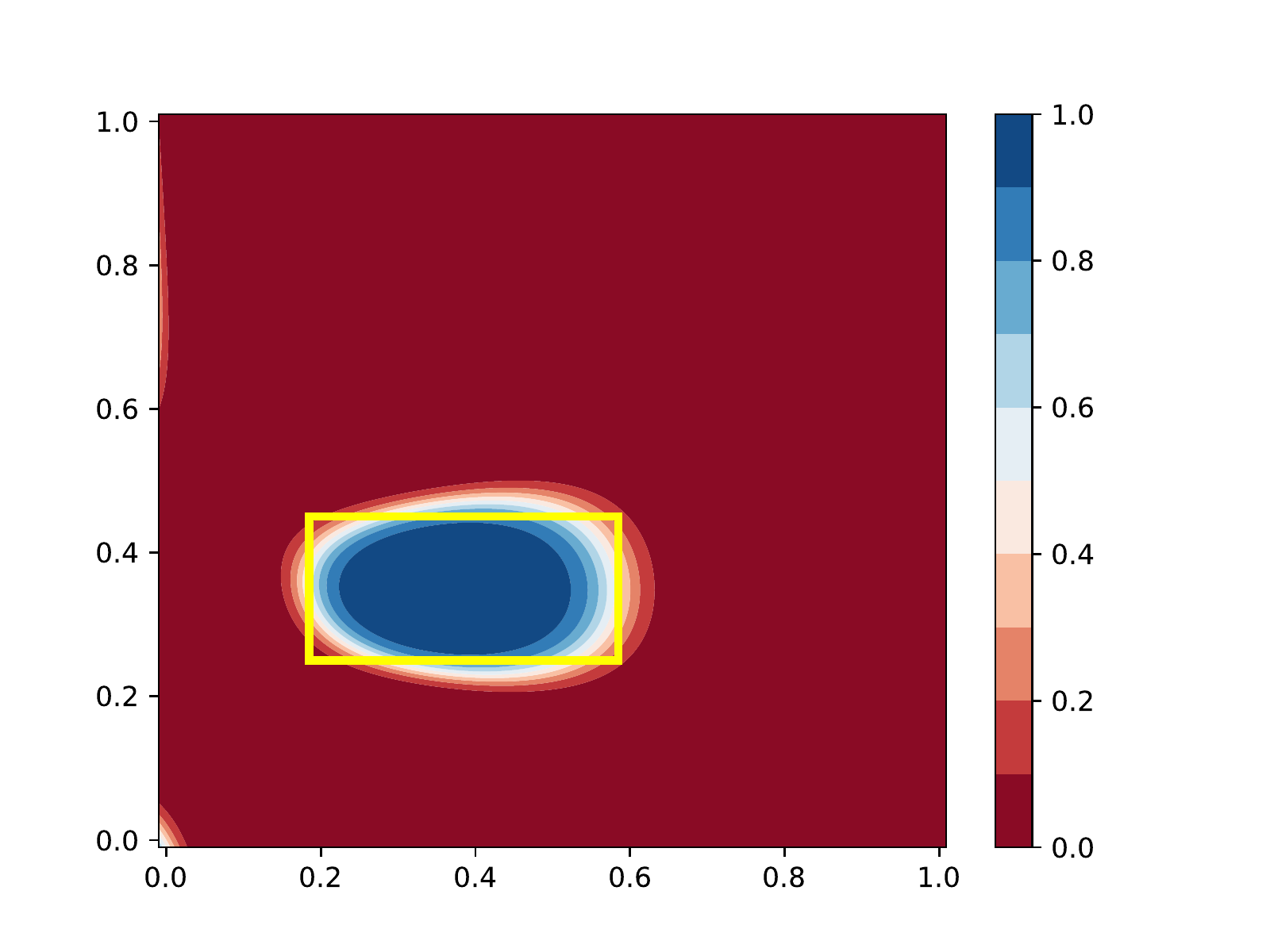} 
\end{minipage} &
\begin{minipage}{.135\textwidth}
    \includegraphics[width=\textwidth,trim={1.1cm 0 3.7cm 1cm},clip]{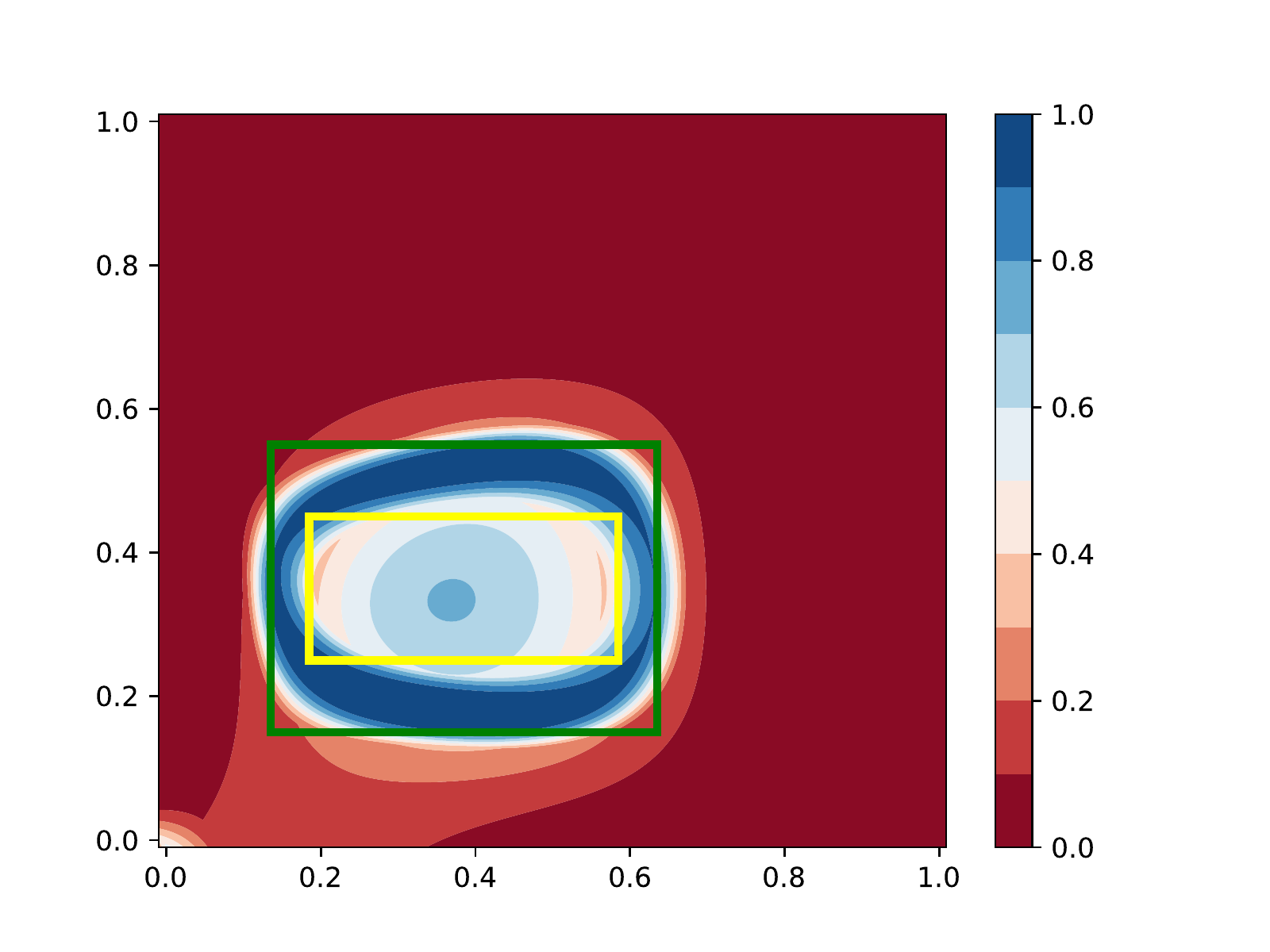} 
\end{minipage} &
\begin{minipage}{.135\textwidth}
    \includegraphics[width=\textwidth,trim={1.1cm 0 3.7cm 1cm},clip]{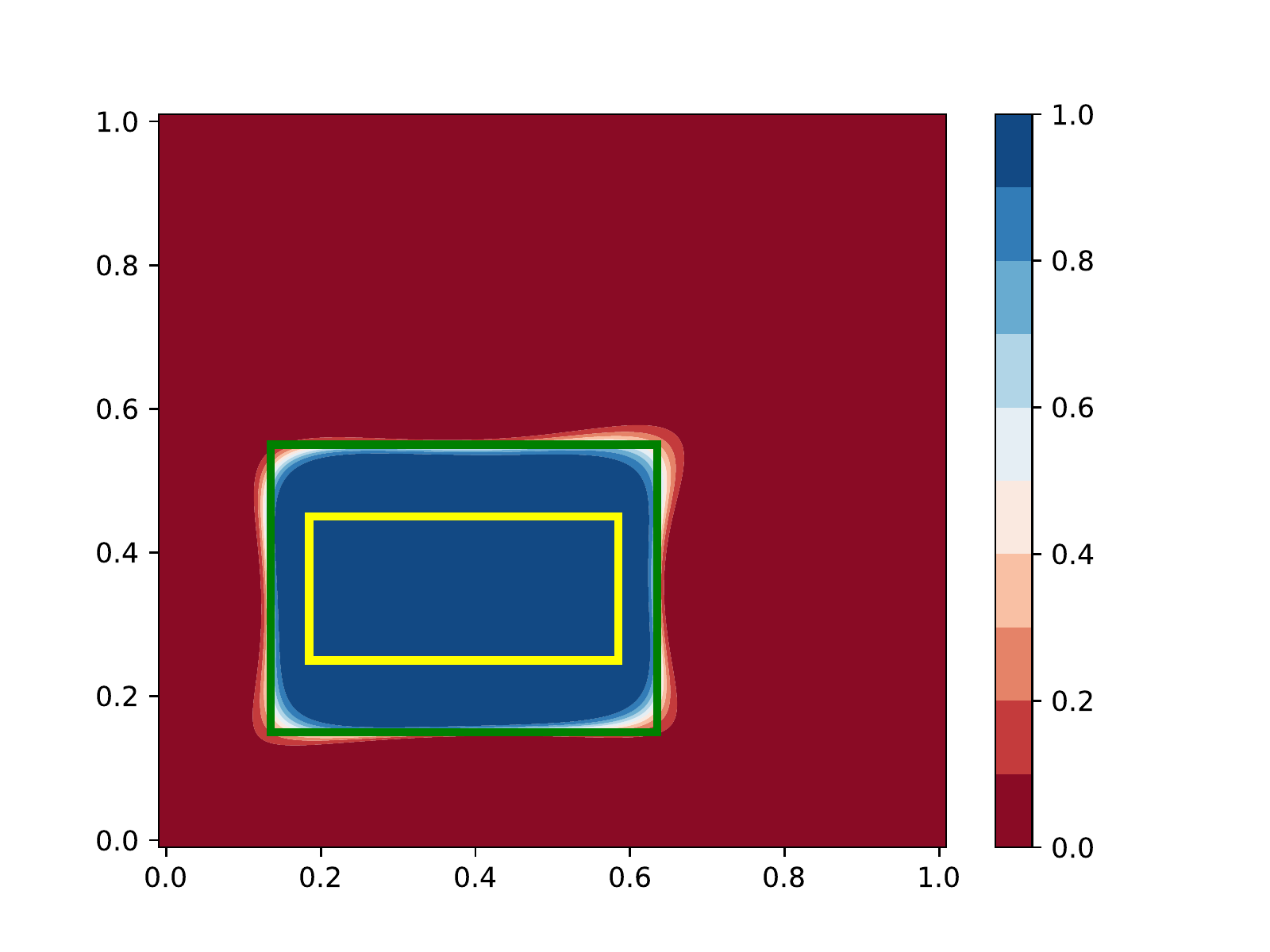} 
    \end{minipage} &
\begin{minipage}{.135\textwidth}
    \includegraphics[width=\linewidth,trim={1.1cm 0 3.7cm 1cm},clip]{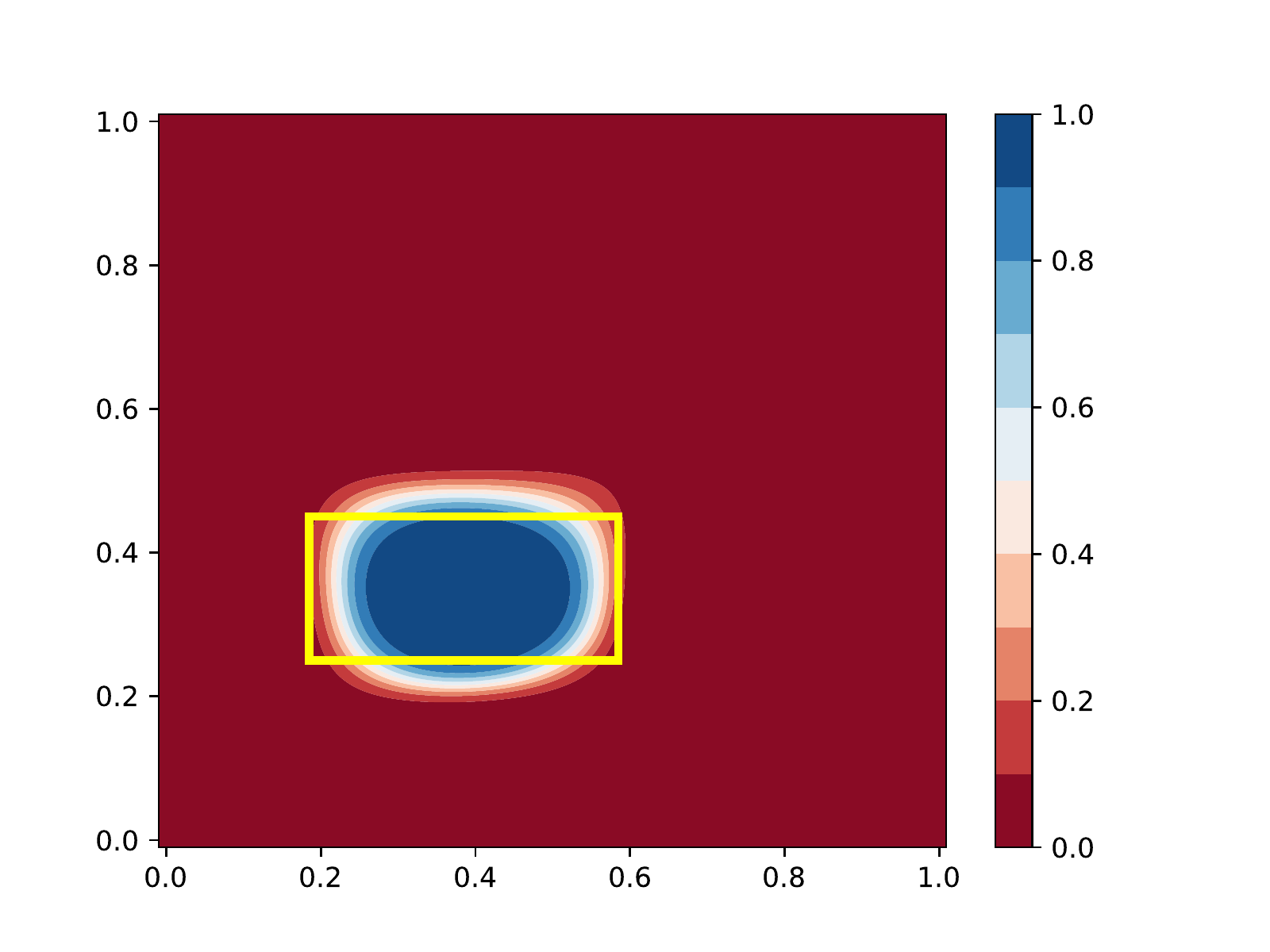}
\end{minipage} &
\begin{minipage}{.165\textwidth} 
    \includegraphics[width=\linewidth,trim={1.1cm 0 1cm 1cm},clip]{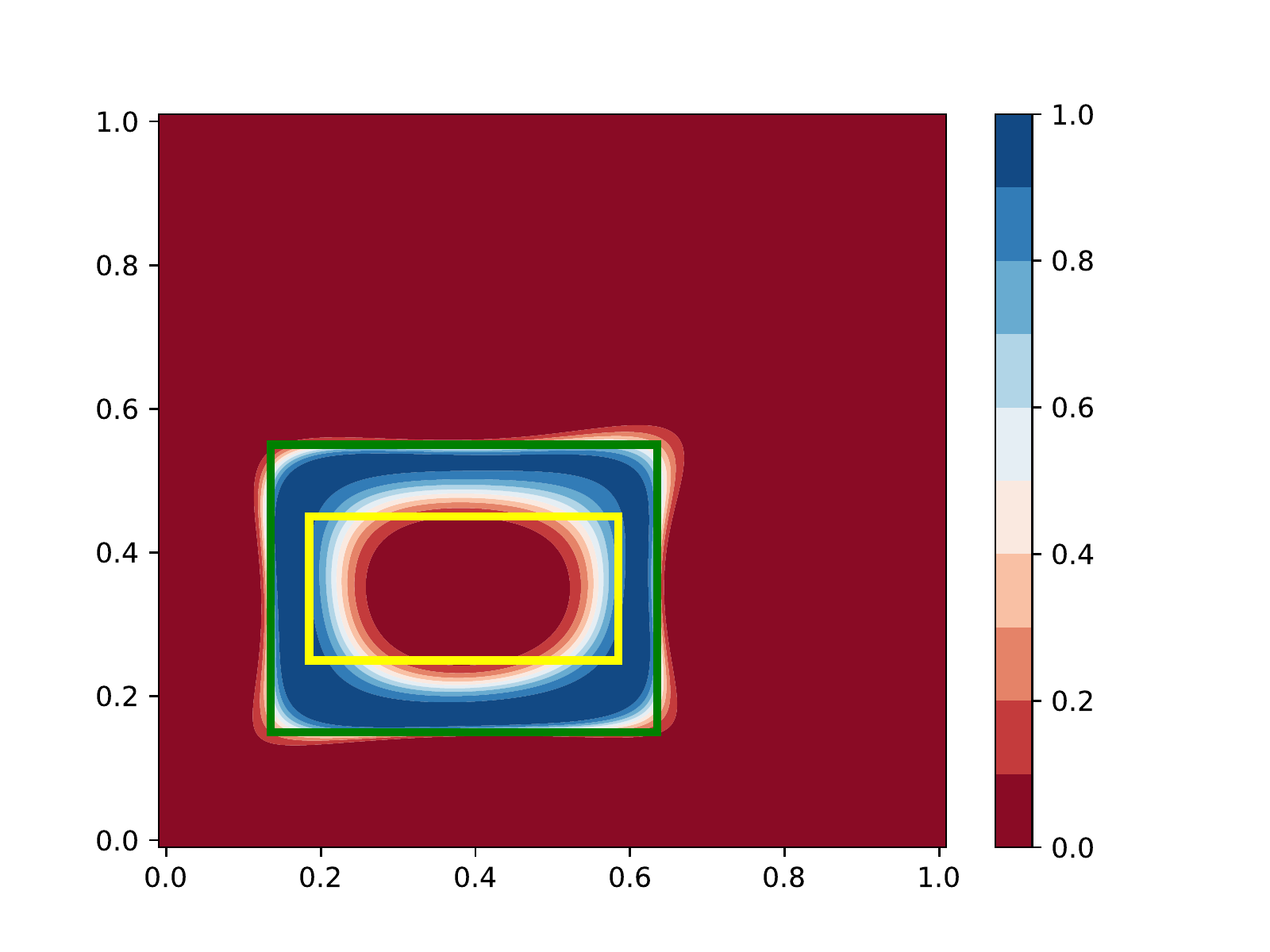}
\end{minipage} \\
\begin{minipage}{.135\textwidth}
    \includegraphics[width=\textwidth,trim={1.1cm 0 3.7cm 1cm},clip]{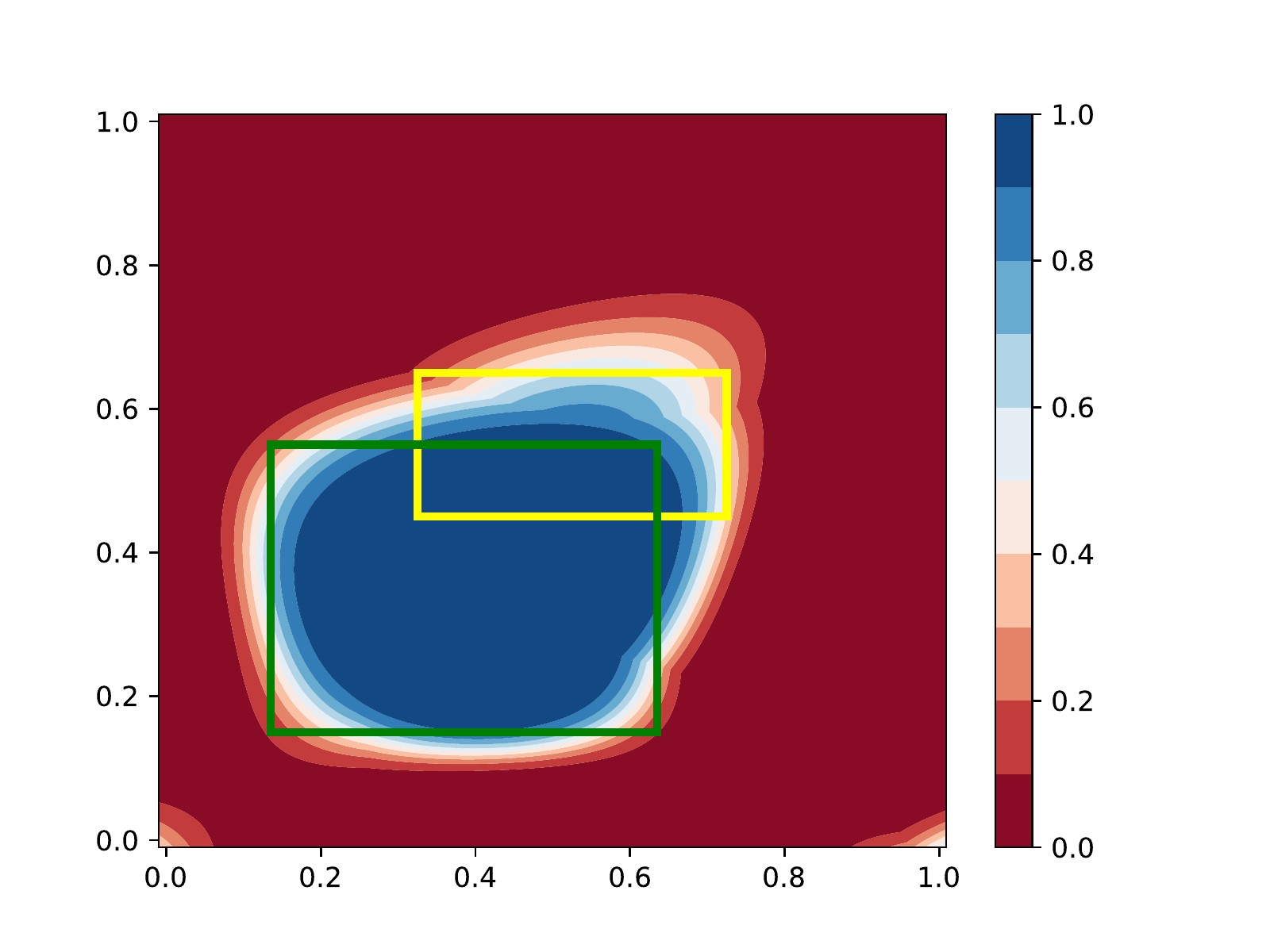}
\end{minipage} &
\begin{minipage}{.135\textwidth}
    \includegraphics[width=\textwidth,trim={1.1cm 0 3.7cm 1cm},clip]{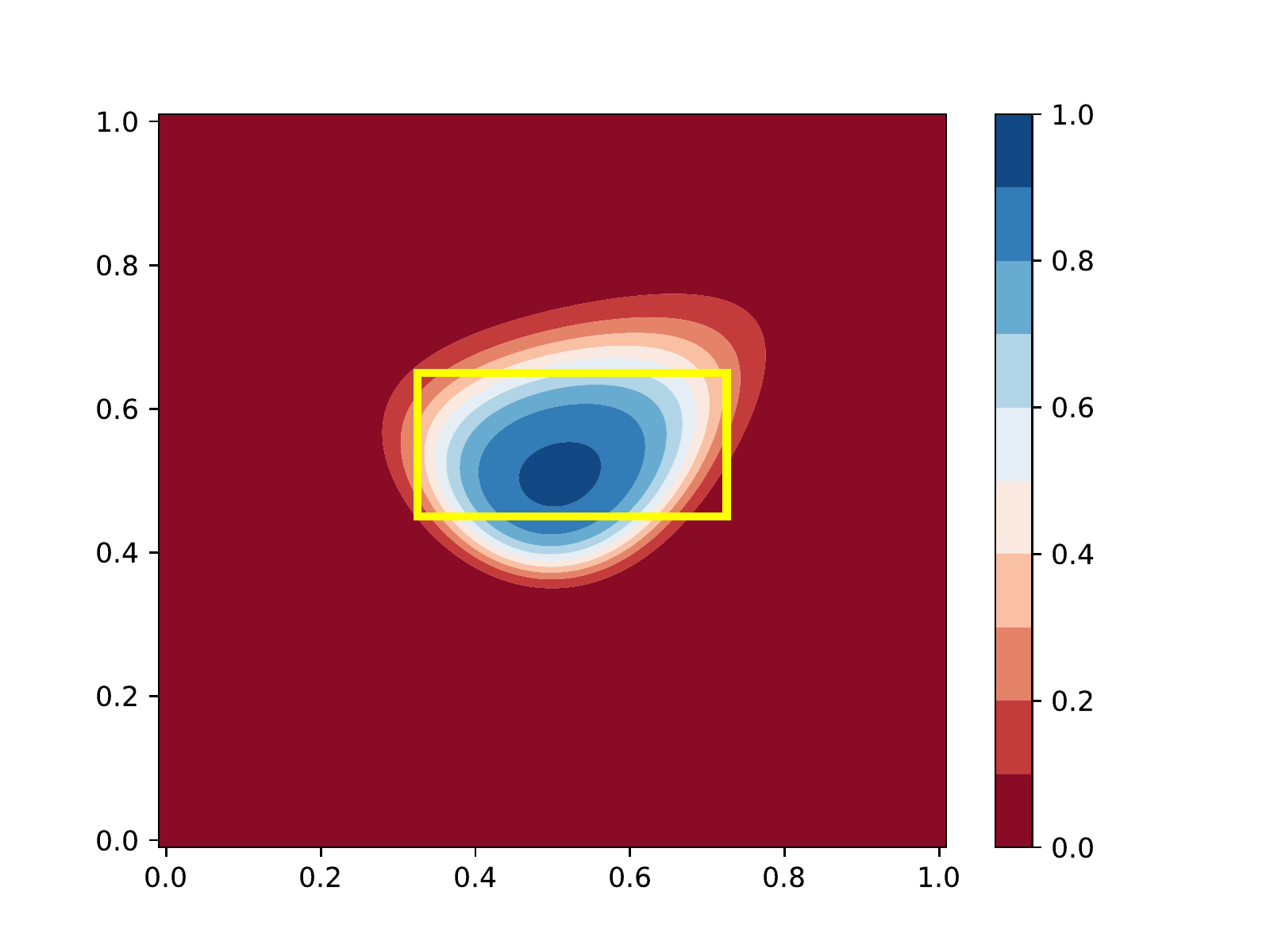}
\end{minipage} &
\begin{minipage}{.135\textwidth}
    \includegraphics[width=\textwidth,trim={1.1cm 0 3.7cm 1cm},clip]{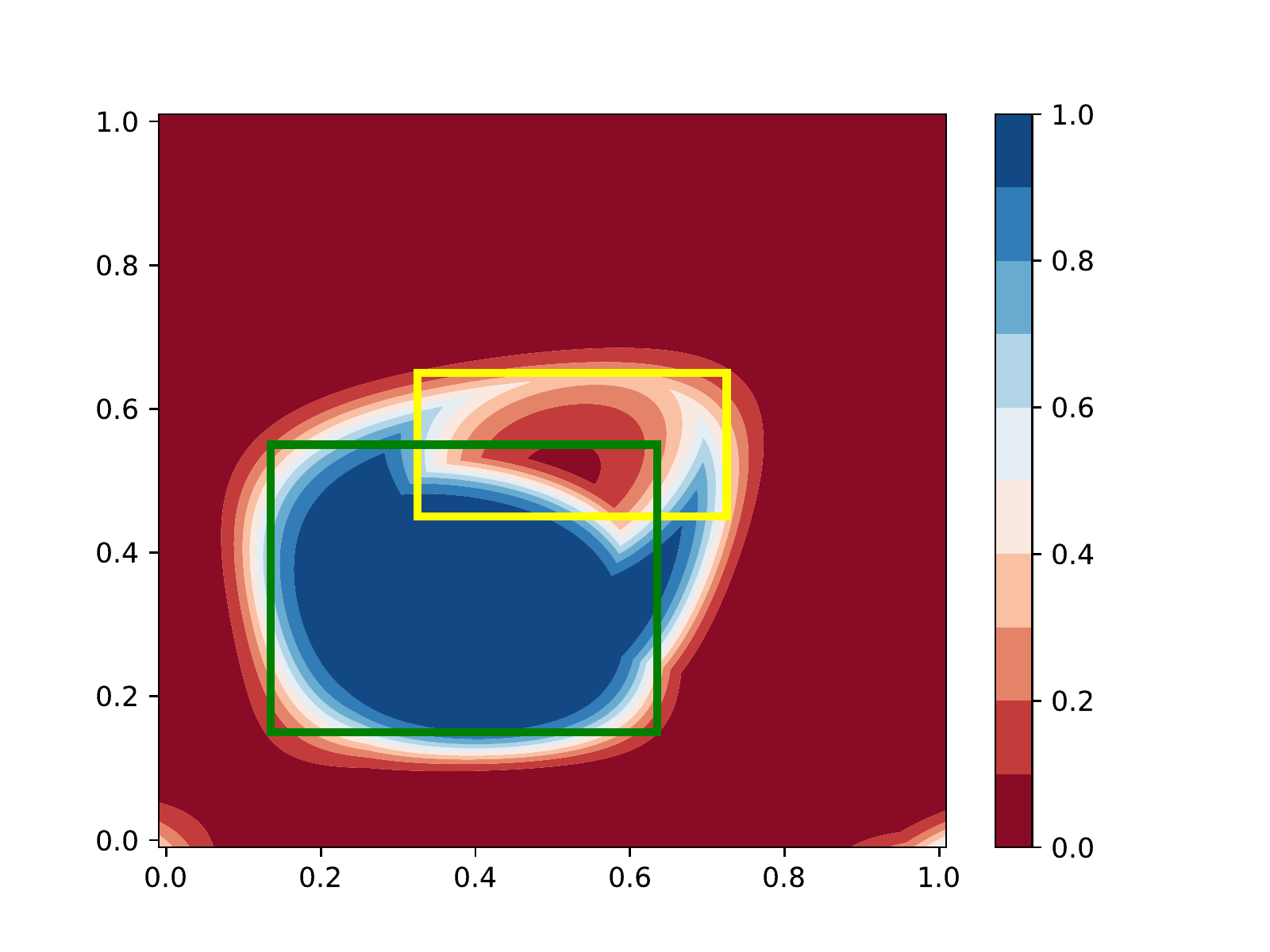}
\end{minipage} &
\begin{minipage}{.135\textwidth}
    \includegraphics[width=\textwidth,trim={1.1cm 0 3.7cm 1cm},clip]{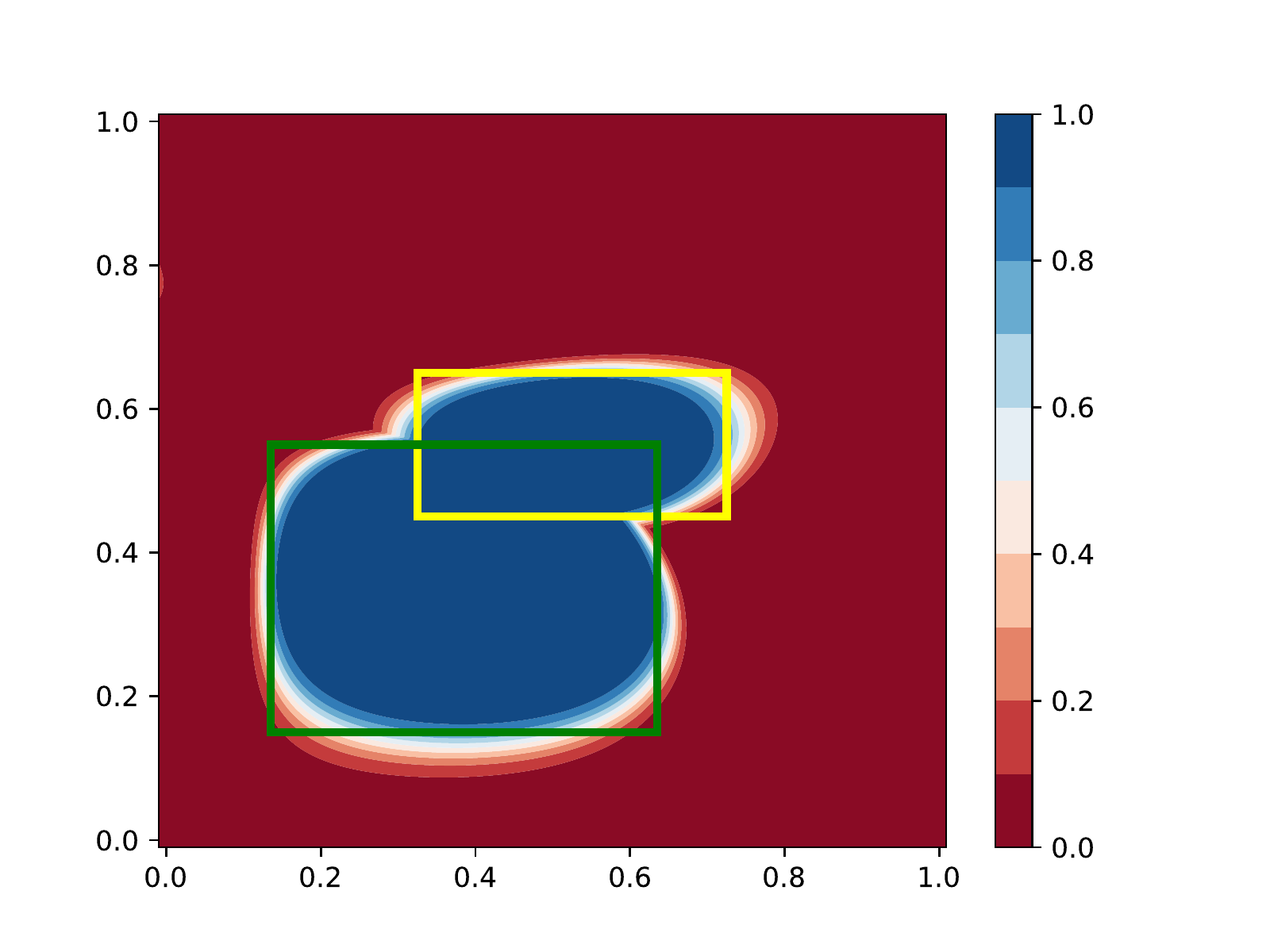}
\end{minipage} &
\begin{minipage}{.135\textwidth}
    \includegraphics[width=\linewidth,trim={1.1cm 0 3.7cm 1cm},clip]{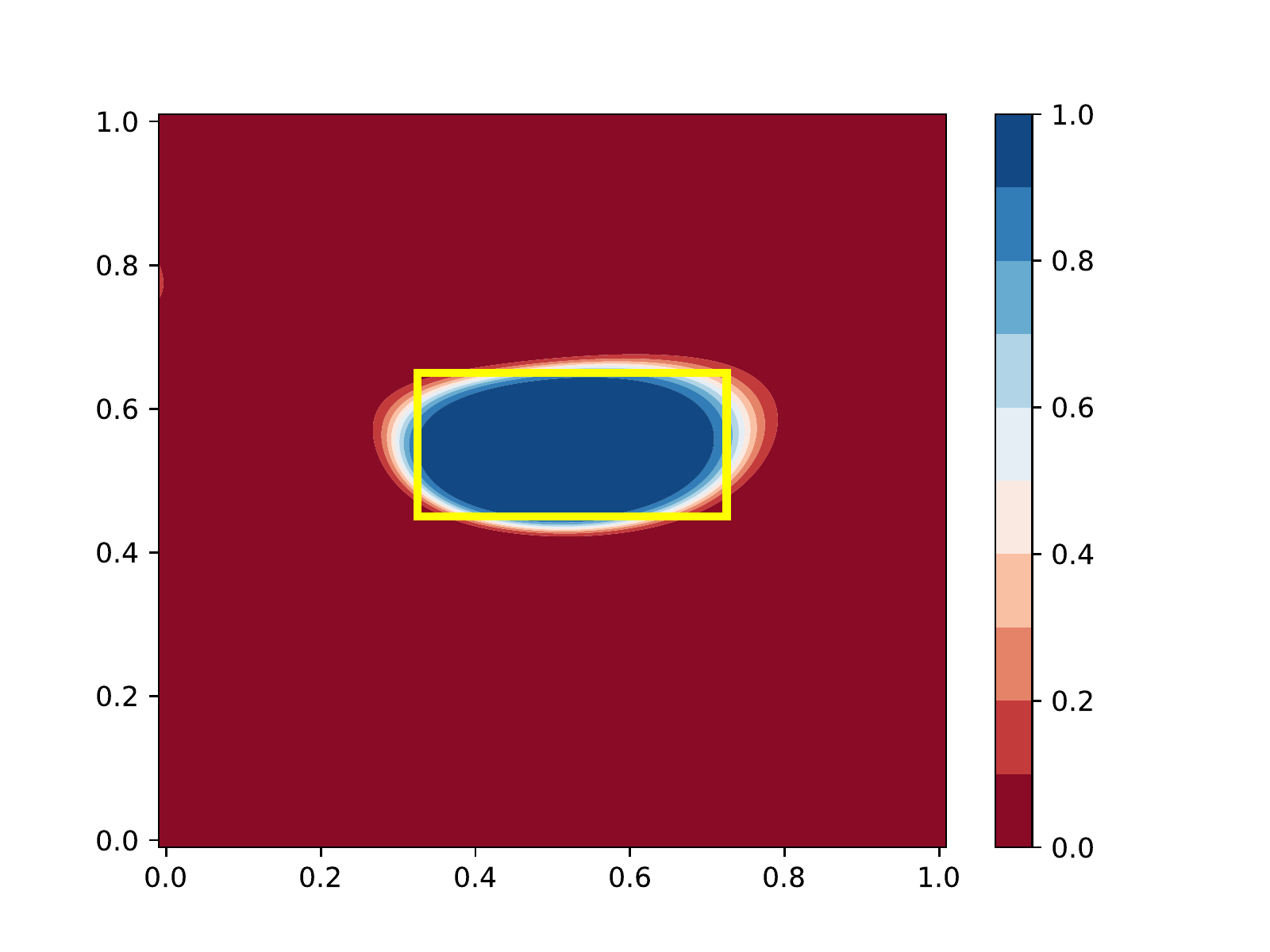}
\end{minipage} &
\begin{minipage}{.17\textwidth}
    \includegraphics[width=\linewidth,trim={1.1cm 0 1cm 1cm},clip]{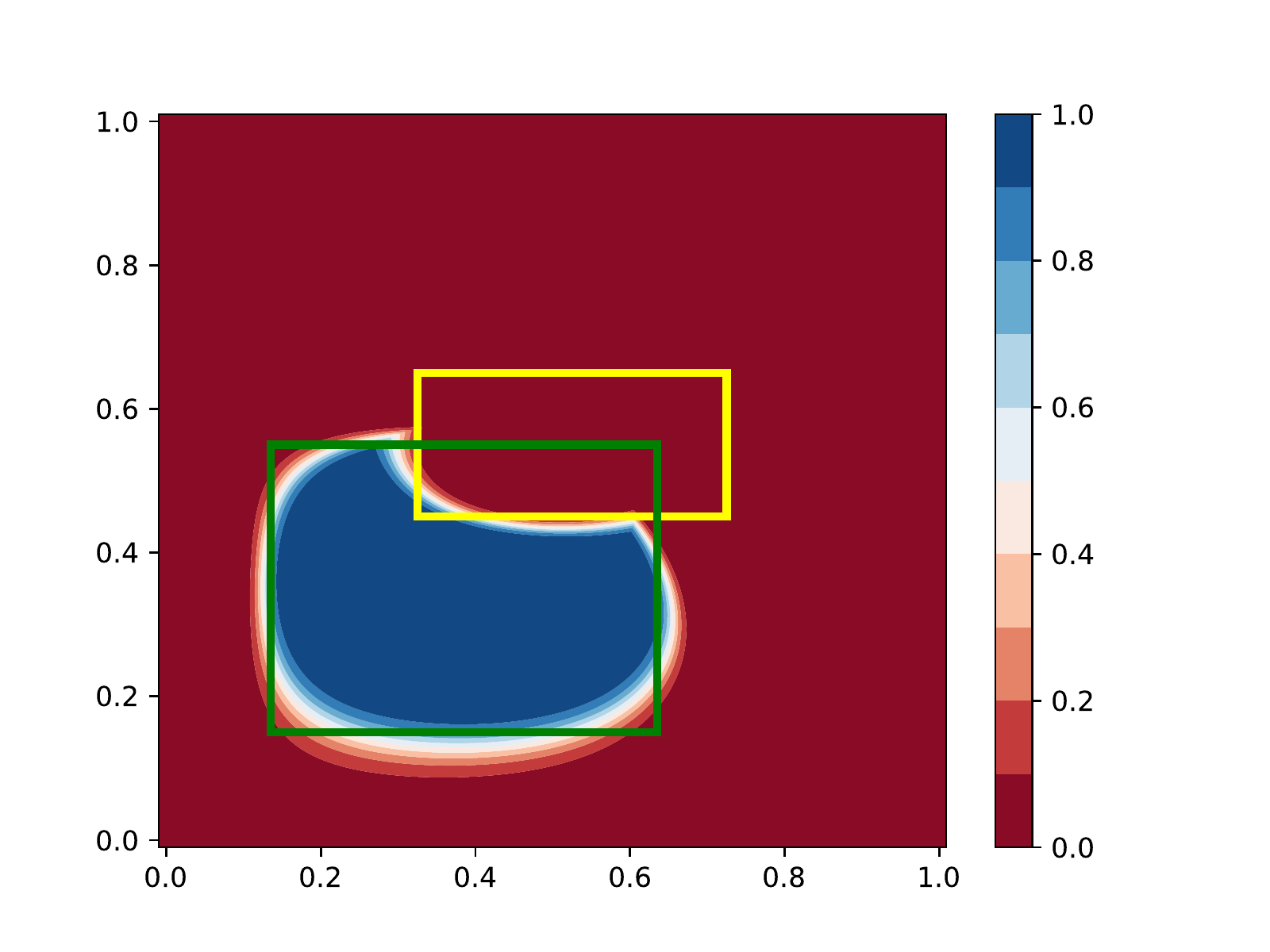}
\end{minipage} \vspace*{-1ex}
\end{tabular}
\caption{
 First three columns: decision boundaries of $f$ for the classes $A$, $A_1$, and $A_2$. Last three columns: decision boundaries of \system{$h$} for the classes $A$, $A_1$, and $A_2$. In each figure, the darker the blue (resp., red), the more confident a model is that the data points in the region is associated (not associated) with the class (see the scale at the end of each row).
}
\label{fig:dec_bound_figs_gen}
\end{figure*}

In the second step, to effectively exploit the constraints during training, \system{$h$} is trained with a new loss function, called {\sl constraint loss} (\loss), 
which has two goals: 
\begin{enumerate}
    \item we want to give each class the correct supervision 
    (e.g., if $y_A \,{=}\, 1$, then we want to teach $h$ to increase $h_A$ and not to decrease it), and 
    \item given a constraint, we want to teach $h$ to rely on the prediction for the classes in the body to make prediction for the class in the head only when the body is satisfied (e.g., for (\ref{eq:constr_a2}), when $y_A \,{=}\, 1$ and $y_{A_1}\,{=}\, 0$).

\end{enumerate} 
To achieve the above goals, $\loss$ is defined as $\loss = \loss_A + \loss_{A_1} + \loss_{A_2}$, where:
$$
    \begin{aligned}
   \loss_A = \! & -y_A\ln(\max(h_A, h_{A_1}y_{A_1},h_{A_2}y_{A_2} )) 
   - \ov{y}_{A} \ln(\ov\module_A), \\ %
   \loss_{A_1} \!\!  = \! & -y_{A_1} \ln(\module_{A_1}) - \ov{y}_{A_1}\ln(\ov{\module}_{A_1}), \\
   \loss_{A_2} \!\! = \! &-y_{A_2}\ln(\max(h_{A_2}, \min(h_{A}y_A, \ov{h}_{A_1}\ov{y}_{A_1}),
   \min(h_{A_1}y_{A_1}, \ov{h}_{A_1}\ov{y}_{A_1}))) \\
   & 
   \!\!\!\!\!\!- \ov{y}_{A_2}\!\ln(1\!\!-\!\max(h_{A_2}, \!\min(h_{A}\ov{y}_A\!\!+\!y_A, \ov{h}_{A_1}y_{A_1}\!\!+\!\ov{y}_{A_1}), 
     \min(h_{A_1}\ov{y}_{A_1}\!\!+\!y_{A_1}, \ov{h}_{A_1}y_{A_1}\!\!+\!\ov{y}_{A_1}))). \\
    \end{aligned}
$$

\begin{figure*}[t]
\centering
\begin{tabular}{c@{\ \ \,}c@{\ \ \ \,}c@{\ \ \ \ }|@{\ \ \,}c@{\ \ \ \,}c@{\ \ \ \,}c@{\ \ \ }}
\multicolumn{3}{c}{Neural Network $f$} & 
\multicolumn{3}{c}{Neural Network $h$} \\
Class $A$ & Class $A_1$ & Class $A_2$ & Class $A$ & Class $A_1$ & Class $A_2$ \\
\begin{minipage}{.135\textwidth}
    \includegraphics[width=\textwidth,trim={1.1cm 0 3.7cm 1cm},clip]{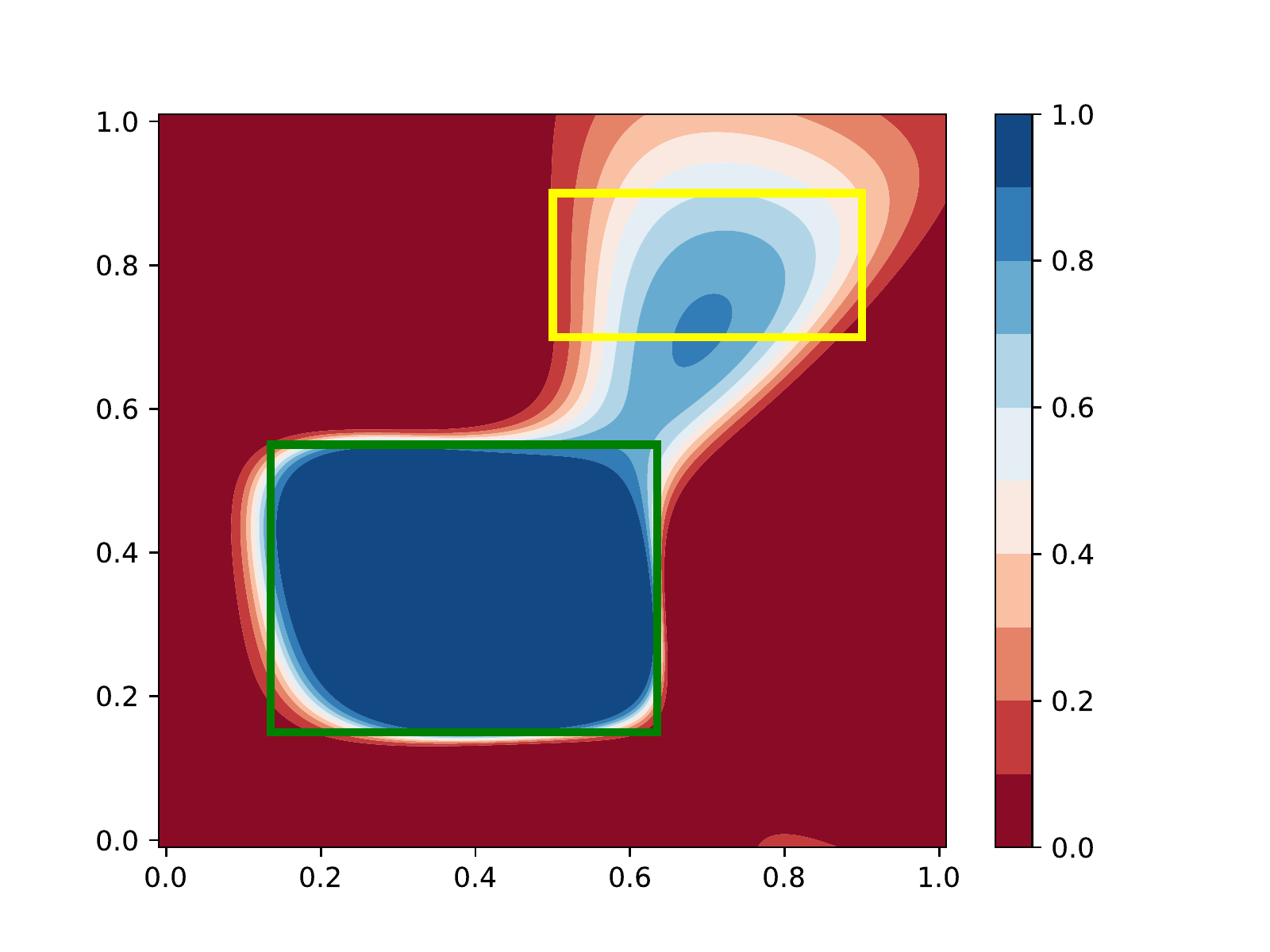}
\end{minipage} &
\begin{minipage}{.135\textwidth}
    \includegraphics[width=\textwidth,trim={1.1cm 0 3.7cm 1cm},clip]{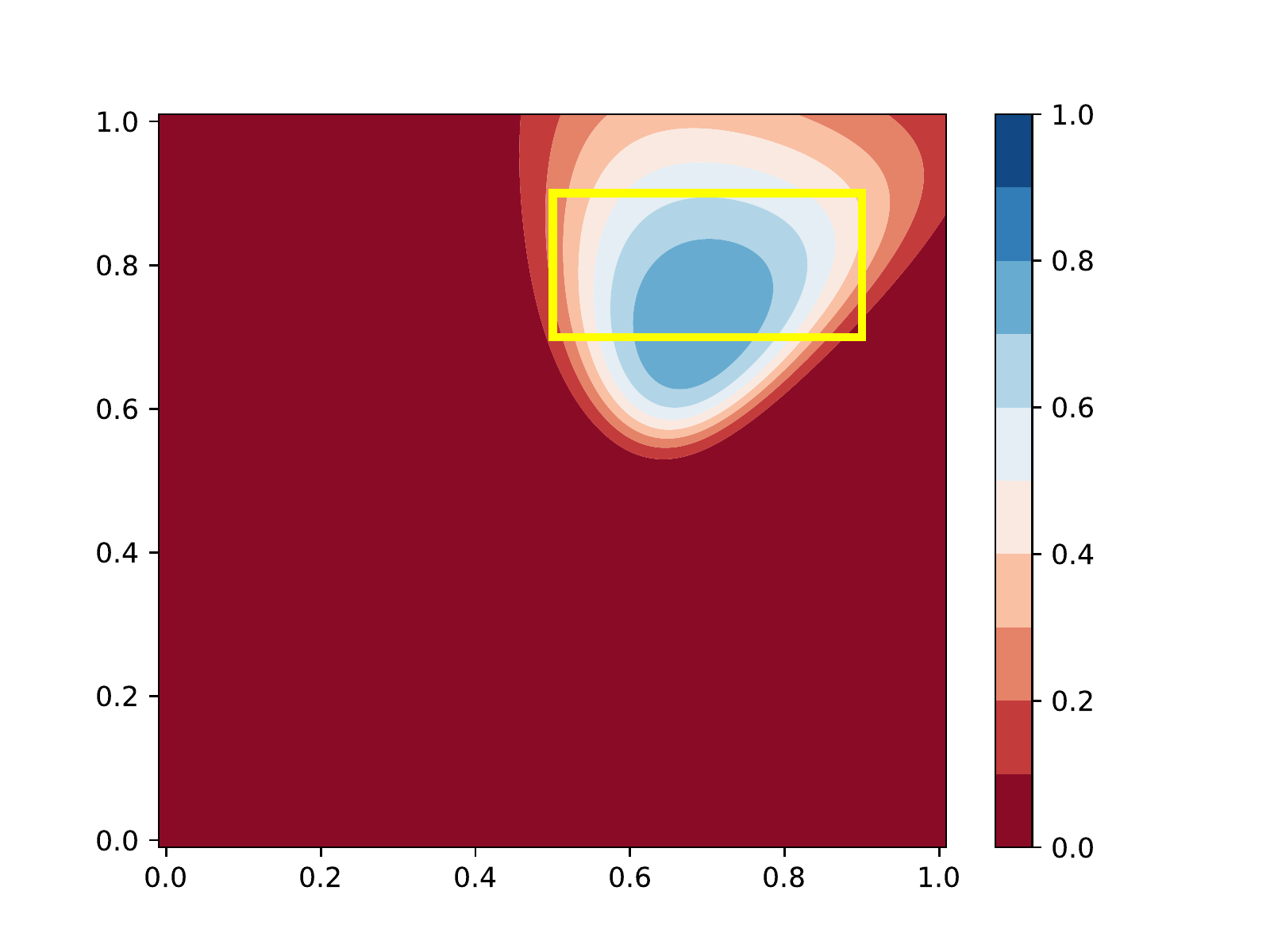}
\end{minipage} &
\begin{minipage}{.135\textwidth}
    \includegraphics[width=\textwidth,trim={1.1cm 0 3.7cm 1cm},clip]{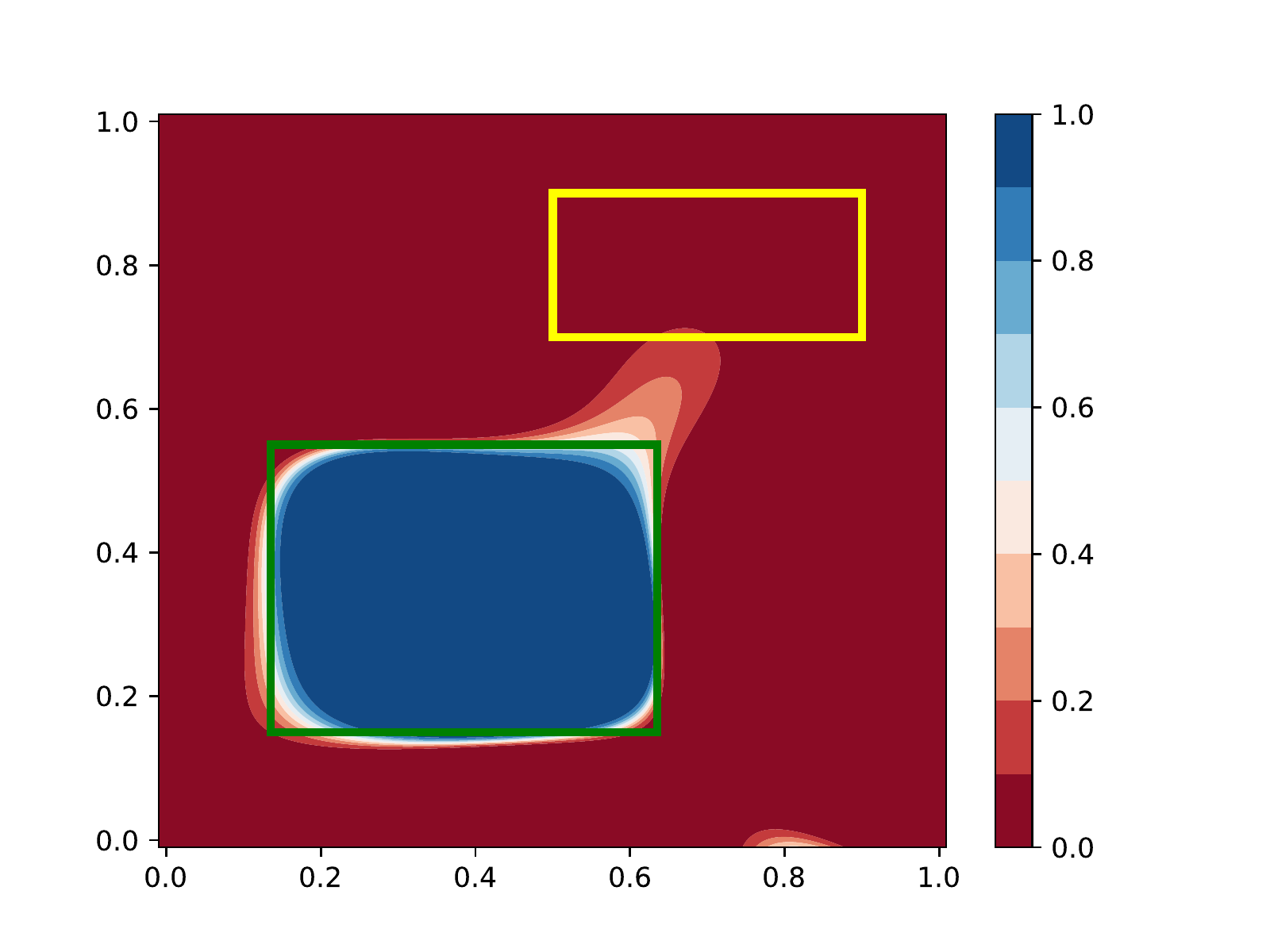}
\end{minipage} &
\begin{minipage}{.135\textwidth}
    \includegraphics[width=\textwidth,trim={1.1cm 0 3.7cm 1cm},clip]{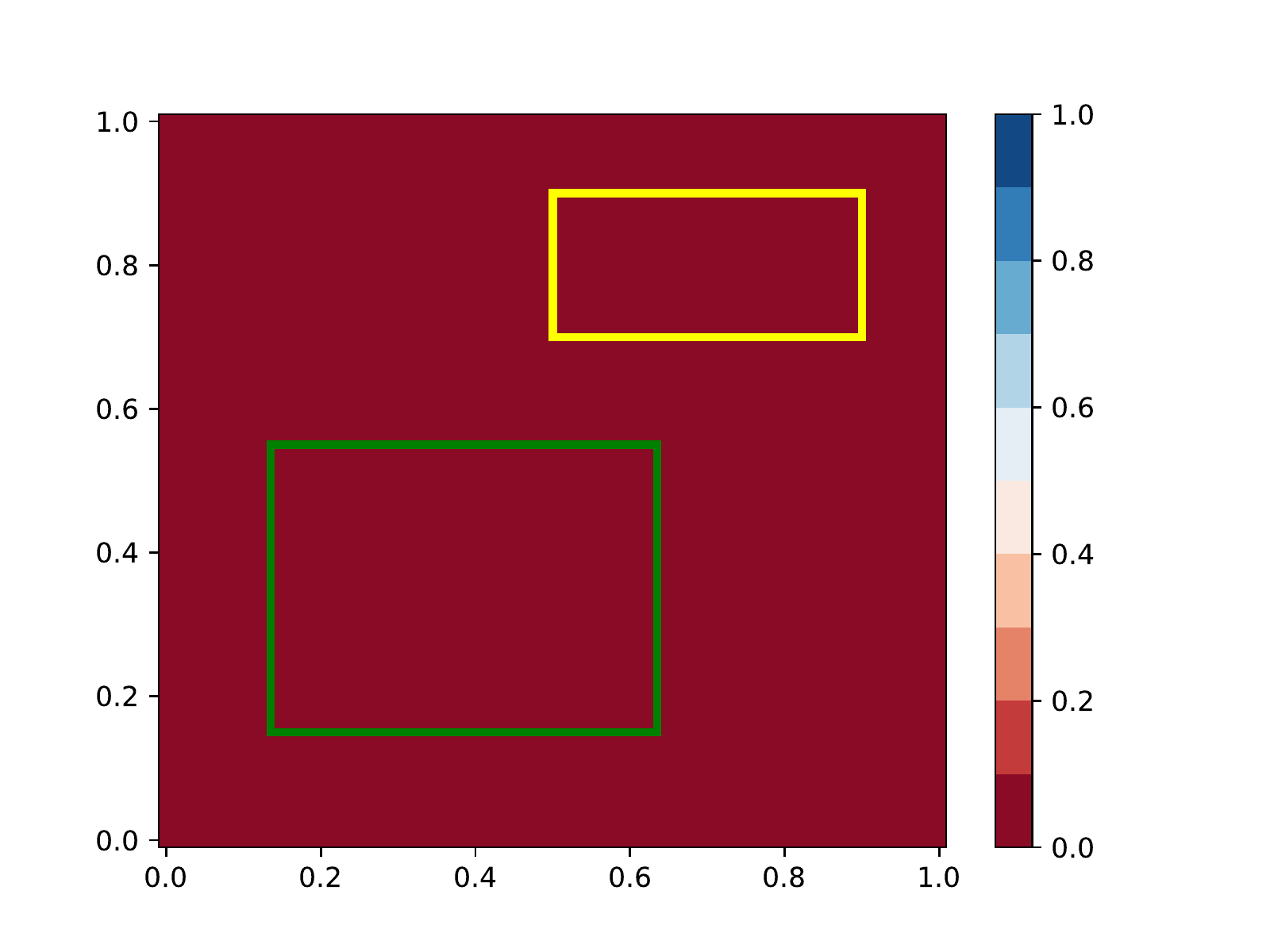}
\end{minipage} &
\begin{minipage}{.135\textwidth}
    \includegraphics[width=\linewidth,trim={1.1cm 0 3.7cm 1cm},clip]{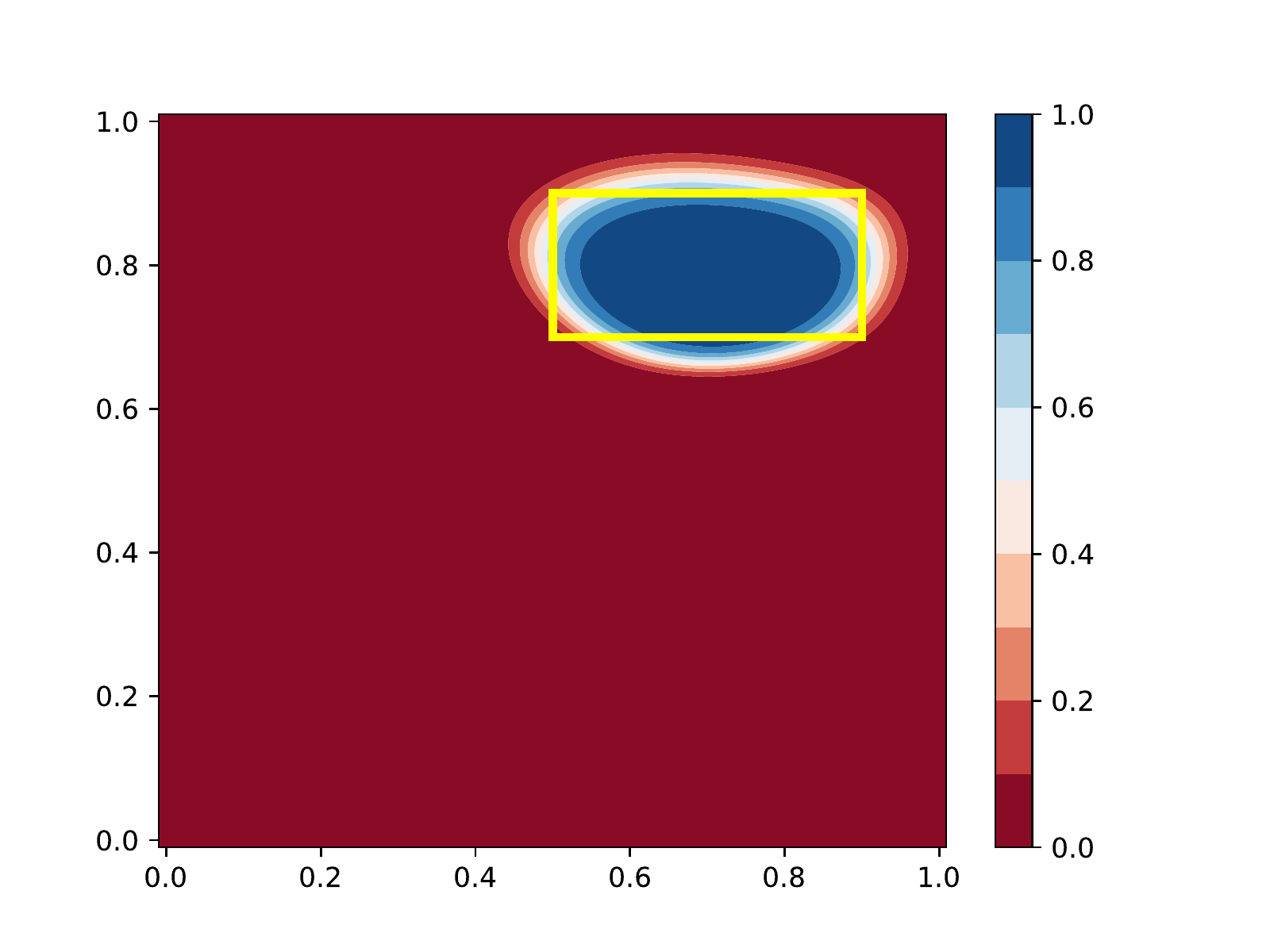}
\end{minipage} &
\begin{minipage}{.165\textwidth}
    \includegraphics[width=\linewidth,trim={1.1cm 0 1cm 1cm},clip]{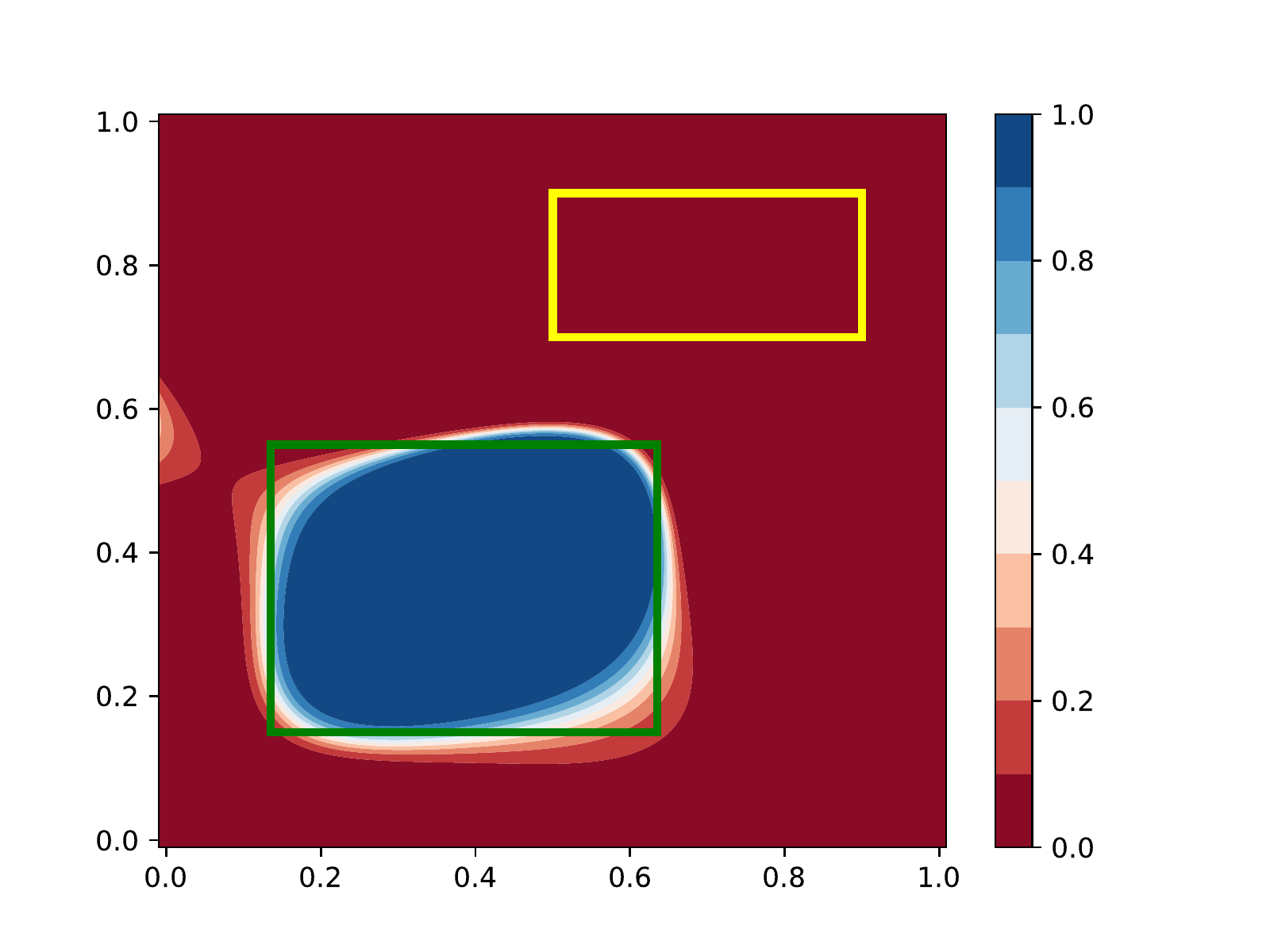}
\end{minipage} \\
\begin{minipage}{.135\textwidth}
    \includegraphics[width=\textwidth,trim={1.1cm 0 3.7cm 1cm},clip]{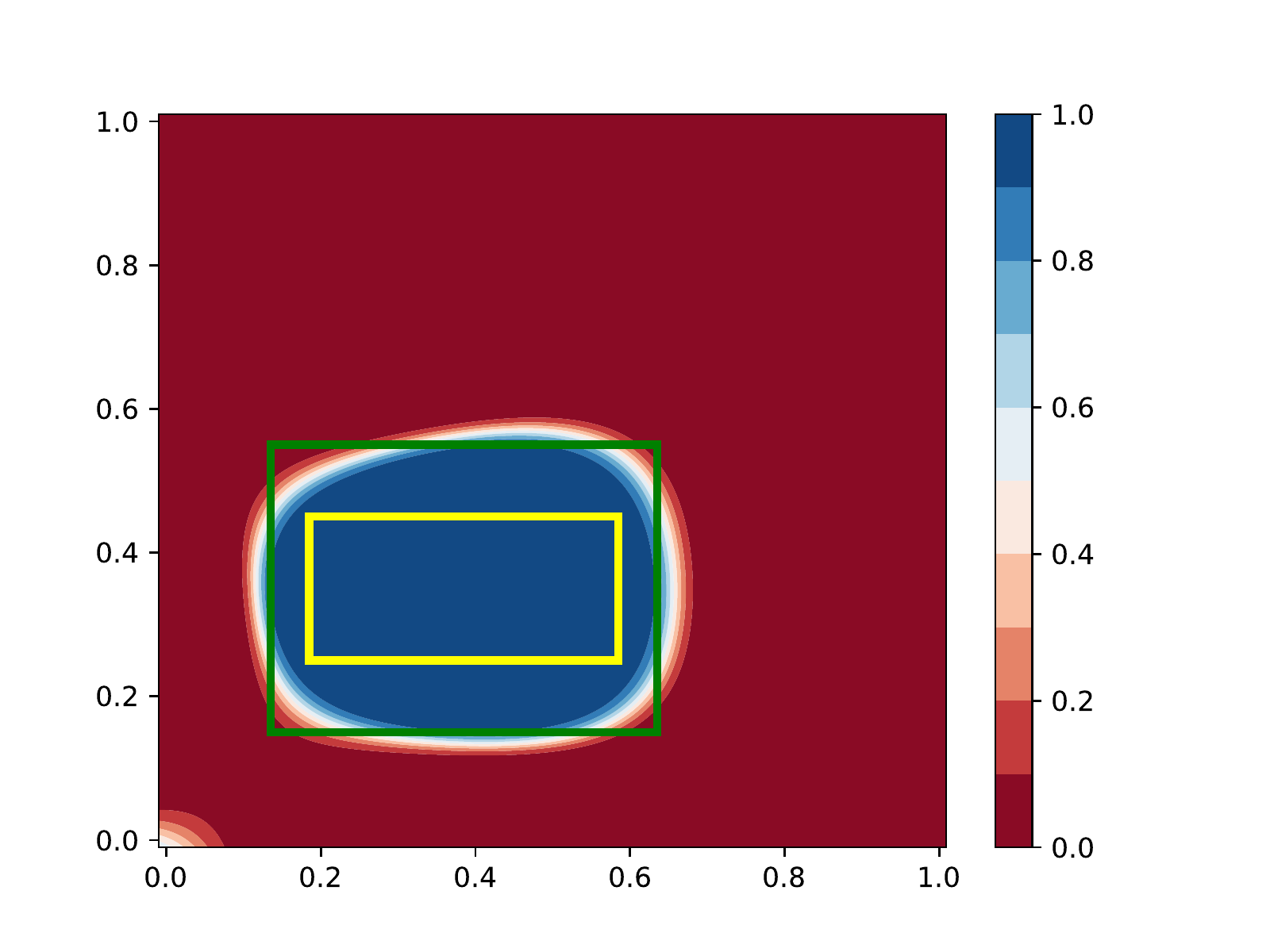}
\end{minipage} &
\begin{minipage}{.135\textwidth}
    \includegraphics[width=\textwidth,trim={1.1cm 0 3.7cm 1cm},clip]{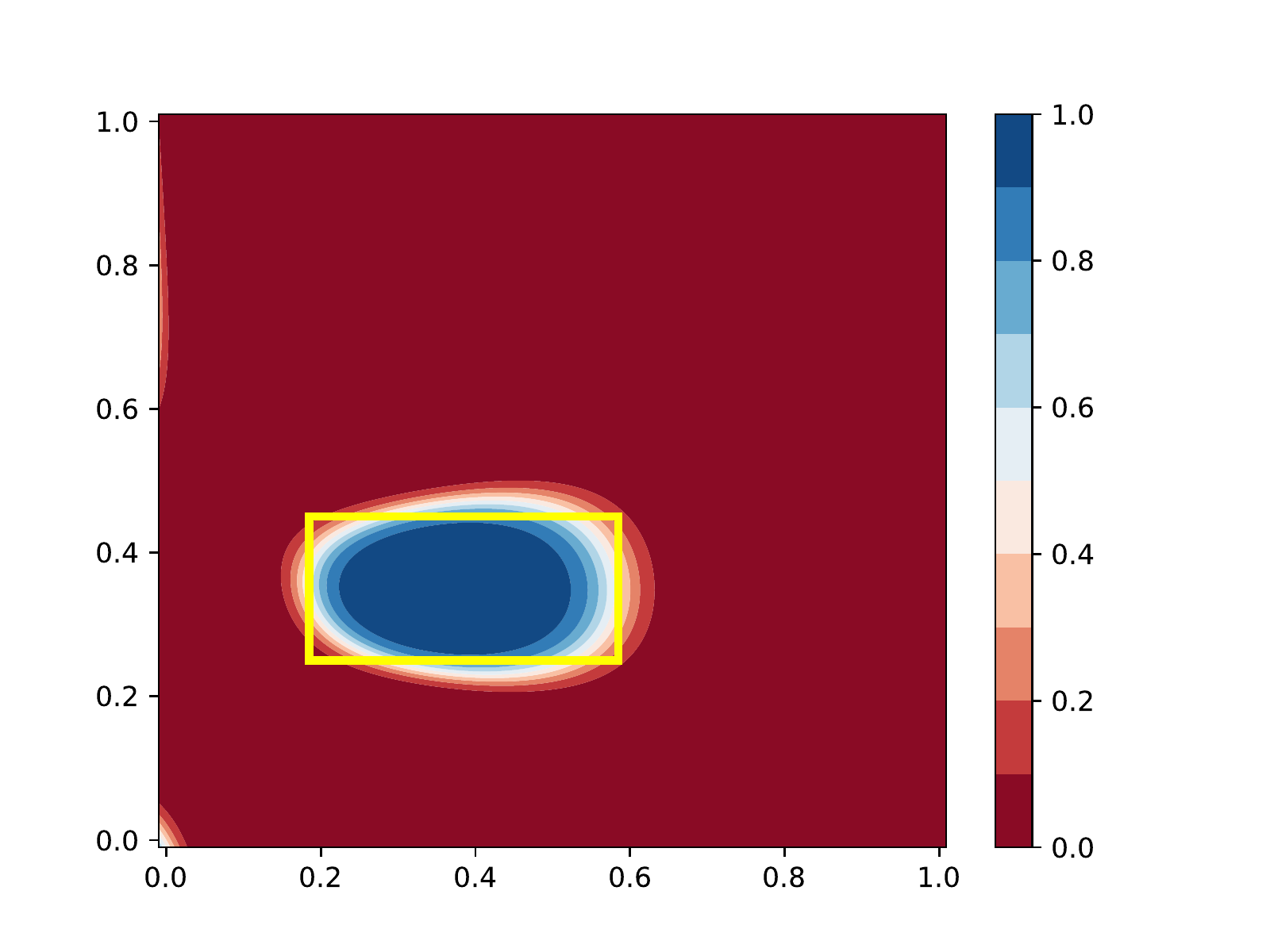} 
\end{minipage} &
\begin{minipage}{.135\textwidth}
    \includegraphics[width=\textwidth,trim={1.1cm 0 3.7cm 1cm},clip]{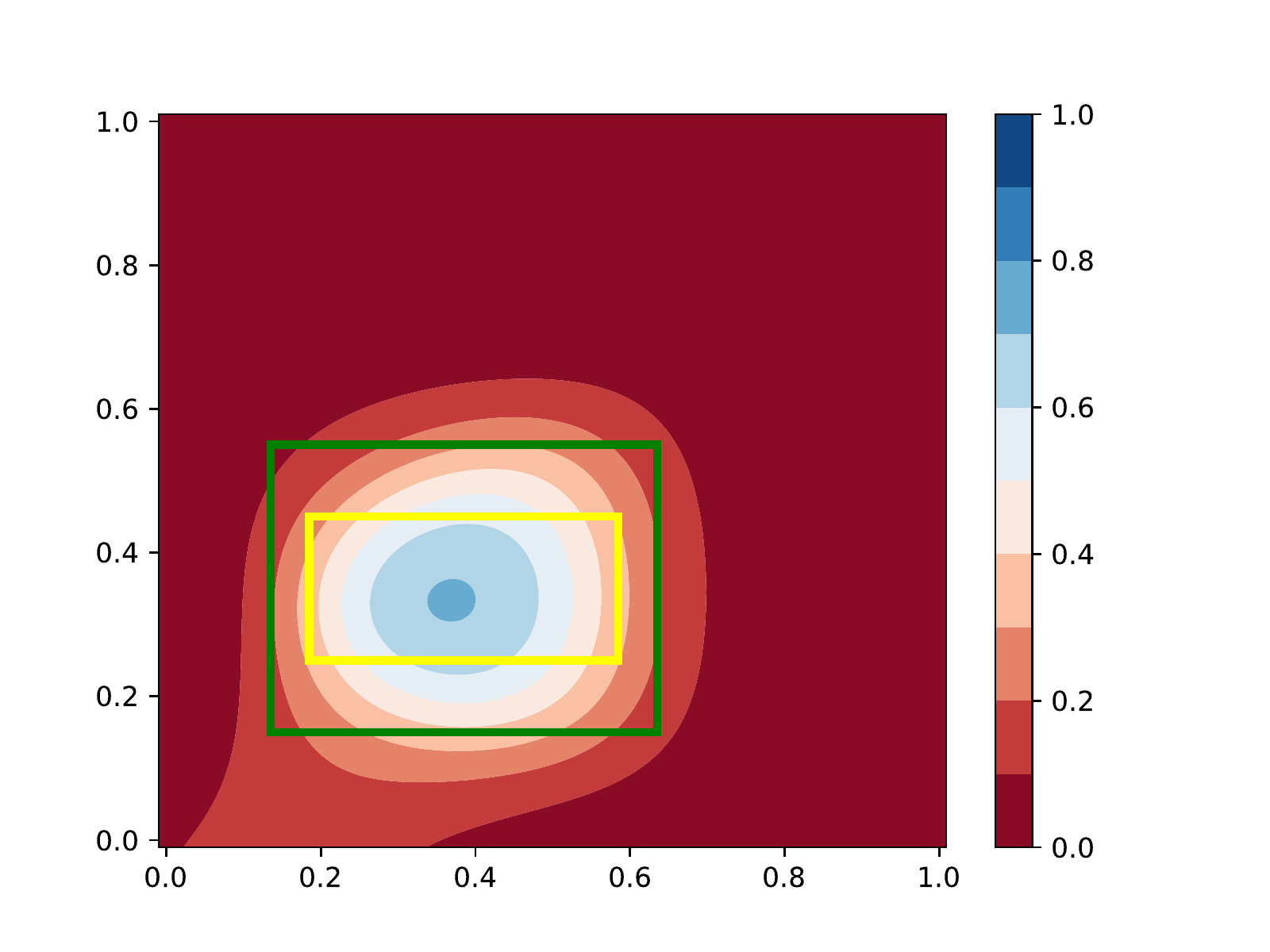} 
\end{minipage} &
\begin{minipage}{.135\textwidth}
    \includegraphics[width=\textwidth,trim={1.1cm 0 3.7cm 1cm},clip]{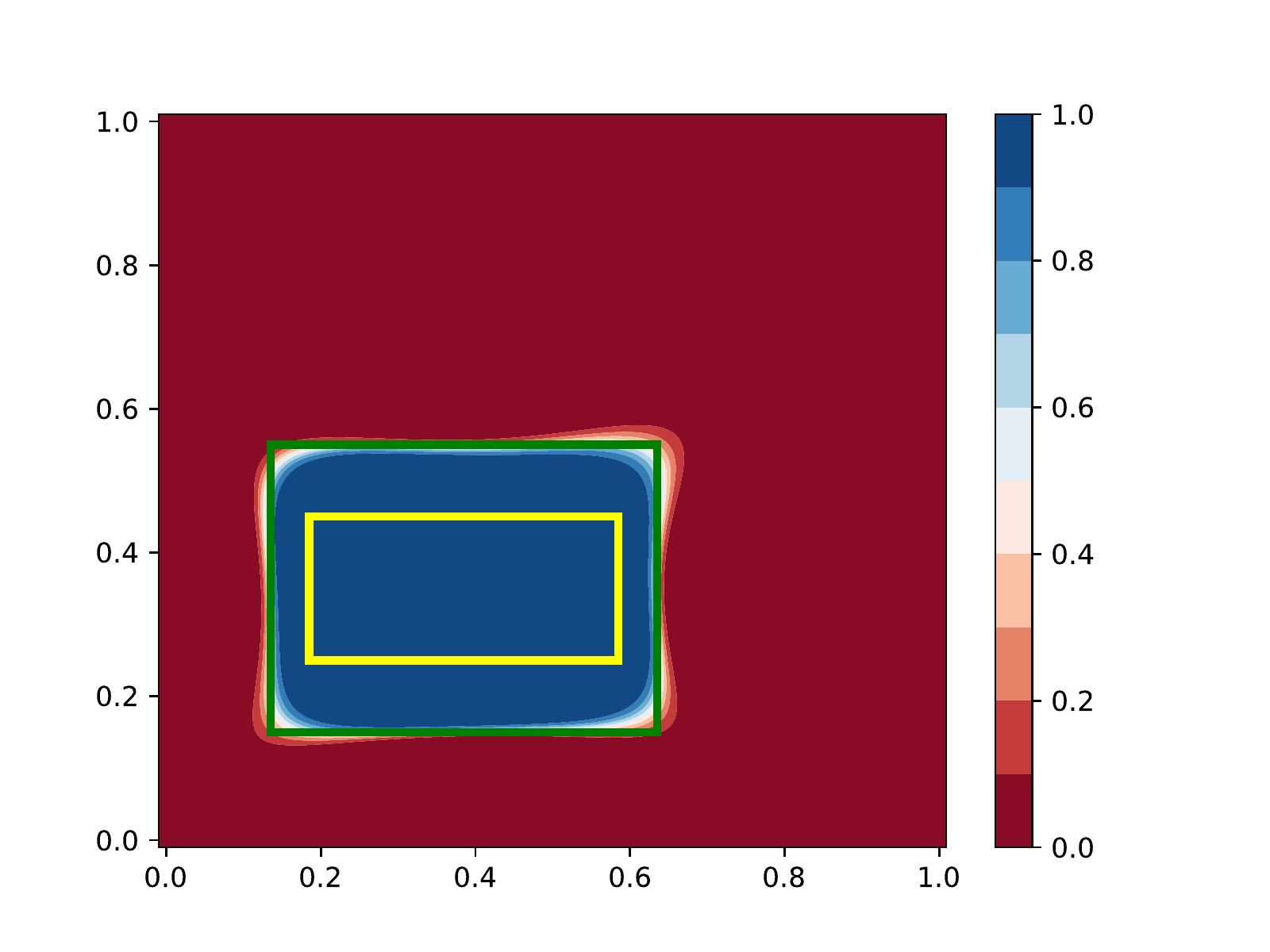} 
    \end{minipage} &
\begin{minipage}{.135\textwidth}
    \includegraphics[width=\linewidth,trim={1.1cm 0 3.7cm 1cm},clip]{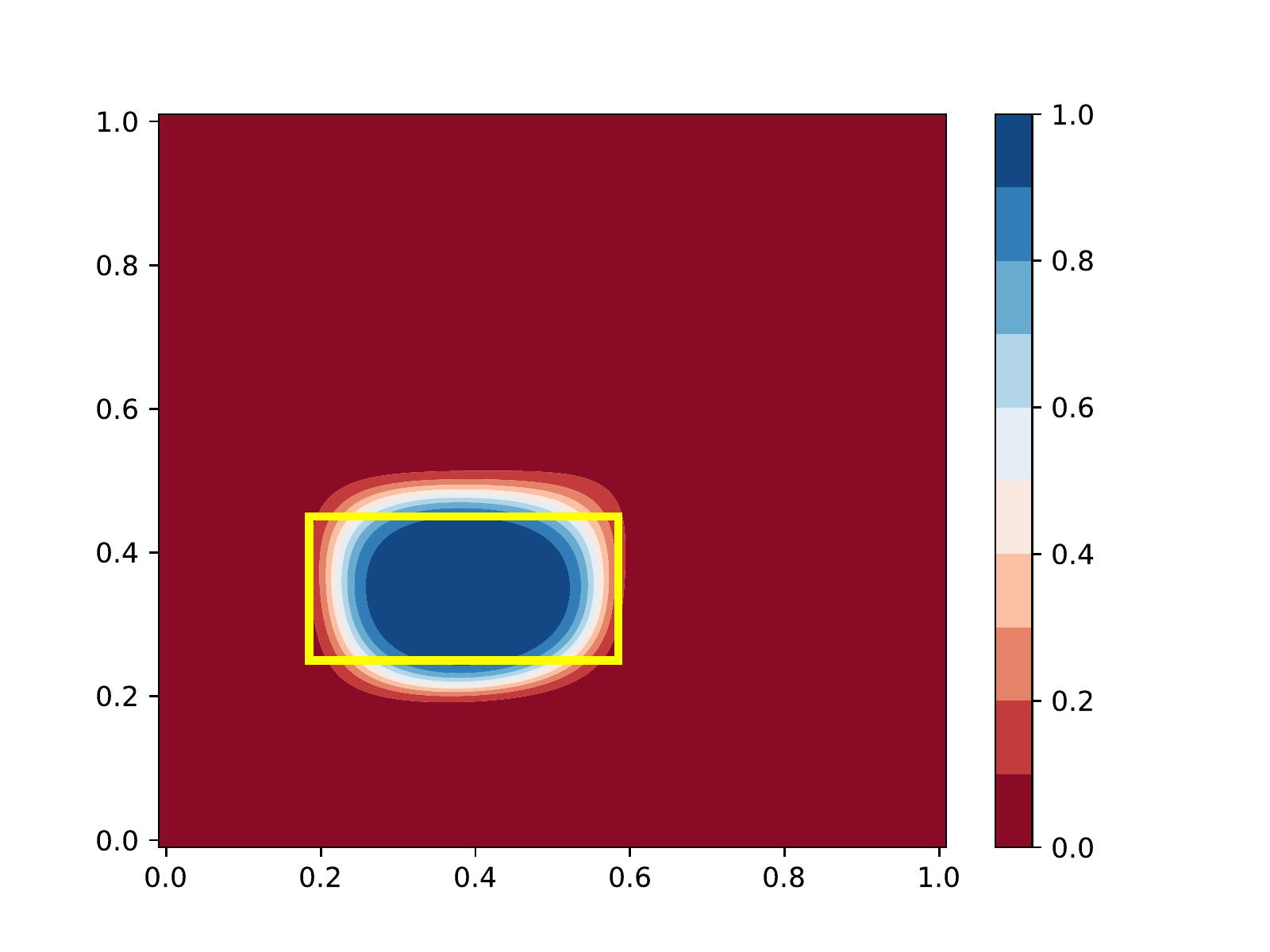}
\end{minipage} &
\begin{minipage}{.165\textwidth} 
    \includegraphics[width=\linewidth,trim={1.1cm 0 1cm 1cm},clip]{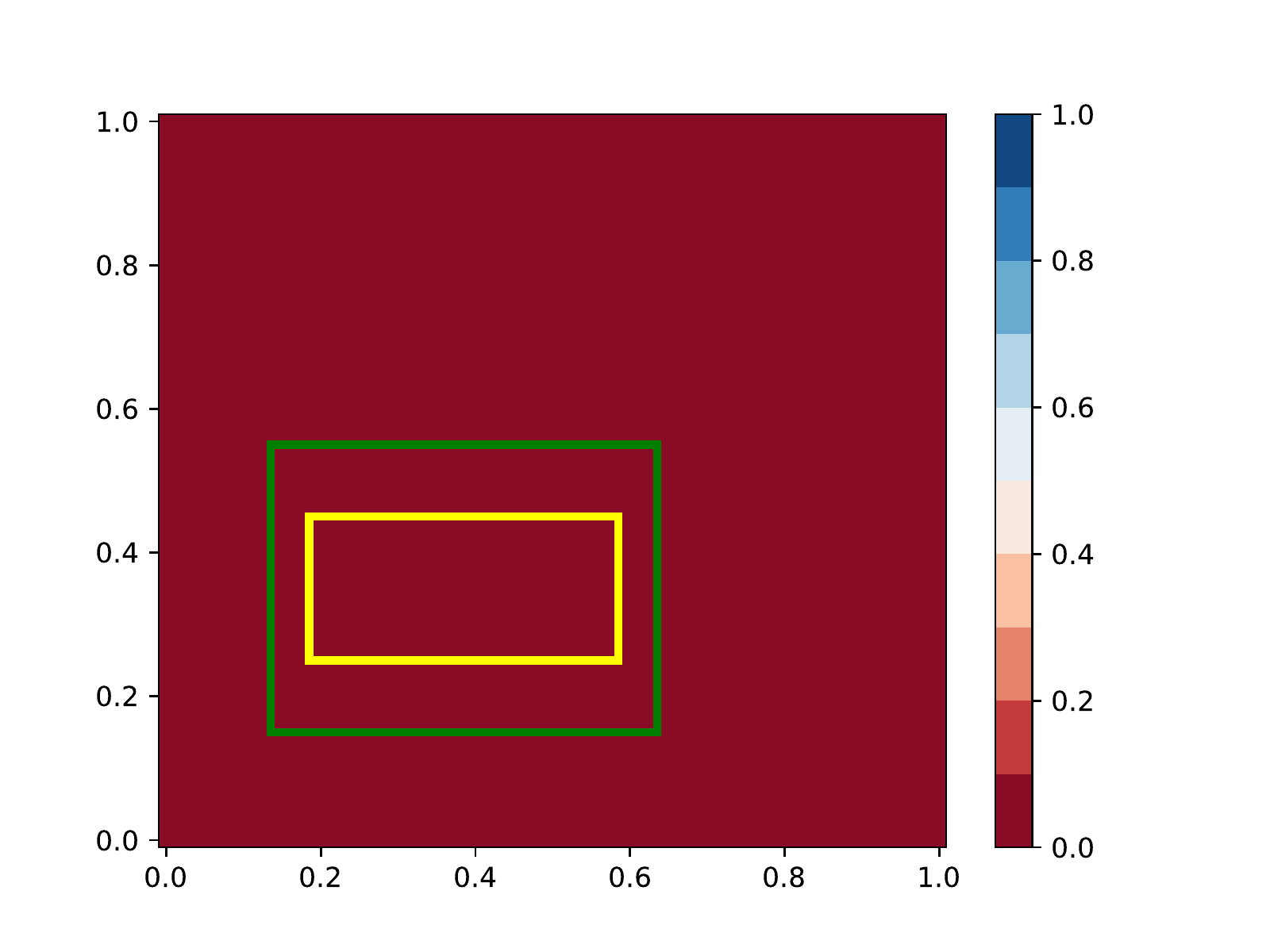}
\end{minipage} \\
\begin{minipage}{.135\textwidth}
    \includegraphics[width=\textwidth,trim={1.1cm 0 3.7cm 1cm},clip]{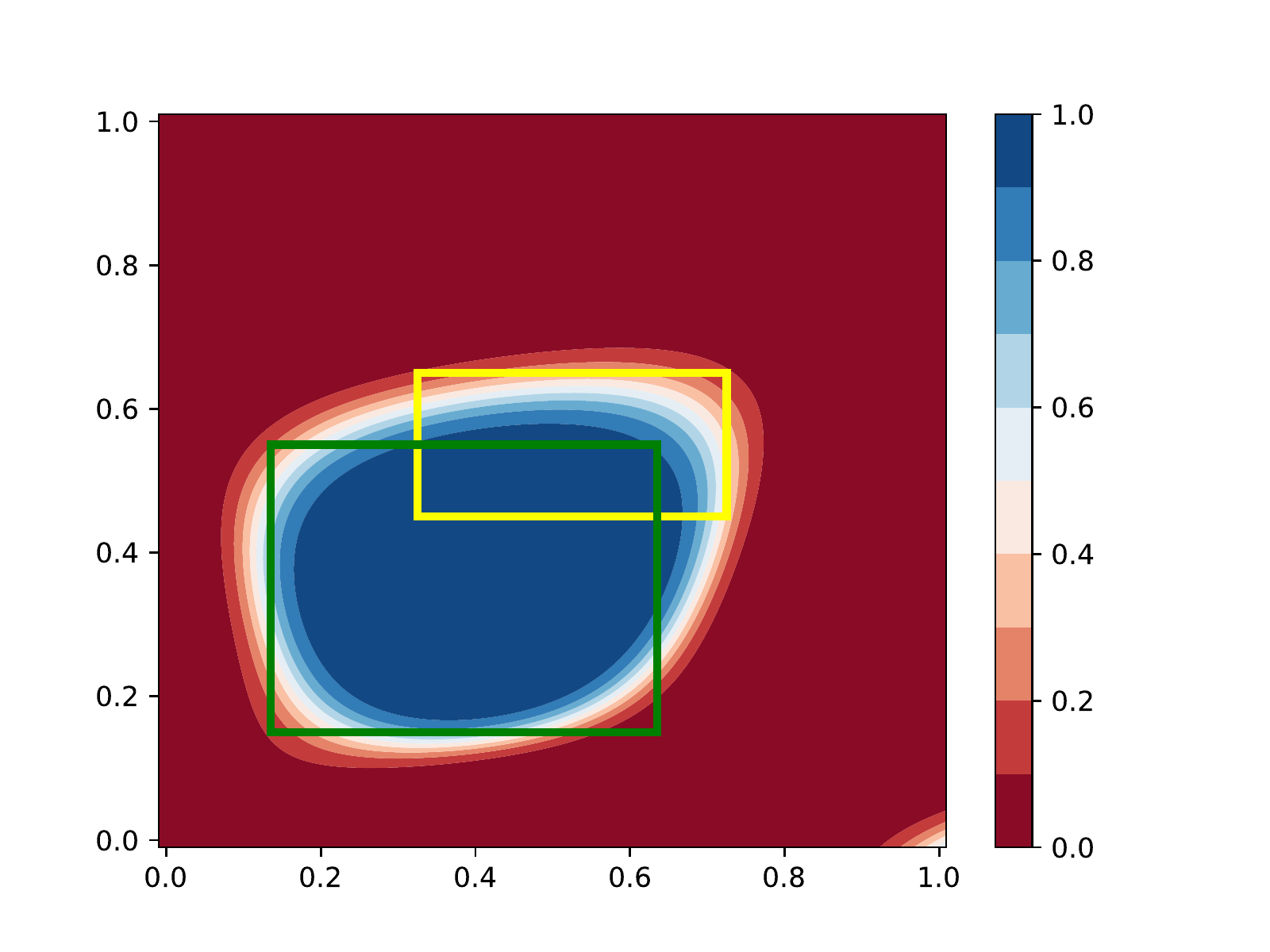}
\end{minipage} &
\begin{minipage}{.135\textwidth}
    \includegraphics[width=\textwidth,trim={1.1cm 0 3.7cm 1cm},clip]{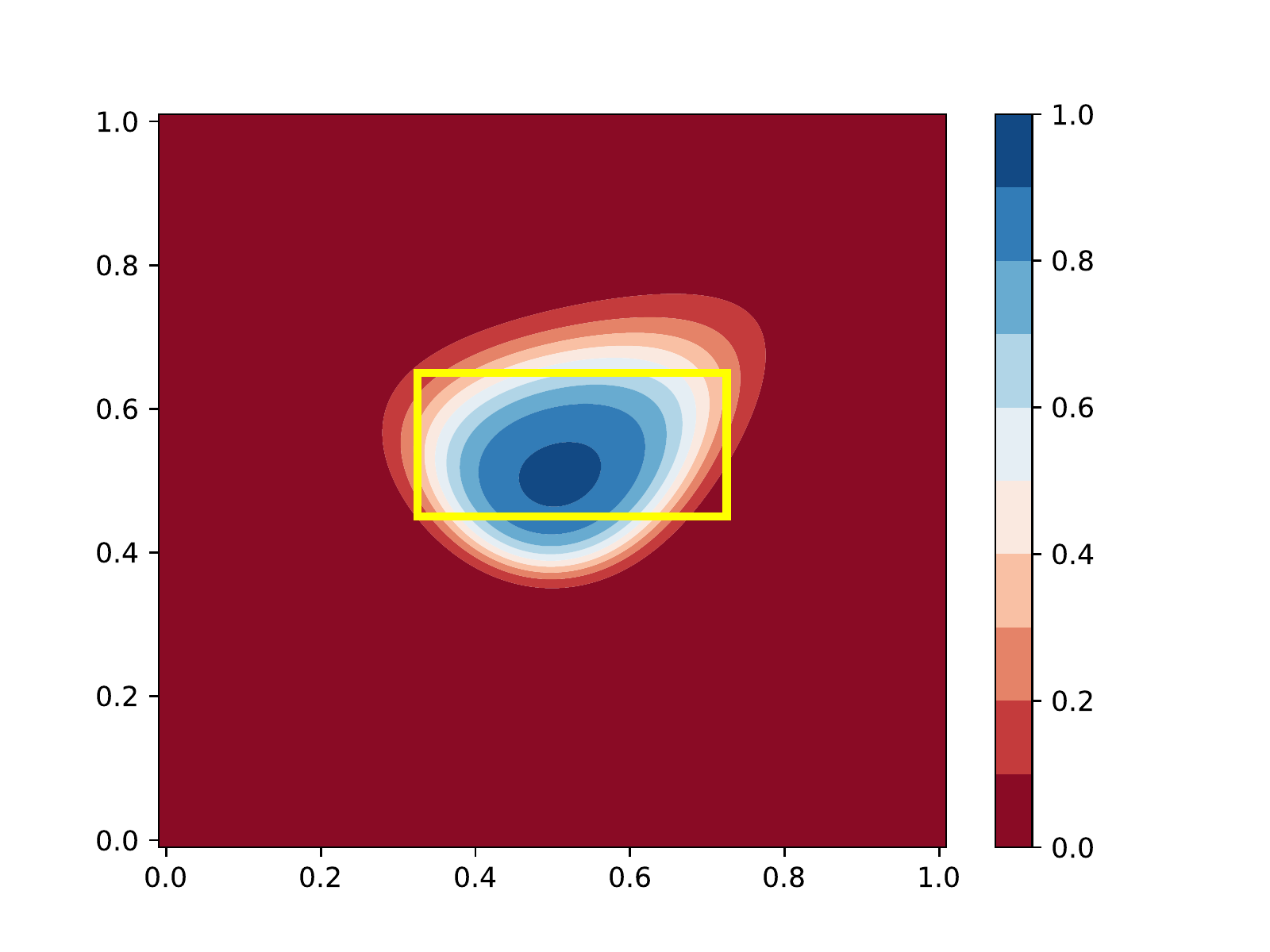}
\end{minipage} &
\begin{minipage}{.135\textwidth}
    \includegraphics[width=\textwidth,trim={1.1cm 0 3.7cm 1cm},clip]{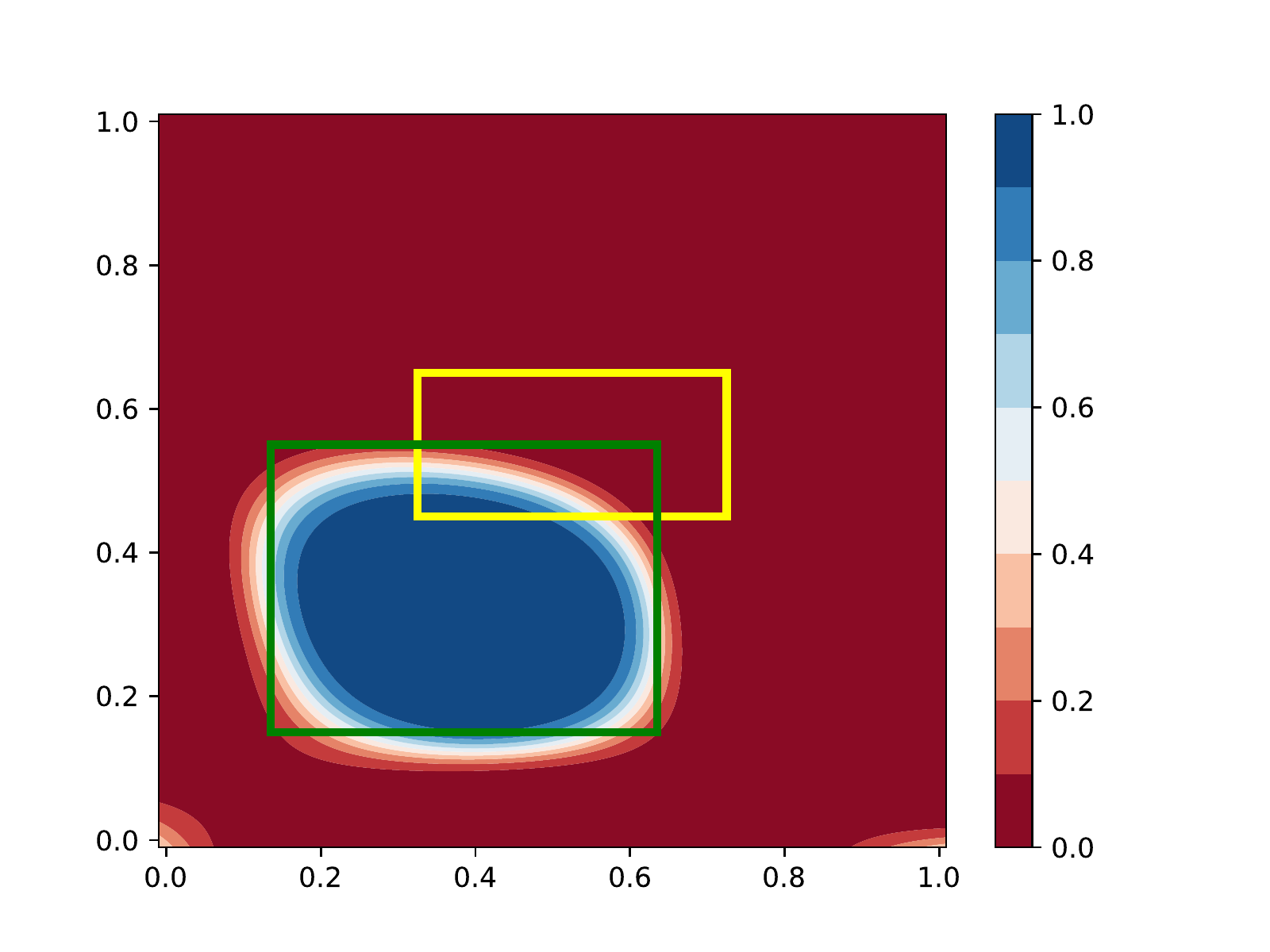}
\end{minipage} &
\begin{minipage}{.135\textwidth}
    \includegraphics[width=\textwidth,trim={1.1cm 0 3.7cm 1cm},clip]{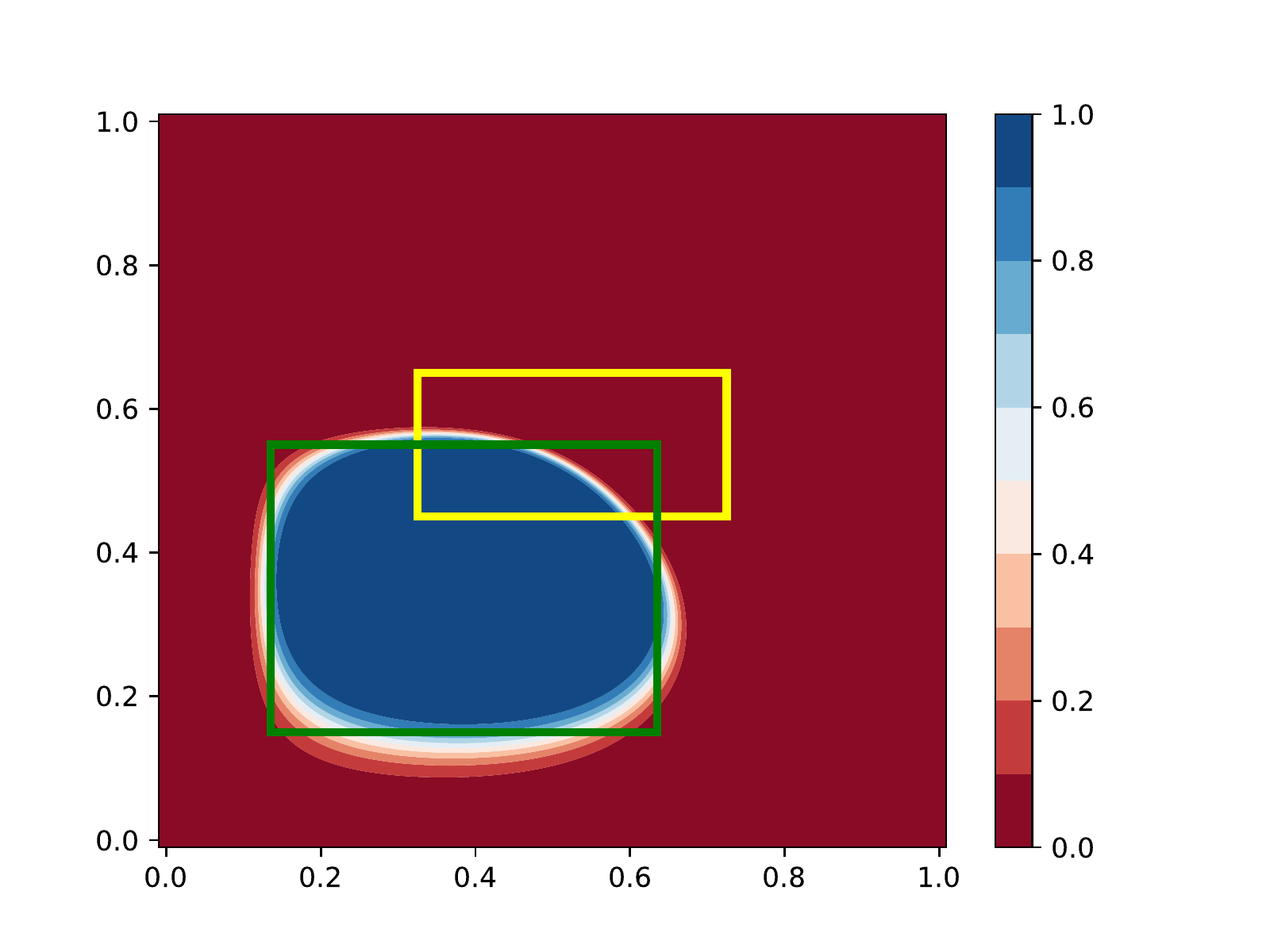}
\end{minipage} &
\begin{minipage}{.135\textwidth}
    \includegraphics[width=\linewidth,trim={1.1cm 0 3.7cm 1cm},clip]{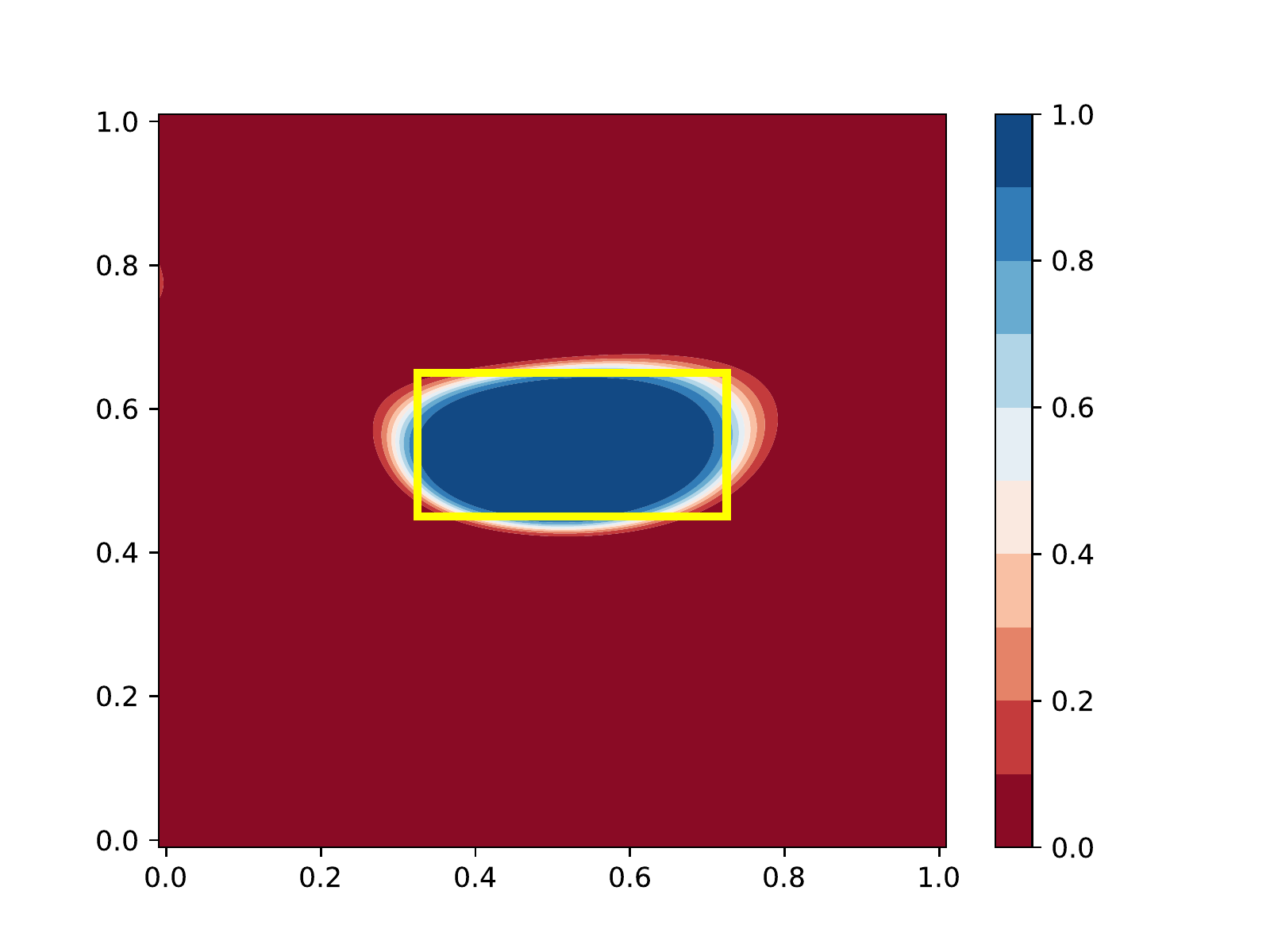}
\end{minipage} &
\begin{minipage}{.17\textwidth}
    \includegraphics[width=\linewidth,trim={1.1cm 0 1cm 1cm},clip]{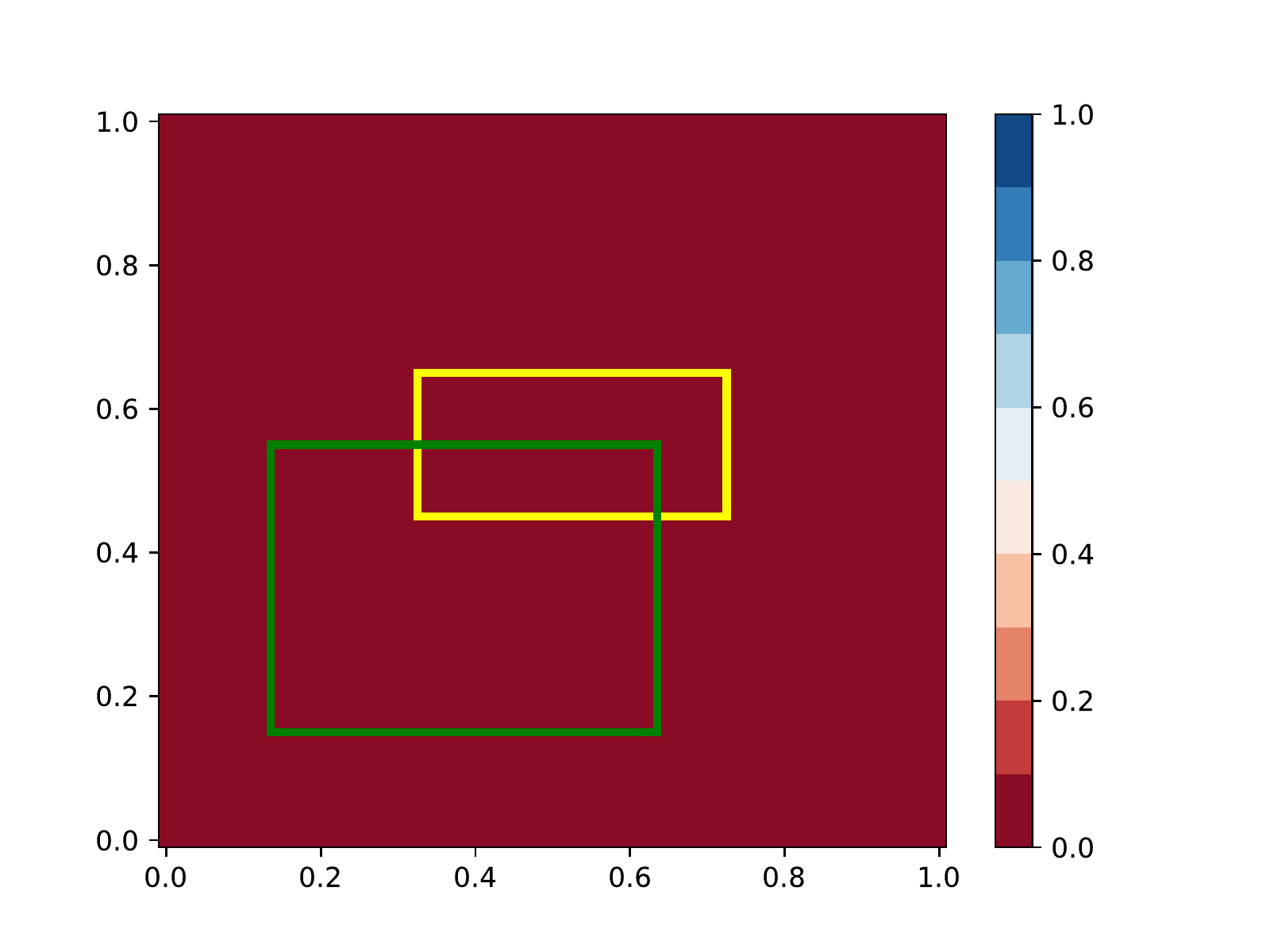}
\end{minipage} \vspace*{-1ex}
\end{tabular}
\caption{
 First three columns: decision boundaries of $f$ for the classes $A$, $A_1$, and $A_2$. Last three columns: decision boundaries of \system{$h$} for the classes $A$, $A_1$, and $A_2$. 
}
\label{fig:dec_bound_figs_gen_before}
\end{figure*}

\loss{} differs from the standard binary cross entropy loss function $\lss$, as highlighted by the following example.

\begin{example}\label{ex:gradients}{\rm 
Assume that $h_A = 0.6$, $h_{A_1} = 0.2$, $h_{A_2} = 0.3$, $y_{A} = y_{A_1} = 1$, and $y_{A_2} = 0$. 
Then,
{
\begin{gather*}
\loss = \loss_A + \loss_{A_1} + \loss_{A_2} = 
-\ln(h_A) -\ln(h_{A_1}) -\ln(h_{A_1}),
\end{gather*}}
and
{
\begin{gather*}
\frac{\partial \loss}{\partial h_A}\!=\!-\frac{1}{h_A} \!\sim\!-1.6, \qquad
\frac{\partial \loss}{\partial h_{A_1}} = -\frac{2}{h_{A_1}} = -10, \qquad
\frac{\partial \loss}{\partial h_{A_2}} = 0,
\end{gather*}}
and \system{$h$} rightly learns to increase both $h_A$ and $h_{A_1}$. 

On the other hand, using the standard binary cross-entropy loss $\lss$ after {\module}, we obtain:
$$
\begin{aligned}
\lss  = - \ln(\module_A) - \ln(\module_{A_1}) - \ln(\ov{\module}_{A_2})  = - \ln(h_{A}) - \ln(h_{A_1}) - \ln(\ov{h}_{A}), 
\end{aligned}
$$
and thus
$$
\begin{aligned}
\frac{\partial \lss}{\partial h_A} = - \frac{1}{h_{A}} +\frac{1}{\ov{h}_{A}} \sim  0.8, \qquad
\qquad \frac{\partial \lss}{\partial h_{A_1}} =- \frac{1}{h_{A_1}}  =  -5, \qquad \qquad \frac{\partial \lss}{\partial h_{A_2}} = 0\,. 
\end{aligned}
$$
Hence, if trained with $\lss$, {\system{$h$}} would learn to decrea\-se~$h_{A}$ while keeping $h_{A_2}$ despite the fact that $y_A = 1$.\hfill$\lhd$ 
}\end{example}

To test the effectiveness of our approach, we consider again
 the yellow ($R_1$) and green ($R_2$) rectangles in Figure~\ref{fig:dec_bound_figs_gen} with
 $A = R_1 \cup R_2$, $A_1 = R_1$, and $A_2 = R_2 \setminus R_1$. 
We implemen\-ted~$f$ and $h$ as feedforward neural networks with one hidden layer with 4 neurons and {\it tanh} nonlinearity. 
We trained $f$ with binary cross-entropy loss, and \system{$h$} using $\loss$. We trained both  networks for 20k epochs using Adam optimization~\cite{kingma2014}, with learning rate $10^{-2}$ ($\beta_1 = 0.9, \beta_2 = 0.999$). The datasets consisted of $5000$ (50/50 train/test split) data points sampled from a uniform distribution over $[0,1]^2$. 
In order to obtain predictions that are compliant with the constraints also for the neural network $f$, we apply an additional post-processing step at inference time,
obtaining $f^+$, whose outputs are defined 
as follows:
\begin{equation}
    \begin{aligned}
    &f^+_A  \,\, =  \max(f_A, f_{A_1}, f_{A_2}), \\
    &f^+_{A_1}  =  f_{A_1},\\
    &f^+_{A_2}  =  \max(f_{A_2}, \min(f_{A}, \ov{f}_{A_1}), \min(f_{A_1},\ov{f}_{A_1})).
    \end{aligned}
\end{equation}

We plot the final decision boundaries of $f^+$ (first three columns) and \system{$h$} (last three columns) for all classes in Figure~\ref{fig:dec_bound_figs_gen}, while the decision boundaries of $f$ (first three columns) and $h$ (last three columns) are plotted in Figure~\ref{fig:dec_bound_figs_gen_before}. 
In these figures, we can see that $f$ struggles, as expected, in learning the decision boundaries for the classes $A$ and $A_2$, and that the application of the constraints as a post-processing step, as it happens in $f^+$, can lead to a decay in performance. On the contrary, we can see that \system{$h$} is able to easily learn the decision boundaries for all the classes through a smart exploitation of the constraints. Indeed, as it can be seen in the last three columns of Figure~\ref{fig:dec_bound_figs_gen_before}, on the ground of the positions of the rectangles $R_1$ and $R_2$, \system{$h$} knows which constraints to exploit:
\begin{itemize}
    \item if  $R_1$ and $R_2$ are arranged as in the first row, then \system{$h$} exploits the constraints $A_1 \to A$ and $A_2 \to A$. Thus, the bottom module $h$ does not learn $A$, which is instead computed from $A_1$ and $A_2$, 
    \item  if  $R_1$ and $R_2$ are arranged as in the second row, then \system{$h$} exploits the constraint $A, \neg A_1 \to A_2$. Thus, the bottom module $h$ does not learn $A_2$, which is instead computed from $A_1$ and $A_2$, and 
    \item if  $R_1$ and $R_2$ are arranged as in the third row, then \system{$h$} exploits the constraints $A, \neg A_1 \to A_2$ and $A_1 \to A$. Thus, the bottom module $h$ does not learn $A_2$, and learns $A$ only partially. Then, $A_2$ is computed from $A$ and $A_1$, while $A$ exploits $A_1$ to make predictions on the points belonging to $R_2$.
\end{itemize}

\subsection{General Case}\label{sec:gen_case}
We now present the general solution. 
 We consider a general {\cmc} problem $(${\problem}$,\Pi)$ and a model $h$ for {\problem}.  We first show how \system{$h$} computes the set of classes associated to every data point (Section~\ref{sec:gen_module}), and why the definition of {\module} requires some care in order to satisfy some desired properties, stated and motivated at the beginning of the same section. 
 We then present the loss function used to train \system{$h$} (Section~\ref{sec:gen_loss}).
 We end stating that \system{$h$} is a generalization of \hmcsys{$h$}, i.e., that \hmcsys{$h$} and \system{$h$} have the same behavior when given an HMC problem (Section~\ref{sec:relation}).

\subsubsection{Constraint Module --- {\module}}\label{sec:gen_module}

The basic idea of \system{$h$} is to
\begin{enumerate}
    \item have an initial set of classes decided by $h$, and
    \item have all the other classes predicted also on the grounds of the constraints in $\Pi$.
\end{enumerate}
In the example in the basic case, $h$ decides $\{A_1\}$:  every data point will have or will 
not have class $A_1$ depending on the value of $h_{A_1}$. 
The decision on the classes $\{A, A_2\}$ takes into account not only $h_A, h_{A_2}$ but also the constraints. 
In particular, \system{$h$} may 
\begin{enumerate}
    \item predict $A$ given the values of $h_{A_1}$ and $h_{A_2}$, or
   \item predict $A_2$ given the values of $h_A$ and $h_{A_1}$.  
\end{enumerate}
The final set of classes $\mathcal{M}$ predicted by \system{$h$} will
\begin{enumerate}
    \item extend the set of classes ${\cal H}$ predicted by $h$ (i.e., ${\cal H} \subseteq {\cal M}$) and be coherent with $\Pi$;
    \item be such that any class in $\mathcal{M} \setminus {\cal H}$ is in the head of a  constraint $r$ in $\Pi$, with the chain of rules used to satisfy  $body(r)$ grounded in $\cal H$;
    \item include only those classes that are either in $\cal H$ or are forced to be in $\cal M$ through the explicit use of chains of rules grounded in $\cal H$;
    \item be unique, i.e., there will be no other set of classes ${\cal M}'$ satisfying the above requirements.
\end{enumerate}
The first  requirement is the obvious one: the constraints in $\Pi$ must be satisfied, and \system{$h$} can only derive more classes in the head of the constraints.
Whereas the second requirement is formalized by the concept of supportedness as defined in~\cite{aptBW88}.
\begin{definition}%
Let $(${\problem}$,\Pi)$ be an {\cmc} problem.  Let $h$ be a model for {\problem}. Let $\mathcal{H}$ be the set of classes predicted by $h$. A set of classes 
${\cal M}$ is {\sl supported} relative to ${\cal H}$ and $\Pi$, if for any class  $A\,{ \in}\, {\cal M}$, $A\,{ \in}\, {\cal H}$, or there exists a constraint $r \in \Pi$ such that $head(r)\,{ = }\, A$, $body^+(r) \subseteq {\cal M}$, and for each  $\neg B\,{ \in}\, body^-(r)$, we have $B\,{ \not\in}\, {\cal M}$.
\end{definition}

The third requirement is a minimality condition.
\begin{definition}%
Let $(${\problem}$,\Pi)$ be an {\cmc} problem.  Let $h$ be a model for {\problem}. Let $\mathcal{H}$ be the set of classes predicted by $h$.
${\cal M}$ is {\sl minimal} relative to ${\cal H}$ and $\Pi$, if there exists no set of classes $\mathcal{M}'$ with $\mathcal{H} \subseteq \mathcal{M}' \subset \mathcal{M}$ that is coherent with $\Pi$.
\end{definition}

The four requirements together ensure that the final predictions made by {\system{$h$}} are coherent with $\Pi$ and that can be uniquely explained on the grounds of the initial predictions made by $h$ and the constraints in $\Pi$.

Intuitively, we could expect that all the above requirements are met if, for each class $A$, we could define
\begin{equation}\label{eq:cm}
    m_A = \max(h_A,  m^{r_1}_A, \ldots, m^{r_p}_A),
\end{equation}
where $r_1, \ldots, r_p$ are all the constraints in $\Pi$ with head $A$ and, for each such constraint $r_i$ of the form~(\ref{eq:rule}), 
$$
 m^{r_i}_A = \min(m_{A_1},\ldots,m_{A_k},\ov{m}_{A_{k+1}},\ldots,\ov{m}_{A_n}). %
$$

However, in general, the above equations may lead to not uniquely defined values and not minimal predictions because of circular definitions.

\begin{example}\label{ex:pcirc} {\rm 
If $\Pi$ is the set of constraints
\begin{equation}\label{eq:circ-p}
A_1 \to A_2; \qquad A_2 \to A_1,
\end{equation}
then, by Equation (\ref{eq:cm}),
$
m_{A_1} \,{=}\, \max(h_{A_1}, m_{A_2})$ and  
$m_{A_2} \,{=}\, \max(h_{A_2}, m_{A_1})$, 
and this allows for infinitely many solutions, unless $h_{A_1}\,{=}\,1$ or $h_{A_2}\,{=}\, 1$. Furthermore, any solution with $m_{A_1} = m_{A_2} > \tv$ when $h_{A_1} < \tv$ and $h_{A_2} < \tv$ leads to a set of predictions that satisfies the constraints but is not minimal.\hfill$\lhd$
}\end{example}

We will show that such problems, due to circularities involving only positive classes (as the one in the example), can be solved if we consider the minimum of the set of tuples of values satisfying the equations (\ref{eq:cm}): in the case of  Example \ref{ex:pcirc}, the minimum is $m_{A_1} = m_{A_2} = \max(h_{A_1}, h_{A_2})$.%
\footnote{Given a set $S$ of $t$-tuples of real numbers, $s_1,\ldots,s_t \in S$ is the {\sl minimum of $S$} if for every $t_1,\ldots,t_t \in S$ and for every $1 \leq i \leq t$, $s_i \leq t_i$. Such a minimum might not exist.} 

More problems arise when we have circularities involving negated classes.

\begin{example} {\rm 
If $\Pi$ is the set of constraints
\begin{equation}\label{eq:circ}
\neg A_1 \to A_2; \qquad \neg A_2 \to A_1,
\end{equation}
then, by Equation (\ref{eq:cm}),
$
m_{A_1} = \max(h_{A_1}, \ov{m}_{A_2})$ and  $m_{A_2} = \max(h_{A_2}, \ov{m}_{A_1}),
$
and,  e.g., for $h_{A_1} \,{ =}\, h_{A_2}  \,{ =}\, 0$, 
it exists no minimum pair of values satisfying the equations.
Furthermore, if we set 
$
m_{A_1}  \,{ =}\, \max(h_{A_1}, \ov{h}_{A_2})$ and
$m_{A_2}   =\max(h_{A_2}, \ov{h}_{A_1})
$,
then if for a data point $x$, we get $h_{A_1} < \tv$ and $h_{A_2} < \tv$, then 
$m_{A_1} > \tv$ and $m_{A_2} > \tv$, i.e.,
even if $h$ predicts that $x$ belongs to neither $A_1$ nor $A_2$, $m$  predicts that $x$ belongs to both  $A_1$ and $A_2$, 
and the set of classes ${\cal M} = \{A_1, A_2\}$ is not supported relative to  ${\cal H} = \emptyset$ and the constraints in (\ref{eq:circ}).\hfill$\lhd$
}\end{example}

To avoid the situation described in the above example,  
whenever we use the negation on a class, we should refer to an already known value for the class itself.
More specifically, first some
classes should be computed 
without the use of negation. Next, some new classes can be computed possibly using the negation of the already computed classes, and this process can be iterated. When this is possible, the set of constraints is stratified~\cite{aptBW88}.

There are several equivalent definitions of stratifiedness. Here, we use the one from \cite{aptBW88}. 
\begin{definition}
A set of constraints $\Pi$ is {\sl stratified} if there is a partition 
$\Pi_1, \Pi_2, \ldots, \Pi_s 
$ of $\Pi$,
with $\Pi_1$ possibly empty,
such that, for every $i\in \{1,\ldots,s\}$,
\begin{enumerate}
    \item for every class $A \in \cup_{r \in \Pi_i} body^+(r)$, all the constraints with head $A$ in $\Pi$ belong to $\cup_{j=1}^i \Pi_j$;
    \item for every $\neg A \in \cup_{r \in \Pi_i} body^-(r)$, all the constraints with head~$A$ in $\Pi$ belong to $\cup_{j=1}^{i-1} \Pi_j$.
\end{enumerate}
$\Pi_1, \Pi_2, \ldots, \Pi_s$ is a {\sl stratification} of $\Pi$, and each $\Pi_i$ is a {\sl stratum}.  
\end{definition}

The check on whether $\Pi$ is stratified and then the computation of a stratification can be done on the dependency graph of $\Pi$~\cite{aptBW88}.

\begin{definition}
Let $(${\problem}$,\Pi)$ be an {\cmc} problem. 
The {\sl dependency graph} $G_\Pi$ of $\Pi$ is the directed graph 
having the set of classes as nodes and with, for each constraint $r \in \Pi$,
\begin{enumerate}
    \item a positive edge from each class in $body^+(r)$ to $head(r)$, %
    \item a negative edge from each class $A$ such that  $\neg A \in body^-(r)$ to $head(r)$.
\end{enumerate}
\end{definition}

The following theorem is from \cite{aptBW88}.

\begin{theorem}
Let $(${\problem}$,\Pi)$ be an {\cmc} problem. $\Pi$ is stratified iff the dependency graph $G_\Pi$ of $\Pi$ contains no cycles with a negative edge.
\end{theorem}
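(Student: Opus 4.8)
The plan is to reduce the statement, which is phrased in terms of a partition of the rule set $\Pi$, to an equivalent statement about a \emph{level function} $\sigma$ assigning to each class a positive integer. The bridge is the following pair of monotonicity conditions, which I claim are equivalent to stratifiedness: there is a map $\sigma$ from classes to $\{1,2,\ldots\}$ with (a) $\sigma(B)\le\sigma(A)$ whenever $G_\Pi$ has a positive edge from $B$ to $A$, and (b) $\sigma(B)<\sigma(A)$ whenever $G_\Pi$ has a negative edge from $B$ to $A$. From such a $\sigma$ one recovers a stratification by placing each rule $r$ into $\Pi_{\sigma(head(r))}$, and conversely a stratification yields such a $\sigma$. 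So I would prove the theorem by establishing the chain: stratified $\Leftrightarrow$ existence of $\sigma$ $\Leftrightarrow$ no cycle with a negative edge.

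For the forward direction (stratified implies no negative cycle), given a stratification $\Pi_1,\ldots,\Pi_s$ I set $\sigma(A)$ to be the largest $i$ with a rule of head $A$ in $\Pi_i$, and $\sigma(A)=0$ if $A$ heads no rule. Properties (a) and (b) then follow directly from the two clauses of the definition: if $B\in body^+(r)$ with $r\in\Pi_i$ and $head(r)=A$, clause~1 puts every rule with head $B$ in $\bigcup_{j=1}^{i}\Pi_j$, so $\sigma(B)\le i\le\sigma(A)$; if $\neg B\in body^-(r)$, clause~2 puts every rule with head $B$ in $\bigcup_{j=1}^{i-1}\Pi_j$, so $\sigma(B)\le i-1<\sigma(A)$. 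Now, in any cycle of $G_\Pi$ every node is the target of an edge and hence heads some rule, so $\sigma$ takes genuine values $\ge 1$ all around the cycle; travelling once around it, $\sigma$ is non-decreasing along positive edges and strictly increasing along negative ones, so a single negative edge would force $\sigma(A_0)<\sigma(A_0)$, a contradiction. Hence no cycle carries a negative edge.

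For the backward direction (no negative cycle implies stratified) I would construct $\sigma$ explicitly and then build the strata as above. I set $\sigma(A):=1+$(the maximum number of negative edges occurring on any directed walk of $G_\Pi$ ending at $A$). Conditions (a) and (b) are then immediate: extending an optimal walk ending at $B$ by a positive edge into $A$ keeps the negative-edge count, and by a negative edge raises it by one, so $\sigma(A)\ge\sigma(B)$ in the first case and $\sigma(A)\ge\sigma(B)+1$ in the second. The crux, and the step I expect to be the main obstacle, is showing this maximum is finite so that $\sigma$ is well defined. I would argue this by contracting the strongly connected components of $G_\Pi$: a negative edge internal to an SCC would close a cycle through that edge, contradicting the hypothesis, so every intra-SCC edge is positive and every negative edge joins two distinct SCCs. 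The condensation is then a DAG, so any walk visits each SCC at most once and therefore crosses at most (number of SCCs $-1$) inter-SCC edges, bounding the number of negative edges it can contain. With $\sigma$ in hand, defining $\Pi_i=\{r\in\Pi:\sigma(head(r))=i\}$ gives a genuine partition, since each rule has a unique head, and clauses~1 and~2 follow from (a) and (b) by exactly the inequality manipulations used in the forward direction. This closes the equivalence and hence proves the theorem.
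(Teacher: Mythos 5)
Your proof is correct. Note, however, that the paper does not prove this statement at all: it states the theorem and attributes it to Apt, Blair, and Walker (1988), so there is no in-paper argument to compare against. Your route --- inserting the intermediate characterization via a level function $\sigma$ with $\sigma(B)\le\sigma(A)$ along positive edges and $\sigma(B)<\sigma(A)$ along negative ones --- is the standard one for this classical result, and both directions check out: the forward direction's inequality chase around a cycle is sound, and in the backward direction you correctly identify and close the only real gap (finiteness of the maximum negative-edge count on a walk) via the condensation into strongly connected components. It is worth observing that your explicit construction $\sigma(A)=1+\max(\text{negative edges on a walk ending at }A)$ is essentially the assignment the paper itself uses later in the procedure CompStrata$(\Pi)$ (step 2 of that procedure assigns to each SCC the number $1$ plus the maximum number of negative edges on a path from a root), so your backward direction simultaneously supplies a correctness argument for that procedure that the paper's subsequent theorem on CompStrata largely takes for granted. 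Two cosmetic points: your forward-direction $\sigma$ takes the value $0$ on classes heading no rule, which conflicts with your announced codomain $\{1,2,\ldots\}$ but harms nothing since only the inequalities are used; and the strata $\Pi_i=\{r:\sigma(head(r))=i\}$ may include empty strata beyond $\Pi_1$, which is resolved by deleting empty strata and renumbering, an operation that clearly preserves both clauses of the definition.
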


As an easy consequence of the above theorem, every set of constraints containing only definite rules (as, e.g., in the HMC case) is stratified. 
An example with a stratified and with a non-stratified set of constraints, both containing non definite rules, is the following.

\begin{example}%
\label{ex:stratum}{\rm 
If ${\classes} = \{A, A_1, A_2, A_3, A_4\}$, and $\Pi$ is the set of constraints in (\ref{eq:constr_a}), (\ref{eq:constr_a2}),  and $A_3 \to A_4$, i.e., 
$\Pi=\{ A_1 \to A$; $A_2 \to A$; $A, \neg A_1 \to A_2$; $A_3 \to A_4\}$, then $\Pi$ is stratified: e.g., take $\Pi_1= \{A_3\to A_4\}$, and $\Pi_2 = \Pi \setminus \Pi_1$.%
\footnote{This set $\Pi$ of constraints is an example of a semi-positive set of rules \cite{aptBW88}. A set $\Pi$ is {\sl semi-positive} if for every head $A$ of a rule in $\Pi$, there is not a rule $r \in \Pi$ with $\neg A \in body(r)$. Every set of definite rules is also semi-positive, and every semi-positve set of rules is stratified.}
Any set containing the constraints in (\ref{eq:circ})  is not stratified.\hfill$\lhd$ 
}\end{example}

For a stratified set of constraints, there can be many stratifications, 
as shown by the following example.

\begin{example}[Ex. \ref{ex:stratum}, cont'd]\label{ex:stratum1}{\rm 
  $\Pi'_1= \{A_3\to A_4\}$, $\Pi'_2 = \{A_1 \to A\}$, and $\Pi'_3 = \{A_2 \to A$; $A, \neg A_1 \to A_2\}$ is another stratification of the set $\Pi$ of constraints in Example \ref{ex:stratum}.\hfill$\lhd$
}\end{example}

However, 
it is well known in the area of logic programming that all the stratifications  
lead to the same result
\cite{aptBW88}. Given this,
comparing the two stratifications in Examples \ref{ex:stratum} and \ref{ex:stratum1}, the latter has two drawbacks:
\begin{enumerate}
    \item the class $A$ is in the head of constraints belonging to different strata, and
    \item it has one more stratum.
\end{enumerate}
Indeed, for each stratum $\Pi_i$, we 
want to
compute a value for all the classes in the head of the constraints in $\Pi_i$ as a single step on GPUs, and thus
\begin{enumerate}
\item
we would like to have all the constraints with the same head $A$ just in one stratum, so that we 
can compute a value for $A$ just once, and 
\item
we would like to have as few strata as possible,  to minimize the number of steps.
\end{enumerate}
Thus, assuming $\Pi$ is stratified,
\begin{enumerate}
    \item we compute the acyclic component graph \cite{cormen2009} of the dependency graph $G_\Pi$ of $\Pi$, i.e., the DAG obtained by shrinking each strongly connected component in $G_\Pi$ into a single vertex (notice that since $\Pi$ is stratified, negative edges are not involved in any cycle in $G_\Pi$),
    \item we assign to the classes in each node of the DAG the number 1 plus the maximum number of negative edges connecting a root to the node, and
    \item we define: 
    \begin{enumerate}
        \item ${\classes}_i$ as the set of classes
        having the number $i$ assigned at the previous step, and
        \item $\Pi_i$ as the set of constraints in $\Pi$ whose head is in ${\classes}_i$.
    \end{enumerate}
\end{enumerate}
We call the above procedure CompStrata($\Pi$).
Given a stratified set $\Pi$ of constraints, CompStrata($\Pi$) computes a partition ${\classes}_1, \ldots, {\classes}_s$ of the set ${\classes}$ of classes and also the corresponding stratification $\Pi_1,\ldots,\Pi_s$ of $\Pi$ with the smallest possible number of strata.

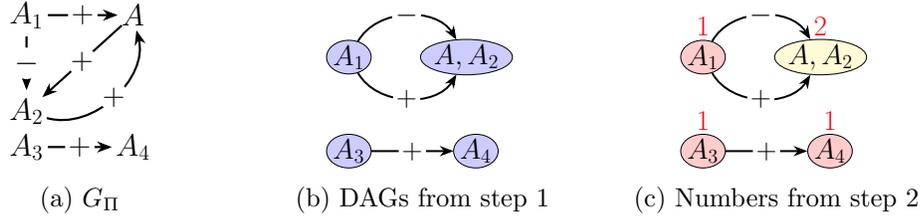
\begin{figure}[]
    \centering
    \begin{subfigure}[b]{0.23\linewidth}
         \centering
        \resizebox{\textwidth}{!}{\hspace{1cm}\usetikzlibrary{arrows.meta}
\begin{tikzpicture}
\begin{scope}[every node/.style={circle,thick,draw=none,inner sep=0, outer sep=0}]
    \node (A2) at (0,-0.4) {\LARGE $A_2$};
    \node (A1) at (0,1.5) {\LARGE $A_1$};
    \node (A) at (2.1,1.5) {\LARGE $A$};
    \node (A3) at (0,-1.1) {\LARGE $A_3$};
    \node (A4) at (2.1,-1.1) {\LARGE $A_4$};
\end{scope}

\begin{scope}[>={Stealth[black]},
              every node/.style={draw=none,fill=white,circle,inner sep=0, outer sep=0},
              every edge/.style={draw=black,very thick}]
    \path [->] (A1) edge node {\LARGE $+$} (A);
    \path [->] (A1) edge node {\LARGE $-$} (A2);
    \path [->] (A2) edge[bend right=60] node {\LARGE $+$} (A); 
    \path [->] (A) edge node {\LARGE $+$} (A2); 
    \path [->] (A3) edge node {\LARGE $+$} (A4); 
\end{scope}
\end{tikzpicture}\hspace{1cm}}
         \caption{$G_\Pi$}
         \label{fig:g_pi}
     \end{subfigure}
     \qquad
     \begin{subfigure}[b]{0.25\linewidth}
         \centering
         \resizebox{\linewidth}{!}{\hspace{1cm}\begin{tikzpicture}
\begin{scope}[every node/.style={ellipse,thin,draw=black, fill=blue!20, inner sep=0, outer sep=0}]
    \node (A1) at (0,0) {\LARGE $A_1$};
    \node (AA2) at (2.5,0) {\LARGE $A, A_2$};
    \node (A3) at (0,-2) {\LARGE $A_3$};
    \node (A4) at (2.7,-2) {\LARGE $A_4$};
\end{scope}

\begin{scope}[>={Stealth[black]},
              every node/.style={draw=none,fill=white,circle,inner sep=0, outer sep=0},
              every edge/.style={draw=black,very thick}]
    \path [->] (A1) edge[bend right=60] node {\LARGE $+$} (AA2);
    \path [->] (A1) edge[bend right=-60] node {\LARGE $-$} (AA2);
    \path [->] (A3) edge node {\LARGE $+$} (A4);
\end{scope}
\end{tikzpicture}\hspace{1cm}} 
         \caption{DAGs from step 1}
         \label{fig:dags1}
     \end{subfigure} 
     \qquad
     \begin{subfigure}[b]{0.25\linewidth}
         \centering
         \resizebox{\linewidth}{!}{\hspace{1cm}\begin{tikzpicture}
\begin{scope}[every node/.style={ellipse,thin,draw=black, fill=red!20, inner sep=0, outer sep=0.2}]
    \node[label={\LARGE \color{red}{1}}] (A1) at (0,0) {\LARGE $A_1$};
    \node[label={\LARGE \color{red}{2}}, fill=yellow!20] (AA2) at (2.5,0) {\LARGE $A, A_2$};
    \node[label={\LARGE \color{red}{1}}] (A3) at (0,-2) {\LARGE $A_3$};
    \node[label={\LARGE \color{red}{1}}] (A4) at (2.7,-2) {\LARGE $A_4$};
\end{scope}

\begin{scope}[>={Stealth[black]},
              every node/.style={draw=none,fill=white,circle,inner sep=0, outer sep=0},
              every edge/.style={draw=black,very thick}]
    \path [->] (A1) edge[bend right=60] node {\LARGE $+$} (AA2);
    \path [->] (A1) edge[bend right=-60] node {\LARGE $-$} (AA2);
    \path [->] (A3) edge node {\LARGE $+$} (A4);
\end{scope}
\end{tikzpicture}\hspace{1cm}}
         \caption{Numbers from step 2}
         \label{fig:dags2}
    \end{subfigure}
    \caption{Given $\Pi$ as in Example \ref{ex:strata-comp}, visual representation of (a) $G_{\Pi}$, (b) the acyclic~com\-po\-nent graph of $G_\Pi$, (c) the number assigned to each class: 1 to $A_1, A_3, A_4$, and 2 to~$A, A_2$.}
    \label{fig:alg_strata}
\end{figure}

\begin{example}[Ex.~\ref{ex:stratum} %
cont'd]\label{ex:strata-comp}{\rm 
If $\Pi=\{ A_1 \to A$; $A_2 \to A$; $A, \neg A_1 \to A_2$; $A_3 \to A_4\}$, then {\rm CompStrata}$(\Pi)$ computes
 ${\classes}_1 \,{=} \,\{A_1, A_3, A_4\}$, ${\classes}_2 \,{=}\, \{A, A_2\}$,
 $\Pi_1= \{A_3\to A_4\}$, and $\Pi_2 = \Pi \setminus \Pi_1$, as shown in Figure~\ref{fig:alg_strata}.\hfill$\lhd$
}\end{example}

We now prove that CompStrata($\Pi$) indeed computes the stratification of $\Pi$ having the smallest number of strata. 

\begin{theorem}
Let $(${\problem}$,\Pi)$ be an {\cmc} problem with stratified $\Pi$. Let $\Pi_1, \ldots, \Pi_s$ be the partition of $\Pi$ computed by {\rm CompStrata}$(\Pi)$. Then,
\begin{enumerate}
    \item $\Pi_1, \ldots, \Pi_s$ is a stratification of $\Pi$, and
    \item there exists no stratification of $\Pi$ with a smaller number of strata.
\end{enumerate}
\end{theorem}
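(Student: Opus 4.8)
The plan is to work entirely with the numbering produced in step~2 of CompStrata. For a class $A$, let $\ell(A)$ be the number assigned to the node of the acyclic component graph containing $A$, i.e. $\ell(A)$ is $1$ plus the maximum number of negative edges on a path from a root of that DAG to $A$'s node. By construction $A \in \classes_{\ell(A)}$, the stratum $\Pi_{\ell(A)}$ collects exactly the rules whose head has level $\ell(A)$, and $s = \max_A \ell(A)$. Two structural facts drive everything: since $\Pi$ is stratified, by the earlier theorem $G_\Pi$ has no cycle through a negative edge, so every negative edge joins two distinct strongly connected components and hence two distinct DAG nodes; and $\ell$ is constant on each strongly connected component. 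First I would prove claim~1 (this is a stratification) directly from the two defining conditions, then claim~2 (minimality) by a longest-negative-path lower bound.

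For claim~1, fix a rule $r$ with $head(r)=A$, so $r\in\Pi_{\ell(A)}$. If $C\in body^+(r)$, then $G_\Pi$ has a positive edge $C\to A$; either $C$ and $A$ lie in the same component, giving $\ell(C)=\ell(A)$, or the edge is a DAG arc and, carrying no negative label, lets any root-to-$C$ path be extended to $A$ without raising the negative count, so $\ell(C)\le\ell(A)$. In both cases all rules with head $C$ sit in $\Pi_{\ell(C)}$ with $\ell(C)\le \ell(A)$, which is condition~1. If instead $\neg C\in body^-(r)$, the negative edge $C\to A$ lies in no cycle, so $C$ and $A$ are in different DAG nodes, and extending any root-to-$C$ path along this edge raises the negative count by one; hence $\ell(C)\le \ell(A)-1$, so all rules with head $C$ lie in $\cup_{j\le \ell(A)-1}\Pi_j$, which is condition~2. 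This establishes that $\Pi_1,\dots,\Pi_s$ is a stratification.

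For claim~2, let $\Pi'_1,\dots,\Pi'_t$ be an arbitrary stratification and, for a class $A$ heading some rule, write $\sigma(A)$ for the largest index $j$ with a rule of head $A$ in $\Pi'_j$. Condition~1 gives a monotonicity lemma: a positive edge $C\to A$ forces $\sigma(C)\le\sigma(A)$, since the rule producing the edge lies in some $\Pi'_q$ with $q\le\sigma(A)$, and condition~1 puts all rules of head $C$ at index $\le q$. Applying this along the positive paths that connect any two heads within one strongly connected component (which contains no negative edge) shows that $\sigma$ is constant on each component, hence well defined on DAG nodes. Condition~2 gives a jump lemma: a negative edge $C\to A$ forces $\sigma(C)<\sigma(A)$. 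Taking a root-to-node path realizing the maximum number $N=s-1$ of negative edges, $\sigma$ is non-decreasing on positive arcs and strictly increasing across each of the $N$ negative arcs, so it takes at least $N+1=s$ distinct values; hence $t\ge s$.

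The routine part is claim~1; the real work is the minimality bound, and within it the delicate step is converting the global, path-defined numbering $\ell$ into a guaranteed chain of strict $\sigma$-increases in an \emph{arbitrary} stratification. Two points need care. First, one must show strongly connected components behave atomically, i.e. that the monotonicity lemma applied around positive cycles (which exist precisely because negation never occurs inside a cycle under stratifiedness) collapses each component to a single $\sigma$-value, legitimizing the DAG-level reasoning. Second, the jump lemma only bites when the source $C$ of a negative edge actually heads some rule, since otherwise condition~2 is vacuous; I would therefore need to choose the maximum-negative-edge path so that each of its negative arcs emanates from a component containing a head, and argue such a path still realizes $N$ negative edges. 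This second point is the subtlest part, and the place where I expect the main obstacle.
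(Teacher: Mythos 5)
Your route is genuinely different from the paper's: the paper proves both claims at once by induction on the number $s$ of strata, whereas you verify the two stratification conditions directly from the level function $\ell$ and then attack minimality separately, by a lower-bound argument on an arbitrary stratification via the function $\sigma$. Your treatment of claim~1 is correct and in fact more explicit than the paper's, which never spells out why positive edges keep the level non-decreasing while negative edges force a strict increase; your observation that stratifiedness forces every negative edge to join two distinct strongly connected components is exactly the right structural fact, and the argument that $\sigma$ is constant on each component (so that DAG-level reasoning is legitimate) is sound.

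The obstacle you flag at the end of claim~2, however, is not merely delicate --- it cannot be overcome for the statement as literally written, and your instinct to worry about it is correct. Take the paper's own running example $\Pi=\{A_1 \to A;\ A_2 \to A;\ A, \neg A_1 \to A_2;\ A_3 \to A_4\}$: the class $A_1$ occurs negatively but heads no rule, so condition~2 of the stratification definition is vacuous for it, and the single-stratum partition $\Pi'_1=\Pi$ is already a stratification (the paper's own footnote observes that this $\Pi$ is semi-positive, and every semi-positive program is stratified in one stratum); yet CompStrata assigns two strata because it counts the negative edge leaving $A_1$. So no choice of maximum-negative-edge path can rescue the jump lemma when a negative arc emanates from a component containing no head, and the bound $t \geq s$ simply fails there. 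The paper's own proof commits the same unacknowledged leap: it asserts that each class in ${\classes}_{n+1}$ heads a rule with a negated atom $\neg B$ with $B \in {\classes}_n$ and concludes that an extra stratum is forced, which requires $B$ itself to head a rule. Your lower-bound argument does go through --- and would then be a complete and cleaner proof than the paper's --- under the additional hypothesis that every class occurring in some $body^-(r)$ also heads at least one rule, or equivalently if CompStrata counted only negative edges whose source component contains a head.
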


\begin{proof}
By induction on the number $s$ of strata.

Assume $s=1$. Then, the constraints in $\Pi$ are
definite rules, $\Pi_1 = \Pi$, and the statement follows. 

Assume $s=n+1 > 1$, and 
let ${\classes}_1, \ldots, {\classes}_{n+1}$ be the partition 
of the set ${\classes}$ of classes computed by CompStrata($\Pi$).
By the induction hypothesis,
$\Pi_1, \ldots, \Pi_n$ is a  stratification of $\cup_{i=1}^n \Pi_i$ with the smallest number of strata. 
By construction, for each class $A$ in ${\classes}_{n+1}$ 
the path from a root to $A$ in $G_{\Pi}$ containing the maximum number of negative edges contains $n$ negative edges. 
Thus, 
for each 
class $A \in {\classes}_{n+1}$,  
there exists a constraint $r \in \Pi_{n+1}$ such that:
\begin{enumerate}
\item $head(r) = A$, 
\item there exists a class $B \in {\classes}_n$ such that $\neg B \in body^-(r)$.
\end{enumerate}
Furthermore, $\Pi = \cup_{i=1}^{n+1} \Pi_i$ is stratified, since $\cup_{i=1}^{n} \Pi_i$ is stratified by the induction hypothesis and, for every constraint $r \in \Pi_{n+1}$,
\begin{enumerate}
    \item each class in $body^+(r)$ belongs to $\cup_{i=1}^{n+1} {\classes}_i$, and
    \item each class in $body^-(r)$ belongs to $\cup_{i=1}^n{\classes}_i$.
\end{enumerate}
 Since $\Pi_1, \ldots, \Pi_n$ is a stratification of $\cup_{i=1}^{n} \Pi_i$ with the smallest number of strata, then  $\Pi_1, \ldots, \Pi_{n+1}$ is a stratification of $\Pi$ with the smallest number of strata. 
\end{proof}

In the sequel, assume $\Pi$ is stratified, and that ${\classes}_1,{\classes}_2,\ldots,{\classes}_s$ and  $\Pi_1,\Pi_2,\ldots,\Pi_s$ are the partition of ${\classes}$ and the  stratification of $\Pi$  computed by CompStrata($\Pi$), respectively.

Consider the $i$th stratum ($1 \leq i \leq s$).

If there is more than one stratum ($s > 1$), then we 
iteratively compute the values for the classes in 
${\classes}_i$ 
${\classes}_1 \cup \ldots \cup {\classes}_{i-1}$.
However, inside each single stratum (even the first one), there can be a chain of rules affecting the values of the classes in the stratum. Consider, e.g., the set of constraints $\Pi = \{A_1, A_2 \to A_3; A_3 \to A\}$. In this case,  
\begin{enumerate}
    \item we do not want to first compute the final value for $A_3$ as $\max(h_{A_3}, \min(h_{A_1},h_{A_2}))$ and then, in a second step, use it to compute the final value for $A$ (which 
    could
    be problematic if we also have, e.g., $A \to A_3$), instead
    \item we want to directly compute the final value for $A$ as $\max(h_A, h_{A_3}, \min(h_{A_1},h_{A_2}))$ as a single operation.
\end{enumerate}
We therefore compute and then use the transitive closure of all the constraints in the same stratum. The additional constraints in the closure are conceptually redundant, but they allow for an improvement in performance, as in \cite{dengeccv2014}.

Define $\Pi_i^*$ to be the set of constraints
\begin{enumerate}
    \item initially equal to $\Pi_i$, and then 
    \item obtained by recursively adding the constraints obtained from a constraint $r$ already in $\Pi_i^*$ by substituting a class $A \in body^+(r) \cap {\classes}_i$ with $body(r')$ for any rule $r'$ with $head(r') = A$ (hence, $r' \in\Pi_i$), and finally
    \item eliminating the constraints $r$ such that
    $head(r)\in body^+(r)$, or for which there exists another constraint $r' \in \Pi_i$ with $head(r)=head(r')$ and $body(r') \subset body(r)$.
\end{enumerate}
$\Pi_i^*$ is guaranteed to be finite, since the set of classes ${\classes}$ is finite, and we do not allow for repetitions in the body of constraints. The constraints being eliminated in the third step are redundant.

Then, for each class $A \in {\classes}_i$, we define the output $\module_A$ of the constraint module $\module$~via
\begin{equation}\label{eq:cms}
    \module_A = \max(h_A,  h^{r_1}_A, \ldots, h^{r_p}_A),
\end{equation}
where
\begin{enumerate}
    \item $r_1, \ldots, r_p$ are all the constraints in $\Pi^*_i$ with head $A$, and
    \item assuming $r \in \Pi^*_i$ has the form (\ref{eq:rule}),
$$
 h^{r}_A = \min(v_{A_1},\ldots,v_{A_k},\ov{\module}_{A_{k+1}},\ldots,\ov{\module}_{A_n}), %
$$

with $v_{A_1} = h_{A_1}$ if $A_1 \in {\classes}_i$, and $v_{A_1} = \module_{A_1}$ if $A_1 \in \cup_{j = 1}^{i-1}{\classes}_j$. 
Analogously for $v_{A_2}, \ldots, v_{A_k}$.
\end{enumerate}
The above definition is well-founded:
\begin{enumerate}
    \item $\Pi^*_1$ does not contain negated classes, and thus the definition of $\module_A$ when $A \in {\classes}_1$ relies only on the outputs of the bottom module $h$, and
    \item  the definition of $\module_A$ when $A \in {\classes}_i$ ($i > 1$) uses only outputs of $h$ or of already defined outputs of $\module$.
\end{enumerate}

\begin{example}[Ex.~\ref{ex:strata-comp},  
cont'd]\label{ex:stratum7}{\rm 
$\Pi_1 \,{=}\, \Pi^*_1 \,{=}\, \{A_3 \to A_4\}$, while $\Pi^*_2  \,{=}\,  \Pi_2 \cup \{A_1, \neg A_1 \to A_2\}$. Thus,
$$
\begin{array}{c}
\module_{A_1} = h_{A_1},\quad
\module_{A_3} = h_{A_3},\quad
\module_{A_4} = \max(h_{A_3},h_{A_4}),\\
\module_{A} = \max(h_A, \module_{A_1}, h_{A_2}), \qquad
\module_{A_2} = \max(h_{A_2},\min(h_{A},\ov{\module}_{A_1}),\min(\module_{A_1},\ov{\module}_{A_1})).
\end{array}
$$
If $h_{A_1} = 0.2$, $h_{A_2} = 0.3$, $h_{A} = 0.6$ (as in Example \ref{ex:gradients}), then
$\module_{A_1} = 0.2$, $\module_{A_2} = 0.6$, $\module_{A} = 0.6$.

The constraint $A_1, \neg A_1 \to A_2 \in \Pi^*_2$, which leads to the inclusion of $\min(\module_{A_1},\ov{\module}_{A_1}) = \min(h_{A_1},\ov{h}_{A_1})$ in the definition of $\module_{A_2}$, is necessary in order to guarantee that $\module$ never violates (\ref{eq:constr_a2}), i.e., that it always holds
\begin{equation}\label{eq:condA2}
\min(\module_{A}, \ov{\module}_{A_1}) \leq \module_{A_2}.
\end{equation}
Indeed assume, $h_A \,{=}\, h_{A_2} \,{=}\, 0.3, h_{A_1} \,{=}\, 0.6$. Then, %
$\module_A = \module_{A_1} \,{=}\, 0.6$, $\module_{A_2} \,{=}\, 0.4$, and (\ref{eq:condA2}) is satisfied. If we would have defined 
$\module_{A_2} \,{=}\, \max(h_{A_2},\min(h_{A},\ov{\module}_{A_1}))$ 
(omitting $\min(\module_{A_1},\ov{\module}_{A_1})$), 
then we would have obtained $\module_{A_2} \,{=}\, 0.3$, and
(\ref{eq:condA2}) would have been no longer satisfied.\hfill$\lhd$
}\end{example}

Given a stratified set $\Pi$ of constraints, {\system{$h$}} is guaranteed to always satisfy $\Pi$.

\begin{theorem}\label{thm:8}
Let $(${\problem}$,\Pi)$ be an {\cmc} problem. Assume   $\Pi$ is stratified.  Let $h$ be a model for~{\problem}. Then, 
\system{$h$} satisfies Equation~(\ref{eq:cm}) and thus commits no constraint violations.
\end{theorem}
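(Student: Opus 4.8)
The plan is to fix an arbitrary data point and prove, by induction on the stratum index $i$ produced by CompStrata($\Pi$), that for every class $A \in {\classes}_i$ the value $\module_A$ defined by Equation~(\ref{eq:cms}) satisfies the fixpoint Equation~(\ref{eq:cm}) under the substitution $m = \module$. Granting this, the ``no constraint violation'' claim is immediate: for any $r \in \Pi$ of the form~(\ref{eq:rule}) with $head(r) = A$, Equation~(\ref{eq:cm}) yields $\module_A = \max(h_A, \ldots, \module^r_A, \ldots) \geq \module^r_A = \min(\module_{A_1}, \ldots, \ov{\module}_{A_n})$, which is exactly condition~(\ref{eq:safer}); hence $\module$ commits no constraint violation.

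The core of the argument is the treatment of a single stratum $i$. Since $A \in {\classes}_i$, all constraints of $\Pi$ with head $A$ lie in $\Pi_i$, whereas Equation~(\ref{eq:cms}) builds $\module_A$ from the transitive closure $\Pi_i^*$ using $v_B = h_B$ for same-stratum positive body atoms and $v_B = \module_B$ for lower-stratum ones. I would establish the equality in two directions. For ``$\geq$'', I must show $\module_A \geq \module^r_A$ for each $r \in \Pi_i$ with head $A$; the key tool is the distributivity of $\min$ over $\max$, so that expanding every same-stratum positive atom $\module_B$ appearing in $\module^r_A$ into its defining expression $\max(h_B, h^{r'_1}_B, \ldots)$ rewrites $\module^r_A$ as a $\max$ of $\min$-terms, each of which is exactly the $h^{\cdot}_A$ contribution of a constraint that the unfolding step of $\Pi_i^*$ produces; as $\module_A$ is the $\max$ over all such contributions, it dominates $\module^r_A$. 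For ``$\leq$'', I must show $h^{r_j}_A \le$ the right-hand side of~(\ref{eq:cm}) for every $r_j \in \Pi_i^*$ with head $A$, which I would prove by induction on the unfolding depth of $r_j$: each inlining step replaces a body atom $B$ by some $body(r')$ whose min-value $h^{r'}_B$ satisfies $h^{r'}_B \le \module_B$, so monotonicity of $\min$ gives $h^{r_j}_A \le \module^r_A$ for the original constraint $r$ that $r_j$ unfolds.

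The base case $i = 1$ is the same argument with $\Pi_1^*$ free of negated atoms, so only the positive part is relevant; in the inductive step the hypothesis fixes $\module_B$, and hence $\ov{\module}_B$, for every negated atom $B \in \cup_{j < i} {\classes}_j$, so these terms coincide between~(\ref{eq:cms}) and~(\ref{eq:cm}). Two bookkeeping checks finish the proof: the elimination step of $\Pi_i^*$ only removes self-loops (with $head(r) \in body^+(r)$) and constraints subsumed by some $r'$ with $body(r') \subset body(r)$, neither of which can change the value of the $\max$ in~(\ref{eq:cms}); and the stated finiteness of $\Pi_i^*$ guarantees termination of the unfolding induction. I expect the main obstacle to be the positive cycles that may occur within a single stratum: one has to verify that the self-loop elimination, combined with $\min$/$\max$ distributivity, makes the $v_B = h_B$ substitution consistent with the circular fixpoint~(\ref{eq:cm}), precisely the phenomenon forcing the extra term $\min(\module_{A_1}, \ov{\module}_{A_1})$ in Example~\ref{ex:stratum7}.
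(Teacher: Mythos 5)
Your proposal is correct and follows essentially the same route as the paper's proof: both proceed stratum by stratum, substitute lower-stratum values of $\module$ and recursively expand same-stratum positive atoms, and use the distributivity of $\min$ over $\max$ to identify the resulting $\min$-terms with exactly the constraints of $\Pi_i^*$ having head $A$. Your split into the two inequalities (and the explicit remarks on the elimination step and termination of the unfolding) is just a more detailed presentation of the single equality the paper establishes directly.
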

\begin{proof}
Let ${\classes}_1, \ldots, {\classes}_s$ be the partition of ${\classes}$ computed by {\rm CompStrata}$(\Pi)$.
We recall that, for a class $A$, (\ref{eq:cm}) is
$$
m_A = \max(h_A,  m^{r_1}_A, \ldots, m^{r_p}_A),
$$
where $r_1, \ldots, r_p$ are all the constraints in $\Pi$ with head $A$ and, for each such constraint $r_j$ of form
(\ref{eq:rule}), 
$$
 m^{r_j}_A = \min(m_{A_1},\ldots,m_{A_k},\ov{m}_{A_{k+1}},\ldots,\ov{m}_{A_n}). %
$$

Consider a generic class $A \in {\classes}_i$. We show that the definition of $\module_A$ is equal to the expression 
resulting from the substitution of
each $m_{A_l}$ in $m^{r_j}_A$ ($1 \leq j \leq p$) with 
\begin{enumerate}
    \item %
    $\module_{A_l}$ 
    if $A_l\! \in\! \cup_{j=1}^{i-1}{\classes}_j$,
    \item %
    the right-hand side of Equation (\ref{eq:cms}) if $A_l \in {\classes}_i$. 
\end{enumerate}
Consider the result of such a substitution in $m^{r_j}_A$. Applying the distributivity of the minimum operation over the maximum operation, we get
$$
m^{r_j}_A = \max_{r \in \Pi^*_i(r_j)} (\min(v_{body(r)})), 
$$
where 
\begin{enumerate}
    \item $\Pi^*_i(r_j)$ is the set of constraints initially equal to $\{r_j\}$ and then obtained by recursively adding the constraints obtained from a constraint $r \in \Pi^*_i(r_j)$  by substituting a class $B \in body^+(r) \cap {\classes}_i$ with $body(r')$ for any constraint $r'$ with $head(r')=B$, and
    \item $v_{body(r)}$ is the set
    $$
    \begin{array}{cc}
    \{\module_B : B \in body^+(r), B \in \cup_{j=1}^{i-1}{\classes}_j\} \cup \\
        \{h_B : B \in body^+(r), B \in {\classes}_i\} \cup\\
                \{\ov{\module}_B : \neg B \in body^-(r)\}.
                \end{array}
                $$
\end{enumerate}

Since the set of constraints in $\Pi^*_i$ with head $A$ is equal to $\cup_{j=1}^p \Pi^*_i(r_j)$, the statement follows.
\end{proof}

Given the values for the classes in $\cup_{j=1}^{i-1} {\classes}_j$, the 
values of $\system{h}$ for the classes in ${\classes}_i$
correspond to the minimum of the set of tuples satisfying (\ref{eq:cm}).

\begin{theorem}\label{thm:smallest}
Let $(${\problem}$,\Pi)$  be an {\cmc} problem. Assume   $\Pi$ is stratified.  Let $h$ be a model for~{\problem}.
Let ${\classes}_1, \ldots, {\classes}_s$ be the partition of ${\classes}$ computed by {\rm CompStrata}$(\Pi)$. For
$1 \leq i \leq s$, let $m$ be a model for {\problem} satisfying Equation~(\ref{eq:cm}) and such that for every class $B \in \cup_{j=1}^{i-1} {\classes}_j$, $m_B =$ \system{$h$}$_B$. For every class $A \in {\classes}_i$, $m_A \geq$ \system{$h$}$_A$. 
\end{theorem}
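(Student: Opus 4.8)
The plan is to fix a single stratum $i$ and show $m_A \ge \module_A$ for every $A \in {\classes}_i$, where $\module_A$ (the output of \system{$h$}) is given by Equation~(\ref{eq:cms}) and the earlier strata are frozen: by hypothesis $m_B = \module_B$ for all $B \in \cup_{j<i}{\classes}_j$. First I would record three consequences of the setup. By the second stratification condition, every negated class occurring in a rule of $\Pi_i$ is defined in a strictly earlier stratum; since the construction of $\Pi^*_i$ only substitutes bodies of rules of $\Pi_i$ in place of positive same-stratum classes, the same holds for every rule of $\Pi^*_i$, so for each such negated class $B$ we have $\ov m_B = \ov\module_B$. Likewise $m_B = \module_B$ for every positive body class $B \in \cup_{j<i}{\classes}_j$. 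Finally, applying Equation~(\ref{eq:cm}) to any $B \in {\classes}_i$ gives $m_B \ge h_B$.

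By Equation~(\ref{eq:cms}) it then suffices to prove $m_A \ge h^r_A$ for each constraint $r \in \Pi^*_i$ with $head(r)=A$, since together with $m_A \ge h_A$ (again from~(\ref{eq:cm})) this yields $m_A \ge \max(h_A, h^{r_1}_A,\ldots,h^{r_p}_A) = \module_A$. To organise this I would introduce, for any $r \in \Pi^*_i$ of the form~(\ref{eq:rule}) with head $A$, the quantity $m^r_A = \min(m_{A_1},\ldots,m_{A_k},\ov m_{A_{k+1}},\ldots,\ov m_{A_n})$, i.e.\ the body of $r$ evaluated entirely with $m$-values, and then establish two facts. The easy one is $m^r_A \ge h^r_A$: each argument of the minimum defining $h^r_A$ is dominated by the corresponding argument of $m^r_A$, because on same-stratum positive classes $m_B \ge h_B$ while on earlier positive classes and on negated classes $m$ and $\module$ coincide, so monotonicity of $\min$ gives the inequality. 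The crux is the second fact, $m_A \ge m^r_A$ for every $r \in \Pi^*_i$.

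I would prove $m_A \ge m^r_A$ by induction on the number of unfolding substitutions used to generate $r$ (a well-founded measure, since ${\classes}$ is finite and bodies carry no repetitions, so $\Pi^*_i$ is finite). For the base case $r \in \Pi_i$, the inequality is precisely Equation~(\ref{eq:cm}). For the step, $r$ is obtained from some already-generated $r''$ by replacing a class $B \in body^+(r'') \cap {\classes}_i$ with $body(r')$ for an original rule $r' \in \Pi_i$ with $head(r')=B$. Writing $m^{r''}_A = \min(S, m_B)$, where $S$ gathers the $m$-valued contributions of the remaining literals of $r''$, the induction hypothesis gives $m_A \ge \min(S, m_B)$. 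Applying Equation~(\ref{eq:cm}) to $B$ gives $m_B \ge m^{r'}_B$, and since $m^{r'}_B$ is exactly the $m$-valued minimum over $body(r')$, monotonicity of $\min$ yields $\min(S, m_B) \ge \min(S, m^{r'}_B) = m^r_A$. Chaining, $m_A \ge m^r_A$, closing the induction.

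Combining the two facts gives $m_A \ge m^r_A \ge h^r_A$ for every $r \in \Pi^*_i$ with head $A$, and with $m_A \ge h_A$ this is exactly $m_A \ge \module_A$. I expect the main obstacle to be the bookkeeping in the inductive step: checking that a substitution never introduces a negated literal or a same-stratum positive literal outside the scope already covered by the stratification (guaranteed because substitution targets only positive same-stratum classes and pastes in bodies of rules of $\Pi_i$), and that the rules discarded in the third step of the construction of $\Pi^*_i$ --- those with $head(r) \in body^+(r)$ or a dominated body --- are harmless: they are simply absent from the maximum in~(\ref{eq:cms}), so restricting to the surviving rules can only lower $\module_A$, which is the direction we need. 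The positive within-stratum cycles of Example~\ref{ex:pcirc} are then handled automatically, as the induction bottoms out at original rules evaluated with $m$-values that already dominate the raw $h$-values appearing in~(\ref{eq:cms}).
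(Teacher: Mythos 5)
Your proof is correct and follows essentially the same route as the paper's: the paper forms the inequality set $I(\Pi_i)$ from Equation~(\ref{eq:cm}) together with $h_A \le m_A$, closes it under substitution to obtain $I^*(\Pi_i)$, and observes that the inequalities (\ref{eq:cmva}) defining $\module_A$ over $\Pi^*_i$ are entailed by $I(\Pi_i)$ --- which is exactly your two-step chain $m_A \ge m^r_A \ge h^r_A$, with your induction on unfolding substitutions playing the role of the paper's closure argument. Your write-up is, if anything, more explicit than the paper's about why the negated and earlier-stratum literals can be replaced by $\module$-values and why discarding redundant rules in the third step of the $\Pi^*_i$ construction is harmless.
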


\begin{proof}
We recall that, for each class $A$, \system{$h$}$_A = \module_A$, and we use $\module_A$, since shorter.

Consider the partition ${\classes}_1, \ldots, {\classes}_s$ of the set of classes ${\classes}$ and the corresponding stratification $\Pi_1, \ldots, \Pi_s$ of $\Pi$.
We prove that the outputs of \module{} for the classes in ${\classes}_i$ are the smallest values satisfying (\ref{eq:cm}), given the values $\module_{B}$ for $B \in \cup_{j=1}^{i-1} {\classes}_j$.

If a model satisfies (\ref{eq:cm}), then it also satisfies the inequalities (\ref{eq:safer}) associated with  each constraint (\ref{eq:rule}) in $\Pi_i$.
Thus, we first prove that for any model $m$ satisfying (\ref{eq:safer}) for each constraint (\ref{eq:rule}) in $\Pi_i$, given the values $m_{B}$ for $B \in \cup_{j=1}^{i-1} {\classes}_j$, $m_A \geq \module_A$. This allows us to conclude that any model $m$ satisfying (\ref{eq:cm}) has values bigger than or equal to those of \module.

Let $I(\Pi_i)$ be the set of inequalities (\ref{eq:safer}), one for each constraint of the form (\ref{eq:rule}) in $\Pi_i$ union, for each class $A \in {\classes}_i$,
\begin{equation}\label{eq:mh}
  h_A \leq m_A.
\end{equation}
We  represent $I(\Pi_i)$ as the set of pairs $\langle \mathcal{S},m_{A}\rangle$, where $\cal{S}$ is the set  $\{m_{A_1},\ldots,\ov{m}_{A_n}\}$ (resp., $\{h_A\}$) in the case of (\ref{eq:safer}) (resp., (\ref{eq:mh})).

Define $I^*(\Pi_i)$ to be the set of inequalities obtained from $I(\Pi_i)$ by recursively adding the pairs $\langle \mathcal{S} \cup \mathcal{S}' \setminus A, m_{B} \rangle$
such that
$\langle \mathcal{S},m_A\rangle \in I^*(\Pi_i)$, $\langle \mathcal{S}',m_{B}\rangle \in I^*(\Pi_i)$ and $A \in \mathcal{S}'$. $I^*(\Pi_i)$ is finite, and each inequality in 
$I^*(\Pi_i)$ is entailed by the inequalities in $I(\Pi_i)$. Thus, $I^*(\Pi_i)$ and $I(\Pi_i)$ have the same set of solutions.
For each constraint $r$ in $\Pi^*_i$ of the form (\ref{eq:rule})  with head $A \in {\classes}_i$, the inequality
\begin{equation}\label{eq:cmva}
    \langle \{v_{A_1},\ldots,v_{A_k},\ov{m}_{A_{k+1}},\ldots,\ov{m}_{A_n}\},A\rangle
\end{equation}
belongs to $I^*(\Pi_i)$, where
$v_{A_1} = h_{A_1}$ if $A_1 \in {\classes}_i$, and $v_{A_1} = m_{A_1}$ if $A_1 \in \cup_{j = 1}^{i-1} {\classes}_j$. 
Analogously for $v_{A_2}, \ldots, v_{A_k}$.

By definition, $\module_A$ is the smallest value satisfying the inequalities (\ref{eq:cmva}) in $I^*(\Pi_i)$. Thus, for any model $m$ satisfying $I^*(\Pi_i)$, $m_A \geq \module_A$.
\end{proof}

We can now state that \system{$h$} has the desired properties mentioned at the beginning of the section.

\begin{theorem}\label{thm:properties}
Let $(${\problem}$,\Pi)$  be an {\cmc} problem. Assume   $\Pi$ is stratified.  Let $h$ be a model for {\problem}. 
Let $\mathcal{H}$ be the set of classes predicted by $h$.
Let $\mathcal{M}$ be the set of classes predicted by \system{$h$}.
Then, $\mathcal{M}$
\begin{enumerate}
    \item extends $\mathcal{H}$ and is coherent with $\Pi$, 
    \item is supported relative to $\mathcal{H}$ and $\Pi$, 
    \item is minimal relative to $\mathcal{H}$ and $\Pi$, and 
    \item is the unique set satisfying the previous properties.
\end{enumerate}
\end{theorem}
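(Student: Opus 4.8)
The plan is to take the four properties in order: the first two fall out directly from the definition of $\module$, while the last two require an induction over the strata $\classes_1,\ldots,\classes_s$ produced by CompStrata$(\Pi)$. For property~1, Eq.~(\ref{eq:cms}) gives $\module_A = \max(h_A,\ldots) \ge h_A$ for every class $A$, so $h_A > \tv$ implies $\module_A > \tv$ and hence $\mathcal{H} \subseteq \mathcal{M}$; coherency then follows since Theorem~\ref{thm:8} shows \system{$h$} commits no constraint violations and the earlier theorem relating constraint violations to coherency applies. For property~2 I would read a support witness off Eq.~(\ref{eq:cm}), which \system{$h$} satisfies by Theorem~\ref{thm:8}: if $A \in \mathcal{M}$, then $\module_A = \max(h_A, \module^{r_1}_A,\ldots,\module^{r_p}_A) > \tv$, where $r_1,\ldots,r_p$ are the constraints of $\Pi$ with head $A$. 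If the maximum is attained at $h_A$, then $A \in \mathcal{H}$; otherwise it equals some $\module^{r_j}_A = \min(\module_{A_1},\ldots,\module_{A_k},\ov{\module}_{A_{k+1}},\ldots,\ov{\module}_{A_n}) > \tv$, whence each positive body class satisfies $\module_{A_l} > \tv$ (so $body^+(r_j)\subseteq\mathcal{M}$) and each negated class satisfies $\ov{\module}_{A_l} > \tv$, i.e.\ $\module_{A_l} < \tv$ under $\ov{v}=1-v$ with $\tv=0.5$ (so it lies outside $\mathcal{M}$); thus $r_j$ witnesses supportedness.

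The crux is property~3 (minimality), and I would argue it directly rather than via the value-level statement of Theorem~\ref{thm:smallest}, which concerns real-valued minima of equation-solutions and does not transfer verbatim to crisp sets. The claim I would prove is that every coherent $\mathcal{M}'$ with $\mathcal{H} \subseteq \mathcal{M}' \subseteq \mathcal{M}$ satisfies $\mathcal{M} \subseteq \mathcal{M}'$, which rules out any coherent set strictly between $\mathcal{H}$ and $\mathcal{M}$. This is an induction on the strata with hypothesis that $\mathcal{M}$ and $\mathcal{M}'$ agree on $\cup_{j<i}\classes_j$ (agreement, not just inclusion, is available because $\mathcal{M}' \subseteq \mathcal{M}$). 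For $A \in \mathcal{M} \cap \classes_i$ with $\module_A > \tv$ and $\module_A \ne h_A$, the value is witnessed by a rule $r \in \Pi^*_i$ with $h^r_A > \tv$; the key structural fact is that, by the definition of $h^r_A$ in Eq.~(\ref{eq:cms}), the positive body members of $r$ lying in the same stratum $\classes_i$ are scored by the raw outputs $h_{A_l}$, so $h^r_A > \tv$ pins each of them to $h_{A_l} > \tv$, i.e.\ into $\mathcal{H} \subseteq \mathcal{M}'$; the positive body members from lower strata lie in $\mathcal{M}'$ by agreement, and the negated members (which are lower-stratum by stratification, a property preserved by the closure) lie outside $\mathcal{M}'$ by agreement. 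Since unfolding is sound, coherency of $\mathcal{M}'$ with respect to $\Pi$ entails closure under the derived rules of $\Pi^*_i$, so $A \in \mathcal{M}'$. This is exactly where I expect the main obstacle to sit: within-stratum positive recursion (as in the cycle of Example~\ref{ex:pcirc}) would defeat a naive induction, and it is precisely the closure $\Pi^*_i$ together with its use of raw $h$-values on same-stratum positives that reduces each derived class to $\mathcal{H}$ in finitely many grounded steps.

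Finally, for property~4 (uniqueness) I would show that any $\mathcal{M}'$ satisfying~1--3 coincides with $\mathcal{M}$. It suffices to prove $\mathcal{M}' \subseteq \mathcal{M}$: then, since $\mathcal{M}'$ is coherent and extends $\mathcal{H}$, the minimality of $\mathcal{M}$ just established forbids $\mathcal{H} \subseteq \mathcal{M}' \subsetneq \mathcal{M}$ and forces $\mathcal{M}' = \mathcal{M}$. I would establish $\mathcal{M}' \subseteq \mathcal{M}$ by the same stratum-by-stratum induction, now proving agreement on $\classes_i$ in both directions: the inclusion $\mathcal{M}\cap\classes_i \subseteq \mathcal{M}'$ is obtained as in property~3 from coherency of $\mathcal{M}'$, while $\mathcal{M}'\cap\classes_i \subseteq \mathcal{M}$ uses supportedness of $\mathcal{M}'$ to attach to each of its stratum-$i$ members a grounding rule of $\Pi$, and minimality of $\mathcal{M}'$ to exclude unfounded positive loops (deleting a hypothetical set of loop-supported classes lying outside $\mathcal{M}$ would leave a strictly smaller coherent superset of $\mathcal{H}$, contradicting minimality of $\mathcal{M}'$); the surviving grounded supports then drive the corresponding $\module$-values above $\tv$, placing the class in $\mathcal{M}$. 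This mirrors the classical fact that a stratified rule set admits a unique minimal supported model.
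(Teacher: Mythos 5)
Your proof is correct and follows the same overall skeleton as the paper's --- property 1 from $\module_A \ge h_A$ plus Theorem~\ref{thm:8}, and properties 2--4 via a stratum-by-stratum reduction to $\mathcal{H}$ --- but you work at a different level of granularity in two places. For supportedness, the paper argues by contradiction: it assumes a non-supported $A \in \mathcal{M}\setminus\mathcal{H}$ and invokes the fact that $\module_A$ is the \emph{smallest} value satisfying Eq.~(\ref{eq:cm}) to rule out $\module_A > \tv$; you instead read the witnessing rule constructively off the satisfied equation, which is cleaner and avoids leaning on Theorem~\ref{thm:smallest} at all. For minimality and uniqueness, the paper compresses everything into one line (``consequence of the minimality/uniqueness of each $\mathcal{M}_{i+1}$ and $\mathcal{M}=\mathcal{M}_s$''), implicitly deferring to the classical theory of stratified programs and to Theorem~\ref{thm:model}; you instead carry out the double-inclusion induction explicitly at the crisp-set level, and you correctly isolate the load-bearing design choice --- that $\Pi^*_i$ scores same-stratum positive body atoms by the raw $h$-values, so every firing of a derived rule is grounded in $\mathcal{H}$ in one step, which is exactly what defeats within-stratum positive cycles like Example~\ref{ex:pcirc}. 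The one place you should tighten is the uniqueness step: the claim that deleting a set of loop-supported classes from $\mathcal{M}'$ ``leaves a coherent superset of $\mathcal{H}$'' needs a word about why the deletion cannot re-fire rules through negative literals (it cannot, because all negative body members of rules with heads in stratum $i$ or above that mention deleted classes would have to point into stratum $i$, which stratification forbids); as stated the deletion argument is slightly loose, though entirely repairable and in the spirit of the standard unique-supported-model proof the paper itself appeals to.
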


\begin{proof}
We prove each statement of the theorem separately. By hypothesis, $\mathcal{H}$ is the set of classes $A$ such that $h_A > \tv$. $\mathcal{M}$ is the set of classes $A$ such that $\module_A > \tv$.
${\classes}_1, \ldots, {\classes}_s$ and $\Pi_1, \ldots, \Pi_s$ is the partition of ${\classes}$ and $\Pi$ computed by CompStrata($\Pi$), respectively.

\begin{enumerate}
\item {\system{$h$} extends the set of classes associated with $x$ by $h$}. We have to prove that $\mathcal{H} \subseteq \mathcal{M}$.
$A \in \mathcal{H}$ iff  $h_A > \tv$. Since $\module_A \geq h_A$, if 
$A \in \mathcal{H}$, then $A \in \mathcal{M}$.

$\mathcal{M}$ is coherent with $\Pi$.
We have to prove that for each constraint $r$ of the form (\ref{eq:rule}), if $\min(\module_{A_1},\ldots,\module_{A_k},$ $\ov{\module}_{A_{k+1}},\ldots,\ov{\module}_{A_n}) > \tv$, then $\module_A > \tv$, which is an easy consequence of the fact that the outputs of $\module$ satisfy (\ref{eq:cm}) and thus also (\ref{eq:safer}).

\item{$\mathcal{M}$ is supported relative to  $\mathcal{H}$ and $\Pi$}. Assume it is not. Then, consider a class $A \in \mathcal{M} \setminus \mathcal{H}$
such that for each constraint $r \in \Pi$ of the form (\ref{eq:rule}), there exists an index~$j\in \{1, \ldots, n\}$ such that
\begin{enumerate}
    \item both  $\module_{A_j} \leq \tv$ and $j = 1,\ldots,k$, or 
    \item both  $\ov{\module}_{A_j} \leq \tv$ and $j = k+1,\ldots, n$. 
\end{enumerate}
By Theorem \ref{thm:8}, $\module_A$ is the smallest value satisfying Equation~(\ref{eq:cm}) given the values $m_{B}$ for $B \in \cup_{j=1}^{i-1} {\classes}_j$. Thus, $\module_A > \tv$ is not possible, and this contradicts $A \in \mathcal{M}$.

\item{$\mathcal{M}$ is minimal relative to  $\mathcal{H}$ and $\Pi$}. 
Let $\mathcal{M}_0 = \mathcal{H}$. Let $\mathcal{M}_{i+1}$ be the closure of $\mathcal{M}_i$ under $\Pi^*_{i+1}$ ($1 \leq i < s$). The statement is a consequence of the
minimality of each $\mathcal{M}_{i+1}$ and the 
fact that $\mathcal{M} = \mathcal{M}_s$.

\item
$\mathcal{M}$ is the unique set satisfying all the previous properties. 
As above, let $\mathcal{M}_0 = \mathcal{H}$, and let $\mathcal{M}_{i+1}$ be the closure of $\mathcal{M}_i$ under $\Pi^*_{i+1}$ ($1 \leq i < s$). The statement is a consequence of the 
uniqueness of each $\mathcal{M}_{i+1}$ and the 
fact that $\mathcal{M} = \mathcal{M}_s$.
\end{enumerate}
\end{proof}

If we interpret the given set of constraints as a stratified normal logic program, 
we can establish  a relation with
the canonical model semantics of stratified normal logic programs \cite{aptBW88}, which coincides with the stable model semantics
 \cite{gelfond88}.

\begin{definition}
Let $\Pi$ be a finite set of normal rules. Let ${\cal M}$ be a set of classes.
The {\sl reduct} of $\Pi$ relative to ${\cal M}$ is the set $\Pi^{\cal M}$ of definite rules obtained by: 
    \begin{enumerate}
        \item dropping the rules $r$ in $\Pi$ such that for a class $A \in {\cal M}$, $\neg A \in body(r)$, and then
        \item dropping $body^-(r)$ from the remaining rules $r$. 
    \end{enumerate}
${\cal M}$ is a {\sl stable model} of $\Pi$ iff 
${\cal M}$ is the smallest set closed under $\Pi^{\cal M}$.
\end{definition}

\begin{example}{\rm 
Let $\Pi=\{ A_1 \to A$; $A_2 \to A$; $A, \neg A_1 \to A_2$; $A_3 \to A_4\}$.
Then,
\begin{enumerate}
    \item the stable model of $\Pi$ is the empty set,
    \item the stable model of $\Pi \cup \{\to A\}$ is $\{A_2, A\}$, and 
    \item the stable model of $\Pi \cup \{\to A; \to A_4\}$ is $\{A_2, A_4, A\}$.\hfill$\lhd$
\end{enumerate}}
\end{example}

\begin{theorem}\label{thm:model}
Let $(${\problem}$,\Pi)$  be an {\cmc} problem with stratified $\Pi$.  Let $h$ be a model for {\problem}. 
Let $\mathcal{H}$ be the set of classes predicted by $h$.
Let $\mathcal{M}$ be the set of classes predicted by {\rm \system{$h$}}.
Then, $\mathcal{M}$ is the stable model of the set of constraints
$\Pi \cup \{ \to A : A \in {\cal H}\}$.
\end{theorem}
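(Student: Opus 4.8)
The plan is to exhibit the stable model $\mathcal{M}^*$ of $\Pi' := \Pi \cup \{ \to A : A \in \mathcal{H}\}$ and to show that it satisfies the four characterizing properties collected in Theorem~\ref{thm:properties}. Since that theorem asserts that the set $\mathcal{M}$ predicted by \system{$h$} is the \emph{unique} set enjoying those properties, establishing that $\mathcal{M}^*$ enjoys them too forces $\mathcal{M}^* = \mathcal{M}$, which is exactly the claim. Thus no new fixpoint analysis of \module{} is needed: all the work on the $\system{h}$ side has already been done in Theorem~\ref{thm:properties}, and what remains is to read off the corresponding facts on the logic-programming side.

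First I would record three standard facts. Adding the facts $\to A$ (empty body, head $A \in \mathcal{H}$) to the stratified program $\Pi$ introduces no edge into the dependency graph, so $\Pi'$ is again stratified; hence by \cite{aptBW88,gelfond88} it has a \emph{unique} stable model $\mathcal{M}^*$, this $\mathcal{M}^*$ is a \emph{minimal} model of $\Pi'$, and it is \emph{supported}. These are precisely the three ingredients I will translate.

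Next I would carry out the translation into the paper's relative notions. \textbf{Extension:} every model of $\Pi'$ satisfies each fact $\to A$, so $\mathcal{H} \subseteq \mathcal{M}^*$. \textbf{Coherence:} since $\mathcal{M}^*$ is a model of $\Pi' \supseteq \Pi$, for every rule~(\ref{eq:rule}) in $\Pi$, whenever $\{A_1,\dots,A_k\} \subseteq \mathcal{M}^*$ and $\{A_{k+1},\dots,A_n\} \cap \mathcal{M}^* = \emptyset$ we have $A \in \mathcal{M}^*$, which is coherence with respect to $\Pi$. \textbf{Support:} supportedness of the stable model gives, for each $A \in \mathcal{M}^*$, a rule of $\Pi'$ with head $A$ whose body holds in $\mathcal{M}^*$; if that rule is a fact then $A \in \mathcal{H}$, and otherwise it is a rule of $\Pi$ witnessing exactly the condition in the definition of support relative to $\mathcal{H}$ and $\Pi$. \textbf{Minimality:} if some $\mathcal{M}'$ coherent with $\Pi$ satisfied $\mathcal{H} \subseteq \mathcal{M}' \subsetneq \mathcal{M}^*$, then $\mathcal{M}'$ would satisfy every rule of $\Pi$ (by coherence) and every fact of $\Pi'$ (since $\mathcal{H} \subseteq \mathcal{M}'$), so $\mathcal{M}'$ would be a model of $\Pi'$ strictly below $\mathcal{M}^*$, contradicting the minimality of the stable model among models of $\Pi'$. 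Having verified that $\mathcal{M}^*$ extends $\mathcal{H}$ and is coherent, supported, and minimal relative to $\mathcal{H}$ and $\Pi$, I would invoke the uniqueness clause of Theorem~\ref{thm:properties} to conclude $\mathcal{M}^* = \mathcal{M}$.

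I expect the main obstacle to be bookkeeping rather than conceptual depth: carefully matching the reduct-based definitions of supportedness and minimality for $\Pi'$ with the ``relative to $\mathcal{H}$ and $\Pi$'' definitions, and in particular checking that the facts $\{ \to A : A \in \mathcal{H}\}$ are what simultaneously realize the extension property and the ``$A \in \mathcal{H}$'' alternative in the definition of support. One point deserving a line of care is that Theorem~\ref{thm:properties} operates at the thresholded level --- $\mathcal{M} = \{A : \module_A > \tv\}$, and coherence, support, and minimality are all Boolean notions --- so the real-valued boundary behaviour of \module{} (for instance $\module_A = \tv$) never enters this argument; the entire comparison with the stable model is carried out purely in terms of set membership, where it is exact.
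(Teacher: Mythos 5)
Your proposal is correct, but it takes a genuinely different route from the paper. The paper's proof works directly with the canonical-model construction for stratified programs: it defines $\mathcal{M}_0 = \mathcal{H}$ and $\mathcal{M}_{i+1}$ as the least superset of $\mathcal{M}_i$ closed under the reduct $\Pi_{i+1}^{\mathcal{M}_i}$, and identifies this stratum-by-stratum iteration with the computation performed by \module{} (the same sets $\mathcal{M}_i$ already appear in the proof of Theorem~\ref{thm:properties}), concluding via the coincidence of canonical and stable model semantics for stratified programs. You instead never look inside \module{} at all: you exhibit the unique stable model $\mathcal{M}^*$ of $\Pi' = \Pi \cup \{\to A : A \in \mathcal{H}\}$, translate three textbook facts about it (stable models are classical models, are supported, and are minimal among models) into the paper's ``relative to $\mathcal{H}$ and $\Pi$'' notions of extension, coherence, support, and minimality, and then invoke the uniqueness clause of Theorem~\ref{thm:properties} to force $\mathcal{M}^* = \mathcal{M}$. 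Each translation step checks out --- in particular, your observation that a set coherent with $\Pi$ and containing $\mathcal{H}$ is exactly a classical model of $\Pi'$ is what makes the minimality transfer work, and the facts $\to A$ do realize both the extension property and the ``$A \in \mathcal{H}$'' branch of supportedness. What your approach buys is modularity: the theorem becomes a formal corollary of Theorem~\ref{thm:properties} plus standard logic-programming results, with no need to re-match the two fixpoint iterations. What the paper's approach buys is a direct, constructive identification of the predicted set with the canonical model, which is arguably more informative but leaves the stratum-by-stratum correspondence somewhat implicit.
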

\begin{proof}
First, observe that if $\Pi$ is stratified, then also $\Pi \cup \{ \to A : A \in {\cal H}\}$ is stratified. The theorem is an easy consequence of the fact that in the case of a stratified set of constraints, the stable and the canonical model semantics coincide (see, e.g., \cite{gelfond88}), and the latter, for $\Pi \cup \{ \to A : A \in {\cal H}\}$, is defined as follows.

Consider the partition ${\classes}_1, \ldots, {\classes}_s$ of the set of classes ${\classes}$ and the corresponding stratification $\Pi_1, \ldots, \Pi_s$ resulting from CompStrata($\Pi$).
Define $\mathcal{M}_0 = \mathcal{H}$, and, for each $0 \leq i < s$,
$$
\mathcal{M}_{i+1} = T(\mathcal{M}_i,{\Pi_{i+1}^{\mathcal{M}_i}}),
$$
where $T(\mathcal{M}_i,{\Pi_{i+1}^{\mathcal{M}_i}})$ is the smallest superset of $\mathcal{M}_i$  closed under the reduct $\Pi_{i+1}^{\mathcal{M}_i}$ of $\Pi_{i+1}$ relative to $\mathcal{M}_i$.
$\mathcal{M}_{i+1}$ is unique.

Then, the canonical model  $\mathcal{M}$ of $\Pi \cup \{ \to A : A \in {\cal H}\}$ 
is $\mathcal{M} = \mathcal{M}_s$. 
\end{proof}

Finally,  the definition of the output $\module_A$ of \system{$h$} for class $A \in {\classes}_i$ is based on the computation of $\Pi^*_i$, which, as we have seen in Example \ref{ex:stratum7}, may contain constraints with logically contradictory bodies (i.e., with $B$ and $\neg B$ in the body, for some class $B$). Indeed, if in the construction of $\Pi^*_i$ we would have not included such constraints with contradictory bodies, the resulting system may exhibit
\begin{enumerate}
    \item  constraint violations (as seen in Example \ref{ex:stratum7}), but
    \item still no logical violations, since the set of predicted classes does not change.
\end{enumerate}

\subsubsection{Constraint Loss --- {\loss}}\label{sec:gen_loss}

In the general case, for every data point, the value of the loss function {\loss} used to train  \system{$h$} is defined as:
$$
\loss = \sum_{A \in {\mathcal A}} \loss_{A},
$$
$\loss_{A}$ being the value of the loss for class $A$, defined as:
 $$
 \loss_A = -y_A \ln(\module_A^+) - \ov{y}_A \ln(\ov{\module}_A^-), 
 $$
 where:
 \begin{itemize}
     \item $y_A$ is the ground truth for class $A$, 
     \item $\module_A^+$ is the value to optimize when $y_A = 1$, and
     \item $\module_A^-$ is the value to optimize when $y_A = 0$. 
 \end{itemize}
$\module_A^+$ and $\module_A^-$ differ from the output value $\module_A$ of \system{$h$} for class $A$, i.e., from \system{$h$}$_A$. Indeed, as it has been the case in the HMC setting and already discussed in the basic case, for $\module_A^+$ and $\module_A^-$ we have to take into account also the ground-truth. 

Similarly to what has been done for computing
$\module_{A}$, for a stratified set of constraints,
the computation of $\module_A^+$ and $\module_A^-$ will be done stratum after stratum, starting from the first. We thus assume: 
\begin{enumerate}
    \item 
that $\Pi$ is stratified, 
\item
that 
${\classes}_1, {\classes}_2, \ldots, {\classes}_s$ and
$\Pi_1, \Pi_2, \ldots, \Pi_s$ ($s \geq 1$) are the partitions of ${\classes}$ and $\Pi$  computed by CompStrata($\Pi$), and 
\item
that
$\Pi^*_1, \Pi^*_2, \ldots, \Pi^*_s$ are the  sets of constraints corresponding to $\Pi_1, \Pi_2, \ldots, \Pi_s$ and defined as in the previous subsection.
\end{enumerate}

The values $\module_A^+$ and $\module_A^-$ associated to a class $A$ in the $i$th stratum will depend on the set $\Pi^*_A$ of constraints with head $A$ in $\Pi^*_i$, and thus on:
\begin{enumerate}
    \item the values computed by model $h$ for $A$ and for the classes in the $i$th stratum and in the body of a constraint in $\Pi^*_A$,
    \item the values of the already computed loss for the classes in the lower strata and in the body of a constraint in $\Pi^*_A$,
    \item the ground truth for the classes in the body of the constraints in $\Pi^*_A$.
\end{enumerate}

Consider a class $A$ in the $i$th stratum (i.e., $A \in {\classes}_i$). To each constraint $r$ with head $A$ in $\Pi^*_i$ we associate two values
\begin{enumerate}
    \item $h^{+,r}_A$ to be used with $y_A = 1$, and
    \item $h^{-,r}_A$ to be used with $y_A = 0$.
\end{enumerate}

Assume $y_A = 1$. Consider a constraint $r$ in $\Pi^*_i$ with head $A$ of the form (\ref{eq:rule}).
Then, we want to teach \system{$h$} to possibly exploit the constraint $r$ for predicting $A$ if 
$y_{A_1} = \ldots = y_{A_k} =1$ 
and 
$y_{A_{k+1}} = \ldots = y_{A_n} = 0$. We thus define,
$$
h_A^{+,r} = \min(v_{A_1}y_{A_1}, \ldots, v_{A_k}y_{A_k}, \ov{v}_{A_{k+1}}\ov{y}_{A_{k+1}}, \ldots, \ov{v}_{A_n}\ov{y}_{A_{n}}),
$$
where $v_{A_l}$ is
\begin{enumerate}
    \item $h_{A_l}$ if $A_l \in {\classes}_i$ (and thus $1 \leq l \leq k$),
    \item $\module^+_{A_l}$ if $A_l \in \cup_{j=1}^{i-1} {\classes}_j$ and $1 \leq l \leq k$, 
    \item $\module^-_{A_l}$ if $A_l \in \cup_{j=1}^{i-1} {\classes}_j$ and $k+1 \leq l \leq n$.
\end{enumerate}
The value $\module_A^+$ associated to class $A$ when $y_A = 1$ is
$$
\module_A^+ = \max(h_A, h_A^{+,r_1}, \ldots, h_A^{+,r_p}), 
$$
where $r_1, \ldots, r_p$ are all the constraints in $\Pi^*_i$ with head $A$.

Assume $y_A = 0$. Consider a constraint $r$ in $\Pi^*_i$ with head $A$ of the form (\ref{eq:rule}). Then, for some class $A_l \in body^+(r)$ $y_{A_l} = 0$, or for some class $A_l \in body^-(r)$ $y_{A_l} = 1$, and we want to teach \system{$h$} to not fire the constraint $r$. We thus define
$$
 h_A^{-,r} = \min(v_{A_1}\ov{y}_{A_1} + y_{A_1}, \ldots, v_{A_k} \ov{y}_{A_k} + y_{A_k},  \ov{v}_{A_{k+1}}{y}_{A_{k+1}}+\ov{y}_{A_{k+1}}, \ldots, \ov{v}_{A_n}{y}_{A_{n}} + \ov{y}_{A_{n}}),
$$
where $v_{A_l}$ now is
\begin{enumerate}
    \item $h_{A_l}$ if $A_l \in {\classes}_i$ (and thus $1 \leq l \leq k$),
    \item $\module^-_{A_l}$ if $A_l \in \cup_{j=1}^{i-1} {\classes}_j$ and $1 \leq l \leq k$, 
    \item $\module^+_{A_l}$ if $A_l \in \cup_{j=1}^{i-1} {\classes}_j$ and $k+1 \leq l \leq n$.
\end{enumerate}
The value $\module_A^-$ associated with the class $A$ when $y_A = 0$ is
$$
\module_A^- = \max(h_A, h_A^{-,r_1}, \ldots, h_A^{-,r_p}), 
$$
where $r_1, \ldots, r_p$ are all the constraints in $\Pi^*_i$ with head $A$.

\begin{example}\label{ex:cmh}{\rm 
Consider the simpler version of Example \ref{ex:stratum7} with ${\classes}= \{A_1,A_2,A\}$, $\Pi = \{A_1 \to A; A_2 \to A; A, \neg A_1 \to A_2\}$, $h_{A_1} = 0.2$, $h_{A_2} = 0.3$, $h_A = 0.6$, as in Example \ref{ex:gradients}. 
Then, ${\classes}_1 = \{A_1\}$, ${\classes}_2 = \{A, A_2\}$, $\Pi_1^* = \emptyset$, $\Pi_2^* = \Pi \cup \{A_1, \neg A_1 \to A_2\}$ (see also Example \ref{ex:stratum7}). Assume $y_{A_1} = y_A = 1$ and $y_{A_2} = 0$. 

If $r_1, \ldots, r_4$ are the constraints listed as above, then
\begin{enumerate}
    \item $h_A^{+,r_1} = h_{A_1} = 0.2$,
$h_A^{+,r_2} = 0$, 
    \item
$\module_{A_1}^+ = h_{A_1}=0.2$, $\module_A^+ = h_{A} = 0.6$,
    \item $h_{A_2}^{-,r_3} = 1-h_{A_1} = 0.8$,
$h_{A_2}^{-,r_4} = 1-h_{A_1} = 0.8$, and
    \item $\module_{A_2}^- = 1- h_{A_1} = 0.8$.
\end{enumerate}
Thus,
$$
\loss = -\ln(h_{A_1}) - \ln(1-(1-h_{A_1})) -\ln(h_{A}) = - 2 \ln(h_{A_1}) -\ln(h_{A}), 
$$
as already calculated in Example \ref{ex:gradients}.\hfill$\lhd$
}\end{example}

As in the hierarchical case, $\loss$ has the fundamental property that the negative gradient descent algorithm behaves as expected, i.e., that for each class, it moves in the ``right'' direction as given by the ground truth.

\begin{theorem}
Let $(${\problem}$,\Pi)$ be an {\cmc} problem. For any model $h$ for {\problem} and class $A$, let $\frac{\partial \text{\rm\loss}}{\partial h_A}$ be the partial derivative of  $\text{\rm\loss}$ with respect to $h_A$. For each data point,
if $y_A = 0$, then $\frac{\partial \text{\rm\loss}}{\partial h_A} \geq 0$, and 
if $y_A = 1$, then $\frac{\partial \text{\rm\loss}}{\partial h_A} \leq 0$.
\end{theorem}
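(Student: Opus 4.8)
The plan is to mirror the structure of the HMC-case gradient theorem: write
$\frac{\partial \loss}{\partial h_A} = \sum_{B \in \classes} \frac{\partial \loss_B}{\partial h_A}$
and show that, under the stated hypothesis on $y_A$, every single summand already carries the correct sign, so the sign of the whole sum follows. Since $h_A$ enters $\loss_B = -y_B\ln(\module_B^+) - \ov{y}_B\ln(\ov{\module}_B^-)$ only through $\module_B^+$ (when $y_B=1$) or through $\ov{\module}_B^- = 1-\module_B^-$ (when $y_B=0$), I would reduce each term to $-\frac{1}{\module_B^+}\frac{\partial \module_B^+}{\partial h_A}$ or to $+\frac{1}{\ov{\module}_B^-}\frac{\partial \module_B^-}{\partial h_A}$. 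Reading off the desired signs, the theorem reduces to the monotonicity claim: if $y_A = 1$ then $\module_B^+$ is non-decreasing and $\module_B^-$ is non-increasing in $h_A$, and if $y_A = 0$ then the two monotonicities are reversed.

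The core of the argument is this monotonicity claim, which I would establish by induction on the stratum index $j$ of the head $B$, keeping $A$ and its ground-truth value $y_A$ fixed. In the base situation, if $B$ lies in a stratum strictly below that of $A$, then $\module_B^{+}$ and $\module_B^{-}$ are computed before $A$ and do not depend on $h_A$ at all, so the derivative vanishes. In the inductive step I would use that $\module_B^{+} = \max(h_B, h_B^{+,r_1},\dots)$ and $\module_B^{-} = \max(h_B, h_B^{-,r_1},\dots)$, with each $h_B^{\pm,r}$ a $\min$ of its body entries, and that $\max$ and $\min$ are non-decreasing in each argument; hence the sign of $\frac{\partial \module_B^{\pm}}{\partial h_A}$ is dictated entirely by the way each entry $v_{A_l}$ (or $\ov{v}_{A_l}$) depends on $h_A$ — either directly, when $A$ is in $B$'s own stratum, or through the lower-stratum values $\module_{A_l}^{+}, \module_{A_l}^{-}$, to which the induction hypothesis applies.

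The mechanism that makes the signs come out right is the set of ground-truth multipliers in the definitions of $h_A^{+,r}$ and $h_A^{-,r}$, which annihilate exactly the occurrences of $h_A$ that would point the wrong way. For example, when $y_A = 1$, any appearance of $A$ as a negated literal in an $h_B^{+,r}$ carries the factor $\ov{y}_A = 0$, and any appearance as a positive literal in an $h_B^{-,r}$ collapses to the constant $v_A\ov{y}_A + y_A = 1$; the only surviving dependence on $h_A$ (or on $\module_A^{+}$) is through positive literals, where it is non-decreasing, yielding $\module_B^+$ non-decreasing and $\module_B^-$ non-increasing. The symmetric bookkeeping gives the $y_A = 0$ case, where $\module_A^{-}$ is instead the surviving value. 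I would explicitly flag the self-reference subtlety: the ``wrong'' one of $\module_A^{+},\module_A^{-}$ (namely $\module_A^{-}$ when $y_A=1$, and $\module_A^{+}$ when $y_A=0$) does contain the increasing term $\max(h_A,\dots)$, but wherever it is used downstream it is multiplied by a vanishing ground-truth coefficient, so it never contributes.

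The main obstacle I anticipate is precisely this cross-stratum bookkeeping: one must check that for \emph{every} role a class $A$ can play — positive versus negative literal, same stratum versus a strictly lower one, inside a $\module^{+}$ versus a $\module^{-}$ computation — the attached ground-truth coefficient is the one that either preserves the correct monotonicity or kills the term, and that these choices remain consistent along the whole stratified chain built by CompStrata($\Pi$). A secondary, purely technical point is that $\max$ and $\min$ are non-differentiable at ties; I would dispose of this by noting that all functions involved are monotone compositions, so the relevant one-sided derivatives exist and carry the stated signs (or, equivalently, by passing to subgradients), exactly as is implicitly done in the HMC-case proof.
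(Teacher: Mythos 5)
Your proof is correct and follows essentially the same route as the paper's: the same class-by-class decomposition $\frac{\partial \loss}{\partial h_A} = \sum_{B} \frac{\partial \loss_B}{\partial h_A}$, with the ground-truth multipliers inside $h_B^{+,r}$ and $h_B^{-,r}$ annihilating exactly the occurrences of $h_A$ (or $\ov{h}_A$) that would point the wrong way, so that each summand individually has the required sign. The only difference is presentational: you make the cross-stratum dependence explicit via an induction on strata establishing a monotonicity lemma for $\module_B^{+}$ and $\module_B^{-}$, whereas the paper argues pointwise that the argument realizing each max/min is either $h_A$, $\ov{h}_A$, or independent of $h_A$ --- the two formulations are equivalent because max/min compositions simply select one of their leaves.
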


\begin{proof}
Consider a data point and a class $A$. 
$$
\frac{\partial \loss}{\partial h_A} = \sum_{B \in {\classes}} \frac{\partial \loss_{B}}{\partial h_A}.
$$

We consider only the case $y_A = 1$ (the case $y_A = 0$ is analogous). 
Consider a class $B$.
\begin{enumerate}
    \item If $y_{B} = 1$,
$$
\loss_{B} = - \ln (\module_B^+), \qquad \qquad
\frac{\partial \loss_{B}}{\partial h_A} =
- \frac{1}{\module_B^+}\frac{\partial \module_B^+}{\partial h_A} \le 0, 
$$
because:
\begin{itemize}
    \item either $\module_B^+ = h_A$ (which is possible only if 
    $A=B$, or 
    there exists a constraint $r$ with head $B$ such that 
    $h^{+,r}_B = h_A$) and then $\frac{\partial \loss_{B}}{\partial h_A} = -\frac{1}{h_A} \le 0$,
    \item or $\module_B^+$ is a value not dependent on $h_A$ and then
    $\frac{\partial \loss_{B}}{\partial h_A} = 0$ (since $y_A = 1$, it cannot be the case that there exists a constraint $r$ with head $B$ such that $h_B^{+,r}=\ov{h}_A$).
\end{itemize}
\item if $y_{B} = 0$,
$$
\loss_{B} = -\ln{(\ov{\module}_{B}^-)}, \qquad \qquad \frac{\partial \loss_{B}}{\partial h_A} = \frac{1}{\ov{\module}_{B}^-}\frac{\partial \module_{B}^-}{\partial h_A} \le 0, 
$$
because:
\begin{itemize}
    \item either $\module_B^- = \ov{h}_A$ (which is possible only if 
    $A=B$ or 
    there exists a constraint $r$ with head $B$ such that 
    $h^{-,r}_B = \ov{h}_A$) and then $\frac{\partial \loss_{B}}{\partial h_A} = -\frac{1}{h_A} \le 0$,
    \item or $\module_B^-$ is a value not dependent on $h_A$ and then
    $\frac{\partial \loss_{B}}{\partial h_A} = 0$ (since $y_A = 1$, it cannot be the case that there exists a constraint $r$ with head $B$ such that $h_B^{-,r}=h_A$).
\end{itemize}
\end{enumerate}
Since $\frac{\partial \loss}{\partial h_A}$ is the sum of quantities that are at most zero, then $\frac{\partial \loss}{\partial h_A} \le 0$.
\end{proof}

\subsubsection{Relation between \hmcsys{$h$} and \system{$h$}}\label{sec:relation}

From the definitions of \module$_A$ and \loss$_A$, it is clear that they generalize the corresponding definitions given in the hierarchical case. Thus, \hmcsys{$h$} and \system{$h$} have the same behavior when considering HMC problems.

\begin{theorem}\label{thm:eq}
Let $(${\problem}$,\Pi)$ be an HMC problem. Let $h$ be a model for {\problem}. For any class $A$, \hmcsys{$h$}$_A$ = \system{$h$}$_A$.
\end{theorem}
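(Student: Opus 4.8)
The plan is to show that, on an HMC problem, the stratified machinery defining $\system{h}$ collapses to the simple formula~(\ref{eq:module_class}) defining $\hmcsys{h}$. First I would record the two structural simplifications that an HMC instance enjoys: every constraint $A_1 \to A$ is a definite rule whose body $\{A_1\}$ is a singleton, and the dependency graph $G_\Pi$ contains no negative edges. Consequently, running CompStrata($\Pi$) assigns the number $1$ to every class (the maximum number of negative edges on any root-to-node path is $0$), so there is a single stratum with ${\classes}_1 = {\classes}$ and $\Pi_1 = \Pi$. In particular, all the $v$-values occurring in~(\ref{eq:cms}) are taken from the first stratum and hence equal the corresponding outputs $h_B$ of the bottom module, with no cross-stratum terms and no negations to handle.

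Next I would analyze the transitive closure $\Pi_1^*$. Since each rule has a singleton body, the substitution step of the $\Pi_i^*$ construction turns a pair $C \to B$, $B \to A$ into $C \to A$, so by induction $\Pi_1^*$ contains the rule $B \to A$ precisely when there is a path of length $\geq 1$ from $B$ to $A$ in $G_\Pi$, i.e.\ exactly for the strict subclasses $B \in \mathcal{D}_A \setminus \{A\}$. The elimination step removes nothing: acyclicity of the hierarchy rules out any constraint with $head(r) \in body^+(r)$, and since all bodies are singletons no body is a proper subset of another. Thus the set of constraints in $\Pi_1^*$ with head $A$ is in bijection with $\mathcal{D}_A \setminus \{A\}$.

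Finally I would substitute this into~(\ref{eq:cms}). For a singleton-body definite rule $r = (B \to A)$ with $B$ in the only stratum, $h_A^{r} = \min(v_B) = h_B$, so~(\ref{eq:cms}) becomes $\module_A = \max\bigl(h_A, \max_{B \in \mathcal{D}_A \setminus \{A\}} h_B\bigr) = \max_{B \in \mathcal{D}_A} h_B$, where the stand-alone $h_A$ term accounts for the length-$0$ path witnessing $A \in \mathcal{D}_A$. This is exactly the right-hand side of~(\ref{eq:module_class}); since $\hmcsys{h}_A = \module_A$ under~(\ref{eq:module_class}) and $\system{h}_A = \module_A$ under~(\ref{eq:cms}), the two coincide. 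The only delicate point is the bookkeeping of the second paragraph: one must check that the recursive closure reproduces exactly the reachability set $\mathcal{D}_A$, and that reconciling the length-$0$ path with the explicit $h_A$ term in~(\ref{eq:cms}) introduces neither a missing nor a duplicated term. Everything else is immediate from the absence of negation.
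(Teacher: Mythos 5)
Your proof is correct: the paper itself offers no explicit argument for this theorem (it merely asserts that the general definitions visibly collapse to the hierarchical ones), and your verification --- single stratum because $G_\Pi$ has no negative edges, $\Pi_1^*$ in bijection with the strict subclasses via path-composition of singleton bodies, and the resulting collapse of~(\ref{eq:cms}) to~(\ref{eq:module_class}) --- is exactly the routine unwinding the paper leaves implicit. No gaps; the handling of the length-$0$ path via the stand-alone $h_A$ term and the observation that the elimination step is vacuous for acyclic singleton-body rules are the only points needing care, and you address both.
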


\section{GPU Implementation}\label{sec:gpu}

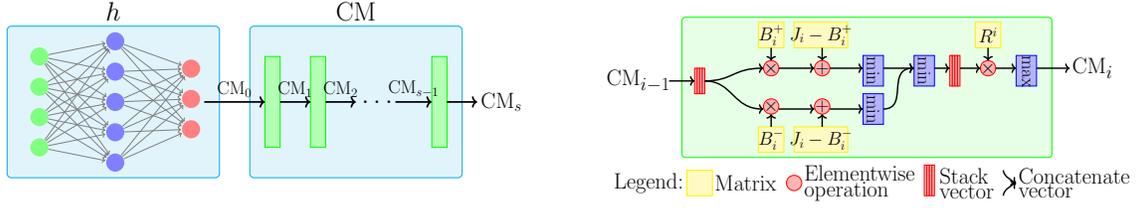
\begin{figure}[t]
\centering
\resizebox{7cm}{!}{\begin{tikzpicture}
  
  \draw [fill=cyan!10,draw=cyan,thick,rounded corners,] (0,0) rectangle (7,5) node[pos=.5] (h) {\def\layersep{2.5cm}

\begin{tikzpicture}[shorten >=1pt,->,draw=black!50, node distance=\layersep]
    \tikzstyle{every pin edge}=[<-,shorten <=1pt]
    \tikzstyle{neuron}=[circle,fill=black!25,minimum size=17pt,inner sep=0pt]
    \tikzstyle{input neuron}=[neuron, fill=green!50];
    \tikzstyle{output neuron}=[neuron, fill=red!50];
    \tikzstyle{hidden neuron}=[neuron, fill=blue!50];
    \foreach \name / \y in {1,...,4}
        \node[input neuron] (I-\name) at (0,-\y) {};
    \foreach \name / \y in {1,...,5}
        \path[yshift=0.5cm]
            node[hidden neuron] (H-\name) at (\layersep,-\y cm) {};
     \foreach \name / \y in {1,...,3}
        \path[yshift=-0.4cm]
            node[output neuron] (O-\name) at (2*\layersep,-\y cm) {};

    \foreach \source in {1,...,4}
        \foreach \dest in {1,...,5}
            \path (I-\source) edge (H-\dest);
    \foreach \source in {1,...,5}
        \foreach \dest in {1,...,3}
            \path (H-\source) edge (O-\dest);
\end{tikzpicture}} node at (3.5, 5.5) (h) {\Huge $h$};
  \draw  [fill=cyan!10,draw=cyan,thick,rounded corners,] (8,0) rectangle (15,5) node at (11.5, 5.5) (module) {\Huge \module};
  \draw  [fill=green!30,draw=green,thick] (8.5,1) rectangle (9,4) node (s1) {};
  \draw  [fill=green!30,draw=green,thick] (10,1) rectangle (10.5,4) node (s2) {};
  \draw  [fill=green!30,draw=green,thick] (14,1) rectangle (14.5,4) node (sN) {};
  \draw [circle,fill=white,draw=white] (16.3,2.5) node {\huge $\module_s$};
\draw [circle,fill=none,draw=none] (12.2,2.5) node {\Huge $\ldots$};
  \draw[->, ultra thick] (6.5,2.5) -- node [midway,above] {\LARGE $ \module_0$} (8.5,2.5);
  \draw[->, ultra thick] (9,2.5) --  node [midway,above] {\LARGE $\module_1$} (10,2.5);
  \draw[->, ultra thick] (10.5,2.5) -- node [midway,above] {\LARGE $\module_2$}(11.5,2.5);
  \draw[->, ultra thick] (12.8,2.5) --  node [midway,above] {\LARGE $\module_{s-1}$} (14,2.5);
  \draw[->, ultra thick] (14.5,2.5) --  node [right] {} (15.5,2.5);
   \draw [fill=none,draw=none] (0,-1.5) rectangle (10,0);
\end{tikzpicture}}
\qquad 
\resizebox{7cm}{!}{\begin{tikzpicture}
  \draw [fill=green!10,draw=green,thick,rounded corners,] (0,-0.5) rectangle (14.5,5) node (h) {};
   \draw [circle,fill=none,draw=none] (-1.7,2.5) node {\Huge $\module_{i-1}$};
   \draw [circle,fill=none,draw=none] (16.1,3) node {\Huge $\module_{i}$};
   \draw  [fill=red!30,draw=red,thick] (0.5,2) rectangle (0.6,3) node {};
   \draw  [fill=red!30,draw=red,thick] (0.6,2) rectangle (0.7,3) node {};
   \draw  [fill=red!30,draw=red,thick] (0.7,2) rectangle (0.8,3) node {};
   \draw  [fill=red!30,draw=red,thick] (0.8,2) rectangle (0.9,3) node {};
   \draw  [fill=yellow!30,draw=yellow,thick] (3,3.8) rectangle (4,4.8) node[pos=.5]  (iplus) {\huge $B^{+}_i$};
   \draw  [fill=red!30,draw=red,thick] (3.5,3.0) circle (0.3) node  (iplusx) {\huge $\times$};
   \draw  [fill=yellow!30,draw=yellow,thick] (3,-0.3) rectangle (4,0.7) node[pos=.5]  (iminus) {\huge $B^{-}_i$};
   \draw  [fill=red!30,draw=red,thick] (3.5,1.5) circle (0.3) node  (iminusx) {\huge $\times$};
   \draw  [fill=red!30,draw=red,thick] (5.5,3.0) circle (0.3) node  (plus) {\huge +};
   \draw  [fill=red!30,draw=red,thick] (5.5,1.5) circle (0.3) node  (plus) {\huge +};
   \draw  [fill=yellow!30,draw=yellow,thick] (4.4,3.8) rectangle (6.5,4.8) node[pos=.5]  (l) {\huge $J_i-B^+_i$};
   \draw  [fill=yellow!30,draw=yellow,thick] (4.4,-0.3) rectangle (6.5,0.7) node[pos=.5]  (l) {\huge $J_i-B^-_i$};
   \draw  [fill=blue!30,draw=blue,thick] (7.1,2.3) rectangle (7.9,3.5) node[pos=.5]  (l) {\rotatebox{-90}{\huge $\min$}};
   \draw  [fill=blue!30,draw=blue,thick] (7.1,0.8) rectangle (7.9,2.0) node[pos=.5]  (l) {\rotatebox{-90}{\huge $\min$}};
   \draw  [fill=blue!30,draw=blue,thick] (9.1,2.3) rectangle (9.9,3.5) node[pos=.5]  (l) {\rotatebox{-90}{\huge $\min$}};
   \draw  [fill=red!30,draw=red,thick] (10.5,2.3) rectangle (10.6,3.5) node {};
   \draw  [fill=red!30,draw=red,thick] (10.6,2.3) rectangle (10.7,3.5) node {};
   \draw  [fill=red!30,draw=red,thick] (10.7,2.3) rectangle (10.8,3.5) node {};
   \draw  [fill=red!30,draw=red,thick] (10.8,2.3) rectangle (10.9,3.5) node {};
   \draw  [fill=yellow!30,draw=yellow,thick] (11.5,3.8) rectangle (12.5,4.8) node[pos=.5]  (l) {\huge $R^i$};
   \draw  [fill=red!30,draw=red,thick] (12,3.0) circle (0.3) node  (Rx) {\huge $\times$};
   \draw  [fill=blue!30,draw=blue,thick] (13.1,2.3) rectangle (13.9,3.5) node[pos=.5]  (l) {\rotatebox{-90}{\huge $\max$}};
   
   \draw [->, ultra thick] (0.9,2.5) .. controls (2.0,2.5) and (2.0,3) .. (3.2,3);
   \draw [->, ultra thick] (0.9,2.5) .. controls (2.0,2.5) and (2.0,1.4) .. (3.2,1.4);
   \draw [->, ultra thick] (3.8,3) -- (5.2, 3);
   \draw [->, ultra thick] (3.8,1.4) -- (5.2, 1.4);
   \draw [->, ultra thick] (5.8,3) -- (7.1, 3);
   \draw [->, ultra thick] (5.8,1.4) -- (7.1, 1.4);
   \draw [->, ultra thick] (7.9,3) -- (9.1, 3);
   \draw [->, ultra thick] (9.9,3) -- (10.5, 3);
   \draw [->, ultra thick] (10.9,3) -- (11.7, 3);
   \draw [->, ultra thick] (12.3,3) -- (13.1, 3);
   \draw [->, ultra thick] (13.9,3) -- (15.2, 3);
   \draw [->, ultra thick] (-0.5,2.5) -- (0.5,2.5);
   
   \draw [->, ultra thick] (3.5,3.8) -- (3.5,3.3);
   \draw [->, ultra thick] (3.5,0.7) -- (3.5,1.2);
   \draw [->, ultra thick] (5.5,3.8) -- (5.5,3.3);
   \draw [->, ultra thick] (5.5,0.7) -- (5.5,1.2);
   \draw [->, ultra thick] (12,3.8) -- (12,3.3);
   
   \draw [->, ultra thick] (7.9,1.4) .. controls (8.9,1.4) and (8.0,3) .. (9.1,3);

   \draw[circle,fill=none,draw=none] (-1.3, -1.5) node {\Huge Legend:};
   \draw  [fill=yellow!30,draw=yellow,thick] (0.2,-2) rectangle (1.2,-1) node at (2.5,-1.5)   {\Huge Matrix};
   \draw  [fill=red!30,draw=red,thick] (4.4,-1.5) circle (0.3) node[text width=4cm] at (6.8,-1.5) {\Huge Elementwise \\ operation};
   \draw  [fill=red!30,draw=red,thick] (9.5,-2) rectangle (9.6,-0.8) node {};
   \draw  [fill=red!30,draw=red,thick] (9.6,-2) rectangle (9.7,-0.8) node {};
   \draw  [fill=red!30,draw=red,thick] (9.7,-2) rectangle (9.8,-0.8) node {};
   \draw  [fill=red!30,draw=red,thick] (9.8,-2) rectangle (9.9,-0.8) node[text width=4cm] at (12.1,-1.5) {\Huge Stack \\ vector};
   \draw[->,ultra thick,rotate=270] (1,12.6) parabola (1.5,13.1);
   \draw[->,ultra thick,rotate=270] (2,12.6) parabola (1.5,13.1) node[text width=4cm] at (1.5, 15.2) {\Huge Concatenate \\ vector};

\end{tikzpicture}}
\caption{Visual representation of \system{$h$} (left), and details of the operations associated with each stratum (right).}\label{fig:system}\vspace*{-2ex}
\end{figure}

In the previous section, both $\module_A$ and $\loss_A$ have been defined for a specific class $A$. However, it is possible to compute both $\module_A$ and ${\loss}_A$ for 
all classes in the same stratum in parallel, leveraging GPU architectures. In this section, we show how to compute 
the values first of the constraint module 
and then of the loss function on a GPU. %
At the end, we show how this computation can be simplified in the HMC case. %

Consider an {\cmc} problem $(${\problem}$,\Pi)$ with  stratified $\Pi$.
We assume to have $l$ classes (i.e., $|{\classes}| = l$),  that $\Pi_1, \Pi_2, \ldots, \Pi_s$ is the stratification computed by CompStrata($\Pi$), and that
$\Pi^*_1, \Pi^*_2, \ldots, \Pi^*_s$ are the corresponding sets as defined in the previous section.

\subsection{Constraint Module}\label{sec:cm_gpu}

The basic idea, starting from the vector $\module_0$ (which contains the $l$ values resulting from the bottom module $h$) is to iteratively compute the vector of values $\module_i$ corresponding to the outputs of $\module$ if given the set of constraints $\cup_{j=1}^i \Pi^*_j$. The final output of $\module$ will correspond to $\module_s$. 
Figure~\ref{fig:system} shows a visual representation of the process and of {\system{$h$}}.

For each $i \in\{ 1, \ldots, s\}$, $p_i$ is the number of constraints in $\Pi^*_i$ ($p_i = | \Pi^*_i |$), $r_{ij}$ denotes the $j$th constraint in $\Pi^*_i$, and
\begin{itemize}
    \item $B^+_i$ is the $p_i \times l$ matrix whose $j,k$ element is $1$ if $A_k \in body^+(r_{ij})$, and $0$ otherwise; 
    \item $B^-_i$ is the $p_i \times l$ matrix whose $j,k$ element is $1$ if $\neg A_k \in body^-(r_{ij})$, and $0$ otherwise; 
    \item $C_{i-1}$ is the $p_i \times l$ matrix obtained by stacking $p_i$ times $\module_{i-1}$.
\end{itemize}
{\sl Stacking} $p$ times a vector $v$ of size $q$ returns the 
$p \times q$ matrix $1_{p}^T \times v$ whose $j,k$ element is~$v[k]$.
    
Then, the $j$th value $v_{i}[j]$ of the vector $v_{i}$ associated with the body of the constraint $r_{ij}$ is
$$
\begin{array}{cc}
v^+_i =\min(B^+_i \odot C_{i-1} + (J_{p_i,l}-B^{+}_i), \text{dim}=1), \\
v^-_i =\min(B^-_i \odot (J_{p_i,l} - C_{i-1}) + (J_{p_i,l}-B^{-}_i), \text{dim}=1),
\\
v_{i}[j] = \min(v^+_i[j],v^-_i[j]), 
\end{array}
$$
where 
\begin{enumerate}
    \item $\odot$ represents the Hadamard product,
    \item $J_{p,q}$ is the $p \times q$ matrix of ones, and
    \item given an arbitrary $p \times q$ matrix $Q$, $\min(Q, \text{dim}\,{=}\,1)$ (resp., $\max(Q,$ $\text{dim}\,{=}\,1)$) returns a vector of length $p$ whose $i$th element is equal to $\min(Q_{i1}, \ldots, Q_{iq})$ (resp., $\max(Q_{i1}, \ldots, Q_{iq})$). 
\end{enumerate} 
Then, the output of the $i$th layer associated with the $i$th stratum is given by:
$$
\module_i = \max(I\!H_i \odot V_i, \text{dim}=1),
$$
where
\begin{itemize}
   \item $H_i$ is the $l\times p_i$ matrix whose $j,k$ element is $1$ if $A_j = head(r_{ik})$, and $0$ otherwise,
    \item $I\!H_i$ is the $l \times (l+p_i)$ matrix obtained by stacking the $l\times l$ identity matrix and $H_i$,
    \item $V_i$ is the $l \times (l+p_i)$ matrix obtained by stacking $l$ times the concatenation of $\module_{i-1}$ and $v_i$. 
\end{itemize}

\begin{example}\label{ex:cm}{\rm 
Let ${\classes}= \{A_1,A_2,A\}$, $\Pi = \{A_1 \to A; A_2 \to A; A, \neg A_1 \to A_2\}$, $h_{A_1} = 0.2$, $h_{A_2} = 0.3$, $h_A = 0.6$, as in Example \ref{ex:stratum7}. Then, ${\classes}_1 = \{A_1\}$, ${\classes}_2 = \{A, A_2\}$, $\Pi_1^* = \emptyset$, $\Pi_2^* = \Pi \cup \{A_1, \neg A_1 \to A_2\}$. 

Then,
$
\module_0 = 
\begin{bmatrix}
0.2 & 0.3 & 0.6
\end{bmatrix}
$,
$B^+_1$, $B^-_1$, $C_0$, $H_1$ are the empty matrices, $I\!H_1$ is the $3\times 3$ identity matrix, while
$$
\begin{array}{ccccc}
B^+_2= 
\begin{bmatrix}
1 & 0 & 0 \\
0 & 1 & 0 \\
0 & 0 & 1 \\
1 & 0 & 0 
\end{bmatrix}
&
B^-_2= 
\begin{bmatrix}
0 & 0 & 0 \\
0 & 0 & 0 \\
1 & 0 & 0 \\
1 & 0 & 0 
\end{bmatrix}
&
C_1 = 1_4^T \times 
\begin{bmatrix}
0.2 & 0.3 & 0.6 
\end{bmatrix}
=
\begin{bmatrix}
0.2 & 0.3 & 0.6 \\
0.2 & 0.3 & 0.6 \\
0.2 & 0.3 & 0.6 \\
0.2 & 0.3 & 0.6 
\end{bmatrix}
\end{array}
$$
$$
\begin{array}{cc}
H_2= 
\begin{bmatrix}
0 & 0 & 0 & 0 \\
0 & 0 & 1 & 1 \\
1 & 1 & 0 & 0  
\end{bmatrix}
&
I\!H_2= 
\begin{bmatrix}
1 & 0 & 0 & 0 & 0 & 0 & 0 \\
0 & 1 & 0 & 0 & 0 & 1 & 1 \\
0 & 0 & 1 & 1 & 1 & 0 & 0  
\end{bmatrix}
\end{array}
$$

$$
\begin{array}{cc}
V_1= 1_3^T \times
\begin{bmatrix}
0.2 & 0.3 & 0.6
\end{bmatrix}
&
\module_1= 
\begin{bmatrix}
0.2 & 0.3 & 0.6 
\end{bmatrix}
\end{array}
$$

$$
\begin{array}{ccc}
v_2^+ =
\begin{bmatrix}
0.2 & 0.3 & 0.6 & 0.2 
\end{bmatrix}
&
v_2^- =
\begin{bmatrix}
1 & 1 & 0.8 & 0.8 
\end{bmatrix}
&
v_2 = v_2^+
\end{array}
$$

$$
\begin{array}{cc}
V_2= 1_3^T \times
\begin{bmatrix}
0.2 & 0.3 & 0.6 & 0.2 & 0.3 & 0.6 & 0.2  
\end{bmatrix}
&
\module_2= 
\begin{bmatrix}
0.2 & 0.6 & 0.6
\end{bmatrix}
\end{array}
$$
and thus $\module_{A_1}=h_{A_1}=0.2$, $\module_{A_2}=h_{A} = 0.6$, and $\module_A = h_{A} = 0.6$, as expected (see Example \ref{ex:stratum7}).\hfill$\lhd$
}\end{example}

\subsection{Constraint Loss}\label{sec:closs_gpu}

We now show how to compute $\loss_A$ for all classes in parallel, leveraging GPU architectures. Here, we define $Y_i$ the $p_i \times l$ matrix obtained by stacking $p_i$ times the ground-truth vector $y$, and $v^+_i[j]$ to be the value associated with the  body of the $j$th constraint $r_{ij} \in \Pi_i^*$ when the ground-truth class $y_{head(r_{ij})}=1$:
$$
\begin{array}{cc}
v^{+\bck+}_i =\min(B^+_i \odot C_{i-1} \odot Y_i + (J_{p_i,l}-B^{+}_i), \text{dim}=1), \\
\begin{aligned}
v^{+\bck-}_i =\min(&B^-_i \odot (J_{p_i,l} - C_{i-1}) \odot (J_{p_i,l}-Y_i) + (J_{p_i,l}-B^{-}_i), \text{dim}=1),
\end{aligned} \\
v^+_{i}[j] = \min(v^{+\bck+}_i[j],v^{+\bck-}_i[j]).
\end{array}
$$
Analogously, $v^-_i[j]$ is the value  associated with the body of the  $j$th constraint $r_{ij} \in \Pi_i^*$ when $y_{head(r_{ij})}=0$:
$$
\begin{array}{cc}
\begin{aligned}
v^{-\bck+}_i =\min(& B^+_i \odot C_{i-1} \odot (J_{p_i,l}-Y_i) + (J_{p_i,l}-B^{+}_i)  + B_i^+ \odot Y_i, \text{dim}=1), \\
v^{-\bck-}_i =\min(& B^-_i \odot (J_{p_i,l} - C_{i-1}) \odot Y_i + (J_{p_i,l}-B^{-}_i) 
 + B_i^- \odot (J_{p_i,l}-Y_i) ,\text{dim}=1),
\end{aligned}
\\
v^-_{i}[j] = \min(v^{-\bck+}_i[j],v^{-\bck-}_i[j]).
\end{array}
$$
Then, the output of the $i$th layer associated with the $i$th stratum is given by:
\begin{align*}
\module^{+\bck-}_i = \max\big(&(I\!H_i \odot V_i^+)\odot IY_i +  (I\!H_i \odot V_i^-)\odot(J_{l,l+p_i}-IY_i), \text{dim}=1\big), 
\end{align*}
where:
\begin{itemize}
    \item $\module^{+\bck-}_0 = \module_0$,
    \item $IY_i$ is the $l \times (l+p_i)$ matrix obtained by stacking the $l\times l$ identity matrix and $Y_i^T$, the transposed of $Y_i$, 
    \item $V_i^+$ is the $l \times (l+p_i)$ matrix obtained by stacking $l$ times the concatenation of $\module^{+\bck-}_{i-1}$ and $v_i^+$, and 
    \item $V_i^-$ is the $l \times (l+p_i)$ matrix obtained by stacking $l$ times the concatenation of $\module^{+\bck-}_{i-1}$ and $v_i^-$.
\end{itemize}

Once we have computed $\module^{+\bck-}_s$, we can compute $\loss$ by using the standard binary cross-entropy loss (BCELoss), as given in any standard library (e.g., PyTorch):
$$
\loss = \text{BCELoss}(y, \module^{+\bck-}_s).
$$

\begin{example}[Ex.~\ref{ex:cm}, cont'd]{\rm 
Assume that $y_{A} = y_{A_1} = 1$, and $y_{A_2} = 0$.
Then, $y = 
\begin{bmatrix}
1 & 0 & 1 
\end{bmatrix}
$,
$Y_1$ is the empty matrix, 
$
V_1^+=V_1^-=V_1,
$
$IY_1$ is the identity matrix,
$$
\module^{+\bck-}_1 =
\begin{bmatrix}
0.2 & 0.3 & 0.6
\end{bmatrix}
$$

$$
B^+_2 \odot C_{1} \odot Y_2 + (J_{4,3}-B^{+}_2) =
\begin{bmatrix}
0.2 & 1 & 1 \\
1 & 0 & 1 \\
1 & 1 & 0.6 \\
0.2 & 1 & 1 
\end{bmatrix}
$$

$$
B^-_2 \odot (J_{4,3} - C_{1})  \odot (J_{4,3} - Y_2)  + (J_{4,3}-B^{-}_2) =
\begin{bmatrix}
1 & 1 & 1 \\
1 & 1 & 1 \\
0 & 1 & 1 \\
0 & 1 & 1 
\end{bmatrix}
$$

$$
v^+_2=
\begin{bmatrix}
0.2 & 0 & 0 & 0 
\end{bmatrix}
$$
The only value which is not 0 is the first one, corresponding to the first constraint, being the only constraint whose body is satisfied by the ground truth.

$$
B^+_2 \odot C_{1} \odot (J_{4,3}-Y_2) + (J_{4,3}-B^{+}_2) + B^+_2 \odot Y_2 =
\begin{bmatrix}
1 & 1 & 1 \\
1 & 0.3 & 1 \\
1 & 1 & 1 \\
1 & 1 & 1 
\end{bmatrix}
$$

$$
B^-_2 \odot (J_{4,3} - C_{1})  \odot Y_2  + (J_{4,3}-B^{-}_2) + B^-_2 \odot (J_{4,3}-Y_2)=
\begin{bmatrix}
1 & 1 & 1 \\
1 & 1 & 1 \\
0.8 & 1 & 1 \\
0.8 & 1 & 1 
\end{bmatrix}
$$

$$
v^-_2=
\begin{bmatrix}
1 & 0.3 & 0.8 & 0.8 
\end{bmatrix}
$$
The values which are not 1 correspond to the last four constraints, being the constraints whose body is not satisfied by the ground truth.

$$
Y_2= 1_4^T \times
\begin{bmatrix}
1 & 0 & 1 
\end{bmatrix}
$$

$$
V_2^+= 1_3^T \times
\begin{bmatrix}
0.2 & 0.3 & 0.6 & 0.2 & 0 & 0 & 0 
\end{bmatrix}
$$

$$
V_2^-= 1_3^T \times
\begin{bmatrix}
0.2 & 0.3 & 0.6 & 1 & 0.3 & 0.8 & 0.8 
\end{bmatrix}
$$

$$
\begin{array}{ccc}
\begin{array}{c}
(I\!H_2 \odot V_2^+)\odot IY_2 \\ + \\
(I\!H_2 \odot V_2^-)\odot(J_{3,7}-IY_2)
\end{array}
&
= 
&
\begin{bmatrix}
0.2 & 0 & 0 & 0 & 0 & 0 & 0 \\
0 & 0.3 & 0 & 0 & 0 & 0.8 & 0.8 \\
0 & 0 & 0.6 & 0.2 & 0 & 0 & 0  
\end{bmatrix}

\end{array}
$$

$$
\module^{+\bck-}_2 = 
\begin{bmatrix}
0.2 & 0.8 & 0.6
\end{bmatrix}
$$
The three values of $\module^{+\bck-}_2$ correspond to
the expected values of $\module_{A_1}^+, \module_{A_2}^-, \module_{A}^+$ as computed in  Example  \ref{ex:cmh}, and equal to
 $h_{A_1}, 1-h_{A_1}, h_{A}$, respectively.\hfill$\lhd$
}\end{example}

\subsection{Hierarchical Multi-Label Classification}\label{sec:hmc_gpu}

When dealing with HMC problems, the above implementation can be simplified. Indeed, in the hierarchical case, we have that all constraints have only one class in the body,
all constraints are definite and thus $s=1$.

Let $H$ be an $l \times l$ matrix obtained by stacking $l$ times {$\module_0$}. Let $M$ be an $l \times l$ matrix such that, for $i,j \in\{ 1, \ldots, n\}$, $M_{ij} = 1$ if $A_j$ is a subclass of $A_i$, and $M_{ij} = 0$, otherwise.
The constraint module can be simply computed as: 
$$
\module_s = \max(H \odot M, \text{dim}=1)\,,
$$

For \loss{}, we can use the same mask $M$ to modify the standard BCELoss. In detail, let $y$ be the ground-truth vector, and $H'$ be the $l \times l$ matrix obtained by stacking $l$ times the vector $\module_s \odot y$. Then, 
$$
\loss = \text{BCELoss}\big(((1-y) \odot \module_s)+(y \odot \max(M \odot H', \text{dim}=1)),y\big).
$$

\section{Experimental Analysis}\label{sec:exp}

In this section, we present the results of the experimental analysis. We first present our results focusing on HMC problems, and then we present the results obtained in the more general framework of {\cmc} problems. 
In the presentation, we always speak about \system{$h$} since,
given
Theorem \ref{thm:eq}, 
in the case of HMC problems there is no difference between \system{$h$} and \hmcsys{$h$}. 

\begin{figure}[t]\centering
    \includegraphics[width=0.6\textwidth]{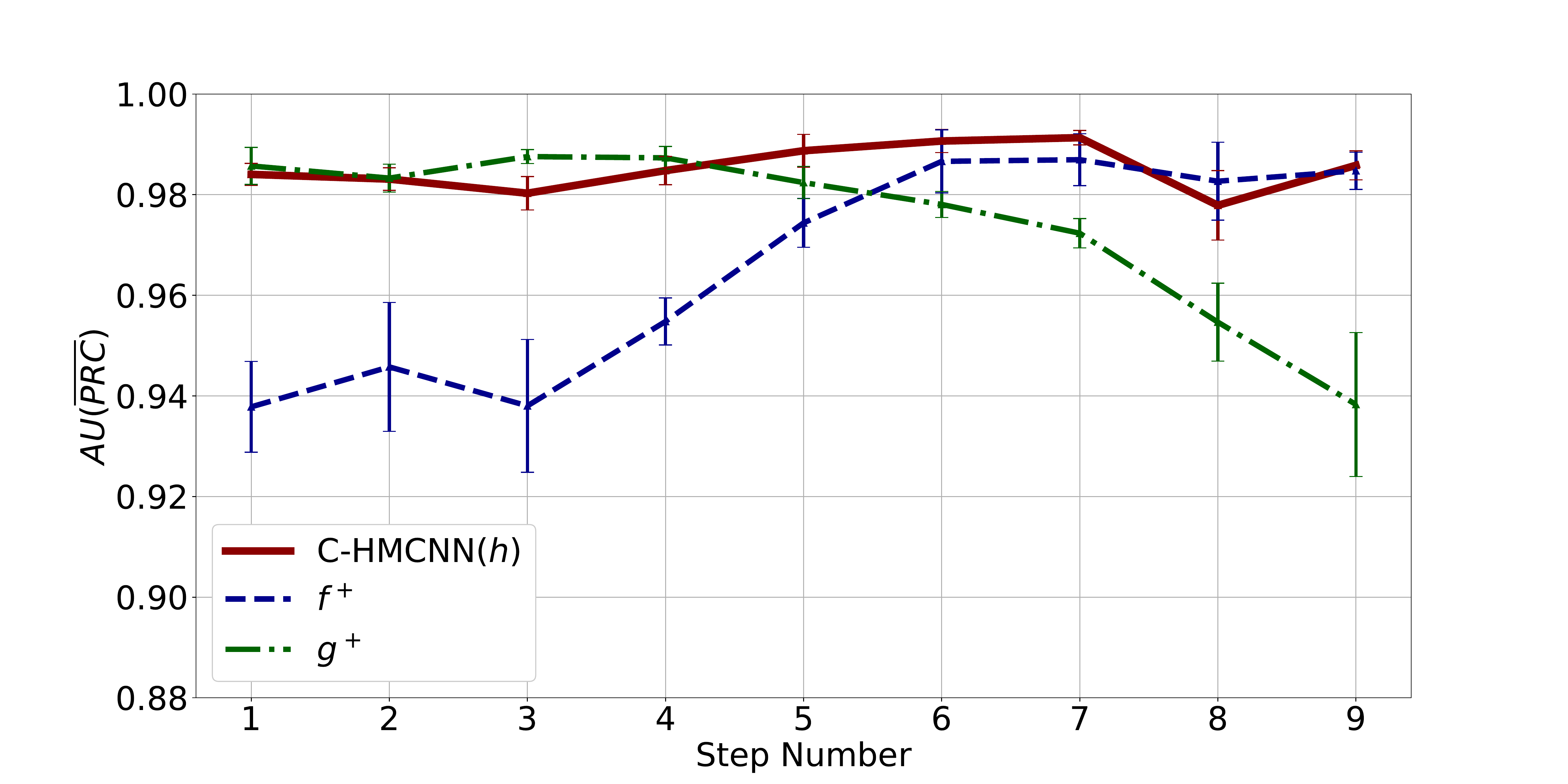}
    \caption{Mean {\auprc} with standard deviation of \system{$h$}, $f^+$, and $g^+$ for each step.\label{fig:moving_rect}}
\end{figure}

\subsection{Hierarchical Multi-Label Classification}

In this section, we present the experimental results of \system{$h$}, first considering two synthetic experiments, and then on 20 real-world datasets for which we compare with current state-of-the-art models for HMC problems. Finally, ablation studies highlight the positive impact of both \module{} and \loss{} on \system{$h$}'s performance.\footnote{Link:  {https://github.com/EGiunchiglia/C-HMCNN/}}

For evaluating performance, we consider the area under the average precision and recall curve \auprc. 
\auprc{}  has the advantage of being independent from the threshold used to predict when a data point belongs to a particular class (which is often heavily application-dependent) and is the most used in the HMC literature \cite{kwok2011,vens2008,cerri2018}.

\subsubsection{Synthetic experiment 1}
Consider the generalization of the experiment  in Section~\ref{sec:hmc_basic} in which we started with $R_1$ outside $R_2$ (as in the second row of Figure~\ref{fig:dec_bound_figs}), and then moved $R_1$ towards the centre of $R_2$ (as in the first row of  Figure~\ref{fig:dec_bound_figs}) in 9 uniform steps.
The last row of  Figure~\ref{fig:dec_bound_figs} corresponds to the
fifth step, i.e., $R_1$ was~halfway.
This experiment is meant to show how the performance of \system{$h$},  $f^+$, and $g^+$ defined as in Section~\ref{sec:hmc_basic}
vary depending on the relative positions of $R_1$ and $R_2$. Here, 
$f$, $g$, and $h$ were implemented and trained as 
in Section~\ref{sec:hmc_basic}. For each step, we run the experiment 10 times,\footnote{All subfigures in Figure~\ref{fig:dec_bound_figs} correspond to the decision boundaries of $f$, $g$, and $h$ in the first of the 10 runs. } and we plot the mean \auprc{} together with the standard deviation for \system{$h$}, $f^+$, and $g^+$ in Figure~\ref{fig:moving_rect}.

As expected, Figure~\ref{fig:moving_rect} shows that $f^+$ performed poorly in the first three steps when $R_1 \cap R_2 = \emptyset$, it then started to perform better at step 4 when $R_1 \cap R_2 \not \in \{R_1, \emptyset\}$, and it performed well from step 6 when $R_1$ overlaps significantly with $R_2$ (at least 65\% of its area). Conversely, $g^+$ performed well on the first five steps, and its performance started decaying from step 6. \system{$h$} performed well at all steps, as expected, showing robustness with respect to the relative positions of $R_1$ and $R_2$. 
Further, \system{$h$} exhibits much more stable performances than $f^+$ and $g^+$ as highlighted by the visibly much smaller standard deviations of \system{$h$}.

\begin{figure*}[t]
\centering
\begin{tabular}{c|}
\begin{minipage}{.2\textwidth}
    \resizebox{2.7cm}{!}{\definecolor{orange}{rgb}{1.0, 0.6, 0.2}
\definecolor{darkblue}{rgb}{0.0, 0.0, 1.0}
\definecolor{darkgreen}{rgb}{0.0, 0.42, 0.24}
\definecolor{darkred}{rgb}{0.8, 0.0, 0.0}

   \begin{tikzpicture}[scale=0.195]
        \draw [fill=red,opacity=0.1,very thick] (0,0) rectangle (10,10);
        \draw [fill=darkblue,opacity=0.4] (1,1) rectangle (4,4)
        node[pos=.5,text=black, opacity=1] { \scriptsize $7$};
        \draw [fill=blue,opacity=0.4] (1,6) rectangle (4,9)
        node[pos=.5,text=black, opacity=1] { \scriptsize $1$};
        \draw [fill=blue,opacity=0.4] (6,1) rectangle (9,4)
        node[pos=.5,text=black, opacity=1] { \scriptsize $9$};
        \draw [fill=blue,opacity=0.4] (6,6) rectangle (9,9)
        node[pos=.5,text=black, opacity=1] { \scriptsize $3$};
        \draw [fill=blue,opacity=0.4] (4.5,4.5) rectangle (5.5,5.5)
        node[pos=.5,text=black, opacity=1] { \scriptsize $5$};
        \draw [fill=blue,opacity=0.4] (1,4.5) rectangle (4,5.5)
        node[pos=.5,text=black, opacity=1] { \scriptsize $4$};
        \draw [fill=blue,opacity=0.4] (4.5,1) rectangle (5.5,4)
        node[pos=.5,text=black, opacity=1] { \scriptsize $8$};
        \draw [fill=blue,opacity=0.4] (4.5,6) rectangle (5.5,9)
        node[pos=.5,text=black, opacity=1] { \scriptsize $2$};
        \draw [fill=blue,opacity=0.4] (6,4.5) rectangle (9,5.5)
        node[pos=.5,text=black, opacity=1] { \scriptsize $6$};
\end{tikzpicture}}
\end{minipage}  \\
\\
\begin{minipage}{.2\textwidth}
    \resizebox{3.1cm}{!}{\usetikzlibrary{arrows.meta}
\begin{tikzpicture}
\begin{scope}[every node/.style={circle,thick,draw=none,inner sep=0, outer sep=0}]
    \node (A5) at (3,2) {\LARGE $A_5$};
     \node (A1) at (0,0) {\LARGE $A_1$};
     \node (A2) at (1,0) {\LARGE $A_2$};
     \node (A4) at (2,0) {\LARGE $A_4$};
     \node (A6) at (3,0) {\LARGE $A_6$};
     \node (A7) at (4,0) {\LARGE $A_7$};
    \node (A8) at (5,0) {\LARGE $A_8$};
    \node (A9) at (6,0) {\LARGE $A_9$};
    \node (A3) at (3,-2) {\LARGE $A_3$};
\end{scope}

\begin{scope}[>={Stealth[black]},
              every node/.style={draw=none,fill=white,circle,inner sep=0, outer sep=0},
              every edge/.style={draw=black,very thick}]
    \path [->] (A1) edge (A5);
    \path [->] (A2) edge (A5);
    \path [->] (A4) edge (A5);
    \path [->] (A6) edge (A5);
    \path [->] (A7) edge (A5);
    \path [->] (A8) edge (A5);
    \path [->] (A9) edge (A5);
    
    \path [->] (A3) edge (A1);
     \path [->] (A3) edge (A2);
      \path [->] (A3) edge (A4);
       \path [->] (A3) edge (A6);
        \path [->] (A3) edge (A7);
         \path [->] (A3) edge (A8);
          \path [->] (A3) edge (A9);

\end{scope}
\end{tikzpicture}}
\end{minipage}  \vspace{-4mm}\\
\end{tabular}
\resizebox{0.7\textwidth}{!}{%
\begin{tabular} {c c c | c c c}
\multicolumn{3}{c |}{\system{$h$}+$\lss$} & 
\multicolumn{3}{c}{\system{$h$}} \\
\begin{minipage}{.135\textwidth}
    \includegraphics[width=\textwidth,trim={1.1cm 0 3.7cm 1cm},clip]{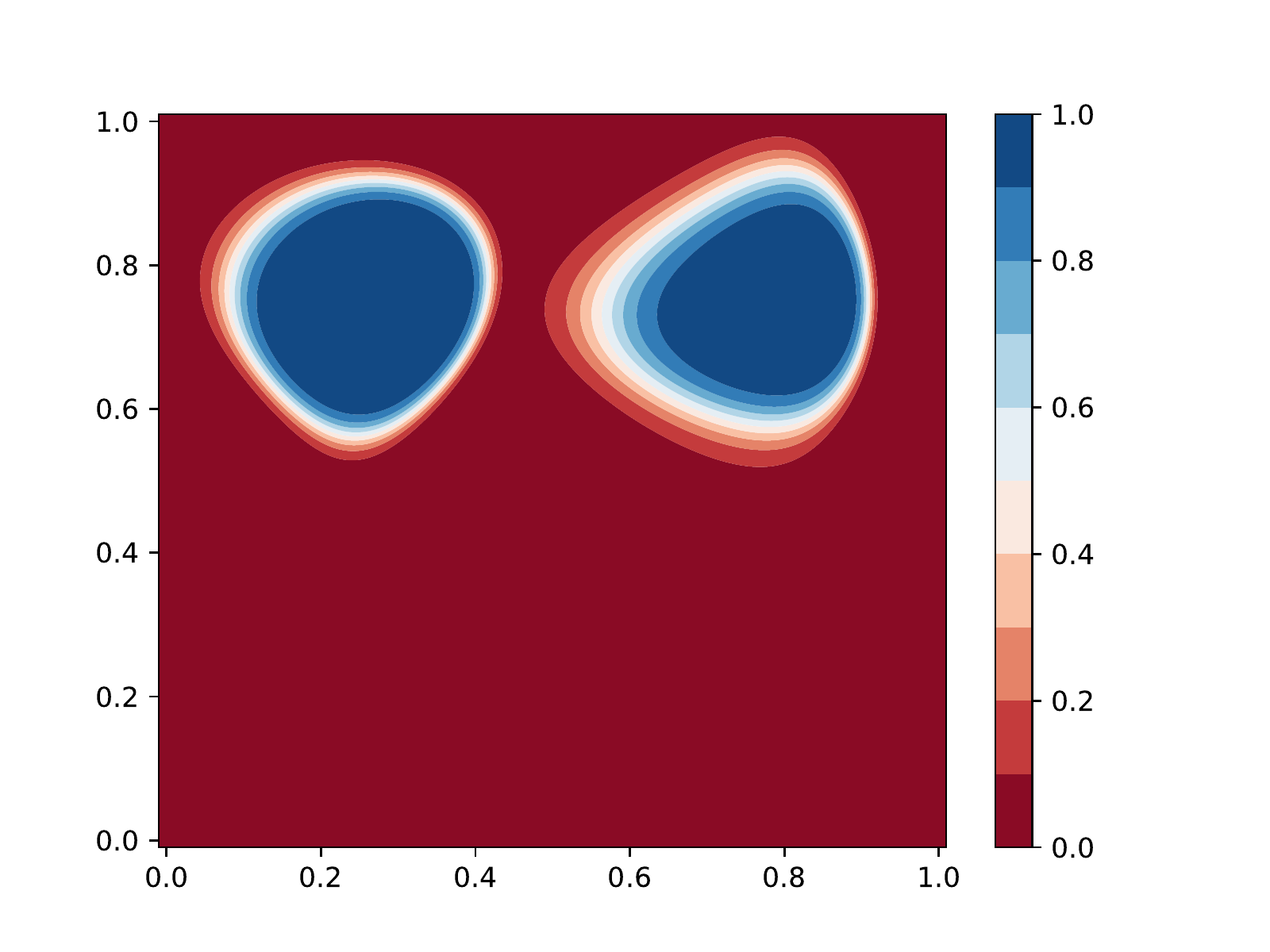}
\end{minipage} &
\begin{minipage}{.135\textwidth}
    \includegraphics[width=\textwidth,trim={1.1cm 0 3.7cm 1cm},clip]{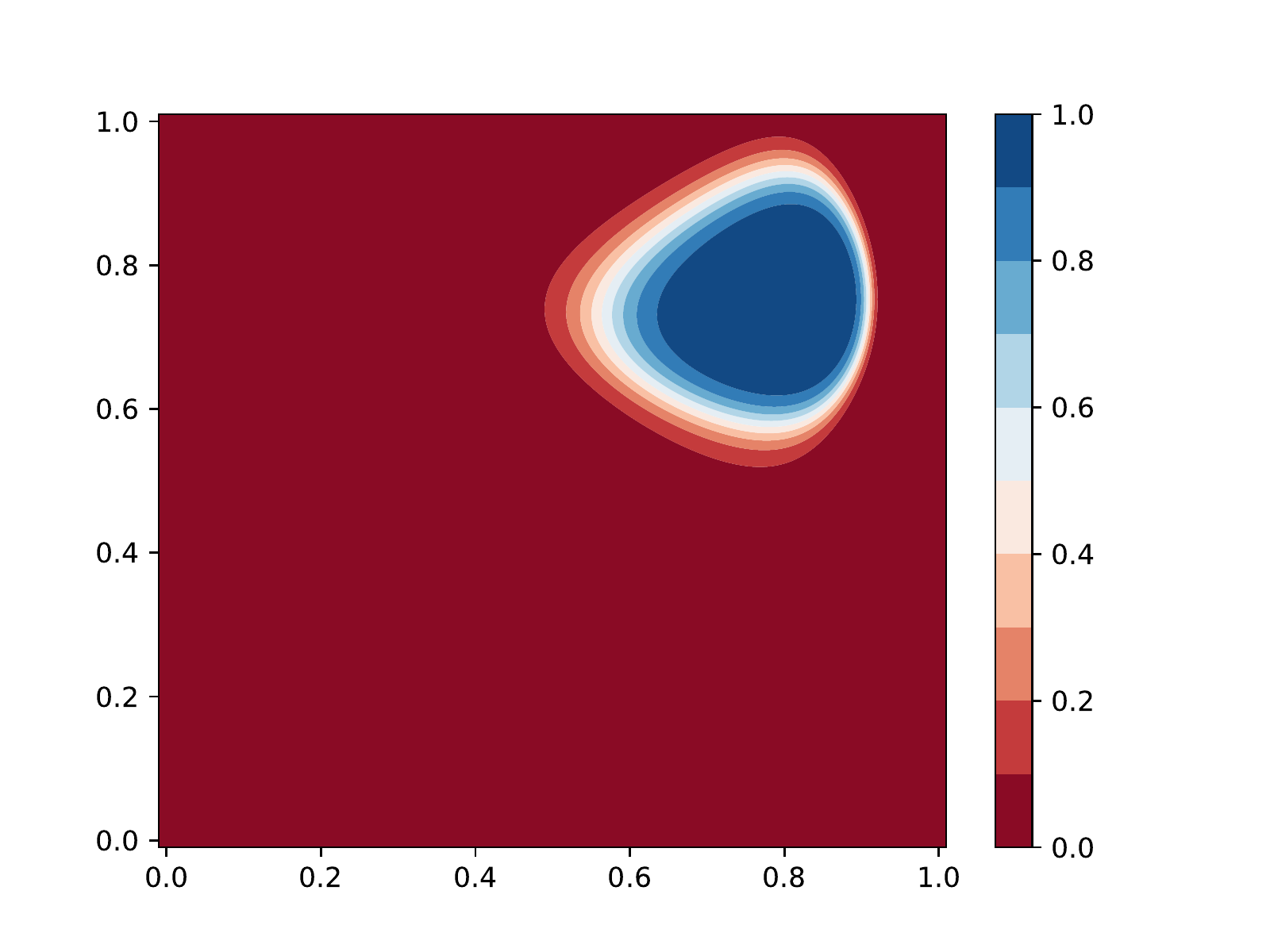}
\end{minipage} &
\begin{minipage}{.135\textwidth}
    \includegraphics[width=\textwidth,trim={1.1cm 0 3.7cm 1cm},clip]{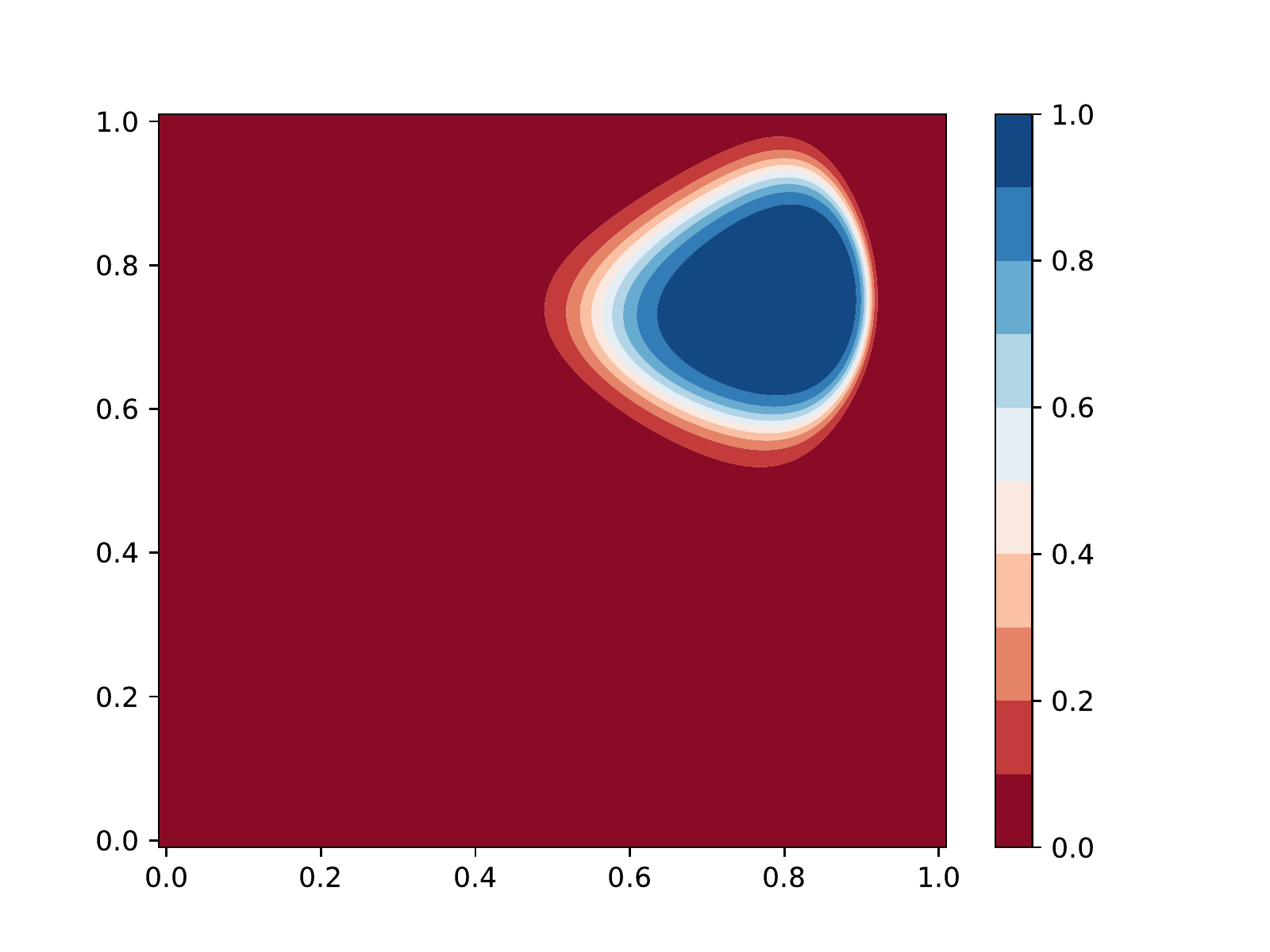}
\end{minipage} &
\begin{minipage}{.135\textwidth}
    \includegraphics[width=\textwidth,trim={1.1cm 0 3.7cm 1cm},clip]{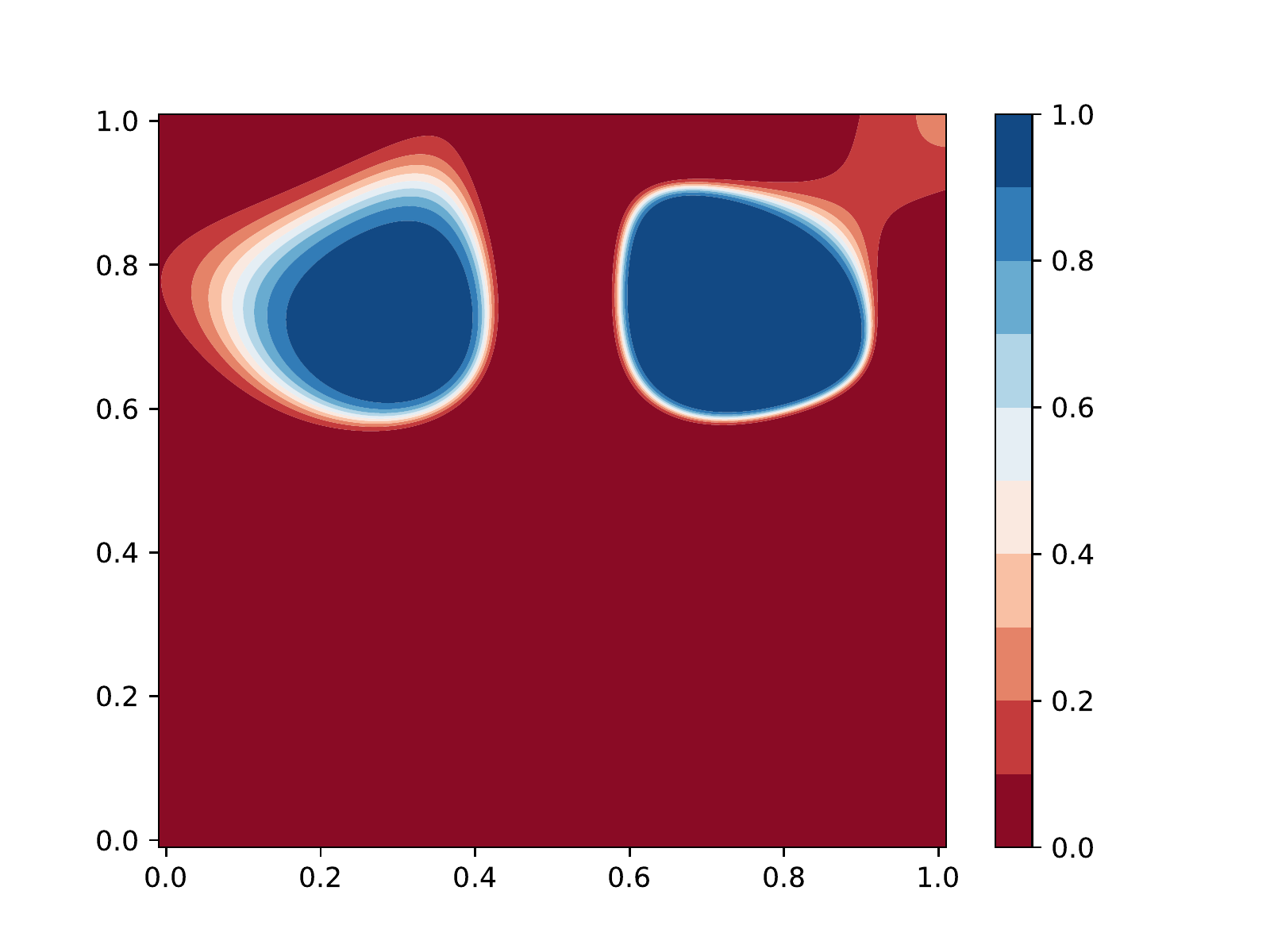}
\end{minipage} &
\begin{minipage}{.135\textwidth}
    \includegraphics[width=\linewidth,trim={1.1cm 0 3.7cm 1cm},clip]{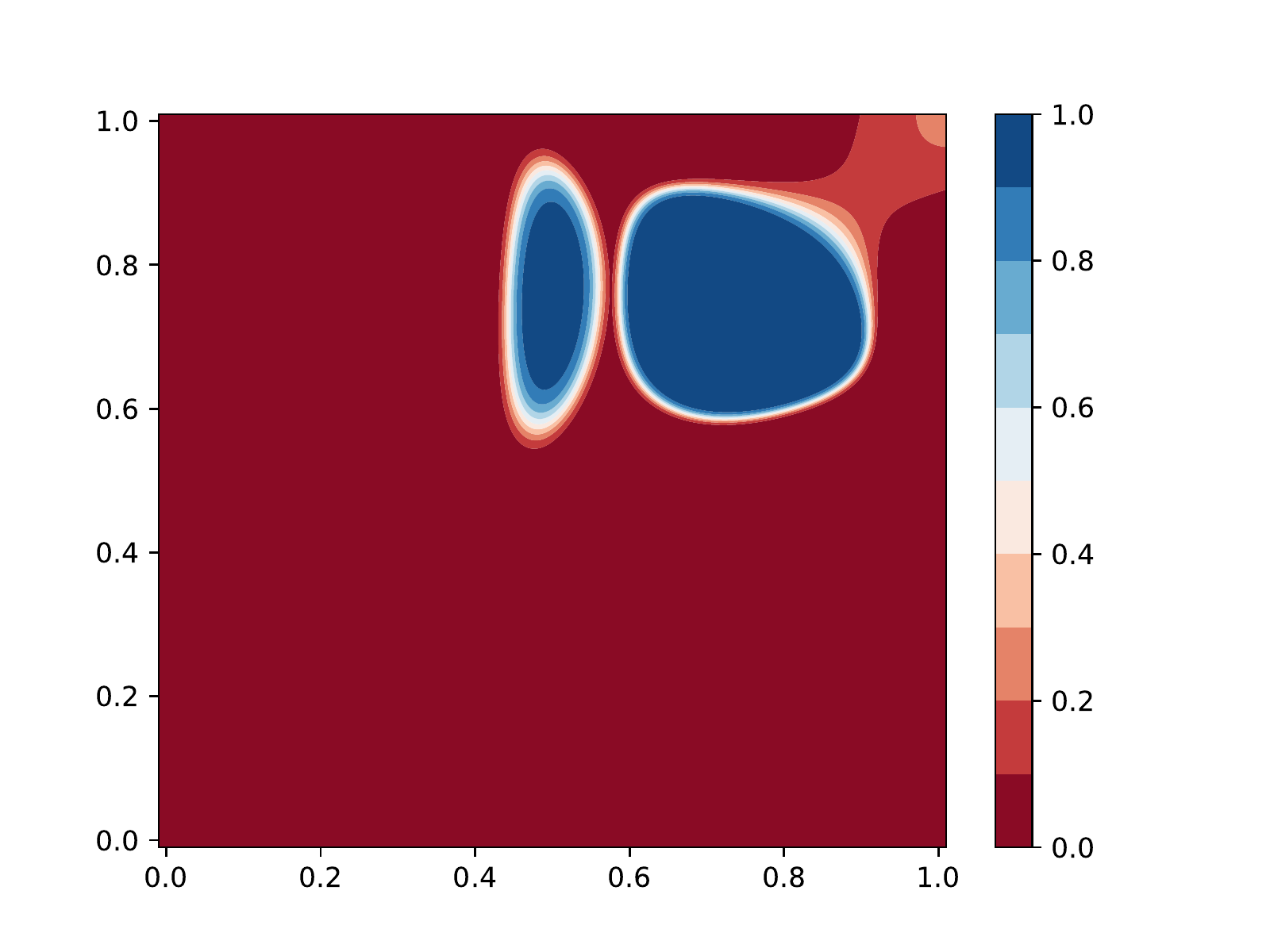}
\end{minipage} &
\begin{minipage}{.17\textwidth}
    \includegraphics[width=\linewidth,trim={1.1cm 0 1cm 1cm},clip]{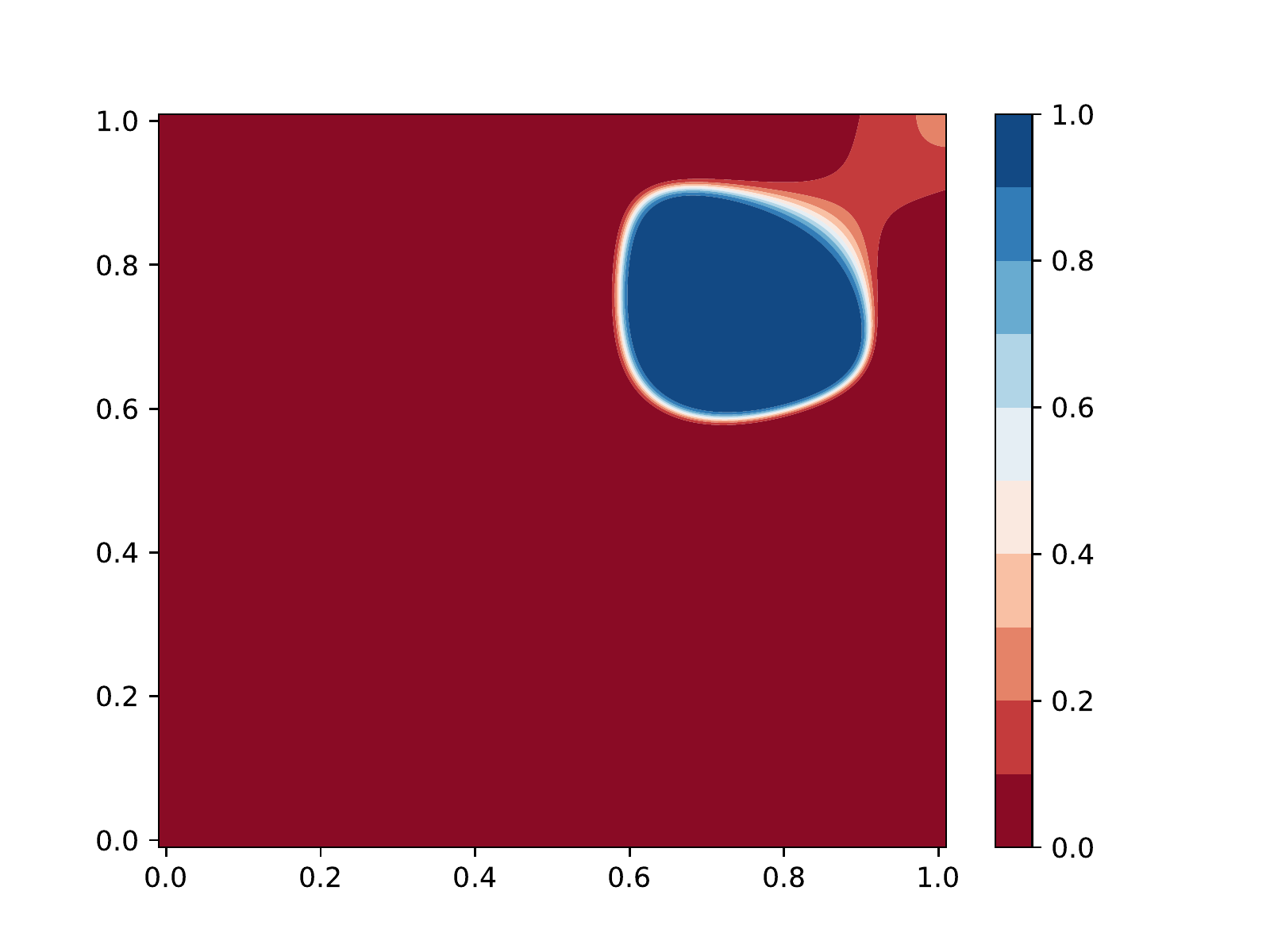}
\end{minipage} \\
\begin{minipage}{.135\textwidth}
    \includegraphics[width=\textwidth,trim={1.1cm 0 3.7cm 1cm},clip]{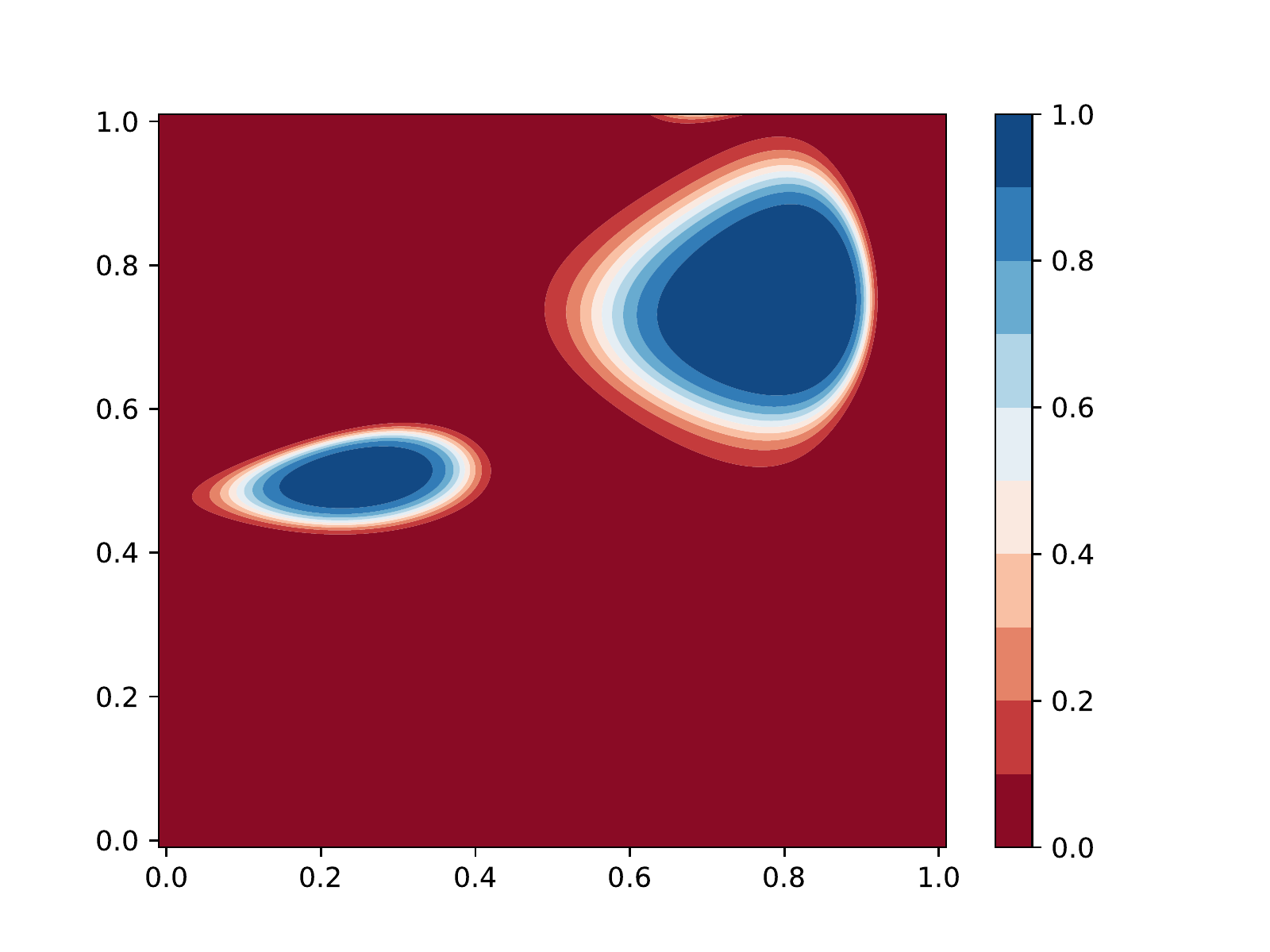}
\end{minipage} &
\begin{minipage}{.135\textwidth}
    \includegraphics[width=\textwidth,trim={1.1cm 0 3.7cm 1cm},clip]{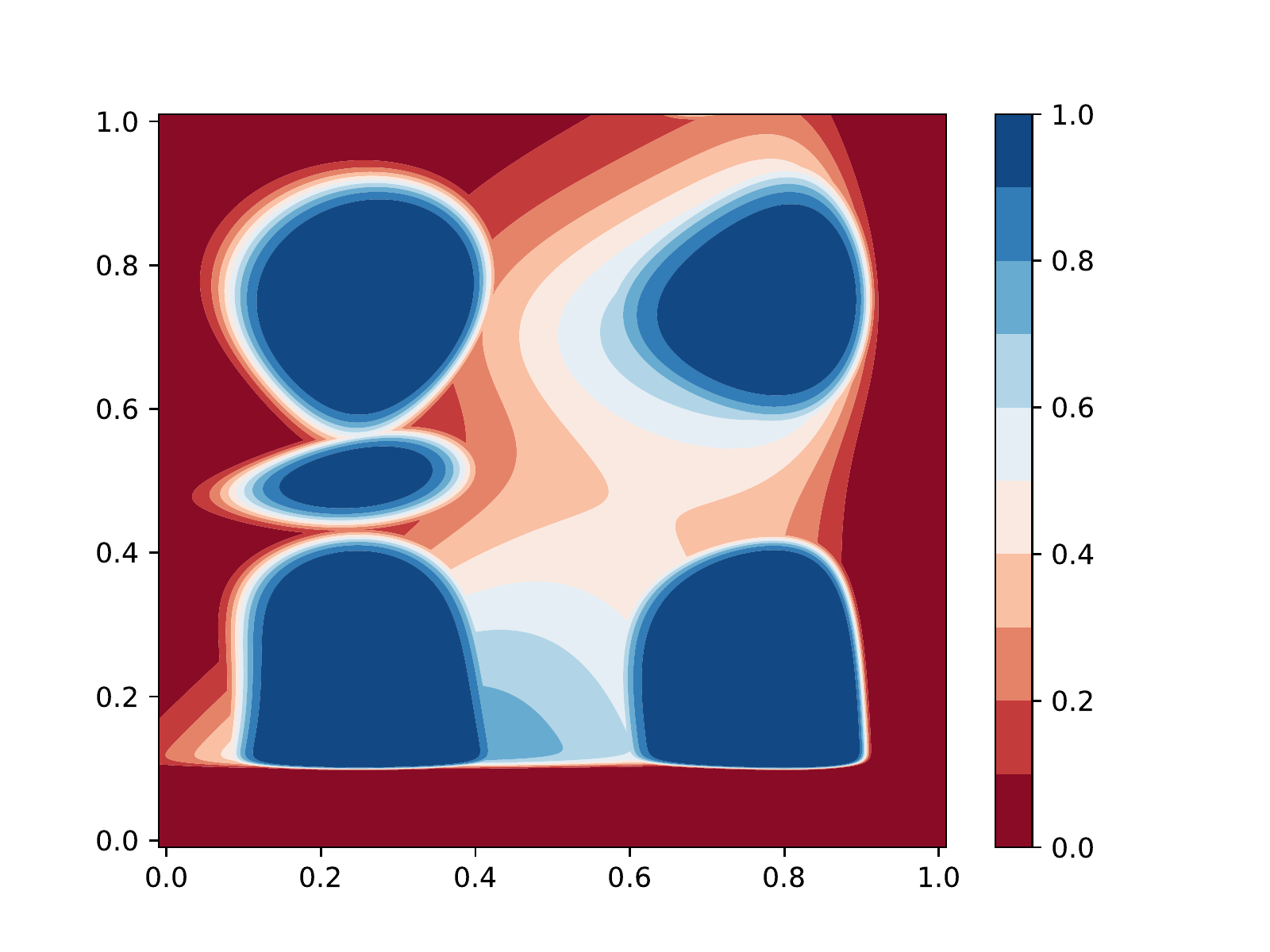} 
\end{minipage} &
\begin{minipage}{.135\textwidth}
    \includegraphics[width=\textwidth,trim={1.1cm 0 3.7cm 1cm},clip]{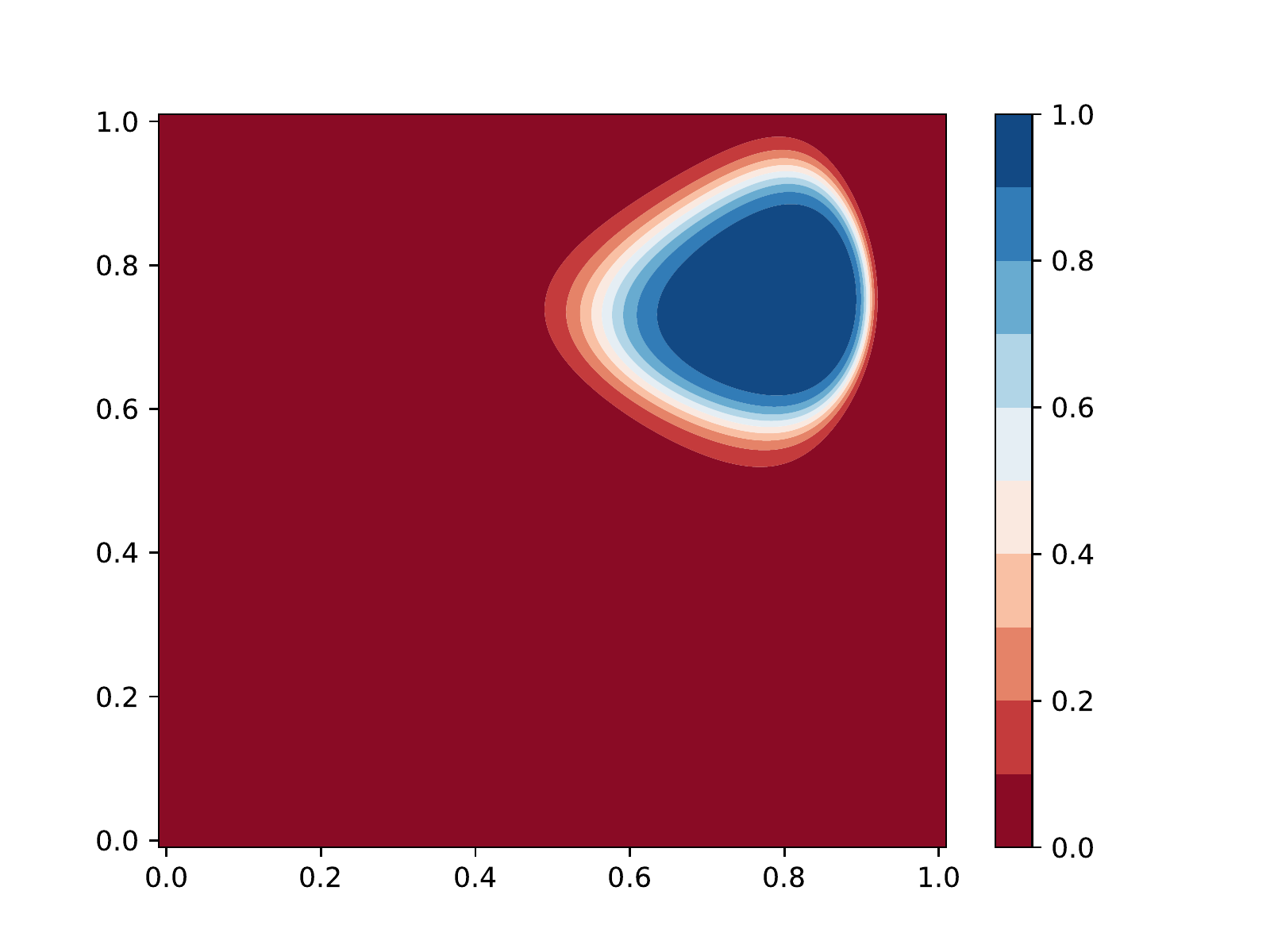} 
\end{minipage} &
\begin{minipage}{.135\textwidth}
    \includegraphics[width=\textwidth,trim={1.1cm 0 3.7cm 1cm},clip]{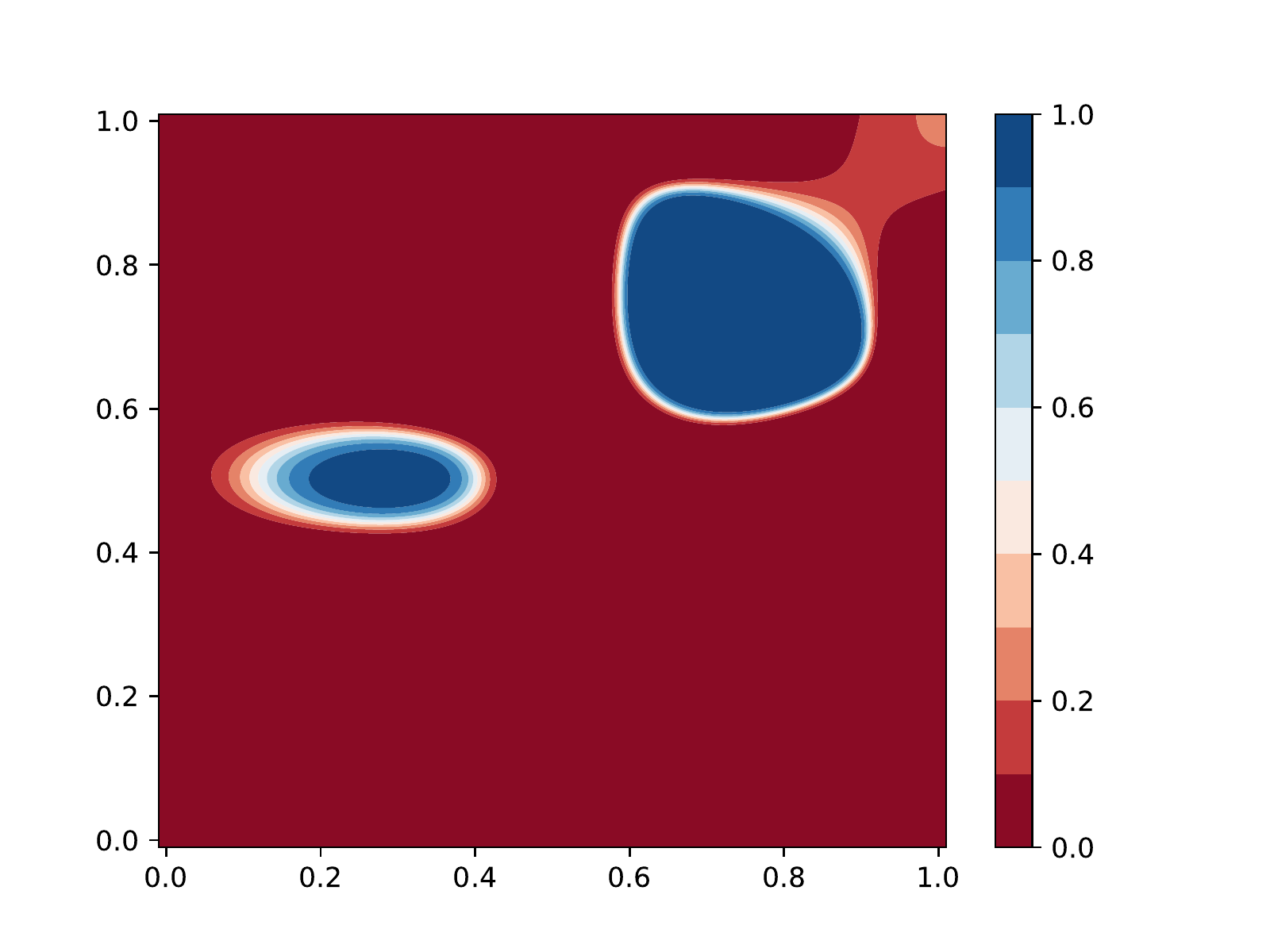} 
    \end{minipage} &
\begin{minipage}{.135\textwidth}
    \includegraphics[width=\linewidth,trim={1.1cm 0 3.7cm 1cm},clip]{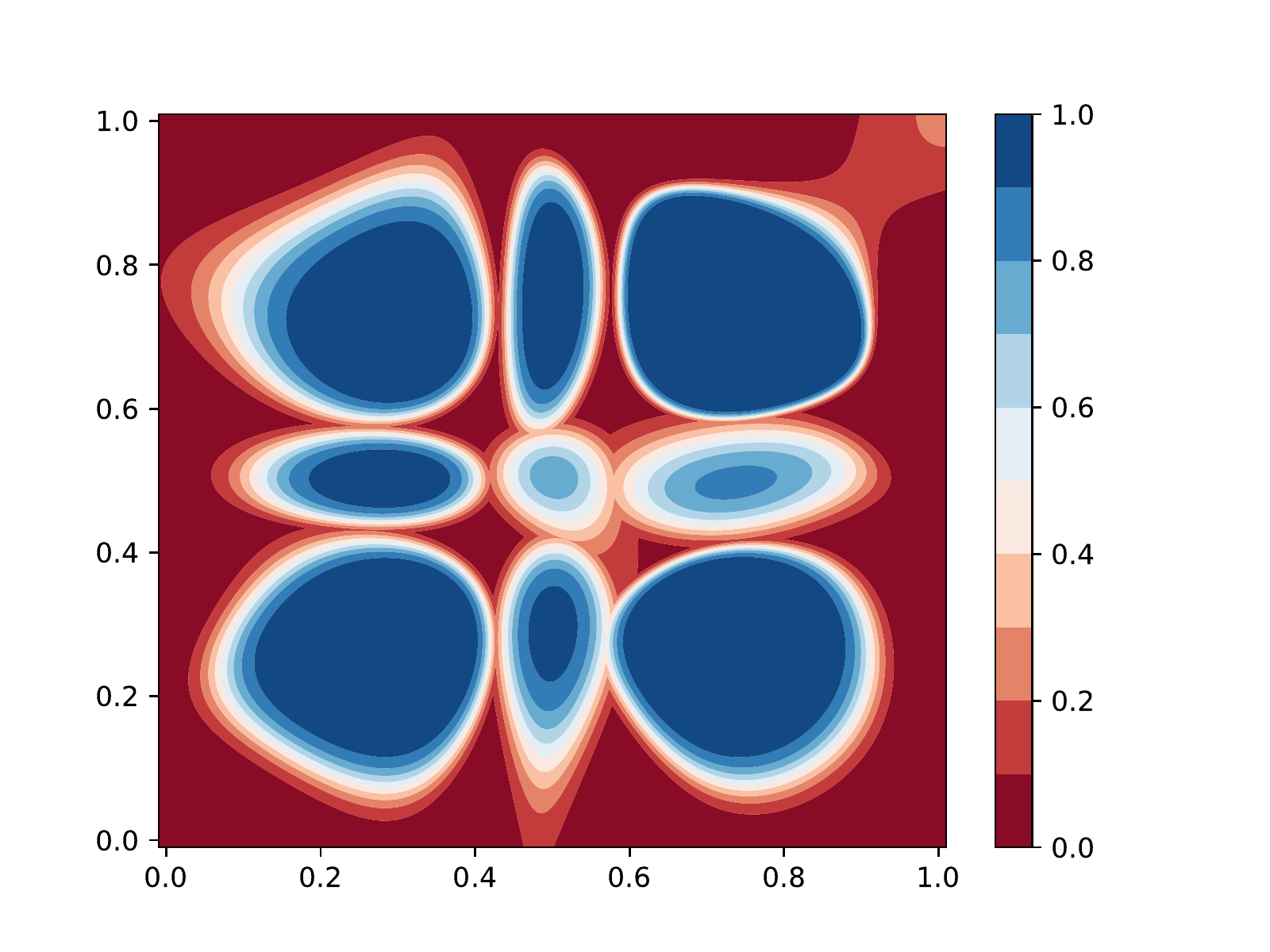}
\end{minipage} &
\begin{minipage}{.17\textwidth} 
    \includegraphics[width=\linewidth,trim={1.1cm 0 1cm 1cm},clip]{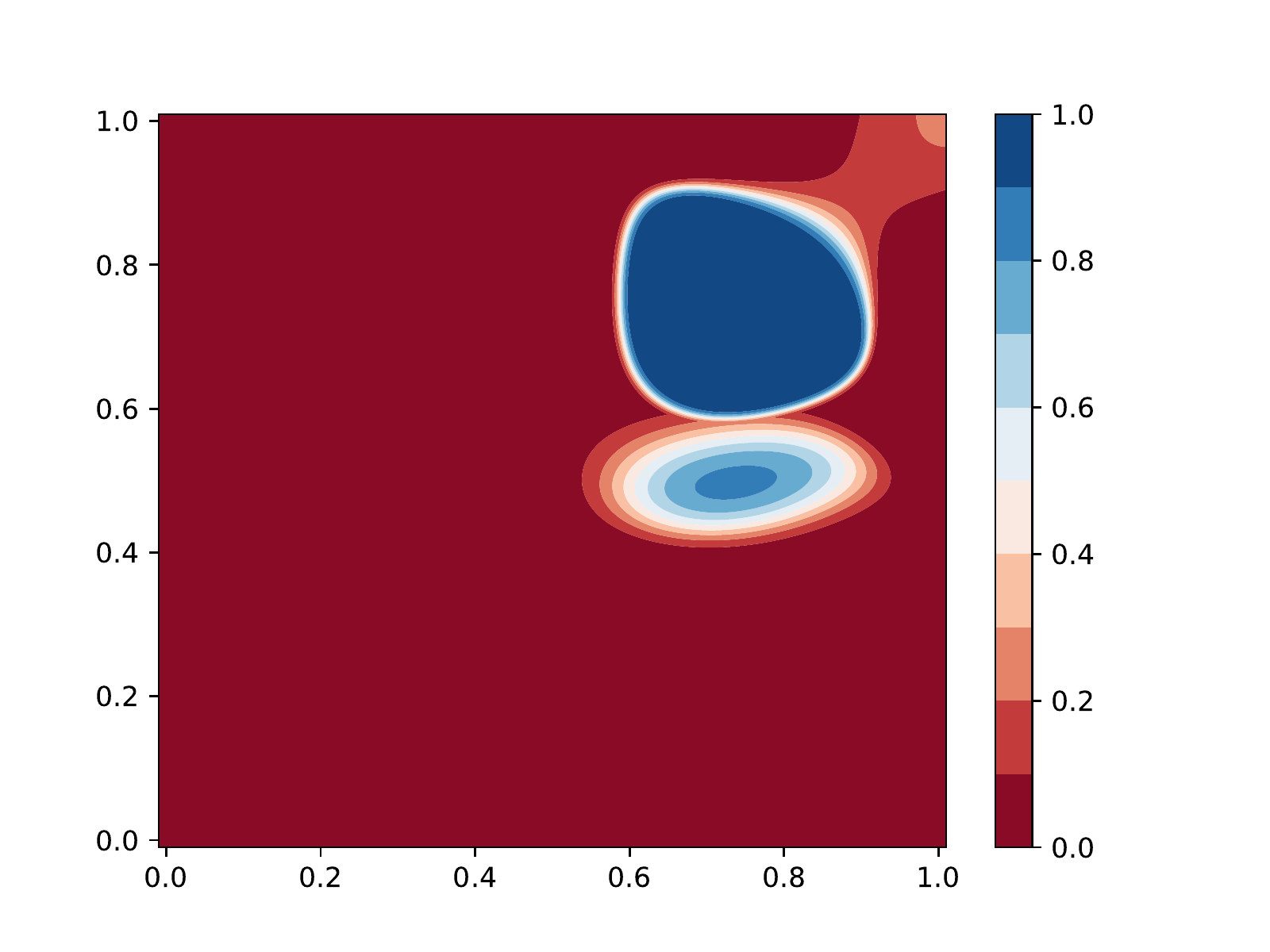}
\end{minipage} \\
\begin{minipage}{.135\textwidth}
    \includegraphics[width=\textwidth,trim={1.1cm 0 3.7cm 1cm},clip]{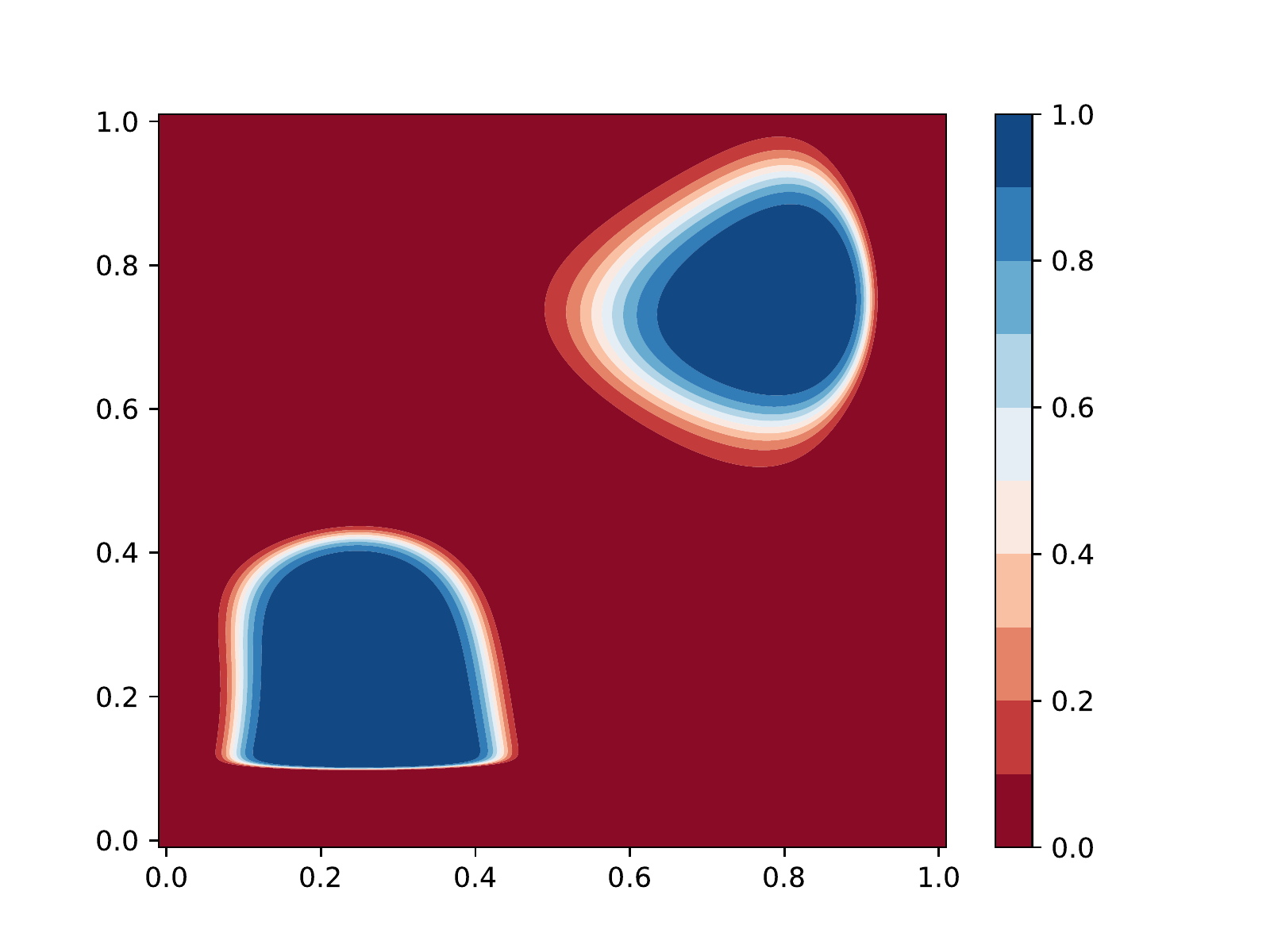}
\end{minipage} &
\begin{minipage}{.135\textwidth}
    \includegraphics[width=\textwidth,trim={1.1cm 0 3.7cm 1cm},clip]{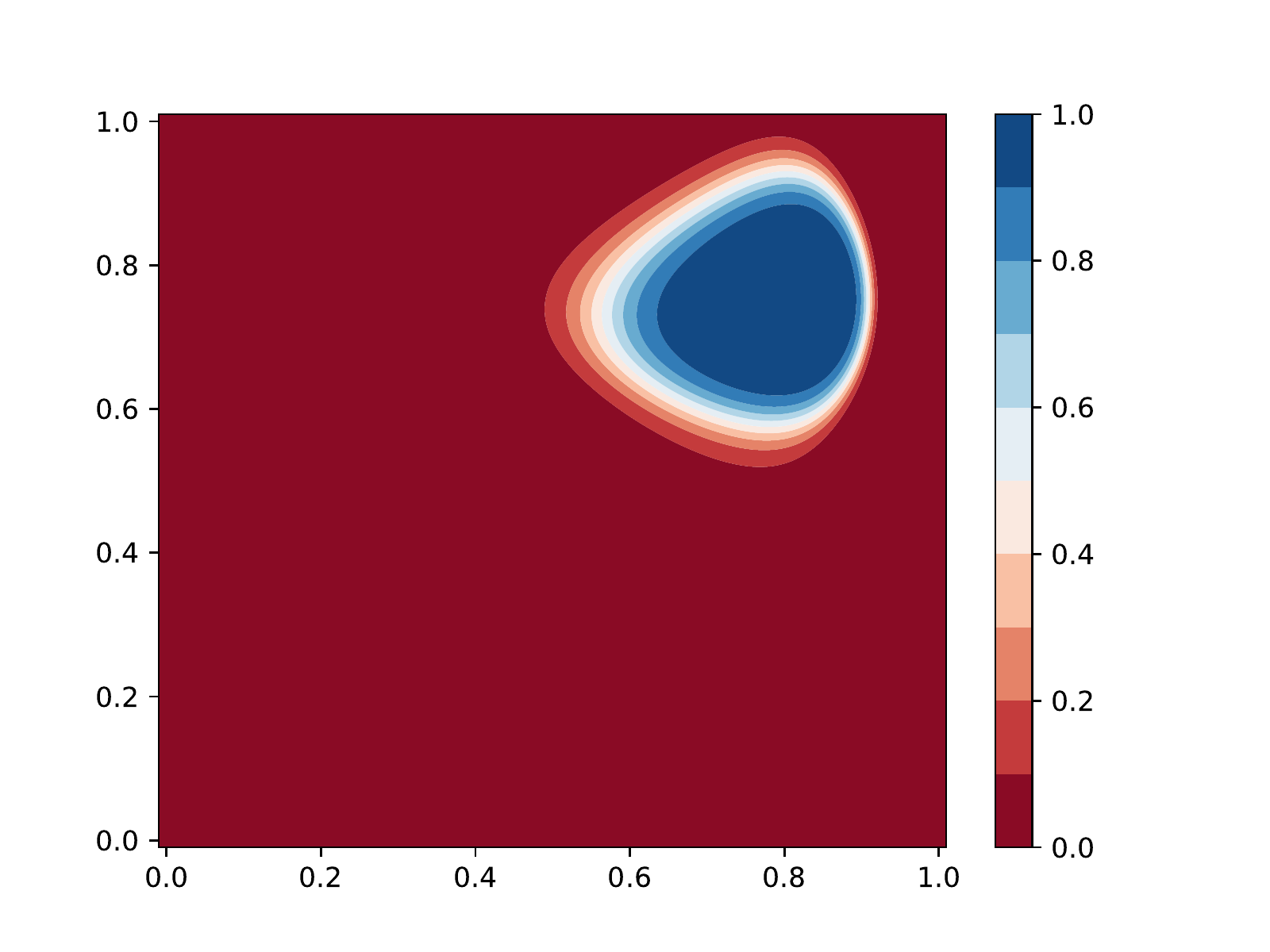}
\end{minipage} &
\begin{minipage}{.135\textwidth}
    \includegraphics[width=\textwidth,trim={1.1cm 0 3.7cm 1cm},clip]{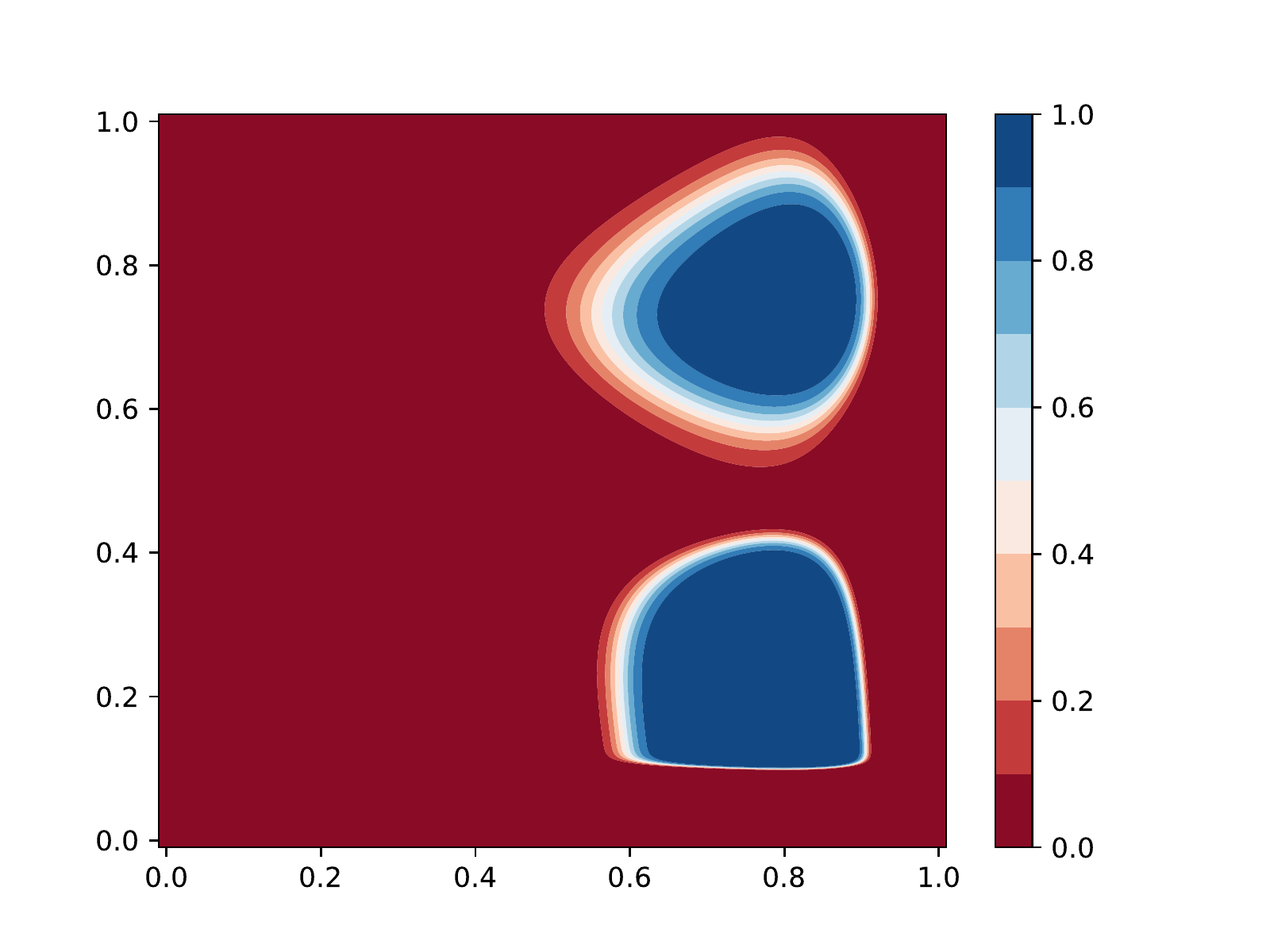}
\end{minipage} &
\begin{minipage}{.135\textwidth}
    \includegraphics[width=\textwidth,trim={1.1cm 0 3.7cm 1cm},clip]{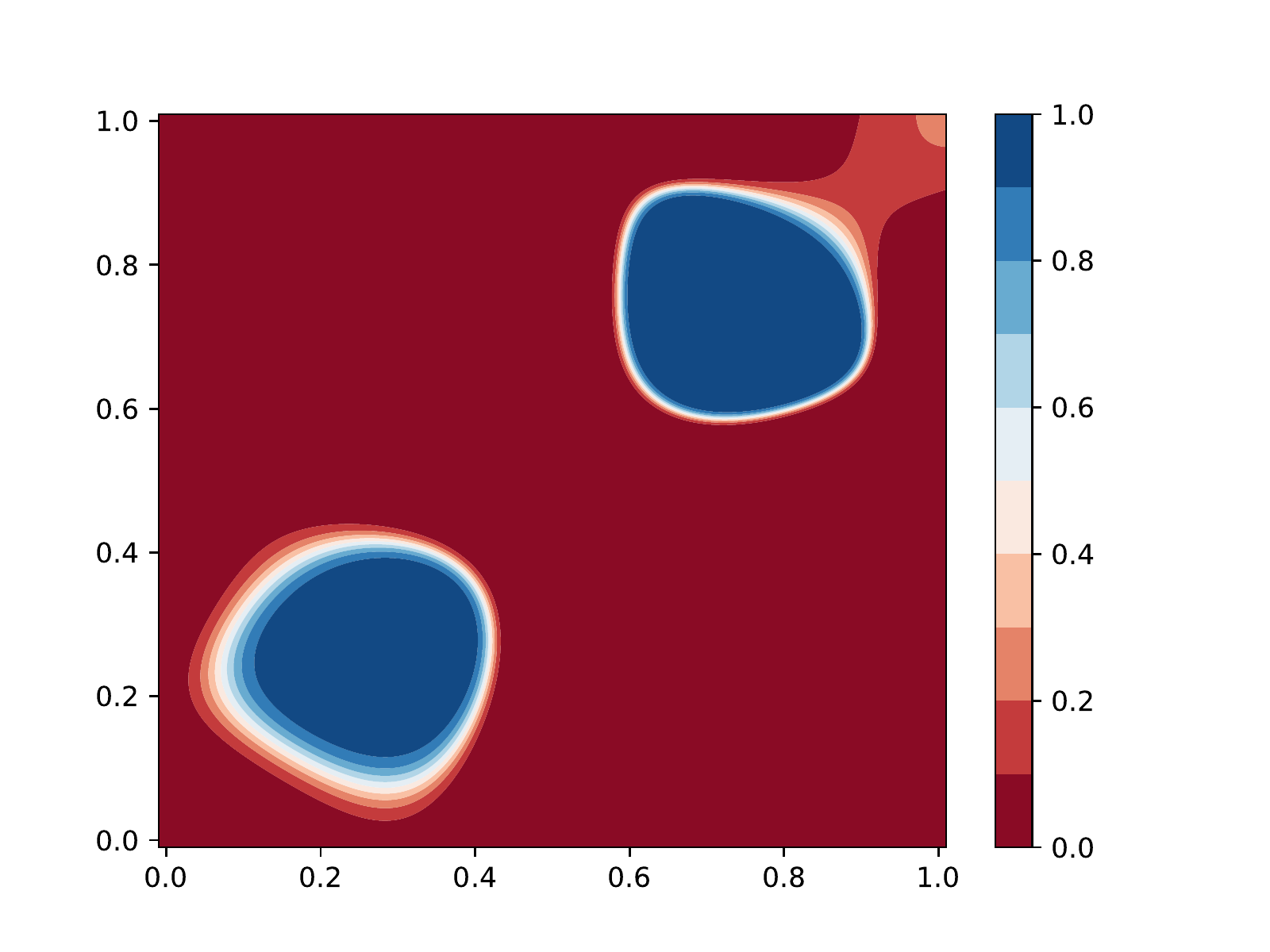}
\end{minipage} &
\begin{minipage}{.135\textwidth}
    \includegraphics[width=\linewidth,trim={1.1cm 0 3.7cm 1cm},clip]{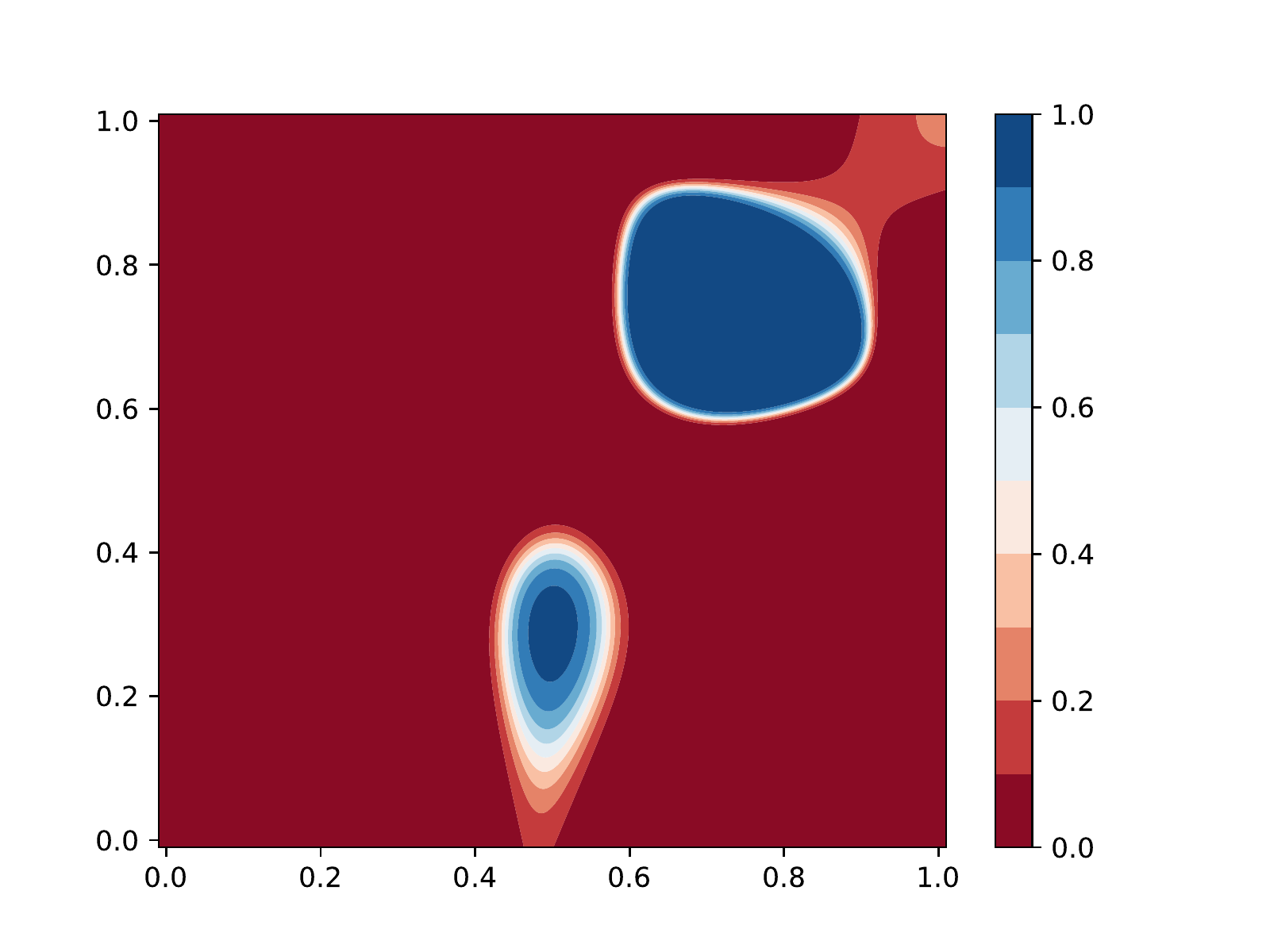}
\end{minipage} &
\begin{minipage}{.17\textwidth}
    \includegraphics[width=\linewidth,trim={1.1cm 0 1cm 1cm},clip]{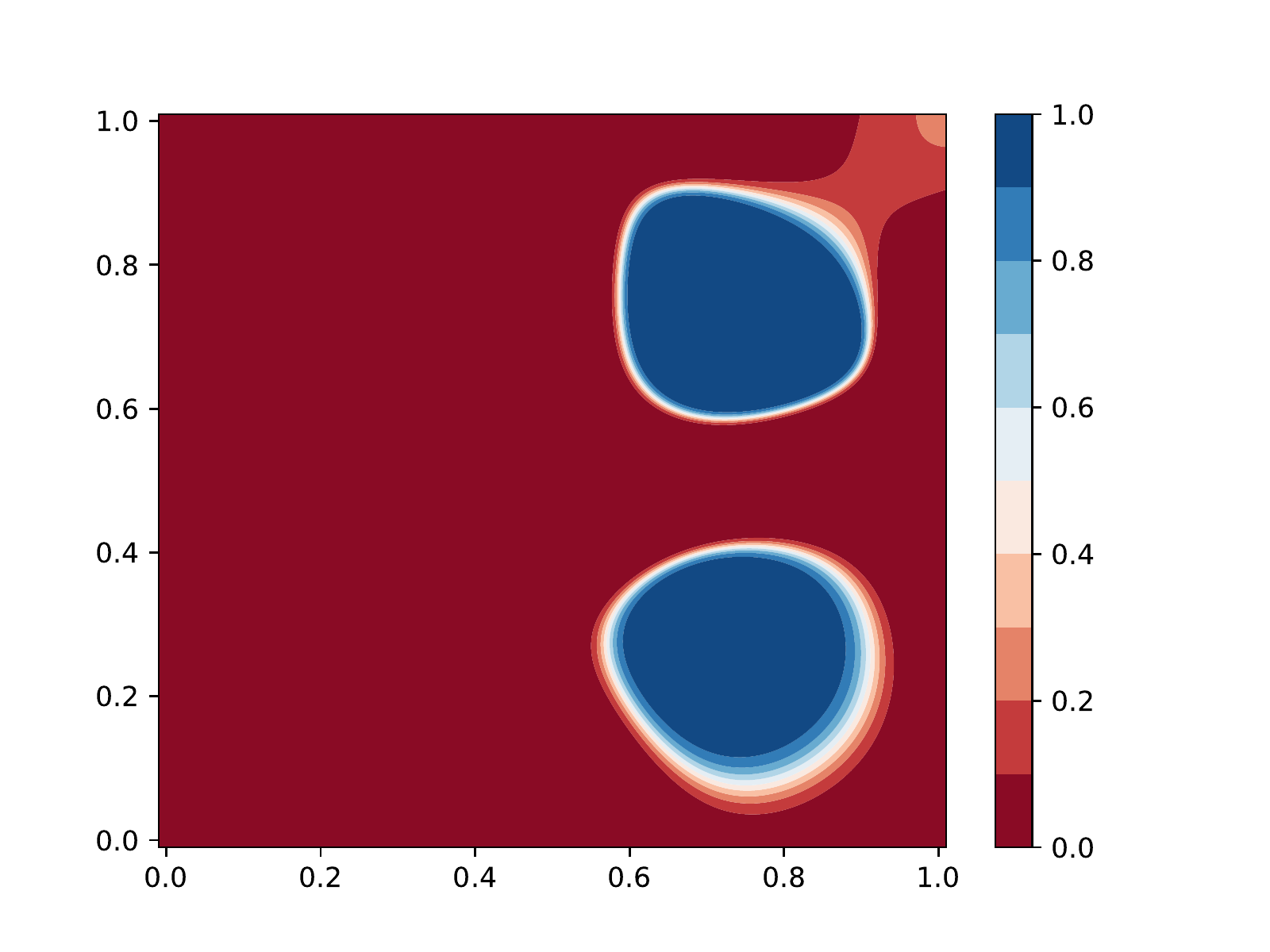}
\end{minipage} \vspace*{-1ex}
\end{tabular}
}
\caption{
 First column: rectangles disposition above and hierarchy representation below. Second column: decision boundaries of \system{$h$}+$\lss$ for each class. Last  column: decision boundaries of \system{$h$}  for each class. In each figure showing decision boundaries, the darker the blue (resp., red), the more confident a model is that the data points in the region belong (resp., do not belong) to the class (see the scale at the end of each row).
}
\label{fig:dec_bound_9classes}
\end{figure*}

\subsubsection{Synthetic experiment 2}
In order to prove the importance of using \loss{} instead of the standard binary cross entropy loss $\lss$, in this experiment, we compare two models: (i) our model \system{$h$}, and (ii) $h+\module$, i.e., $h$ with $\module$ built on top and trained with $\lss$.
Consider 
the nine rectangles arranged as showed on the top left of Figure~\ref{fig:dec_bound_9classes}
named $R_1, \ldots, R_9$. Assume that 
\begin{enumerate}
    \item we have classes $A_1 \ldots A_9$, 
    \item a data point belongs to $A_i$ if it belongs to the $i$th rectangle, and 
    \item $A_5$ (resp., $A_3$) is an ancestor (resp., descendant) of every class, as shown in the hierarchy on the bottom left of Figure~\ref{fig:dec_bound_9classes}. 
\end{enumerate}
Thus, all points in $R_3$ belong to all classes, and if a data point belongs to a rectangle, then it also belongs to class $A_5$. The datasets consisted of 5000 (50/50 train/test split) data points sampled from a uniform distribution over $[0,1]^2$.

Let $h$ be a feedforward neural network with a single hidden layer with 7 neurons.  We train both $h+\module$ and \system{$h$} for 20k epochs using Adam optimization with learning rate $10^{-2}$ ($\beta_1= 0.9, \beta_2= 0.999$). 
As expected, the average {\auprc} (and standard deviation) over 10 runs for $h+\module$ trained with $\mathcal{L}$ is 0.938 (0.038), while $h+\module$ trained with {\loss} (\system{$h$}) is 0.974 (0.007). Notice that not only $h+\module$ performs worse, but also, due to the convergence to bad local optima, the standard deviation obtained with $h+\module$ is 5 times higher than the one of \system{$h$}: the (min, median, max) {\auprc} for $h+\module$ are $(0.871, 0.945,
0.990)$, while for \system{$h$} are $(0.964, 0.975, 0.990)$.
The difference between \system{$h$} and \system{$h$}+$\lss$ in performance is further highlighted in Figure~\ref{fig:dec_bound_9classes}, which shows the decision boundaries of the 6th best performing networks.\footnote{We picked the 6th best performing networks due to the high variance of the results \system{$h$}+$\lss$.} The figure points out how 
mistakes given by wrong supervisions in lower levels of the hierarchy (see decision boundaries for  $A_2, A_6$, and $A_8$) might 
have dramatic consequences in upper levels of the hierarchy (see decision boundaries for $A_5$).

\begin{table}[t]
    \centering
    \begin{tabular}{l l c c c c c c}
    \toprule
         {\sc Taxonomy} & {\sc Dataset} & $D$ & $n$ & {\sc Train} & {\sc Val} & {\sc Test}  \\
    \midrule
         {\sc FunCat (FUN)} & {\sc Cellcycle} & 77 & 499 & 1625 & 848 & 1281 \\
         {\sc FunCat (FUN)} & {\sc Derisi} & 63 & 499 &  1605 & 842 & 1272 \\
         {\sc FunCat (FUN)} & {\sc Eisen} & 79 & 461 &  1055 & 529 & 835 \\
         {\sc FunCat (FUN)} & {\sc Expr} & 551 & 499 &  1636 & 849 & 1288 \\
         {\sc FunCat (FUN)} & {\sc Gasch1} & 173 & 499 &  1631 & 846 & 1281 \\
         {\sc FunCat (FUN)} & {\sc Gasch2} & 52 & 499 &  1636 & 849 & 1288 \\
         {\sc FunCat (FUN)} & {\sc Seq} & 478 & 499 & 1692 & 876 & 1332 \\
         {\sc FunCat(FUN)} & {\sc Spo} & 80 & 499 & 1597 & 837 & 1263 \\
    \midrule   
    {\sc Gene Ontology (GO)} & {\sc Cellcycle} & 77 & 4122 & 1625 & 848 & 1281 \\
     {\sc Gene Ontology (GO)} & {\sc Derisi} & 63 & 4116 & 1605 & 842 & 1272 \\
     {\sc Gene Ontology (GO)} & {\sc Eisen} & 79 & 3570 & 1055 & 528 & 835 \\
     {\sc Gene Ontology (GO)} & {\sc Expr} & 551 & 4128 & 1636 & 849 & 1288 \\
     {\sc Gene Ontology (GO)} & {\sc Gasch1} & 173 & 4122 & 1631 & 846 & 1281 \\
     {\sc Gene Ontology (GO)} & {\sc Gasch2} & 52 & 4128 & 1636 & 849 & 1288 \\
     {\sc Gene Ontology (GO)} & {\sc Seq} & 478 & 4130 & 1692 & 876 & 1332 \\
     {\sc Gene Ontology (GO)} & {\sc Spo} & 80 & 4166 & 1597 & 837 & 1263 \\
     \midrule
     {\sc Tree} & {\sc Diatoms} & 371 & 398 & 1085 & 464 & 1054 \\
    {\sc Tree} & {\sc Enron} & 1000 & 56 & 692 & 296 & 660 \\
     {\sc Tree} & {\sc Imclef07a} & 80 & 96 & 7000 & 3000 & 1006 \\
     {\sc Tree} & {\sc Imclef07d} & 80 & 46 & 7000 & 3000 & 1006 \\
    \bottomrule
    \end{tabular}
    \caption{Summary of the 20 real-world datasets. Number of features ($D$), number of classes ($n$), and number of data points for each dataset split.}
    \label{tab:datasets_hmc}
\end{table}

\begin{table}[t]
    \centering
    \begin{tabular}{l c c c | c c }
    \toprule
         Dataset & \system{$h$}  & \sc{\lmlp} & \sc{\ens} & \sc{\hmcnr} & \sc{\hmcnf} \\
         \midrule
         \sc{Cellcycle FUN} & {\textbf{0.255}} & 0.207 & 0.227 &0.247& 0.252  \\ 
         \sc{Derisi FUN} & {\textbf{0.195}}  & 0.182 & 0.187 &0.189 & 0.193 \\
         \sc{Eisen FUN} & {\textbf{0.306}}  & 0.245 & 0.286 &0.298 & 0.298\\
         \sc{Expr FUN} & {\textbf{0.302}} & 0.242 & 0.271 &0.300 & 0.301 \\
         \sc{Gasch1 FUN} & {\textbf{0.286}} & 0.235 & 0.267 & 0.283 & 0.284 \\
         \sc{Gasch2 FUN} & {\textbf{0.258}}  & 0.211 & 0.231 & 0.249 & 0.254\\
         \sc{Seq FUN} & {\textbf{0.292}} & 0.236 & 0.284 & 0.290 & 0.291 \\ 
         \sc{Spo FUN} & {\textbf{0.215}} & 0.186 & 0.211 &0.210 & 0.211 \\
         \midrule
         \sc{Cellcycle GO} &{\textbf{0.413}} & 0.361 & 0.387 &0.395& 0.400  \\ 
         \sc{Derisi GO} & {\textbf{0.370}}  & 0.343 & 0.361 &0.368& 0.369 \\
         \sc{Eisen GO} & {\textbf{0.455}} & 0.406 & 0.433 &0.435& 0.440\\
         \sc{Expr GO} & 0.447 & 0.373 & 0.422 &0.450& \textbf{0.452} \\
         \sc{Gasch1} GO & {\textbf{0.436}} & 0.380 & 0.415 &0.416& 0.428 \\
         \sc{Gasch2} GO & 0.414 & 0.371 & 0.395 &0.463& \textbf{0.465}\\
         \sc{Seq GO} & 0.446  & 0.370 & 0.438 &0.443& \textbf{0.447} \\ 
         \sc{Spo GO} & {\textbf{0.382}}  & 0.342  & 0.371 &0.375& 0.376 \\
         \midrule
         \sc{Diatoms} & {\textbf{0.758}} & - & 0.501 & 0.514 & 0.530 \\
         \sc{Enron} & {\textbf{0.756}} & - & 0.696 & 0.710 & 0.724 \\
         \sc{Imclef07a} &{\textbf{0.956}} & - & 0.803 & 0.904 & 0.950 \\
         \sc{Imclef07d} & {\textbf{0.927}} & - & 0.881 & 0.897 & 0.920 \\
         \midrule
         \sc{Average Ranking} & 1.25 & 5.00 & 3.93 & 2.93 & 1.90 \\
    \bottomrule
    \end{tabular}
        \caption{Comparison of \system{$h$} with the other state-of-the-art models. The performance of each system is measured as the \auprc{} obtained on the test set. The best results are in bold.    \label{tab:comparison}}
\end{table}

\subsubsection{Comparison with the state of the art}

We tested \system{$h$} on 20 real-world datasets commonly used to compare HMC systems (see, e.g.,~\cite{kwok2011,nakano2019,vens2008,cerri2018}): 16 are functional genomics datasets \cite{clare2003}, 2 contain medical images \cite{dimitrovski2008}, 1 contains images of microalgae \cite{dimitrovski2011}, and 1 is a text categorization dataset  \cite{klimt2004}.\footnote{Links: https://dtai.cs.kuleuven.be/clus/hmcdatasets and http://kt.ijs.si/DragiKocev/PhD/resources}
The characteristics of these datasets are summarized in Table~\ref{tab:datasets_hmc}. These datasets are particularly challenging,  because their number of training samples is rather limited, and they have a large variation, both in the number of features (from 52 to 1000) and in the number of classes (from 56 to 4130). We applied the same preprocessing  to all the datasets. All the categorical features were transformed using one-hot encoding. The missing values were replaced by their mean in the case of numeric features and by a vector of all zeros in the case of categorical ones. All the features were standardized. 

We built $h$ as a feedforward neural network with two 
hidden layers and ReLU non-linearity.
 To prove the robustness of \system{$h$},
 we kept all the hyperparameters fixed except the hidden dimension and the learning rate used for each dataset, which are given in Appendix~\ref{app:hidden_dim} and were optimized over the validation sets. In all experiments, the loss was minimized using Adam optimizer with weight decay $10^{-5}$, and patience 20 ($\beta_1 = 0.9$, $\beta_2 = 0.999$). The dropout rate was set to 70\% and the batch size to 4. 
As in \cite{cerri2018}, we retrained \system{$h$} on both training and validation data for the same number of epochs, as the early stopping procedure determined was optimal in the first pass.

For each dataset, we run \system{$h$}, {\ens} \cite{schietgat2010}, and {\lmlp} \cite{cerri2016} 10 times, and the average \auprc~is reported in Table~\ref{tab:comparison}. For simplicity, we omit the standard deviations, which for \system{$h$} are in the range $[0.5 \times 10^{-3}, 2.6 \times 10^{-3}]$,  proving that it is  a very stable model. As reported in \cite{nakano2019}, {\ens} and {\lmlp} are the current state-of-the-art models with publicly available code. These models were run with the suggested configuration settings on each dataset.%
\footnote{We also ran the code from \cite{masera2018}. However, we obtained very different results from the ones reported in the paper. 
Similar negative results are also reported in \cite{nakano2019}.}
The results are shown in Table \ref{tab:comparison}, left side. On the right side, we show the results of {\hmcnr} and {\hmcnf} directly taken from \cite{cerri2018}, since the code is not publicly available. We report the results of both systems, because, while {\hmcnr} has worse results than {\hmcnf}, the amount of parameters of the latter grows with the number of hierarchical levels. As a consequence,   {\hmcnr} is much lighter in terms of total amount of parameters, and the authors advise that for very large hierarchies, {\hmcnr} is probably a better choice than {\hmcnf} considering the trade-off performance vs. computational cost~\cite{cerri2018}. Note that the number of parameters of \system{$h$} is independent from the number of hierarchical levels.

  As reported in Table~\ref{tab:comparison}, \system{$h$} has the greatest number of wins (it has the best performance on all datasets but 3) and best average ranking (1.25). We also verified the statistical significance of the
  \begin{figure}
    \centering
    \includegraphics[trim=30 0 30 0,clip, width=0.6\textwidth]{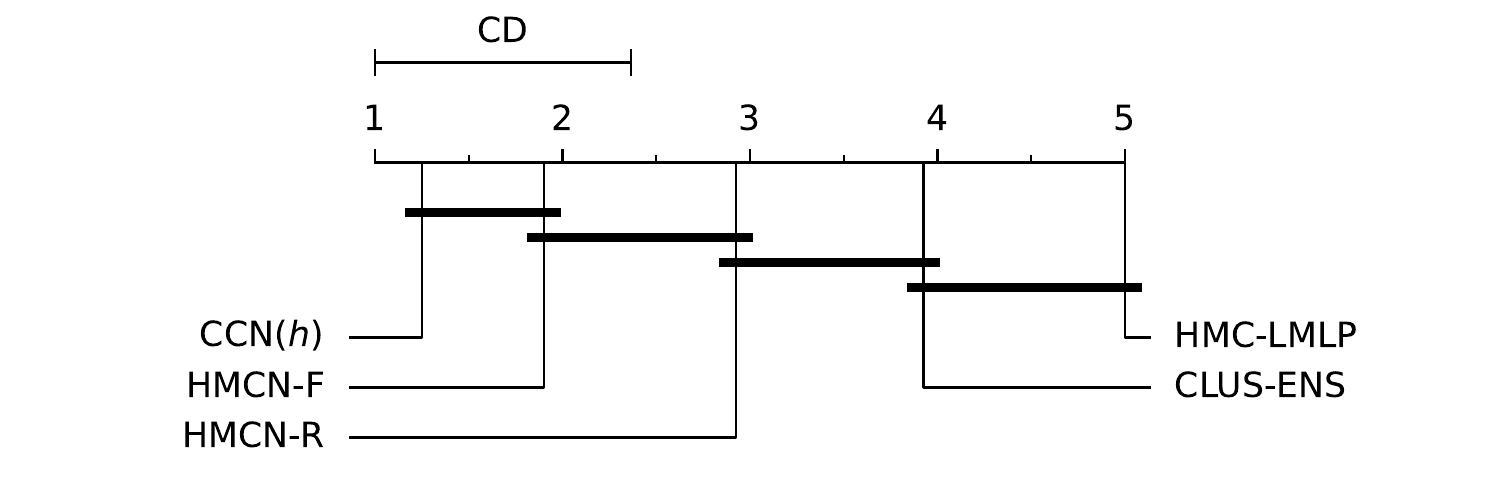}
\caption{Critical diagram for the Nemenyi's statistical test.\label{fig:nemenyi}}
\end{figure}
   results following %
  \cite{demsar2006}. We first executed the Friedman test, obtaining p-value $4.26 \times 10^{-15}$.
  We then performed the post-hoc Nemenyi
   test, and the resulting critical diagram is shown in Figure~\ref{fig:nemenyi}, where the group of methods that do not differ significantly (significance level 0.05) are connected through a horizontal line. The Nemenyi test is powerful enough to conclude that there is a statistical significant difference between the performance of \system{$h$} and all the other models but \hmcnf. Hence, %
   following \cite{demsar2006,benavoli2016}, we compared \system{$h$} and {\hmcnf} using the Wilcoxon test. This test, contrarily to the Friedman test and the Nemenyi test, takes into account not only the ranking, but also the differences in performance of the two algorithms.  
 The Wilcoxon test allows us to conclude that there is a statistical significant difference between the  performance of \system{$h$} and {\hmcnf} with p-value of $6.01 \times 10^{-3}$.

\begin{table}[t]
    \centering
    \begin{tabular}{l c c c c c c}
    \toprule
           & \multicolumn{2}{c}{$h^+$} & \multicolumn{2}{c}{$h\!+\!\module$} & \multicolumn{2}{c}{\system{$h$}} \\
         Dataset & \auprc & Epochs & \auprc & Epochs & \auprc & Epochs \\
         \midrule
         {\sc Cellcycle} & 0.240 & 107 & 0.238 & 108 & {\textbf{0.255}} & 106 \\
         {\sc Derisi} & 0.190 & 64 & 0.188 & 66  &  {\textbf{0.195}} & 67\\
         {\sc Eisen} & 0.290 & 112 & 0.286 & 107 & {\textbf{0.306}} & 110 \\
         {\sc Expr} & 0.272 & 39 &  0.267 & 19 & {\textbf{0.302}} & 20 \\
         {\sc Gasch1} & 0.265 & 41 & 0.262 & 42 & {\textbf{0.286}} & 38\\
         {\sc Gasch2} & 0.244 & 128 & 0.242 & 132 & {\textbf{0.258}} & 131 \\
         {\sc Seq} & 0.249 & 13 & 0.252 & 13 & {\textbf{0.292}} & 13 \\
         {\sc Spo} & 0.201 & 108 & 0.202 & 117 & {\textbf{0.215}} & 115 \\
         \midrule
         {\sc Average Ranking} & 2.94 & & 2.06 & & 1.00 & \\
    \bottomrule
    \end{tabular}
        \caption{Impact of $\module$ and $\module$+$\loss$ on the performance measured as {\auprc} and on the total number of epochs for the validation set of the Funcat datasets.}
    \label{tab:ablation}
\end{table}

\subsubsection{Ablation studies}

To analyze the impact of both $\module$ and $\loss$, we compared the performance of \system{$h$} on the FunCat datasets against the performance of  $h^+$, i.e., $h$ with $\module$ applied as post-processing at inference time and $h\!+\!\module$, i.e., $h$ with $\module$ built on top. Both these models were trained using the standard binary cross-entropy loss. 
As it can be seen in Table~\ref{tab:ablation}, \system{$h$}, by exploiting both \module{} and \loss, always outperforms $h^+$ and $h\!+\!\module$ on all datasets. In Table~\ref{tab:ablation}, we also report after how many epochs the algorithm stopped training in average. As it can be seen, 
\system{$h$}, $h\!+\!\module$ and $h^+$ always require approximately the same number of epochs. 

\subsection{Multi-Label Classification with Logical Hard Constraints}

As for the hierarchical case,
 we first consider a generalization of the synthetic experiment proposed in the basic case. Then
we test \system{$h$}  on 16 real-world datasets with general constraints, and finally we present the ablation studies.\footnote{Link: https://github.com/EGiunchiglia/CCN/} 

About the metrics, the analysis of 64 papers on MC problems conducted by \cite{spolaor2013} and reported in~\cite{pereira2018metrics}, shows that already in 2013 as many as 19 different metrics have been used to evaluate MC models, and still today different papers use different subsets of such metrics. However, as suggested by~\cite{pereira2018metrics}, not all subsets can be used, as the experimental results may appear to favor a specific behavior depending on the subset of measures chosen, thus possibly leading to misleading conclusions. To avoid such undesired results, the authors conducted a correlation analysis of the metrics 
that led to the individuation of clusters of correlated measures and thus to the proposal of various subsets of metrics, chosen according to the following criteria:
\begin{enumerate}
\item first, Hamming loss is highly recommended for inclusion in each subset: it is not correlated with others, and is the most
used metric in the literature (55 papers out of the 64 surveyed),
\item 
next it could be considered employing other measures not correlated with any others like coverage error and
ranking loss, and
\item 
finally, a suitable selection should include at least one metric from each cluster of correlated measures. Among them, multi-label accuracy is a good choice because it is among the ones with the highest correlations to other measures.
\end{enumerate}
Other criteria they suggest and use for the selection of the proposed subsets are: popularity in the literature, the choice to include or not AUC-based metrics, and the size of the resulting set of metrics.

Following the above criteria, we used the following six metrics, each taking value in the interval $[0,1]$ and each annotated with either   $\uparrow$ or $\downarrow$ to mean that larger values for that metric stand for better (resp. worse) performance: 
\begin{enumerate}
\begin{multicols}{2}
    \item average precision ($\uparrow$),
    \item coverage error ($\downarrow$),
    \item Hamming loss ($\downarrow$),
    \item multi-label accuracy ($\uparrow$),
    \item one-error ($\downarrow$), and
    \item ranking loss ($\downarrow$). 
\end{multicols}
\end{enumerate}
The above six metrics are exactly those belonging to the first two subsets of metrics proposed in~\cite{pereira2018metrics}.%
\footnote{In particular, the first of the proposed subsets includes coverage error, Hamming loss, multi-label accuracy  and ranking loss, while the second includes
average precision, coverage error, Hamming loss, one-error and ranking loss; see~\cite{pereira2018metrics} for more details.} Notice that, the above list does not include {\auprc}: already in 2013 and still today, contrarily to the specialized HMC literature, it is generally not used in the MC literature~\cite{spolaor2013,pereira2018metrics,feng2019}.

\begin{figure*}[t]
\centering
\begin{tabular}{c@{\ \ \,}c@{\ \ \ \,}c@{\ \ \ \ }c@{\ \ \ \,}}
\begin{minipage}{.32\textwidth}
    \includegraphics[width=\textwidth]{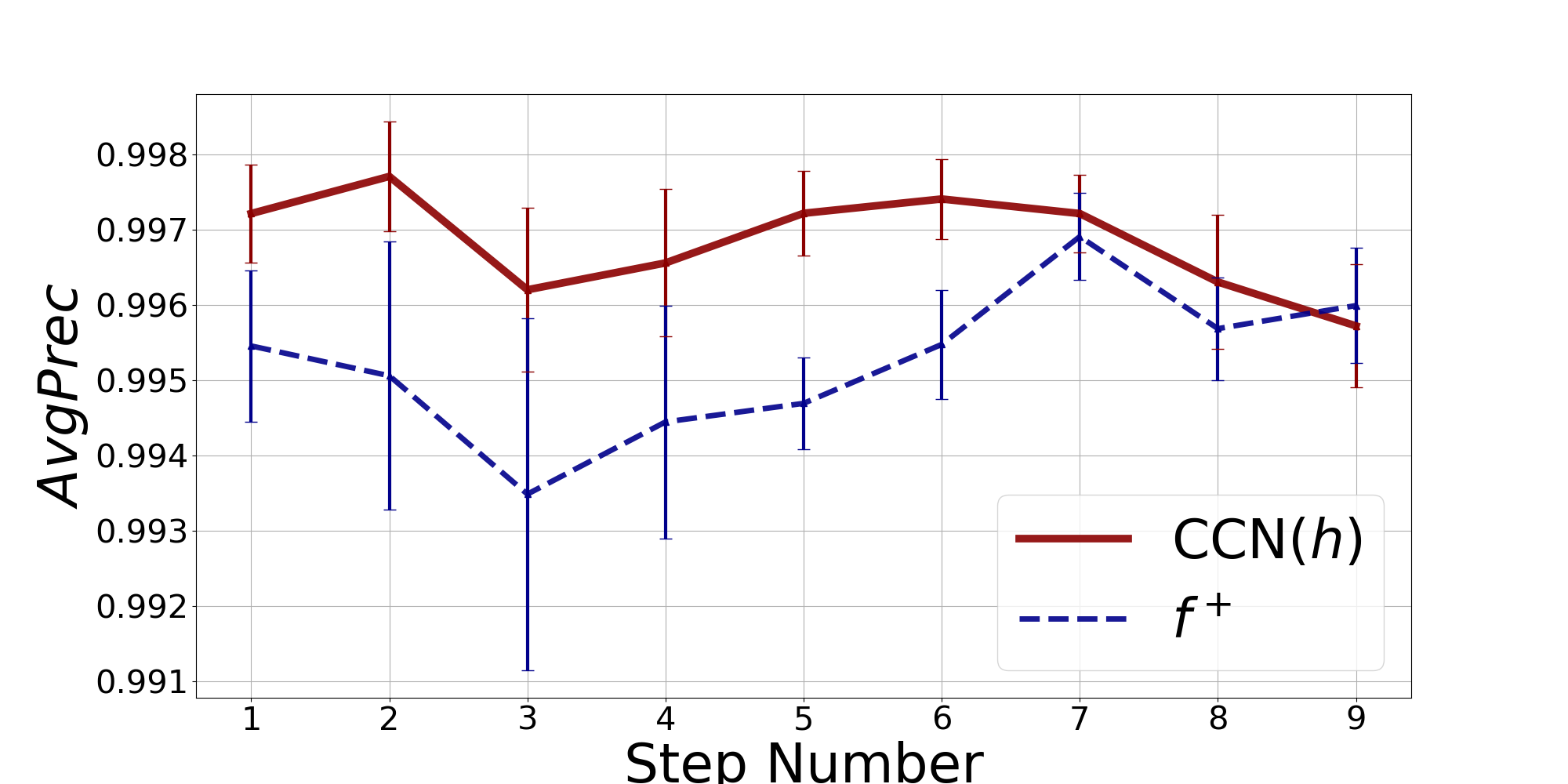}
\end{minipage} &
\begin{minipage}{.32\textwidth}
    \includegraphics[width=\textwidth]{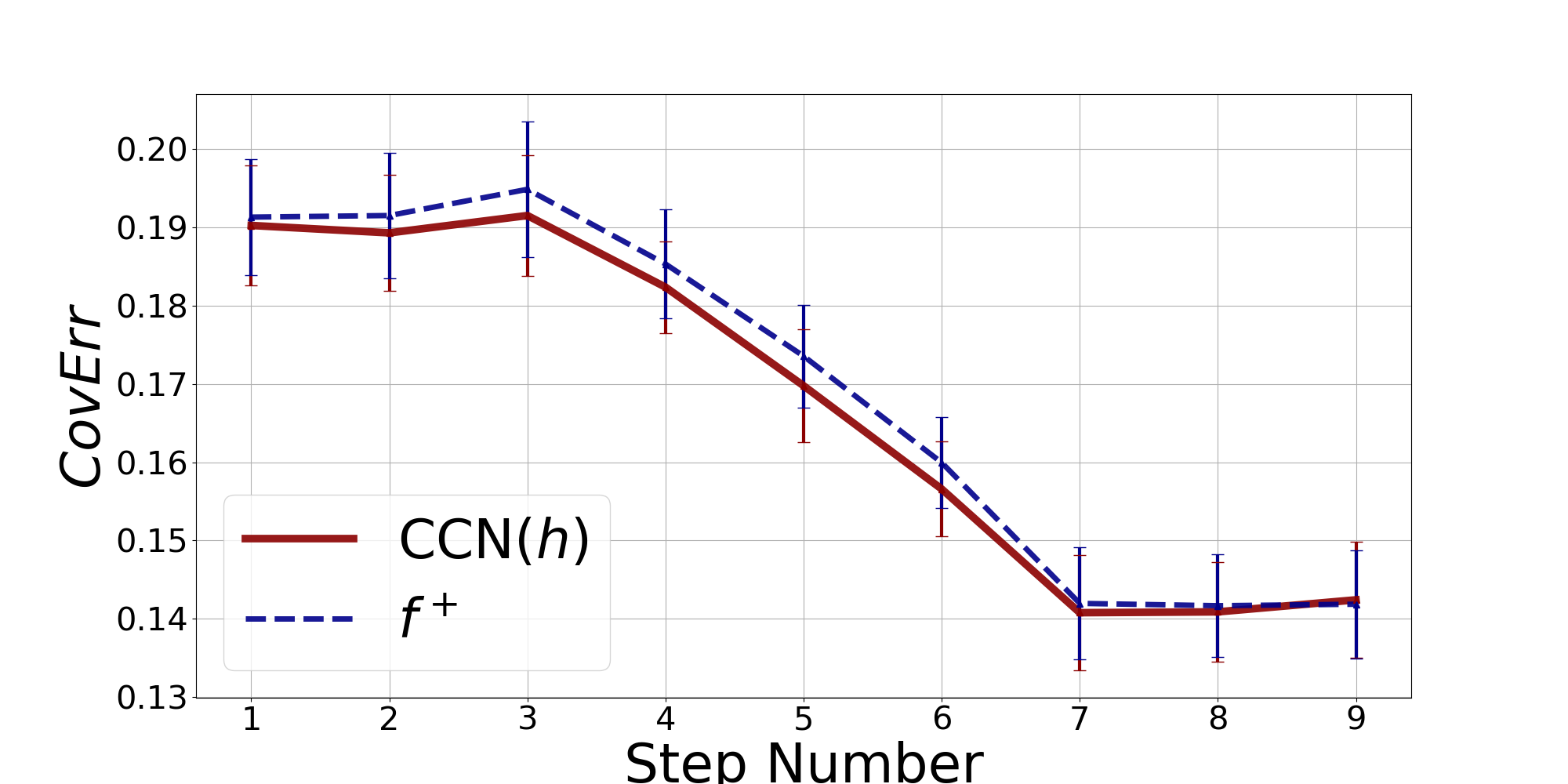}
\end{minipage} &
\begin{minipage}{.32\textwidth}
    \includegraphics[width=\textwidth]{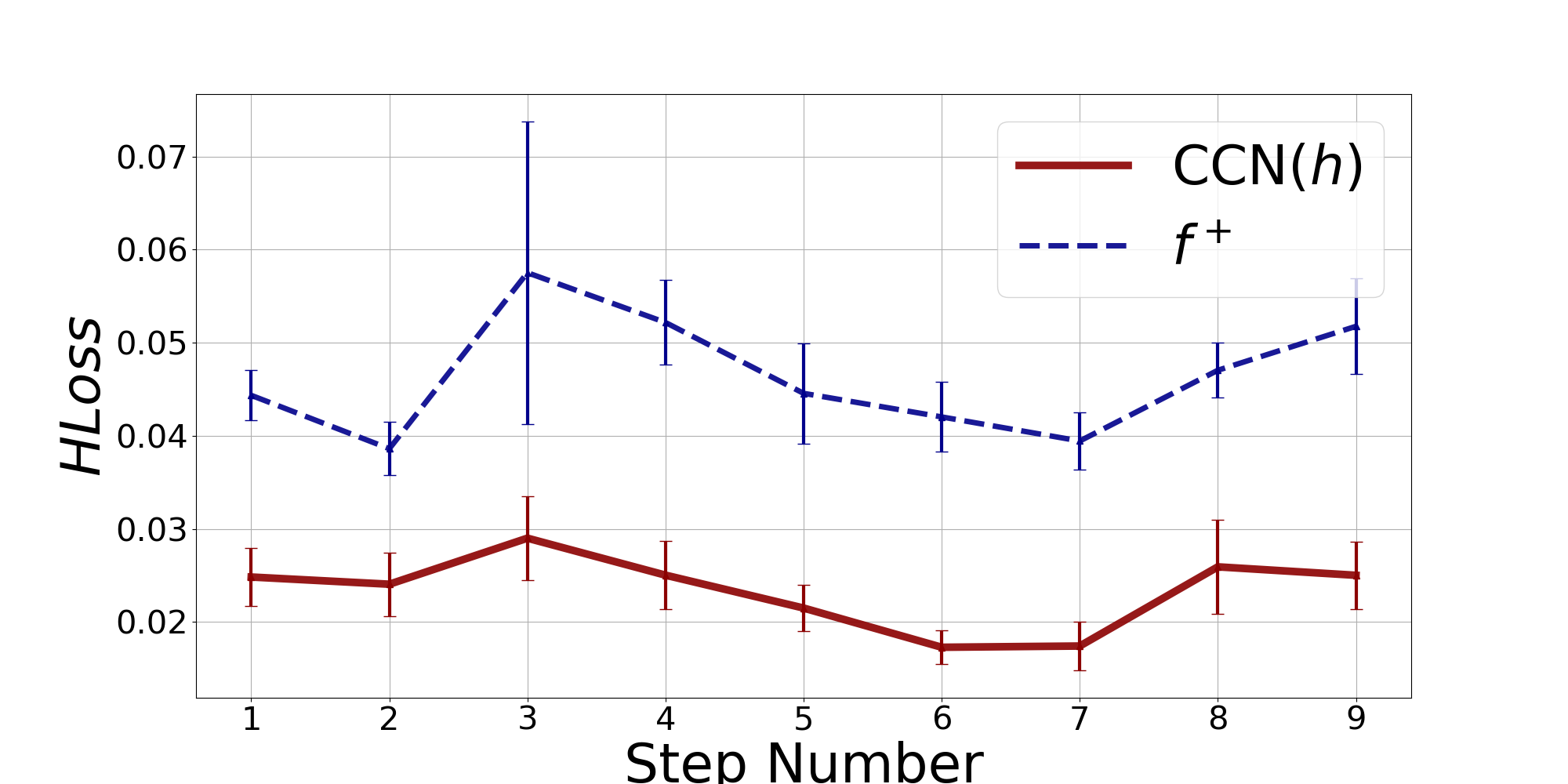}
\end{minipage} \\
\begin{minipage}{.32\textwidth}
    \includegraphics[width=\textwidth]{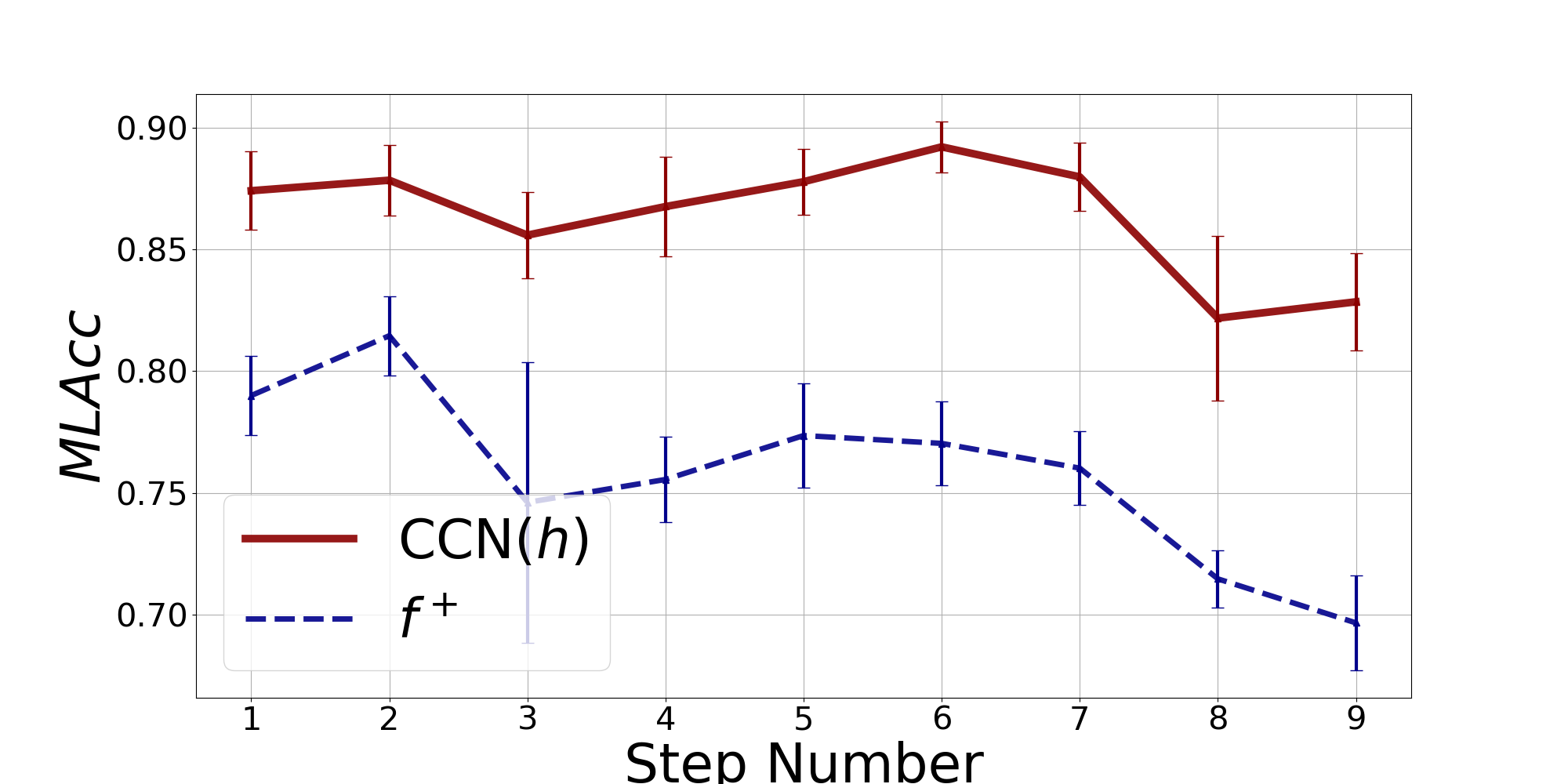}
\end{minipage} &
\begin{minipage}{.32\textwidth}
    \includegraphics[width=\textwidth]{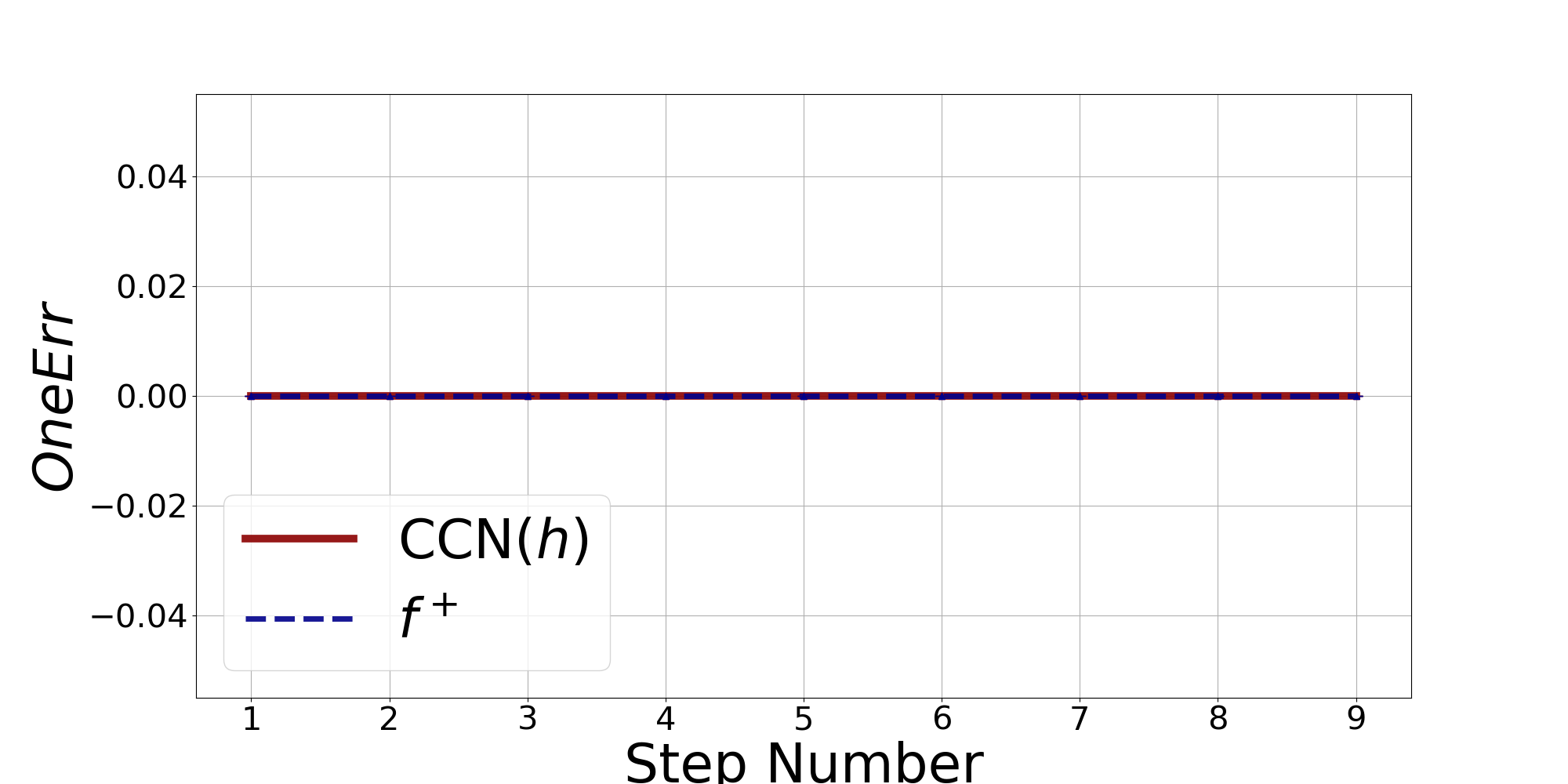}
\end{minipage} &
\begin{minipage}{.32\textwidth}
    \includegraphics[width=\textwidth]{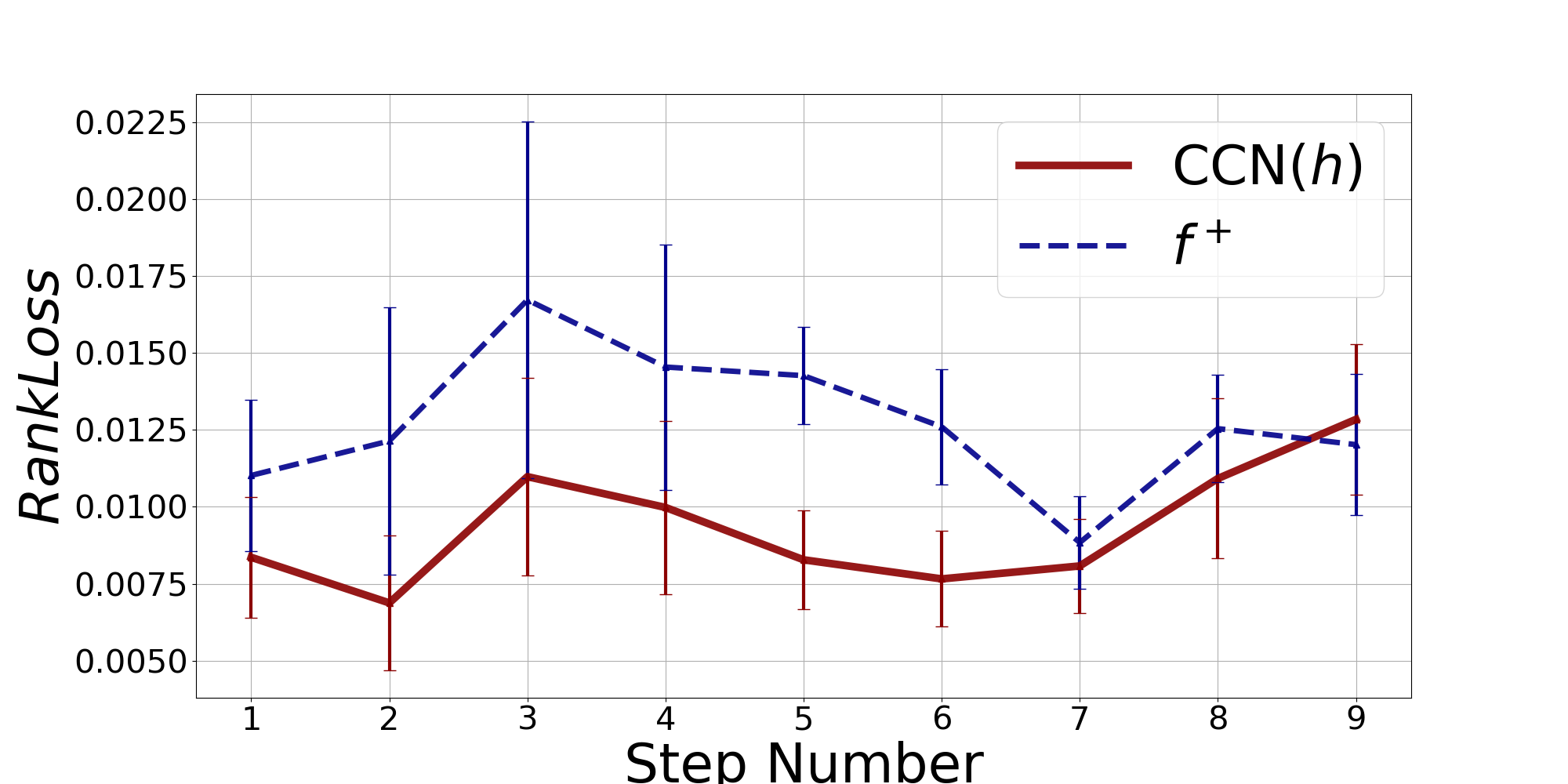}
\end{minipage} \\
\\
\end{tabular}
\caption{Mean average precision, coverage error, hamming loss, multi-label accuracy, one-error, and ranking loss with standard deviation of \system{$h$} and $f^+$ for each step.
}
\label{img:synth_exp}
\end{figure*}

\subsubsection{Synthetic Experiment}
Consider the generalization of the experiment presented as basic case, in which we started with $R_1$ outside $R_2$ (as in the first row of Fig.~\ref{fig:dec_bound_figs_gen}), and then moved $R_1$ towards the centre of $R_2$ (as in the second row of  Fig.~\ref{fig:dec_bound_figs_gen}) in 9 uniform steps. 
As for HMC problems, this experiment is meant to show how the performance of \system{$h$} and $f^+$ defined as in Section~\ref{sec:basic_case} vary depending on the relative positions of $R_1$ and $R_2$. As expected, Figure~\ref{img:synth_exp} shows that \system{$h$} performs better or equally to $f^+$ at all steps and for all metrics. In particular: 
\begin{itemize}
    \item \system{$h$} performs consistently better than $f^+$ at all steps in terms of average precision, coverage error, Hamming loss, multi-label accuracy and ranking loss. Further, as in the HMC case, \system{$h$} exhibits much more stable performances than $f^+$ as highlighted by the visibly much smaller standard deviation bar of \system{$h$}.
    \item \system{$h$} and $f^+$ perform identically in terms of one-error. This is due to the fact that one-error measures the fraction of instances whose most confident class is irrelevant, since neither \system{$h$} nor $f^+$ ever make this mistake, they both have one-error equal to zero at all steps.
\end{itemize}

\begin{table}[t]
    \centering
    \begin{tabular}{l c c c c c c c c c c c}
    \toprule
         {\sc Dataset} & $D$ & $L$ & {\sc Train} & {\sc Val} & {\sc Test} & $C$ & $H$ & $B$ & $B^+$ & $B^-$ & $H/L$\\
    \midrule
{\sc Arts}        & 462  & 26  & 2975        & 525       & 1500       & 344 & 11  & 7.1 & 5.6   & 1.5   & 42.3\% \\
{\sc Business}    & 438  & 30  & 2975        & 525       & 1500       & 77  & 7   & 2.5 & 2.3   & 0.2   & 23.3\% \\
{\sc Cal500}      & 68   & 174 & 298         & 53        & 151        & 39  & 7   & 2   & 1.1   & 0.9   & 4.0\%  \\
{\sc Emotions}    & 72   & 6   & 332         & 59        & 202        & 1   & 1   & 5   & 0     & 5     & 16.7\% \\
{\sc Enron}       & 1001 & 53  & 954         & 169       & 579        & 7   & 5   & 1   & 1     & 0     & 9.4\%  \\
{\sc Genbase}     & 1186 & 27  & 393         & 70        & 199        & 88  & 13  & 2.2 & 2.0   & 0.2   & 48.1\% \\
{\sc Image}       & 294  & 5   & 1190        & 210       & 600        & 1   & 1   & 4   & 0     & 4     & 20.0\% \\
{\sc Medical}     & 1449 & 45  & 283         & 50        & 645        & 17  & 9   & 1.2 & 1.2   & 0     & 20.0\% \\
{\sc Rcv1subset1} & 944  & 101 & 3570        & 630       & 1800       & 247 & 16  & 3.7 & 2.9   & 0.8   & 15.8\% \\
{\sc Rcv1subset2} & 944  & 101 & 3570        & 630       & 1800       & 81  & 15  & 2.4 & 1.7   & 0.7   & 14.9\% \\
{\sc Rcv1subset3} & 944  & 101 & 3570        & 630       & 1800       & 72  & 16  & 2.3 & 1.7   & 0.6   & 15.8\% \\
{\sc Rcv1subset4} & 944  & 101 & 3570        & 630       & 1800       & 63  & 14  & 2.1 & 1.7   & 0.4   & 13.9\% \\
{\sc Rcv1subset5} & 944  & 101 & 3570        & 630       & 1800       & 73  & 11  & 2.5 & 2     & 0.5   & 10.9\% \\
{\sc Science}     & 743  & 40  & 2975        & 525       & 1500       & 37  & 11  & 2.1 & 1.7   & 0.4   & 27.5\% \\
{\sc Scene}       & 294  & 6   & 1029        & 182       & 1196       & 1   & 1   & 5   & 0     & 5     & 16.7\% \\
{\sc Yeast}       & 103  & 14  & 1275        & 225       & 917        & 34  & 11  & 2.3 & 1.9   & 0.4   & 78.6\% \\
    \bottomrule
    \end{tabular}
        \caption{Summary of the real-world MC datasets. For each dataset, we report from left to right: (i) name, (ii) number of features ($D$), (iii) number of classes~($L$), (iv-vi) number of data points for each split, (vii) number of constraints~($C$), (iix) number of different classes that appear at least once as head of a constraint~($H$), (ix) average number of classes appearing in the body~($B$), (x-xi) average number of classes appearing positively (resp.,  negatively) in the body ($B^+$), (resp., ($B^-$)), and (xii) percentage of classes appearing at least once as head of a constraint.}
    \label{tab:datasets}
\end{table}

\newcommand{\fs}[1]{\tiny{\textbf{#1}}}
\begin{table}[hp]
    \centering
        \setlength{\tabcolsep}{3.5pt}
    \begin{tabular}{l c c c c c c c c c c c c c c c c}
    \toprule
         Model & {\sc Arts}  & {\sc Business} & {\sc Cal500} & {\sc Emotions} & {\sc Enron} & {\sc Genbase} & {\sc Image} & {\sc Medical} \\
         \midrule
         & \multicolumn{8}{c}{Average precision ($\uparrow$)}\\
         \midrule
         \system{$h$} & 0.623 & \textbf{0.904} & \textbf{0.520} & \textbf{0.800} & 0.704 & 0.996 & \textbf{0.807} & \textbf{0.866} \\
\camel       & \textbf{0.625} & 0.899 & 0.513 & 0.756 & \textbf{0.708} & 0.990 & 0.793 & 0.807 \\
\ecc         & 0.544 & 0.867 & 0.401 & 0.772 & 0.643 & \textbf{1.000} & 0.738 & 0.823 \\
\br          & 0.546 & 0.863 & 0.441 & 0.793 & 0.643 & \textbf{1.000} & 0.726 & 0.823 \\
\rakel       & 0.530 & 0.856 & 0.433 & 0.798 & 0.636 & \textbf{1.000} & 0.721 & 0.811 \\
          \midrule
          & \multicolumn{8}{c}{Coverage error $(\downarrow)$} \\
          \midrule
       \system{$h$} & \textbf{0.172} & \textbf{0.065} & \textbf{0.734} & \textbf{0.315} & \textbf{0.217} & 0.016 & \textbf{0.187} & \textbf{0.035} \\
\camel       & 0.202 & 0.083 & 0.791 & 0.372 & 0.256 & 0.010 & 0.201 & 0.036 \\
\ecc         & 0.223 & 0.089 & 0.853 & 0.338 & 0.285 & \textbf{0.009} & 0.242 & 0.045 \\
\br          & 0.217 & 0.086 & 0.789 & 0.324 & 0.288 & \textbf{0.009} & 0.245 & 0.045 \\
\rakel       & 0.221 & 0.085 & 0.791 & 0.317 & 0.294 & \textbf{0.009} & 0.250 & 0.049 \\
\midrule
          & \multicolumn{8}{c}{Hamming loss ($\downarrow$)} \\
          \midrule
\system{$h$} & \textbf{0.054} & \textbf{0.023} & \textbf{0.136} & \textbf{0.197} & \textbf{0.046} & \textbf{0.001} & \textbf{0.172} & \textbf{0.013} \\
\camel       & 0.055 & \textbf{0.023} & 0.138 & 0.265 & 0.047 & 0.003 & 0.174 & 0.024 \\
\ecc         & 0.081 & 0.031 & 0.172 & 0.245 & 0.055 & \textbf{0.001} & 0.218 & 0.019 \\
\br          & 0.079 & 0.032 & 0.162 & 0.229 & 0.054 & \textbf{0.001} & 0.232 & 0.019 \\
\rakel       & 0.082 & 0.034 & 0.165 & 0.223 & 0.055 & \textbf{0.001} & 0.225 & 0.019 \\
\midrule
          & \multicolumn{8}{c}{Multi-label accuracy ($\uparrow$)} \\
          \midrule
\system{$h$} & \textbf{0.238} & 0.601 & 0.203 & \textbf{0.534} & \textbf{0.395} & 0.986 & \textbf{0.488} & \textbf{0.589} \\
\camel       & 0.218 & \textbf{0.609} & 0.210 & 0.354 & 0.381 & 0.943 & 0.456 & 0.284 \\
\ecc         & 0.217 & 0.548 & 0.220 & 0.446 & 0.361 & \textbf{0.992} & 0.387 & 0.481 \\
\br          & 0.217 & 0.538 & 0.221 & 0.465 & 0.365 & \textbf{0.992} & 0.369 & 0.477 \\
\rakel       & 0.215 & 0.527 & \textbf{0.222} & 0.485 & 0.361 & \textbf{0.992} & 0.376 & 0.481 \\
\midrule
          & \multicolumn{8}{c}{One-error ($\downarrow$)} \\
         \midrule
\system{$h$} & 0.475 & 0.093 & \textbf{0.113} & \textbf{0.273} & 0.235 & \textbf{0.000} & \textbf{0.296} & \textbf{0.181} \\
\camel       & \textbf{0.460} & \textbf{0.090} & 0.133 & 0.381 & \textbf{0.223} & 0.020 & 0.310 & 0.285 \\
\ecc         & 0.568 & 0.137 & 0.378 & 0.332 & 0.309 & \textbf{0.000} & 0.392 & 0.251 \\
\br          & 0.567 & 0.147 & 0.232 & 0.292 & 0.299 & \textbf{0.000} & 0.425 & 0.251 \\
\rakel       & 0.586 & 0.159 & 0.232 & 0.292 & 0.304 & \textbf{0.000} & 0.430 & 0.266 \\
\midrule
          & \multicolumn{8}{c}{Ranking loss ($\downarrow$)}\\
         \midrule
\system{$h$} & \textbf{0.115} & \textbf{0.030} & \textbf{0.173} & \textbf{0.161} & \textbf{0.076} & 0.003 & \textbf{0.159} & \textbf{0.024} \\
\camel       & 0.136 & 0.040 & 0.189 & 0.237 & 0.086 & \textbf{0.001} & 0.177 & 0.026 \\
\ecc         & 0.158 & 0.046 & 0.257 & 0.193 & 0.107 & \textbf{0.001} & 0.231 & 0.033 \\
\br          & 0.155 & 0.045 & 0.218 & 0.177 & 0.108 & \textbf{0.001} & 0.234 & 0.032 \\
\rakel       & 0.159 & 0.044 & 0.220 & 0.169 & 0.112 & \textbf{0.001} & 0.242 & 0.037 \\
    \bottomrule
    \end{tabular}
        \caption{Comparison of \system{$h$} with the other state-of-the-art models. The best results are in bold.     \label{tab:comparison_gen1}}
\end{table}

\begin{table}[hp]
    \centering
    \setlength{\tabcolsep}{3.5pt}
    \begin{tabular}{l c c c c c c c c}
    \toprule
         Model & {\sc Rcv1S1}   & {\sc Rcv1S2} & {\sc Rcv1S3} & {\sc Rcv1S4} & {\sc Rcv1S5} & {\sc Science} & {\sc Scene} & {\sc Yeast}\\
         \midrule
         & \multicolumn{8}{c}{Average precision ($\uparrow$)} \\
         \midrule
\system{$h$} & \textbf{0.642} & \textbf{0.666} & \textbf{0.647} & \textbf{0.675} & 0.560 & 0.603 & \textbf{0.868} & \textbf{0.768} \\
\camel       & 0.622 & 0.647 & 0.636 & 0.654 & \textbf{0.564} & \textbf{0.614} & 0.824 & 0.766 \\
\ecc         & 0.549 & 0.575 & 0.585 & 0.609 & 0.529 & 0.502 & 0.794 & 0.724 \\
\br          & 0.536 & 0.563 & 0.572 & 0.600 & 0.524 & 0.500 & 0.781 & 0.743 \\
\rakel       & 0.532 & 0.556 & 0.562 & 0.589 & 0.508 & 0.493 & 0.794 & 0.732 \\
 \midrule
         & \multicolumn{8}{c}{Coverage error ($\downarrow$)} \\
         \midrule
\system{$h$} & \textbf{0.092} & \textbf{0.089} & \textbf{0.103} & \textbf{0.080}  & \textbf{0.107} & \textbf{0.131} & \textbf{0.077} & \textbf{0.452} \\
\camel       & 0.131 & 0.115 & 0.123 & 0.103 & 0.130 & 0.162 & 0.106 & 0.457 \\
\ecc         & 0.185 & 0.166 & 0.167 & 0.169 & 0.196 & 0.225 & 0.127 & 0.495 \\
\br          & 0.194 & 0.181 & 0.178 & 0.184 & 0.210 & 0.227 & 0.128 & 0.476 \\
\rakel       & 0.201 & 0.180 & 0.185 & 0.195 & 0.209 & 0.225 & 0.123 & 0.481 \\
 \midrule
         & \multicolumn{8}{c}{Hamming loss ($\downarrow$)} \\
         \midrule
\system{$h$} & \textbf{0.026} & \textbf{0.022} & \textbf{0.024} & \textbf{0.019} & \textbf{0.025} & \textbf{0.031} & \textbf{0.092} & \textbf{0.196} \\
\camel       & 0.027 & \textbf{0.022} & \textbf{0.024} & 0.021 & \textbf{0.025} & \textbf{0.031} & 0.109 & \textbf{0.196} \\
\ecc         & 0.031 & 0.027 & 0.028 & 0.026 & 0.030 & 0.049 & 0.131 & 0.221 \\
\br          & 0.032 & 0.028 & 0.029 & 0.027 & 0.031 & 0.051 & 0.151 & 0.214 \\
\rakel       & 0.033 & 0.029 & 0.030 & 0.027 & 0.031 & 0.051 & 0.130 & 0.225 \\
 \midrule
         & \multicolumn{8}{c}{Multi-label accuracy ($\uparrow$)} \\
         \midrule
\system{$h$} & \textbf{0.296} & \textbf{0.310}  & \textbf{0.303} & \textbf{0.324} & \textbf{0.275} & \textbf{0.255} & \textbf{0.607} & \textbf{0.480}  \\
\camel       & 0.204 & 0.222 & 0.210 & 0.257 & 0.223 & 0.217 & 0.528 & \textbf{0.480} \\
\ecc         & 0.264 & 0.277 & 0.273 & 0.297 & 0.269 & 0.209 & 0.478 & 0.443 \\
\br          & 0.263 & 0.279 & 0.275 & 0.289 & 0.263 & 0.200 & 0.438 & 0.456 \\
\rakel       & 0.263 & 0.272 & 0.268 & 0.290 & 0.258 & 0.201 & 0.481 & 0.445 \\
 \midrule
         & \multicolumn{8}{c}{One-error ($\downarrow$)} \\
         \midrule
\system{$h$} & \textbf{0.413} & \textbf{0.389} & \textbf{0.405} & \textbf{0.379} & \textbf{0.402} & 0.494 & \textbf{0.224} & 0.234 \\
\camel       & \textbf{0.413} & 0.397 & 0.413 & 0.399 & 0.414 & \textbf{0.472} & 0.287 & \textbf{0.231} \\
\ecc         & 0.477 & 0.462 & 0.453 & 0.436 & 0.451 & 0.603 & 0.319 & 0.300 \\
\br          & 0.492 & 0.474 & 0.466 & 0.435 & 0.466 & 0.605 & 0.358 & 0.259 \\
\rakel       & 0.488 & 0.481 & 0.471 & 0.439 & 0.474 & 0.606 & 0.329 & 0.270 \\
 \midrule
         & \multicolumn{8}{c}{Ranking loss ($\downarrow$)} \\
         \midrule
\system{$h$} & \textbf{0.036} & \textbf{0.035} & \textbf{0.046} & \textbf{0.035} & \textbf{0.046} & \textbf{0.094} & \textbf{0.073} & \textbf{0.172} \\
\camel       & 0.051 & 0.048 & 0.050 & 0.046 & 0.054 & 0.117 & 0.101 & 0.173 \\
\ecc         & 0.086 & 0.078 & 0.078 & 0.086 & 0.093 & 0.176 & 0.103 & 0.208 \\
\br          & 0.091 & 0.088 & 0.085 & 0.097 & 0.101 & 0.177 & 0.131 & 0.190 \\
\rakel       & 0.093 & 0.088 & 0.089 & 0.103 & 0.102 & 0.180 & 0.127 & 0.200 \\
    \bottomrule
    \end{tabular}
        \caption{Comparison of \system{$h$} with the other state-of-the-art models. The best results are in bold.     \label{tab:comparison_gen2}}
\end{table}

\begin{table}[h]
    \centering
    \footnotesize
    \setlength{\tabcolsep}{4pt}
    \begin{tabular}{l c c c c c c c c c c}
    \toprule
    \multirow{2}{*}{Metric} & \multicolumn{5}{c}{Average ranking} & Friedman test  & Wilcoxon test \\
    & \system{$h$} & CAMEL & ECC & BR & RAKEL & p-value & \system{$h$} vs. \camel \\
    \midrule
    Average precision & \textbf{1.43} & 2.31 & 3.34 & 3.56 & 4.34 & $\checkmark \checkmark$ & $\checkmark \checkmark$\\
    Coverage Error & \textbf{1.25} & 2.41 & 3.60 & 3.78 & 3.94 & $\checkmark \checkmark$ & $\checkmark \checkmark$\\
    Hamming loss & \textbf{1.28} & 2.38 & 3.38 & 3.69 & 4.09 & $\checkmark \checkmark$ & $\checkmark \checkmark$ \\
    Multi-label accuracy & \textbf{1.53} & 3.53 & 3.09 & 3.38 & 3.47 & $\checkmark \checkmark$ & $\checkmark \checkmark$\\
    One-error & \textbf{1.44} & 2.22 & 3.50 & 3.56 & 4.28 & $\checkmark \checkmark$ \\
    Ranking loss & \textbf{1.25} & 2.22 & 3.44 & 3.69 & 4.31 & $\checkmark \checkmark$ & $\checkmark \checkmark$\\
    \bottomrule
    \end{tabular}
    \caption{Average ranking for each metric and model, results of the Friedman and Wilcoxon test (the latter deployed to compare the performance of \system{$h$} and \camel). We use $\checkmark$ (resp., $\checkmark \checkmark$) to indicate that the test returned p-value $<0.05$ (resp., $< 0.01$). \label{tab:rankings}}
\end{table}

\subsection{Comparison with the State of the Art}

In order to prove the superiority of our model, we adopted the same methodology presented in~\cite{feng2019} and consider the three well-established MC models and the state-of-the-art MC model tested in ~\cite{feng2019}, which can be characterized by the order of classes correlations they exploit.
Thus, \system{$h$} is compared with 
\begin{enumerate}
    \item BR~\cite{boutell2004}, a first order model which considers each class separately, ignoring class correlations, and
    \item ECC~\cite{read2009},  RAKEL~\cite{tsoumakas2009CorrelationBasedPO} and CAMEL~\cite{feng2019}, which exploit correlations among two or more classes.
\end{enumerate}
BR, ECC, and RAKEL are the well-established MC models, and CAMEL is the current state-of-the-art MC model~\cite{feng2019}.
Since these models are not guaranteed to output predictions that are  coherent with the constraints, we applied {\module} as additional post-processing steps.

Being the first paper on {\cmc} problems, we created 16 real-world {\cmc} datasets, each obtained by enriching a popular and publicly available MC dataset with constraints extracted using the apriori algorithm~\cite{agrawal1994}. The list of the datasets together with a summary of their characteristics is reported in Table \ref{tab:datasets}. The various datasets come from different application domains, in particular:
\begin{enumerate}
    \item {\sc Cal500} and {\sc Emotions} are 2 music classification datasets \cite{turnbull2008CAL500,tsoumakas2008}, 
    \item {\sc Genbase} and {\sc Yeast} are 2 functional genomics datasets~\cite{diplaris2005genbase,eliseeff2001yeast}, 
    \item {\sc Image} and {\sc Scene} are 2 image classification datasets \cite{zhang2007image,boutell2004}, and 
    \item the remaining 10 are text classification datasets \cite{pestian2007,read2008,Srivastava2005,tsoumakas2007rakel}.%
    \footnote{
Link to datasets with constraints: https://github.com/EGiunchiglia/CCN/tree/master/data/
}  
\end{enumerate}
 Furthermore, as it can be seen from Table~\ref{tab:datasets}, they differ significantly both in the number of data points/classes (columns $D$ and $L$) and in the characteristics of the associated sets of constraints. Indeed, we have
 datasets having just a few (one)/many constraints (column $C$), involving a few/many classes in the head (column $H/L$) and in the body, either positively or negatively (columns $B$, $B^+$ and $B^-$). 

 As in the HMC experiments, we applied the same preprocessing to all the datasets. All the categorical features were transformed using one-hot encoding. The missing values were replaced by their mean in the case of numeric features and by a vector of all zeros in the case of categorical ones. All the features were standardized. As in the HMC experiments, we  built $h$ as  a  feedforward  neural  network  with  two  hidden  layers  and  ReLU  non-linearity.  
 To prove the robustness of \system{$h$}, we kept all the hyperparameters fixed except the hidden dimension used for each dataset, which is given in Appendix~\ref{app:hidden_dim_gen}. Such hidden dimensions  were  optimized  over the  validation  sets.  In  all  experiments,  the  loss  was minimized  using  Adam  optimizer  with  batch size equal to 4,  learning rate equal to $10^{-4}$, and  patience  $20$  ($\beta_1 =  0.9, \beta_2 = 0.999$). Since some datasets have very few data points, we set the dropout rate equal to $80\%$ and the weight decay equal to $10^{-4}$.
 As for the HMC case, we retrained \system{$h$} on both training and validation data for the same number of epochs, as the early stopping procedure determined was optimal in the first pass. For each dataset, we run the models 10 times, and the average for each of the metrics is reported in Tables~\ref{tab:comparison_gen1} and~\ref{tab:comparison_gen2}. For ease of presentation, we omit the standard deviations, which for \system{$h$} are in the range $[2.9 \times 10^{-17}, 8.0 \times 10^{-3}]$, proving that it is a very stable model. ECC, BR, and RAKEL were implemented using scikit-multilearn~\cite{scikit-multilearn2017} 
by deploying the logistic regression model as the base classifier. Regarding CAMEL, we used the publicly available authors' implementation,\footnote{Link: https://github.com/hinanmu/CAMEL} with the hyperparameters suggested by the authors. For all the models, we got results comparable to the ones reported in~\cite{feng2019}. 

As it can be seen in Tables~\ref{tab:comparison_gen1} and~\ref{tab:comparison_gen2}, \system{$h$} has the highest number of wins in all metrics. Indeed, as it can be seen in Table~\ref{tab:rankings}, \system{$h$} has the best average ranking in all metrics. We also verified the statistical significance of the results following~\cite{demsar2006} by performing the Friedman test for each metric. 
As reported in Table~\ref{tab:rankings}, the Friedman test returned p-value $<0.01$ for each metric; we could thus proceed with the post-hoc Nemenyi tests, and the resulting critical diagrams are reported in Figure~\ref{fig:nemenyi_gen}. In each diagram, the groups of methods that do not differ significantly (significance level 0.05) are connected through an horizontal line. According to the Nemenyi test, {\system{$h$}}'s performance differs significantly to the performance of all the other models but {\camel} in terms of average precision, coverage error, Hamming loss, one-error and ranking loss, while it differs significantly to all the models (including \camel) in terms of multi-label accuracy. As in the HMC case, we then followed the guidelines given in~\cite{demsar2006,benavoli2016} and performed the Wilcoxon test to compare the difference between {\camel} and \system{$h$}. As reported in the last column of Table~\ref{tab:rankings}, the performances of {\system{$h$}} and {\camel} differ significantly for all metrics but one-error, thus further confirming the superiority of our model.

\begin{figure}
\hfill
\begin{subfigure}{.45\linewidth}
  \includegraphics[width=\linewidth]{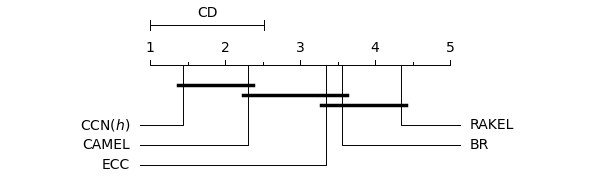}
  \caption{Average precision \label{fig:nemeneyi_avg_prec}}
\end{subfigure}
\hfill
\begin{subfigure}{.45\linewidth}
  \includegraphics[width=\linewidth]{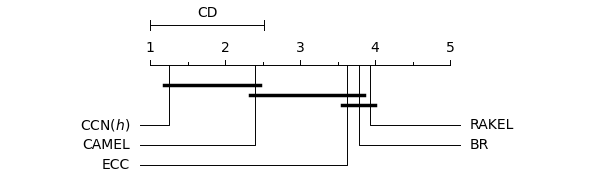}
  \caption{Coverage error\label{fig:nemeneyi_cov_err}}
\end{subfigure}
\hfill

\hfill
\begin{subfigure}{.45\linewidth}
  \includegraphics[width=\linewidth]{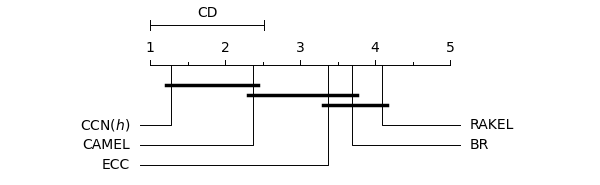}
  \caption{Hamming loss\label{fig:nemeneyi_hloss}}
\end{subfigure}
\hfill
\begin{subfigure}{.45\linewidth}
  \includegraphics[width=\linewidth]{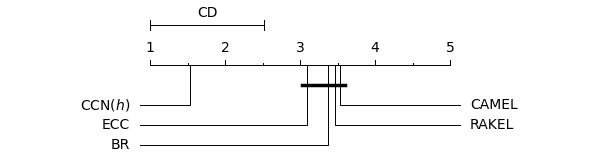}
  \caption{Multi-label accuracy \label{fig:nemeneyi_mlacc}}
\end{subfigure}
\hfill

\hfill
\begin{subfigure}{.45\linewidth}
  \includegraphics[width=\linewidth]{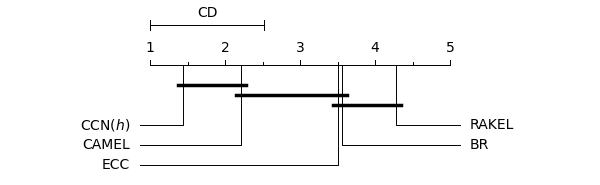}
  \caption{One-error\label{fig:nemeneyi_one_err}}
\end{subfigure}
\hfill
\begin{subfigure}{.45\linewidth}
  \includegraphics[width=\linewidth]{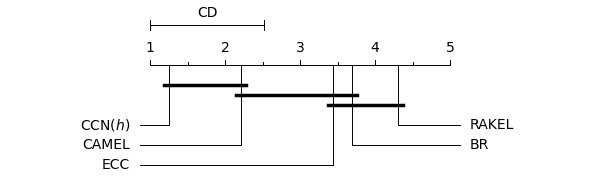}
  \caption{Ranking loss\label{fig:nemeneyi_rnk_loss}}
\end{subfigure}
\hfill
\caption{Critical diagram for each metric obtained with the post-hoc Nemenyi test. \label{fig:nemenyi_gen}}
\end{figure}

\begin{table}[t]
    \centering
    \setlength{\tabcolsep}{3.5pt}
    \begin{tabular}{l c c c c c c c c}
    \toprule
         Model & {\sc Arts}  & {\sc Business} & {\sc Cal500} & {\sc Emotions} & {\sc Enron} & {\sc Genbase} & {\sc Image} & {\sc Medical} \\
         \midrule
         & \multicolumn{8}{c}{Average precision ($\uparrow$)} \\
         \midrule
\system{$h$} & \textbf{0.623} & \textbf{0.904} & \textbf{0.520} & \textbf{0.800} & \textbf{0.704} & \textbf{0.996} & \textbf{0.807} & 0.866 \\
$h^+$        & \textbf{0.623} & 0.903 & 0.519 & 0.796 & \textbf{0.704} & \textbf{0.996} & 0.800 & \textbf{0.868} \\
$h$+\module  & \textbf{0.623} & 0.902 & 0.519  & \textbf{0.800} & \textbf{0.704} & 0.978 & \textbf{0.807} & 0.864 \\
 \midrule
 & \multicolumn{8}{c}{Coverage error ($\downarrow$)} \\
         \midrule
\system{$h$} & \textbf{0.172} & \textbf{0.065} & \textbf{0.734} & 0.315 & \textbf{0.217} & \textbf{0.016} & \textbf{0.187} & \textbf{0.035} \\
$h^+$        & \textbf{0.172} & 0.073 & \textbf{0.734} & 0.319 & \textbf{0.217} & 0.017 & 0.194 & 0.037 \\
$h$+\module  & 0.173 & 0.072 & \textbf{0.734} & \textbf{0.311} & \textbf{0.217} & 0.020  & \textbf{0.187} & 0.037 \\
 \midrule
 & \multicolumn{8}{c}{Hamming loss ($\downarrow$)} \\
\midrule
\system{$h$} & \textbf{0.054} & \textbf{0.023} & \textbf{0.136} & \textbf{0.197} & \textbf{0.046} & \textbf{0.001} & 0.172 & \textbf{0.013} \\
$h^+$        & \textbf{0.054} & 0.026 & \textbf{0.136} & 0.201 & \textbf{0.046} & \textbf{0.001} & 0.172 & 0.015 \\
$h$+\module  & \textbf{0.054} & 0.026 & \textbf{0.136} & 0.198 & \textbf{0.046} & 0.003 & \textbf{0.169} & 0.015 \\
\midrule
 & \multicolumn{8}{c}{Multi-label accuracy ($\uparrow$)} \\
\midrule
\system{$h$} & \textbf{0.238} & \textbf{0.601} & \textbf{0.203} & \textbf{0.534} & \textbf{0.395} & \textbf{0.986} & \textbf{0.488} & 0.589 \\
$h^+$        & \textbf{0.238} & \textbf{0.601} & \textbf{0.203} & 0.512 & \textbf{0.395} & \textbf{0.986} & 0.475 & \textbf{0.591} \\
$h$+\module  & 0.237 & 0.599 & \textbf{0.203} & 0.519 & \textbf{0.395} & 0.935 & 0.487 & 0.587 \\
\midrule
 & \multicolumn{8}{c}{One-error ($\downarrow$)} \\
\midrule
\system{$h$} & \textbf{0.475} & \textbf{0.093} & \textbf{0.113} & \textbf{0.273} & \textbf{0.235} & \textbf{0.000} & \textbf{0.296} & \textbf{0.181} \\
$h^+$        & 0.476 & \textbf{0.093} & \textbf{0.113} & 0.283 & 0.238 & \textbf{0.000} & 0.307 & \textbf{0.181}  \\
$h$+\module  & \textbf{0.475} & \textbf{0.093} & \textbf{0.113} & 0.276 & 0.237 & 0.004 & 0.297 & 0.185 \\
\midrule
 & \multicolumn{8}{c}{Ranking loss ($\downarrow$)} \\
\midrule
\system{$h$} & \textbf{0.115} & \textbf{0.030}  & \textbf{0.173} & 0.161 & \textbf{0.076} & \textbf{0.003} & \textbf{0.159} & \textbf{0.024} \\
$h^+$        & 0.116 & 0.034 & \textbf{0.173} & 0.167 & \textbf{0.076} & \textbf{0.003} & 0.169 & 0.026 \\
$h$+\module  & 0.116 & 0.034 & \textbf{0.173} & \textbf{0.159} & \textbf{0.076} & 0.006 & 0.160  & 0.025 \\
\bottomrule
 \end{tabular}
        \caption{Results of the ablations studies. The best results are in bold.     \label{tab:comparison_abl1}}
 \end{table}
 
 \begin{table}[t]
    \centering
    \setlength{\tabcolsep}{3.5pt}
    \begin{tabular}{l c c c c c c c c}
    \toprule
    Model & {\sc Rcv1S1}   & {\sc Rcv1S2} & {\sc Rcv1S3} & {\sc Rcv1S4} & {\sc Rcv1S5} & {\sc Science} & {\sc Scene} & {\sc Yeast}\\
         \midrule
         & \multicolumn{8}{c}{Average precision ($\uparrow$)} \\
         \midrule
         \system{$h$} & \textbf{0.642} & \textbf{0.666} & \textbf{0.647} & 0.675 & \textbf{0.560}  & \textbf{0.603} & \textbf{0.868} & \textbf{0.768} \\
        $h^+$        & 0.639 & 0.662 & 0.645 & 0.663 & 0.548 & \textbf{0.603} & 0.864 & 0.765 \\
        $h$+\module  & 0.630  & 0.663 & 0.640  & \textbf{0.682} & \textbf{0.560} & 0.602 & 0.867 & 0.767 \\
         \midrule
         & \multicolumn{8}{c}{Coverage error ($\downarrow$)} \\
         \midrule
         \system{$h$} & \textbf{0.092} & \textbf{0.089} & 0.103 & \textbf{0.080}  & \textbf{0.107} & \textbf{0.131} & \textbf{0.077} & \textbf{0.452} \\
$h^+$        & 0.093 & 0.090  & \textbf{0.102} & 0.089 & 0.112 & 0.135 & 0.081 & 0.455 \\
$h$+\module  & \textbf{0.092} & 0.090  & 0.103 & 0.089 & 0.115 & 0.135 & 0.078 & 0.453 \\
\midrule
         & \multicolumn{8}{c}{Hamming loss ($\downarrow$)} \\
         \midrule
         \system{$h$} & \textbf{0.026} & \textbf{0.022} & \textbf{0.024} & \textbf{0.019} & \textbf{0.025} & \textbf{0.031} & \textbf{0.092} & \textbf{0.196} \\
    $h^+$        & 0.027 & 0.023 & 0.025 & 0.022 & 0.027 & 0.032 & \textbf{0.092} & \textbf{0.196} \\
    $h$+\module  & 0.027 & 0.023 & 0.025 & 0.020  & \textbf{0.025} & 0.032 & \textbf{0.092} & \textbf{0.196} \\
\midrule
         & \multicolumn{8}{c}{Multi-label accuracy ($\uparrow$)} \\
         \midrule
        \system{$h$} & \textbf{0.296} & \textbf{0.310}  & 0.303 & \textbf{0.324} & 0.275 & 0.255 & \textbf{0.607} & 0.480  \\
$h^+$        & \textbf{0.296} & 0.309  & \textbf{0.305} & 0.323 & \textbf{0.277} & \textbf{0.257} & 0.603 & \textbf{0.487} \\
$h$+\module  & 0.283 & 0.306 & 0.301 & 0.321 & 0.257 & 0.250  & 0.604 & 0.482 \\
\midrule
         & \multicolumn{8}{c}{One-error ($\downarrow$)} \\
         \midrule
\system{$h$} & \textbf{0.413} & \textbf{0.389} & \textbf{0.405} & 0.379 & \textbf{0.402} & \textbf{0.494} & \textbf{0.224} & 0.234 \\
$h^+$        & \textbf{0.413} & 0.396 & \textbf{0.405} & 0.394 & 0.420  & \textbf{0.494} & 0.227 & 0.234 \\
$h$+\module  & 0.434 & 0.391 & 0.407 & \textbf{0.357} & 0.407 & 0.496 & 0.226 & \textbf{0.231}   \\      
\midrule
         & \multicolumn{8}{c}{Ranking loss ($\downarrow$)} \\
         \midrule
\system{$h$} & \textbf{0.036} & \textbf{0.035} & \textbf{0.046} & \textbf{0.035} & \textbf{0.046} & \textbf{0.094} & \textbf{0.073} & \textbf{0.172} \\
$h^+$        & \textbf{0.036} & 0.036 & \textbf{0.046} & 0.039 & 0.049 & 0.097 & 0.077 & \textbf{0.172} \\
$h$+\module  & 0.037 & 0.037 & 0.047 & 0.039 & 0.049 & 0.096 & 0.074 & 0.174 \\
\bottomrule
 \end{tabular}
        \caption{Results of the ablation studies. The best results are in bold.     \label{tab:comparison_abl2}}
 \end{table}

\begin{table}[h]
    \centering
    \setlength{\tabcolsep}{6pt}
    \begin{tabular}{l c c c c c}
    \toprule
    \multirow{2}{*}{Metric} & \multicolumn{3}{c}{Average ranking} & \multicolumn{2}{c}{Wilcoxon test} \\
    & \system{$h$} & $h^+$ & $h$+\module & \system{$h$} vs. $h^+$ & \system{$h$} vs. $h$+\module \\
    \midrule
    Average precision & \textbf{1.41} & 2.34 & 2.25 & $\checkmark \checkmark$ & $\checkmark$\\
     Coverage error & \textbf{1.38} & 2.41 & 2.22 & $\checkmark \checkmark$ & $\checkmark$ \\
    Hamming loss & \textbf{1.47} & 2.38 & 2.16 & $\checkmark \checkmark$ & $\checkmark$  \\
    Multi-label accuracy & \textbf{1.63} & 1.75 & 2.63 & & $\checkmark \checkmark$ \\
    One-error & \textbf{1.47} & 2.38 & 2.16 & $\checkmark$ & \\
    Ranking loss & \textbf{1.31} & 2.31 & 2.38 & $\checkmark \checkmark$  & $\checkmark \checkmark$  \\
    \bottomrule
    \end{tabular}
    \caption{Average ranking for each metric and model, and results of the Wilcoxon test: $\checkmark$ (resp., $\checkmark \checkmark$) is used to indicate that the Wilcoxon test returned p-value $< 0.05$ (resp., $< 0.01$). \label{tab:rankings_ablwilc}}
\end{table}

\subsection{Ablation Studies}

As in the HMC case, in order to analyze the impact of both {\module} and \loss, we compared the performance of \system{$h$} against the performance of $h^+$, i.e., $h$ with the enforcement of the constraints done as a post-processing step, and $h+\module$, i.e., $h$ with {\module} built on top. Both these models were trained using the standard binary  cross-entropy loss. The results of the ablation studies for each metric and each dataset are given in Tables~\ref{tab:comparison_abl1} and~\ref{tab:comparison_abl2}, while the average ranking for each metric can be found in Table~\ref{tab:rankings_ablwilc}. As it can be seen in Table~\ref{tab:rankings_ablwilc}, {\system{$h$}} has the highest average ranking for all metrics. Furthermore, we check the statistical significance of our results through the Wilcoxon test, whose results are reported in the last two columns of Table~\ref{tab:rankings_ablwilc}. As it can be seen, \system{$h$} performs significantly better than $h^+$ for all metrics but multi-label accuracy. On the other hand, \system{$h$} performs significantly better than $h+\module$ for all metrics but one-error.

\section{Related Work}\label{sec:rel_work}

In this paper, we introduced {\cmc} problems, i.e., MC problems in which every prediction must satisfy a given set of hard constraints expressed as normal rules. HMC problems are special cases of {\cmc} in which the body of each constraint is a single class and with the additional restriction that the constraint graph associated to the set of constraints is acyclic.

We divide this section on the related work into two parts:
in the first part, we focus on the literature in the HMC field, while in the second part, we present and discuss the previous works that have already dealt with the problem of imposing more expressive constraints on MC problems.

\subsection{Hierarchical Multi-Label Classification}

In the literature, HMC methods are traditionally divided into local and global approaches \cite{silla2011}. 

Local approaches decompose the problem into smaller classification ones, and then the solutions are combined to solve the main task. Local approaches  can be further divided based on %
the strategy that they deploy to decompose the main task. If a method trains a different classifier for each level of the hierarchy, then we have a {\sl local classifier per level}, as in \cite{cerri2011,cerri2014,cerri2016,li2018,zou2019}. The works \cite{cerri2011,cerri2014,cerri2016} are extended by \citeauthor{cerri2018}~\citeyear{cerri2018}, where \hmcnr{} and \hmcnf{} are presented. Since  \hmcnr{} and \hmcnf{} are trained with both a local loss and a global loss, they are considered hybrid local-global approaches. 
If a method trains a classifier for each node of the hierarchy, then we have a {\sl local classifier per node}. In \cite{cesabianchi2006}, a linear classifier is trained for each node with a loss function that captures the hierarchy structure. On the other hand, in~\cite{feng2018}, one multi-layer perceptron for each node is deployed. A different approach is proposed in~\cite{kwok2011}, where kernel dependency estimation is employed to project each class to a low-dimensional vector. To preserve the hierarchy structure, a generalized condensing sort and select algorithm is developed, and each vector is then learned singularly using ridge regression. Finally, if a method trains a different classifier per parent node in the hierarchy, then we have a {\sl local classifier per parent node}. For example, \citeauthor{kulmanov2018}~\citeyear{kulmanov2018} propose to train a model for each sub-ontology of the Gene Ontology, combining features automatically learned from the sequences and features based on protein interactions. \citeauthor{xu2019aaai}~\citeyear{xu2019aaai}, instead, try to solve the overfitting problem typical of local models by representing the correlation among the classes by the class distribution, and then training each local model to map data points to class distributions.

Global methods consist of single models able to map objects with their corresponding classes in the hierarchy as a whole. A well-known global method is {\sc Clus-HMC} \cite{vens2008}, consisting of a single predictive clustering tree for the entire hierarchy. This work is extended by \citeauthor{schietgat2010}~\citeyear{schietgat2010}, who propose \ens{}: an ensemble of {\sc Clus-HMC}. In \cite{masera2018}, a neural network incorporating the structure of the hierarchy in its architecture is proposed. While this network makes predictions that are coherent with the hierarchy, it also makes the assumption that each parent class is the union of the children. \citeauthor{borges2012}~\citeyear{borges2012} propose {\sl competitive neural networks}, whose architecture replicates the hierarchy. The name of these networks comes from the fact that the neurons in the output layer compete with each other to be activated.

If we move the focus on how the satisfaction of the constraints is guaranteed, HMC models can be divided in: 
\begin{enumerate}
    \item approaches that satisfy the constraints by construction, and 
 \item approaches that require a post-processing step to enforce the constraints. 
 \end{enumerate}

 In the first category, we can find methods such as {\clus}~\cite{vens2008}, in which each constraint $A_1 \to A$ is satisfied by properly setting the thresholds for both $A$ and $A_1$.
 {\ens}, the ensemble of {\clus}, computes the average of all class vectors predicted by the trees in the ensemble and then applies the threshold mechanism of {\clus}. In this way, its predictions always satisfy the constraints. The model proposed in~\cite{kwok2011}, in order to preserve the hierarchy structure, develops a generalized condensing sort and select algorithm, while \citeauthor{cesabianchi2006}~\citeyear{cesabianchi2006} evaluate the node classifiers in a top-down fashion, thus not making a prediction at all for the descendants of a node that has not been predicted. 

  Many of the neural-network-based models belong to the second category. A common policy to enforce the satisfaction of $A_1 \to A$ is to force the output for class $A_1$ to be smaller than or equal to the output for $A$ (see, e.g.,  \cite{cerri2011,cerri2014,cerri2016,cerri2018}). However, other solutions are possible. For example, \citeauthor{borges2012}~\citeyear{borges2012} associate to each data point an initial set of classes which is then extended to include all their ancestors in the hierarchy, while \citeauthor{feng2018}~\citeyear{feng2018} apply a post processing method based on  Bayesian networks in order to guarantee that the results are coherent with the hierarchy constraints.
For a detailed overview on the many different policies that can be used to impose the hierarchy constraints as a post-processing step, see \cite{obozinski2008}.

\subsection{More expressive constraints }

When dealing with more expressive constraints, researchers have mostly focused on the problem of exploiting them to improve their models and/or to deal with data scarcity, 
curiously neglecting the problem of guaranteeing their satisfaction.

If we focus on constraints expressed as logic rules, then we can find works such as  \cite{hu2016harnessing}, in which the authors introduce an iterative method to embed structured information expressed by first order logic (FOL) formulas into the weights of different kinds of deep neural networks. 
At each step, they consider a {\sl teacher network}  based on the set of FOL rules to train a {\sl student network} to fit both supervisions and logic rules.  
The work has been later extended to jointly learn the structure of the rules and their weights \cite{hu2016deep}.
A different approach is considered in \cite{li2019augmenting}, where a new framework is introduced to augment a neural network assigning semantics via logical rules to its neurons. Indeed, some neurons are associated with logical predicates, and then their activation is modified depending on the activation of the neurons corresponding to predicates that co-occur in the same rules. 
Many works embed logical constraints into penalty functions to formulate a learning problem, see e.g., \cite{diligenti2017semantic,donadello2017logic,xu2018semantic}. These works generally consider a fuzzy relaxation of FOL  formulas to get a differentiable loss function that may be %
minimized by gradient descent. However, as the loss function is minimized, there is no guarantee that the constraints are fully satisfied. 

If we further broaden our view to  more general constraints, i.e., to any constraint that  enforces some correlation on the outputs of the model, then we can find a very wide literature, in which many works have shown that exploiting the background knowledge expressed by the constraints can bring some benefits.
For example, \citeauthor{ermon2017}~\citeyear{ermon2017} exploit the background knowledge coming from known laws of physics to train neural networks without any labeled example. \citeauthor{chen2015learning}~\citeyear{chen2015learning} combine Markov random fields with deep neural networks to express some context correlation in an image segmentation task. Similarly, \citeauthor{huang2015bidirectional}~\citeyear{huang2015bidirectional} consider conditional random fields on top of a bidirectional LSTM~\cite{hochreiter1997LSTM} to enforce some statistical co-occurrencies (n-grams) among the network predictions, achieving state-of-the-art performance on POS and chunking tasks. 
Some of these works have also tried to impose hard constraints on learning algorithms, however, dealing effectively with hard constraints commonly requires a specific optimization schema, e.g.,  by considering a sequence of problems where the hard constraints are replaced by soft ones associated with larger and larger
penalties, (see \cite{bertsekas2014constrained,fletcher2013practical} for a detailed dissertation on optimization methods), and convergence is only guaranteed under suitable conditions; see \cite{luenberger1997optimization}. \citeauthor{gnecco2014learning}~\citeyear{gnecco2014learning} consider different learning problems where both soft and hard constraints are taken into account at the same time. Soft constraints are added to the penalty function, while the optimal solution (if it exists) is required to satisfy a system of (in)equalities corresponding to the hard constraints. A newly devised approach to asymptotically satisfy hard constraints is considered in \cite{farina2019asynchronous}, where the authors use the distributed asynchronous method of multipliers to solve the optimization problem.

\section{Conclusion} \label{sec:concl}

In this paper, we introduced and dealt with {\cmc} problems, i.e., on MC problems with hard constraints expressed as normal logic rules. 
We first focused on the special case represented by HMC problems, and proposed \hmcsys{$h$}, a novel model based on neural networks that (i) is able to leverage the hierarchical information to learn when to delegate the prediction on a superclass to one of its subclasses, (ii) produces predictions coherent by construction, and (iii) outperforms current state-of-the-art models on 20 commonly used real-world HMC benchmarks, and (iv) it can be easily implemented on GPUs using standard libraries. 
We then considered the general case and proposed \system{$h$}, which has four distinguishing features: (i) its predictions are always coherent with the given constraints,
(ii) it can be implemented on GPUs using standard libraries,
(iii) it extends {\hmcsys{$h$}}, and thus outperforms the state-of-the-art HMC models on HMC problems, and (iv) it outperforms the state-of-the-art MC models on 16 {\cmc} problems obtained adding automatically extracted constraints to well-known MC problems.

This work opens the path to many different lines of research.
In general, the basic idea behind this paper is to (i) incorporate the constraints in neural networks models  guaranteeing their coherency, and (ii) exploit the constraints to improve performance. In this paper, we focussed on MC problems with hard logical  constraints on the output, but indeed the same idea can be applied also to other problems, and 
it could be interesting to investigate (i) whether it is possible to impose even more expressive constraints (e.g., involving also the input), (ii) how to incorporate both soft and hard constraints, and (iii) how to impose hard constraints on regression, binary classification, and multi-class classification problems.

\subsection*{Acknowledgments} This paper is a revised and substantially extended version of the paper \cite{giunchiglia2020}. We  thank Lei Feng, Francesco Giannini and Marco Gori for useful discussions. 
Eleonora Giunchiglia is supported by
the EPSRC under the grant EP/N509711/1 and by an Oxford-DeepMind Graduate Scholarship. This
work was also supported by the Alan Turing Institute under the EPSRC grant EP/N510129/1 and
by the AXA Research Fund. We also acknowledge the use of the EPSRC-funded Tier 2 facility
JADE (EP/P020275/1) and GPU computing support by Scan Computers International Ltd.

\bibliographystyle{theapa}
\bibliography{bibliography}

\clearpage

\appendix

\section{HMC - Experimental Analysis Details}\label{app:hidden_dim}
\begin{table}[ht]
    \centering
    \begin{tabular}{l c c c}
        \toprule
         {\sc Dataset} & Hidden Dimension &  Learning Rate & Time per batch (ms) \\
         \midrule
         {\sc Cellcycle FUN} & 500 & $10^{-4}$ & 2.0\\
         {\sc Derisi FUN} & 500 & $10^{-4}$ & 2.0\\
         {\sc Eisen FUN} & 500 & $10^{-4}$ & 1.7\\
         {\sc Expr FUN} & 1000 & $10^{-4}$ & 1.9\\
         {\sc Gasch1 FUN} & 1000 & $10^{-4}$ & 2.0\\
         {\sc Gasch2 FUN} & 500 & $10^{-4}$ & 2.8\\
         {\sc Seq FUN} & 2000 & $10^{-4}$ & 2.0\\
         {\sc Spo FUN} & 250& $10^{-4}$ & 1.6\\
         \midrule
         {\sc Cellcycle GO} & 1000 & $10^{-4}$ & 2.4 \\
         {\sc Derisi GO} & 500 & $10^{-4}$ & 2.5\\
         {\sc Eisen GO} & 500 & $10^{-4}$ &3.4\\
         {\sc Expr GO} & 4000 & $10^{-5}$  & 3.9\\
         {\sc Gasch1 GO} & 500 & $10^{-4}$ & 2.5\\
         {\sc Gasch2 GO} & 500 & $10^{-4}$ & 2.8\\
         {\sc Seq GO} & 9000 & $10^{-5}$ & 2.6 \\
         {\sc Spo GO} & 500 & $10^{-4}$ & 3.3\\
         \midrule
         {\sc Diatoms} & 2000 & $10^{-5}$ & 2.0\\
         {\sc Enron} & 1000 & $10^{-5}$ & 3.6\\
         {\sc Imclef07a} & 1000 & $10^{-5}$ & 3.4\\
         {\sc Imclef07d} & 1000 & $10^{-5}$ & 2.9\\
         \bottomrule
    \end{tabular}
     \caption{Hidden dimension used for each dataset, learning rate used for each dataset, and average inference time per batch in milliseconds (ms). Average computed over 500 batches for each dataset.}
    \label{tab:hidden_dim}
\end{table}
In this section, we provide more details about the conducted experimental analysis for HMC problems. 
As stated in the paper, across the different experiments, we kept all hyperparameters fixed with the exception of the hidden dimension and the learning rate, which are reported in the first two columns of Table~\ref{tab:hidden_dim}. The other hyperparameters were determined by searching the best hyperparameters configuration on the Funcat datasets; we then took the configuration that led to the best results on the highest number of datasets. The hyperparameter values taken in consideration were: 
(i) learning rate: $[10^{-3}, 10^{-4}, 10^{-5}]$, (ii) batch size: $[4, 64, 256]$, (iii) dropout: $[0.6, 0.7]$, and (iv) weight decay: $[10^{-3}, 10^{-5}]$. Concerning the hidden dimension, we took into account all possible dimensions from 250 to 2000 with step equal to 250, and from 2000 to 10000 with step 1000. 
The last column of Table~\ref{tab:hidden_dim} shows the average inference time per batch in milliseconds. The average is computed over 500 batches for each dataset.
All experiments were run on an Nvidia Titan Xp with 12 GB  memory. 

\section{\cmc{} - Experimental Analysis Details}\label{app:hidden_dim_gen}

\begin{table}[ht]
    \centering
    \begin{tabular}{l c}
    \toprule
         Dataset &  Hidden Dimension \\
    \midrule
         {\sc Arts} & 4000 \\
         {\sc Business} & 4000 \\
         {\sc Cal500} & 100 \\
         {\sc Emotions} & 100\\
         {\sc Enron} & 2500\\
         {\sc Genbase} & 5000\\
         {\sc Image} & 1000 \\
         {\sc Medical} & 800 \\
         {\sc Rcv1Subset1} & 600 \\
         {\sc Rcv1Subset2} & 1500 \\
         {\sc Rcv1Subset3} & 4000\\
         {\sc Rcv1Subset4} & 1000\\
         {\sc Rcv1Subset5} & 4000 \\
         {\sc Science} & 2000 \\
         {\sc Scene} & 1500 \\
         {\sc Yeast} & 4000\\
    \bottomrule
    \end{tabular}
 \caption{Hidden dimension used for each dataset.}
 \label{tab:hidden_dim_gen}
\end{table}

In this section, we provide more details about the conducted experimental analysis for \cmc{} problems. 
As stated in the paper, across the different experiments, we kept all hyperparameters fixed with the exception of the hidden dimension, which are reported in Table~\ref{tab:hidden_dim_gen}. The other hyperparameters were determined by searching the best hyperparameters configuration on the validation sets; we then took the configuration that led to the best results on the highest number of datasets. The hyperparameter values taken in consideration were: 
(i) learning rate: $[10^{-3}, 10^{-4}, 10^{-5}]$, (ii) batch size: $[4, 64, 256]$, (iii) dropout: $[0.6, 0.7, 0.8]$, and (iv) weight decay: $[10^{-3}, 10^{-4}, 10^{-5}]$. Concerning the hidden dimension, we took into account all possible dimensions from 100 to 1000 with step equal to 100, and from 1000 to 5000 with step 500. Again, all experiments were run on an Nvidia Titan Xp with 12 GB  memory. 

\end{document}